\documentclass{article}

\usepackage{arxiv}

\usepackage[utf8]{inputenc} 
\usepackage[T1]{fontenc}    
\usepackage{hyperref}       
\usepackage{url}            
\usepackage{booktabs}       
\usepackage{amsfonts}       
\usepackage{nicefrac}       
\usepackage{microtype}      
\usepackage{xcolor}         

\usepackage{amsmath}
\usepackage{amssymb}
\usepackage{mathtools}
\usepackage{amsthm}
\usepackage{pifont}
\usepackage{multirow}
\usepackage{enumitem}
\usepackage{dashrule}
\usepackage{mathdots}
\usepackage{array}
\newcommand{\cmark}{\ding{51}} 
\newcommand{\xmark}{\ding{55}} 
\usepackage[numbers,sort&compress]{natbib}
\usepackage{wrapfig}

\usepackage{amsmath}\allowdisplaybreaks
\usepackage{amsfonts,bm}
\usepackage{amssymb}
\usepackage{algorithm}
\usepackage{algorithmic}

\def\eqref#1{equation~(\ref{#1})}

\def\1{\bf{1}}

\def\vv{{\bf{v}}}

\def\vx{{\bf{x}}}
\def\vy{{\bf{y}}}
\def\vz{{\bf{z}}}

\usepackage{amsthm}
\theoremstyle{plain}

\makeatletter
\def\Ddots{\mathinner{\mkern1mu\raise\p@
\vbox{\kern7\p@\hbox{.}}\mkern2mu
\raise4\p@\hbox{.}\mkern2mu\raise7\p@\hbox{.}\mkern1mu}}
\makeatother

\makeatletter
\newcommand*{\rom}[1]{\expandafter\@slowromancap\romannumeral #1@}
\makeatother

\theoremstyle{plain}
\newtheorem{theorem}{Theorem}[section]

\newtheorem{lemma}[theorem]{Lemma}

\theoremstyle{definition}
\newtheorem{definition}[theorem]{Definition}
\newtheorem{assumption}[theorem]{Assumption}
\theoremstyle{remark}
\newtheorem{remark}[theorem]{Remark}

\title{Achieving Better Local Regret Bound for\\ Online Non-Convex Bilevel Optimization}

\author{
 Tingkai Jia \\
  East China Normal University\\
  Shanghai China \\
  \texttt{51275902086@stu.ecnu.edu.cn} \\
   \And
 Haiguang Wang \\
  East China Normal University\\
  Shanghai China \\
  \texttt{10235101405@stu.ecnu.edu.cn} \\
  \And
 Cheng Chen \\
  East China Normal University\\
  Shanghai China \\
  \texttt{chchen@sei.ecnu.edu.cn} \\
}

\begin{document}

\maketitle

\begin{abstract}
Online bilevel optimization (OBO) has emerged as a powerful framework for many machine learning problems.
Prior works have developed several algorithms that minimize the standard bilevel local regret or the window-averaged bilevel local regret of the OBO problem, but the optimality of existing regret bounds remains unclear. 
In this work, we establish optimal regret bounds for both settings. 
For standard bilevel local regret, we propose an algorithm with adaptive iteration strategy that achieves the optimal regret $\Omega(1+V_T)$ with at most $O(T\log T)$ total inner-level gradient evaluations.
We further develop a fully single-loop algorithm whose regret bound includes an additional gradient-variation terms.
For the window-averaged bilevel local regret, we design an algorithm that captures linear environmental variation through a novel window-based analysis and achieves the optimal regret $\Omega(T/W^2)$. 
The algorithm also supports an efficient single-loop structure, achieving an $O(T/W)$ regret bound with $O(WT)$ total gradient evaluations.
Experiments validate our theoretical findings and demonstrate the practical effectiveness of the proposed methods.
\end{abstract}
\section{Introduction}

Online bilevel optimization (OBO) has become a powerful framework for a wide range of machine learning problems.
In this work, we study non-convex-strongly-convex online bilevel optimization (OBO), defined as follows: 
\begin{align}\label{eq:obo}
\begin{split}    
    \min_{\mathbf{x} \in \mathcal{X}}  F_t(\mathbf{x}) := f_t \left(\mathbf{x}, \mathbf{y}_t^*(\mathbf{x})\right) \quad \text{s.t.} \quad \mathbf{y}_t^*(\mathbf{x}) = \mathop{\rm argmin}_{\mathbf{y} \in \mathbb{R}^{d_2}} g_t (\mathbf{x}, \mathbf{y}),
\end{split}
\end{align}
where the upper-level function $f_t:\mathcal{X} \times \mathbb{R}^{d_2} \to \mathbb{R}$ is smooth but possibly non-convex, the inner-level function $g_t:\mathcal{X} \times \mathbb{R}^{d_2} \to \mathbb{R}$ is smooth and $\mu_g$-strongly convex which ensures the existence of a unique minimizer $\mathbf{y}_t^*(\mathbf{x})$ for all given $\mathbf{x}\in\mathcal{X}\subset\mathbb{R}^{d_1}$. 

\begin{table*}[t]
\caption{Comparison of algorithms on standard bilevel local regret defined in (\ref{eq:reg}) including SOBOW \cite{lin2023non}, OBBO \cite{bohne2024online} and SOGD \cite{nazari2025stochastic} in deterministic OBO (\ref{eq:obo}) on gradient queries, required  hessian-vector product (HVP) term numbers, fully single-loop structure, local regret bounds and corresponding theorems. 
$V_T$, $H_{2,T}$ are defined in (\ref{pathV}), $E_{2,T}$ and $P_T$ are in Theorem~\ref{thm:CTHO_upper} and (\ref{eq:P_T}), respectively. 
We prove the standard bilevel local regret upper bounds of OBBO and SOBOW with window size $w=1$ in Appendices~\ref{sec:proof_OBBO} and~\ref{sec:proof_SOBOW}, respectively. 
Deterministic SOGD is proven in Appendix~\ref{sec:proof_SOGD}.
}
\label{tab:1}
\vskip -0.2in
\begin{center}
\begin{small}
\setlength{\tabcolsep}{5.5pt} 
\renewcommand{\arraystretch}{1.3} 

\begin{tabular}{|c|c|c|c|c|c|}
\hline
\begin{tabular}{c}
Algorithm
\end{tabular} &
\begin{tabular}{c}
Grad. \\
Query
\end{tabular} & 
\begin{tabular}{c}
HVP \\
Num.
\end{tabular} &
\begin{tabular}{c}
Fully \\
S-Loop
\end{tabular} &
\begin{tabular}{c}
Local Regret\\
Bound
\end{tabular} & 
\begin{tabular}{c}
Theorem
\end{tabular} \\
\hline
\noalign{\vskip 2pt}
\hline
SOBOW ($w{=}1$) & $O(T)$ & $O(T^2)$ & \xmark & $O(1 + V_T + H_{2,T})$ & Theorem~\ref{thm:CIGO_upper} \\ 
\hline
OBBO ($w{=}1$) & $O(T\log T)$ & $O(T\log T)$ & \xmark & $O(1 + V_T + H_{2,T})$ & Theorem~\ref{thm:improved_OBBO} \\
\hline
SOGD & \multirow{2}{*}{$O(T)$} & \multirow{2}{*}{$O(T)$} & \multirow{2}{*}{\cmark} & $O(1 + V_T + H_{2,T} $ & \multirow{2}{*}{Theorem~\ref{thm:SOGD}} \\ 
(deterministic) & & & & $\,\,\, + E_{2,T} + P_T)$ & \\
\hline
Algorithm~\ref{alg:AOBO} & $O(T\log T)$ & $O(T\log T)$ & \xmark & $O\left(1 + V_T\right)$ & Theorem~\ref{thm:SOGO_upper} \\
\hline
Algorithm~\ref{alg:FSOBO} & $O(T)$ & $O(T)$ & \cmark & $O(1 {+} V_T {+} H_{2,T} {+} E_{2,T})$ & Theorem~\ref{thm:CTHO_upper} \\
\hline
\multicolumn{4}{|c|}{Lower Bound} & $\Omega(1 + V_T)$ & Theorem~\ref{thm:SOGO_lower} \\
\hline
\end{tabular}
\end{small}
\end{center}
\vskip -0.2in
\end{table*}

Existing literatures  mainly consider two kinds of bilevel local regrets. \citet{tarzanagh2024online, lin2023non} and \citet{bohne2024online} study window-averaged local regret for the OBO problem and achieve sublinear regret bound when the environmental variation regularities are sublinear~\cite{chiang2012online,besbes2015non,jadbabaie2015online,xu2024online}.  
\citet{nazari2025stochastic} show that smoothing may misrepresent regret, and thus study the standard bilevel local regret without window averaging. However, their bounds rely on more sublinear environmental variation regularities and do not appear to be optimal.

\begin{table*}
\caption{Comparison of algorithms on different forms of window-averaged bilevel local regret on gradient queries, required HVP term numbers, regret bounds and corresponding theorems. $W = \sum_{i=0}^{w-1}\eta^i$ for some window size $w$ and parameter $\eta\in(0,1]$. 
S-Loop means Algorithm~\ref{alg:WOBO} under single-loop structure.
}
\label{tab:2}
\begin{center}
\begin{small}
\setlength{\tabcolsep}{6pt} 
\renewcommand{\arraystretch}{1.3} 

\begin{tabular}{|c|c|c|c|c|}
\hline
\begin{tabular}{c}
Algorithm
\end{tabular} &
\begin{tabular}{c}
Grad. \\
Query
\end{tabular} &
\begin{tabular}{c}
HVP \\
Num.
\end{tabular} &
\begin{tabular}{c}
WinAvg. Local\\
Regret Bound
\end{tabular} & 
\begin{tabular}{c}
Theorem
\end{tabular} \\
\hline
\noalign{\vskip 2pt}
\hline
OAGD~\cite{tarzanagh2024online} & $O(wT)$ & Exact & $O\left(T / W + H_{1,T} + H_{2,T}\right)$ & Theorem 9 \\ 
\hline
SOBOW~\cite{lin2023non} & $O(T)$ & $O(T^2)$ & $O(T / W + V_T + H_{2,T})$ & Theorem 5.7 \\
\hline
OBBO~\cite{bohne2024online} & $O(T\log T)$ & $O(T\log T)$ & $O(T / W + V_T + H_{2,T})$ & Theorem 5.2 \\
\hline
Algorithm~\ref{alg:WOBO} (S-Loop) & $O(wT)$ & $O(wT)$ & $O\left(T / W\right)$ & Theorem~\ref{thm:unres_win_sl} \\
\hline
Algorithm~\ref{alg:WOBO} & $O(wWT)$ & $O(wWT)$ & $O\left(T / W^2\right)$ & Theorem~\ref{thm:unres_win_dl} \\
\hline
\multicolumn{3}{|c|}{Lower Bound} & $\Omega\left(T / W^2\right)$ & Theorem~\ref{thm:unres_win_lower} \\
\hline 
\end{tabular}

\end{small}
\end{center}
\vskip -0.2in
\end{table*}


In this work, we study optimal algorithms for both standard and window-averaged
bilevel local regret. 
For the standard bilevel local regret, we propose a novel adaptive inner-loop online bilevel optimizer (AOBO). 
It tracks the drift of the subproblem optimum $\mathbf{y}_t^*(\mathbf{x})$ by dynamically adjusting the number of inner iterations. 
Our analysis on the standard local regret shows that AOBO achieves the optimal $O(1+V_T)$ bound under mild  assumptions on environmental variation. We further propose a fully single-loop variant of AOBO, which we call the Fully Single-Loop Online Bilevel Optimizer (FSOBO), requiring only one subproblem iteration per round. Additionally, we analyze the standard bilevel local regret bounds of several existing methods, including SOBOW~\cite{lin2023non} and OBBO~\cite{bohne2024online}. A comparison of the regret bounds is presented in Table~\ref{tab:1}. 

For the window-averaged bilevel local regret, we introduce a novel notion of window-averaged bilevel local regret to evaluate algorithmic performance under linearly changing environments.
Building on this regret, we propose an window-averaged online bilevel optimizer (WOBO). 
Given a time window of length $w$, our WOBO algorithm attains an upper bound of $O(T/W^2)$, where $W = \sum_{i=0}^{w-1}\eta^i$ for some $\eta\in(0,1]$. 
We further establish a matching lower bound of $\Omega(T/W^2)$, which matches the proposed upper bound. 
Moreover, WOBO supports an efficient single-loop structure, achieving sublinear regret with lower computational cost, as shown in Table~\ref{tab:2}.

\subsection{Related Work}

\textbf{Bilevel Optimization.} The bilevel optimization (BO) problem was originally formulated by \citet{bracken1973mathematical}. Existing gradient-based algorithms can be broadly categorized into two classes: (1) the AID based methods \cite{pedregosa2016hyperparameter,gould2016differentiating,ghadimi2018approximation,grazzi2020iteration,ji2021bilevel,pedregosa2016hyperparameter}; (2) the ITD based approach \cite{maclaurin2015gradient,franceschi2017forward,shaban2019truncated,mackay2019self,ji2021bilevel}. Several works reformulate the bilevel problem into a constrained single-level optimization by replacing the inner solution with its optimality conditions \cite{liu2022bome, kwon2023fully, lu2024first, lu2025first, sow2022convergence}. In parallel, Hessian- or Jacobian-free methods circumvent the explicit computation of second-order derivatives by using zeroth-order approximations-such as finite differences-to estimate the hypergradient \cite{vuorio2019multimodal, ji2022will, yang2023achieving}.

\textbf{Online Bilevel Optimization.} Compared to BO, OBO remains relatively underexplored. \citet{tarzanagh2024online} and \citet{lin2023non} were the first to investigate online bilevel optimization (OBO) and introduced the window-averaged hypergradient framework. \citet{bohne2024online} adopted the iterative-tangent-differentiation (ITD) approach for hypergradient computation and proposed the OBBO algorithm, along with its stochastic variant. Subsequently, \citet{nazari2025stochastic} introduced a momentum-based search direction and proposed SOGD in stochastic environments. Recently, \citet{bohne2026non} studied non-stationary functional bilevel optimization, which can also be analyzed within the existing OBO framework. \citet{jia2026fully} studied fully first-order OBO algorithms based on Lagrangian reformulation functions. Although these methods avoid Hessian-vector product (HVP) computations, they incur weaker regret bounds.

\section{Preliminaries}

\subsection{Notations and Assumptions}

Let $\|\cdot\|$ denote the $\ell_2$-norm of the vector. 
For the inner-level function in problem (\ref{eq:obo}), we denote the partial derivatives of $g_t(\mathbf{x},\mathbf{y})$ with respect to $\mathbf{x}$ and $\mathbf{y}$ by $\nabla_\mathbf{x} g_t(\mathbf{x}, \mathbf{y})$ and $\nabla_\mathbf{y} g_t(\mathbf{x}, \mathbf{y})$, respectively. 
We also use $\nabla_{\mathbf{x}\mathbf{y}}^2 g_t(\mathbf{x}, \mathbf{y})$ and $\nabla_{\mathbf{y}\mathbf{y}}^2 g_t(\mathbf{x}, \mathbf{y})$ to denote the Jacobian of $\nabla_{\mathbf{y}} g_t(\mathbf{x}, \mathbf{y})$ with respect to $\mathbf{x}$ and the Hessian of $g_t(\mathbf{x},\mathbf{y})$ with respect to $\mathbf{y}$. 
For the upper-level function in problem (\ref{eq:obo}), we denote the total derivative of $f_t(\mathbf{x}, \mathbf{y}_t^*(\mathbf{x}))$ by $\nabla f_t(\mathbf{x}, \mathbf{y}_t^*(\mathbf{x}))$.

We impose the following Lipschitz continuous and convex assumption for problem (\ref{eq:obo}) as follows.
\begin{assumption}\label{asm1}
For all $t\in[T]$ and given $\mathbf{x}\in\mathcal{X}$, we suppose $g_t(\mathbf{x}, \mathbf{y})$ is $\mu_g$-strongly convex in $\mathbf{y}$.
\end{assumption}
\begin{assumption}\label{asm2}
For all $t\in[T]$, we suppose: (i) $f_t(\mathbf{x},\mathbf{y})$ is $L_{f,0}$-Lipschitz continuous and $\nabla f_t(\mathbf{x}, \mathbf{y})$ is $L_{f,1}$-Lipschitz continuous; (ii) $\nabla_\mathbf{y} g_t(\mathbf{x}, \mathbf{y})$ is $L_{g,1}$-Lipschitz continuous; (iii) $\nabla_{\mathbf{x}\mathbf{y}}^2 g_t(\mathbf{x},\mathbf{y})$ and $\nabla_{\mathbf{y}\mathbf{y}}^2 g_t(\mathbf{x},\mathbf{y})$ are $L_{g,2}$-Lipschitz continuous.
\end{assumption}
$\kappa_g := L_{g,1}/\mu_g$ is used to denote the condition number of $g_t(\mathbf{x}, \mathbf{y})$ with respect to $\mathbf{y}$. We also impose the standard upper bound assumption on the outer objective function value, as commonly used in online non-convex optimization.
\begin{assumption}\label{asm3}
    For all $t\in[T]$, there exists a positive constant $Q$ such that $\left| f_t(\mathbf{x}, \mathbf{y}_t^*(\mathbf{x})) \right| \leq Q$ for any $\mathbf{x}\in\mathcal{X}$.
\end{assumption}
The following assumption is used in our analysis in Section~\ref{sec:4} to restrict arbitrary superlinear drift of the inner-level optimal solutions.
\begin{assumption}\label{asm4}
    For all $t\in[T]$ and given $\mathbf{x}\in\mathcal{X}$, there exists a finite constraint domain $\mathcal{Y}$ that satisfies $\mathbf{y}_t^*(\mathbf{x}) \in \mathcal{Y}$. And exists $D>0$ if $\mathbf{y}$, $\mathbf{y}'\in\mathcal{Y}$, then $\|\mathbf{y} - \mathbf{y}'\| \leq D$.
\end{assumption}

\subsection{Online Bilevel Optimization}

We consider the same \textit{bilevel local regret} as in \citet{nazari2025stochastic}, which is defined as follows:
\begin{align}\label{eq:reg}
    \!\!\!\mathrm{Reg} (T) = \sum_{t=1}^T \left\| \mathcal{G}_\mathcal{X}\!\left( \mathbf{x}_t, \nabla f_t(\mathbf{x}_t, \mathbf{y}_t^*(\mathbf{x}_t)), \gamma \right) \right\|^2\!.\!
\end{align}
Here the notation $\mathcal{G}_\mathcal{X}$ presents the gradient mapping with respect to $\mathcal{X}$, that is
\begin{align}
    \mathcal{G}_\mathcal{X}\left( \mathbf{x}, \mathbf{g}, \gamma \right) := \frac{1}{\gamma}(\mathbf{x} - \mathbf{x}^+), \quad \text{where} \quad \mathbf{x}^+ = \mathop{\rm argmin}_{\mathbf{u}\in\mathcal{X}}\left\{\left\langle \mathbf{g}, \mathbf{u} \right\rangle + \frac{1}{2\gamma}\|\mathbf{u} - \mathbf{x}\|^2\right\} \label{eq:x_update}
\end{align}  
for given $\mathbf{g} \in \mathbb{R}^{d_1}$ and $\gamma>0$. The analysis of this regret relies primarily on the following two variation regularities for describing function changes, which are also used in \citet{lin2023non,bohne2024online} and \citet{nazari2025stochastic}.
\begin{align}\label{pathV}
&V_T := \sum_{t=2}^{T} \sup_{\mathbf{x}\in\mathcal{X}}\!\left| F_{t-1}(\mathbf{x}) - F_t(\mathbf{x}) \right| \quad \text{and} \quad H_{p,T} := \sum_{t=2}^T \sup_{\mathbf{x}\in\mathcal{X}} \left\|\mathbf{y}_{t-1}^*(\mathbf{x}) - \mathbf{y}_t^*(\mathbf{x})\right\|^p.
\end{align}






\subsection{Hypergradient-Based Algorithm}

This work primarily adopts approximate implicit differentiation (AID) to estimate the hypergradient, which is more efficient than iterative differentiation (ITD)~\cite{ji2021bilevel}. By properties of implicit functions~\cite{ghadimi2018approximation}, the hypergradient admits the following closed-form expression:
\begin{align}
	\nabla f_t(\mathbf{x}, \mathbf{y}_t^*(\mathbf{x})) 
	= \nabla_\mathbf{x} f_t(\mathbf{x}, \mathbf{y}_t^*(\mathbf{x})) - \nabla_{\mathbf{x}\mathbf{y}}^2 g_t(\mathbf{x}, \mathbf{y}_t^*(\mathbf{x})) \mathbf{v}_t^*(\mathbf{x}), 
\end{align}
where $\mathbf{v}_t^*(\mathbf{x}) := (\nabla_{\mathbf{y}\mathbf{y}}^2 g_t(\mathbf{x}, \mathbf{y}_t^*(\mathbf{x})))^{-1} \nabla_\mathbf{y} f_t(\mathbf{x}, \mathbf{y}_t^*(\mathbf{x}))$ 
can be viewed as the solution to the quadratic function $\Phi_t$:
\begin{align}
    &\mathbf{v}_t^*(\mathbf{x}) = \mathop{\rm argmin}_{\mathbf{v}\in\mathcal{V} }\Phi_t(\mathbf{x}, \mathbf{y}_t^*({\mathbf{x}}), \mathbf{v}), \quad \Phi_t(\mathbf{x}, \mathbf{y}, \mathbf{v}) := \frac{1}{2}\mathbf{v}^{\top} \nabla_{\mathbf{y}\mathbf{y}}^2 g_t(\mathbf{x}, \mathbf{y}) \mathbf{v} - \mathbf{v}^{\top}\nabla_\mathbf{y} f_t(\mathbf{x}, \mathbf{y}), \label{eq:Phi_t}
\end{align}    
where $\mathcal{V} = \{\mathbf{v} \in \mathbb{R}^{d_2} \,|\, \| \mathbf{v} \| \leq L_{f,0}/\mu_g \}$. 
By solving the subproblem in Eq.(\ref{eq:obo}) and Eq.(\ref{eq:Phi_t}) to obtain the  $\widetilde{\mathbf{y}}_t^*$ and $\widetilde{\mathbf{v}}_t^*$, we can use
\begin{align}
    \widetilde{\nabla} f_t(\mathbf{x}_t, \widetilde{\mathbf{y}}_t^*, \widetilde{\mathbf{v}}_t^*) =& \nabla_\mathbf{x}f_t(\mathbf{x}_t, \widetilde{\mathbf{y}}_t^*) - \nabla_{\mathbf{x}\mathbf{y}}^2 g_t(\mathbf{x}_t, \widetilde{\mathbf{y}}_t^*)\widetilde{\mathbf{v}}_t^* \label{eq:hypergrad}
\end{align}
to approximate the exact hypergradient $\nabla f_t(\mathbf{x}_t, \mathbf{y}_t^*(\mathbf{x}_t))$. Below we present a general definition of online hypergradient-based algorithm class. 
\begin{definition}[Online Hypergradient-Based Algorithm Class] \label{dfn:alg}
Suppose there are totally $T$ time steps, the iterates $\{(\mathbf{x}_t, \mathbf{y}_t)\}_{t=1}^T$ are generated according to $(\mathbf{x}_t, \mathbf{y}_t)\in \mathcal{H}_\mathbf{x}^t, \mathcal{H}_\mathbf{y}^t$ with $\mathcal{H}_\mathbf{x}^1 = \mathcal{H}_\mathbf{y}^1 = \{0\}$. 

For all $t$, the linear subspace $\mathcal{H}_\mathbf{y}^t$ is allowed to expand $K$ times with $\mathcal{H}_\mathbf{y}^{t,1} = \mathcal{H}_\mathbf{y}^t$ for $k=1,\dots,K$, shown as:
\begin{align*}
    \mathcal{H}_\mathbf{y}^{t,k+1} \leftarrow \mathrm{Span}\big\{&\mathbf{y}_i, \nabla_\mathbf{y} g_i(\widetilde{\mathbf{x}}_i, \widetilde{\mathbf{y}}_i)\big\},
\end{align*}
where $\widetilde{\mathbf{x}}_i \in \mathcal{H}_\mathbf{x}^i$, $\mathbf{y}_i, \widetilde{\mathbf{y}}_i\in\mathcal{H}_\mathbf{y}^{t,j}$ and $i\in[1,t]$, $j\in[1,k]$.
Then with $\mathcal{H}_\mathbf{y}^{t+1} = \mathcal{H}_\mathbf{y}^{t,K}$, $S > 0$, $\mathcal{H}_\mathbf{x}^t$ expand as follows:
\begin{align}
    \mathcal{H}_\mathbf{x}^{t+1} \leftarrow \mathrm{Span}\Big\{ \mathbf{x}_i, \nabla_\mathbf{x} f_i(\widetilde{\mathbf{x}}_i, \widetilde{\mathbf{y}}_j), \nabla_{\mathbf{x}\mathbf{y}}^2g_i(\overline{\mathbf{x}}_i, \overline{\mathbf{y}}_j)\prod_{j=1}^s\left( \mathbf{I}_{d_2} {-} \alpha \nabla_{\mathbf{y}\mathbf{y}}^2 g_i(\mathbf{x}_i^s, \mathbf{y}_j^s)\right) \nabla_\mathbf{y} f_i(\widehat{\mathbf{x}}_i, \widehat{\mathbf{y}}_i)\Big\}, \label{dfn:paper_span}
\end{align}
where $\mathbf{x}_i, \widetilde{\mathbf{x}}_i, \overline{\mathbf{x}}_i, \mathbf{x}_i^s, \widehat{\mathbf{x}}_i \in\mathcal{H}_\mathbf{x}^i$, $\widetilde{\mathbf{y}}_j, \overline{\mathbf{y}}_j, \mathbf{y}_j^s, \widehat{\mathbf{y}}_j \in\mathcal{H}_\mathbf{y}^j$ with any $\alpha\in \mathbb{R}$ for $i\in[1,t]$, $j\in[1,t+1]$ and $s\in[1,S]$.
\end{definition}
Our definition can be regard as an online extension of Definition 2 in \citet{liang2023lower}, they show that the form of Eq.(\ref{dfn:paper_span}) can simultaneously summarize the hypergradient computation of both AID and ITD methods.

\section{Online Bilevel Optimization with Standard Local Regret}\label{sec:3}

In this section, we study the online algorithms under standard bilevel local regret.

\subsection{Adaptive inner-loop Online Bilevel Optimizer}\label{sec:3.1}

\begin{algorithm}[tb] 
    \caption{Online Adaptive inner-loop Bilevel Optimizer (AOBO)}
    \label{alg:AOBO}
    \textbf{Input}:  $\mathbf{x}_1$ , $\mathbf{y}_1$, $\mathbf{v}_1$,  $\alpha$, $\beta$ $\gamma$,  $\delta$,  $M$.\\
    \textbf{Output}: Decision sequences: $\{\mathbf{x}_t\}_{t=1}^T$, $\{\mathbf{y}_t\}_{t=1}^T$. 
    
    \begin{algorithmic}[1] 
        \FOR{\(t = 1\) to \(T\)}
        \STATE Output $\mathbf{x}_t$ and $\mathbf{y}_t$ and receive feedback $f_t$ and $g_t$
        \STATE Set $\mathbf{y}_{t+1}\leftarrow\mathbf{y}_t$, $\mathbf{v}_t^1 \leftarrow \mathbf{v}_t$
        \WHILE{$\left\|\nabla_\mathbf{y} g_t(\mathbf{x}_t, \mathbf{y}_{t+1})\right\| > \delta$}
        \STATE 
        \(
        \mathbf{y}_{t+1} \leftarrow \mathbf{y}_{t+1} - \alpha\nabla g_t(\mathbf{x}_t, \mathbf{y}_{t+1})
        \)
        \ENDWHILE
        \FOR{\(m = 1\) to \(M\)}
        \STATE 
        \(
        \mathbf{v}_t^{m+1} \leftarrow \mathcal{P}_\mathcal{V}\left(\mathbf{v}_t^m - \beta\nabla_\mathbf{v}\Phi_t(\mathbf{x}_t, \mathbf{y}_{t+1}, \mathbf{v}_t^m)\right)
        \)
        \ENDFOR
        \STATE Calculate $\widetilde{\nabla}f_t(\mathbf{x}_t, \mathbf{y}_{t+1}, \mathbf{v}_{t+1})$ by (\ref{eq:hypergrad}) with $\mathbf{v}_{t+1} = \mathbf{v}_t^{M+1}$ and use it to update $\mathbf{x}_{t+1}$ by (\ref{eq:x_update})
        \ENDFOR
    \end{algorithmic}
\end{algorithm}

Note that existing methods such as OBBO \cite{bohne2024online} and SOBOW \cite{lin2023non} require a fixed number of inner-level gradient queries in each round to control the approximation error. However, this static inner-iteration strategy struggles to adapt to adversarial changes in the inner function. When the difference between $g_t$ and $g_{t-1}$ is large, the approximation error can be substantial. 


Inspired by the follow-the-leader iteration form used in \citet{hazan2017efficient,huang2023online}, we propose Adaptive inner-loop Online Bilevel Optimizer (AOBO), presented in Algorithm~\ref{alg:AOBO}. AOBO uses a dynamic inner iteration strategy to track the drift of $\mathbf{y}_t^*(\mathbf{x})$ in each round. In time step $t$, we use simple gradient descent method to solve the subproblem as follows
\begin{align}\label{eq:y_update}
    \mathbf{y}_t^{k+1} \leftarrow \mathbf{y}_t^k - \alpha\nabla_\mathbf{y}g_t(\mathbf{x}_t, \mathbf{y}_t^k).
\end{align}
Due to the $\mu_g$-strongly convexity of $g_t(\mathbf{x},\cdot)$, $\|\mathbf{y}_t^{k+1} - \mathbf{y}_t^*(\mathbf{x}_t)\|$ can converge with a constant ratio $\rho\in(0,1)$ for a fixed step size $\alpha$. 
The error tolerance parameter $\delta$ is chosen to determine when a good approximate solution $\mathbf{y}_{t+1}$ has been obtained, and the approximation error $\|\mathbf{y}_{t+1} - \mathbf{y}_t^*(\mathbf{x}_t)\|$ can be controlled by
\begin{align}
    \mu_g\left\|\mathbf{y}_{t+1} - \mathbf{y}_t^*(\mathbf{x}_t)\right\| \leq \left\|\nabla_\mathbf{y}g_t(\mathbf{x}_t, \mathbf{y}_{t+1})\right\| \leq \delta. \label{eq:AOBO_condition}
\end{align}

Under Assumptions~\ref{asm1} and~\ref{asm2}, 
we can verify the solution $\mathbf{v}_t^*(\mathbf{x})$ to the linear system (\ref{eq:Phi_t}) always lies in the Euclidean ball $\mathcal{V} = \{\mathbf{v} \in \mathbb{R}^{d_2} \mid \| \mathbf{v} \|^2 \leq L_{f,0}/\mu_g \}$ for any $\mathbf{x}\in\mathcal{X}$. Thus we adopt projected gradient descent to optimize $\Phi_t(\mathbf{x}_t, \mathbf{y}_{t+1}, \cdot)$, 
\begin{align}
    \mathbf{v}_t^{m+1} \leftarrow \mathcal{P}_\mathcal{V}\left(\mathbf{v}_t^m - \beta\nabla_\mathbf{v}\Phi_t(\mathbf{x}_t, \mathbf{y}_{t+1}, \mathbf{v}_t^m)\right). \label{eq:v_update}
\end{align}
Finally, by setting $\widetilde{\mathbf{y}}_t^*=\mathbf{y}_{t+1}$ and $\widetilde{\mathbf{v}}_t^* = \mathbf{v}_{t+1}$, we can use Eq.(\ref{eq:hypergrad}) to calculate $\widetilde{\nabla}f_t(\mathbf{x}_t, \mathbf{y}_{t+1}, \mathbf{v}_{t+1})$ and update $\mathbf{x}_t$ by Eq.(\ref{eq:x_update}).


We provide a regret upper bound and finite iteration guarantee for AOBO in the following theorem.
\begin{theorem}\label{thm:SOGO_upper}
With Assumptions~\ref{asm1}-\ref{asm3}, if we choose $\delta > 0$, $\alpha \leq \frac{1}{L_{g,1}}$, $M = \lceil-\frac{\ln T}{\ln\rho}\rceil$ and $\gamma \leq \frac{1}{2L_F}$ for some constant $L_F$ (defined in Lemma~\ref{lem:L_F}), Algorithm~\ref{alg:AOBO} can guarantee $\mathrm{Reg}(T) \leq O(1 + \delta^2T + V_T)$, and the total number of inner iterations satisfies $\mathcal{I}_T \leq O(T\log\delta^{-1} + H_{2,T})$.
\end{theorem}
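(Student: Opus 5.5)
The argument has two independent parts: the regret bound, obtained from a projected-gradient descent lemma on $F_t$ with an inexact hypergradient, and the iteration count, obtained from linear convergence of the inner gradient descent together with the drift of $\mathbf{y}_t^*$.

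\textbf{Regret bound.} Let $L_F$ be the smoothness constant of $F_t$ from Lemma~\ref{lem:L_F}, and write $\mathbf{g}_t=\widetilde{\nabla}f_t(\mathbf{x}_t,\mathbf{y}_{t+1},\mathbf{v}_{t+1})$. The standard descent inequality for the projected step (\ref{eq:x_update}) with $\gamma\le 1/(2L_F)$ gives
\[
F_t(\mathbf{x}_{t+1}) \le F_t(\mathbf{x}_t) - c\,\gamma\,\|\mathcal{G}_\mathcal{X}(\mathbf{x}_t,\mathbf{g}_t,\gamma)\|^2 + c'\gamma\,\|\mathbf{g}_t-\nabla F_t(\mathbf{x}_t)\|^2 .
\]
Since the gradient mapping is nonexpansive in its gradient argument,
\[
\|\mathcal{G}_\mathcal{X}(\mathbf{x}_t,\nabla F_t(\mathbf{x}_t),\gamma)\|^2 \le 2\|\mathcal{G}_\mathcal{X}(\mathbf{x}_t,\mathbf{g}_t,\gamma)\|^2 + 2\|\mathbf{g}_t-\nabla F_t(\mathbf{x}_t)\|^2 .
\]
Summing over $t$, I write $F_t(\mathbf{x}_{t+1}) = F_{t+1}(\mathbf{x}_{t+1}) + \big(F_t(\mathbf{x}_{t+1})-F_{t+1}(\mathbf{x}_{t+1})\big)$. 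The sum then telescopes to at most $2Q+V_T$ by Assumption~\ref{asm3} and (\ref{pathV}). This yields
\[
\mathrm{Reg}(T) \le O\Big(\tfrac{Q+V_T}{\gamma}\Big) + O\Big(\textstyle\sum_t e_t^2\Big), \qquad e_t:=\|\mathbf{g}_t-\nabla F_t(\mathbf{x}_t)\|.
\]

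\textbf{Hypergradient error.} Using Assumption~\ref{asm2} and the expression (\ref{eq:hypergrad}), the error satisfies
\[
e_t \le C_1\|\mathbf{y}_{t+1}-\mathbf{y}_t^*(\mathbf{x}_t)\| + C_2\|\mathbf{v}_{t+1}-\mathbf{v}_t^*(\mathbf{x}_t)\| .
\]
The first term is at most $\delta/\mu_g$ by the stopping criterion (\ref{eq:AOBO_condition}). For the second term, let $\hat{\mathbf{v}}_t$ be the minimizer of $\Phi_t(\mathbf{x}_t,\mathbf{y}_{t+1},\cdot)$ over $\mathcal{V}$. Since $\Phi_t$ is $\mu_g$-strongly convex and $L_{g,1}$-smooth in $\mathbf{v}$, the $M$ projected steps give
\[
\|\mathbf{v}_{t+1}-\hat{\mathbf{v}}_t\| \le \rho^M\|\mathbf{v}_t-\hat{\mathbf{v}}_t\| \le 2\rho^M L_{f,0}/\mu_g \le O(1/T),
\]
where the middle bound uses only the diameter of $\mathcal{V}$ and the last uses $M=\lceil -\ln T/\ln\rho\rceil$. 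A standard perturbation bound gives $\|\hat{\mathbf{v}}_t-\mathbf{v}_t^*(\mathbf{x}_t)\| \le O(\|\mathbf{y}_{t+1}-\mathbf{y}_t^*(\mathbf{x}_t)\|) = O(\delta)$. Hence $e_t^2 = O(\delta^2+1/T^2)$, and summing gives $\mathrm{Reg}(T)\le O(1+\delta^2T+V_T)$.

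The key design point is bounding $\|\mathbf{v}_t-\hat{\mathbf{v}}_t\|$ by the diameter of $\mathcal{V}$ rather than by the previous round's error. This avoids a recursion across rounds and hence any $H_{2,T}$ term.

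\textbf{Iteration count.} With $\alpha\le 1/L_{g,1}$, each inner step contracts the error: $\|\mathbf{y}^{k+1}-\mathbf{y}_t^*(\mathbf{x}_t)\| \le \rho\|\mathbf{y}^{k}-\mathbf{y}_t^*(\mathbf{x}_t)\|$. The loop stops once $L_{g,1}\|\mathbf{y}^k-\mathbf{y}_t^*(\mathbf{x}_t)\|\le\delta$, so round $t$ costs at most
\[
1+\log_{1/\rho}^{+}\!\Big(L_{g,1}\|\mathbf{y}_t-\mathbf{y}_t^*(\mathbf{x}_t)\|/\delta\Big)
\]
iterations. The starting error splits as
\[
\|\mathbf{y}_t-\mathbf{y}_t^*(\mathbf{x}_t)\| \le \delta/\mu_g + \|\mathbf{y}_{t-1}^*(\mathbf{x}_{t-1})-\mathbf{y}_{t-1}^*(\mathbf{x}_t)\| + \|\mathbf{y}_{t-1}^*(\mathbf{x}_t)-\mathbf{y}_t^*(\mathbf{x}_t)\| .
\]
The middle term is at most $\kappa_g\|\mathbf{x}_t-\mathbf{x}_{t-1}\| \le \kappa_g\gamma\|\mathbf{g}_{t-1}\| = O(1)$, because the hypergradient estimate is bounded on $\mathcal{V}$. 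Bounding the logarithm via $\log(a+b)\le\log(2a)+\log(2b)$ and $\log^{+}(x)\le x^2$, the drift term contributes at most $O(\delta^{-2}\sup_{\mathbf{x}}\|\mathbf{y}_{t-1}^*(\mathbf{x})-\mathbf{y}_t^*(\mathbf{x})\|^2)$, and the remaining terms contribute $O(\log\delta^{-1})$ per round. Summing over $t$ gives $\mathcal{I}_T\le O(T\log\delta^{-1}+H_{2,T})$, with $\delta$ treated as a constant inside the $O(\cdot)$.

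\textbf{Main obstacle.} The delicate step is this last conversion of a per-round logarithm of the drift into the additive $H_{2,T}$ term without multiplying by $T$. I would try the elementary bound $\log^{+}x\le x^2$ first, since it is exactly what produces $H_{2,T}$ as an additive term.
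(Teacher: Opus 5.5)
Your regret argument is the paper's own route. It uses the descent inequality of Lemma~\ref{lem:begin} and the telescoping bound $2Q+V_T$ of Lemma~\ref{lem:2M+V_T}. The stopping rule gives $\|\mathbf{y}_{t+1}-\mathbf{y}_t^*(\mathbf{x}_t)\|\le\delta/\mu_g$. The $\mathbf{v}$-error is controlled as in Lemma~\ref{lem:v_bound_1}, which also bounds the warm-start distance by the diameter of $\mathcal{V}$, so there is no cross-round recursion, and $M$ is chosen so that $\rho^M\le 1/T$. That part is fine.

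The gap is in the iteration count. You apply $\log^{+}x\le x^2$ with $x=L_{g,1}\|\mathbf{y}_{t-1}^*(\mathbf{x}_t)-\mathbf{y}_t^*(\mathbf{x}_t)\|/\delta$, which leaves a factor $\delta^{-2}$ on the drift. What you actually prove is therefore $\mathcal{I}_T\le O(T\log\delta^{-1}+\delta^{-2}H_{2,T})$. Declaring $\delta$ a constant does not rescue this, because the statement tracks $\delta$ explicitly (otherwise $T\log\delta^{-1}$ would simply be $O(T)$). With the intended choice $\delta=T^{-1/2}$, your bound becomes $O(T\log T+T\,H_{2,T})$, not the $O(T\log T+H_{2,T})$ behind the $O(T\log T)$ query count in Table~\ref{tab:1}.

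The fix, which is the paper's route, is to separate $\delta$ from the drift before linearizing the logarithm. The last iterate before stopping still has gradient norm above $\delta$, so $\delta^2< L_{g,1}^2\rho^{K_t-1}\|\mathbf{y}_t-\mathbf{y}_t^*(\mathbf{x}_t)\|^2$, with $\rho$ the per-step contraction of the squared distance. Using $\ln z\le z$, this gives
\[
K_t\le 1+\frac{2\ln\delta^{-1}}{\ln\rho^{-1}}+\frac{\ln\bigl(L_{g,1}^2\|\mathbf{y}_t-\mathbf{y}_t^*(\mathbf{x}_t)\|^2\bigr)}{\ln\rho^{-1}}\le 1+\frac{2\ln\delta^{-1}}{\ln\rho^{-1}}+\frac{L_{g,1}^2\|\mathbf{y}_t-\mathbf{y}_t^*(\mathbf{x}_t)\|^2}{\ln\rho^{-1}}.
\]
Only afterwards should you split
\[
\|\mathbf{y}_t-\mathbf{y}_t^*(\mathbf{x}_t)\|^2\le 3\delta^2/\mu_g^2+3\kappa_g^2\gamma^2\|\mathbf{g}_{t-1}\|^2+3\sup_{\mathbf{x}}\|\mathbf{y}_{t-1}^*(\mathbf{x})-\mathbf{y}_t^*(\mathbf{x})\|^2.
\]
Summing over $t$ gives $\mathcal{I}_T\le O(T(1+\log\delta^{-1})+H_{2,T})$ for $\delta\le 1$, with a $\delta$-free coefficient on $H_{2,T}$. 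This ordering also removes the need for $\log(a+b)\le\log(2a)+\log(2b)$, which is false when $a$ or $b$ is small; the valid version is $\log^{+}(a+b)\le\log 2+\log^{+}a+\log^{+}b$.
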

Note that by choosing $\delta=1/\sqrt{T}$, we can guarantee that Algorithm~\ref{alg:AOBO} satisfies $\mathrm{Reg}(T) \leq O(1+V_T)$. The following lower bound shows that the reget upper bound of AOBO is optimal: 
\begin{theorem}\label{thm:SOGO_lower}
Consider any algorithm $\mathcal{A}$ that satisfies Definition~\ref{dfn:alg} with any $K, S>0$, under Assumptions~\ref{asm1}-\ref{asm3} with $d_1=d_2 \geq \Omega(1+V_T)$, there exist $\{f_t, g_t\}_{t=1}^T$ that satisfy $V_T = o(T)$ for which $\mathrm{Reg}(T) \geq \Omega (1+V_T)$.
\end{theorem}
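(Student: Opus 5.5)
We build a hard instance in which the inner level is trivial and the upper level is a resisting-oracle non-convex chain function, in the style of Carmon et al. and of the bilevel lower bound of \citet{liang2023lower}. Take $\mathcal{X}=\mathbb{R}^{d_1}$, so that $\mathcal{G}_\mathcal{X}(\mathbf{x},\mathbf{g},\gamma)=\mathbf{g}$ and $\mathrm{Reg}(T)=\sum_t\|\nabla F_t(\mathbf{x}_t)\|^2$. Set $g_t(\mathbf{x},\mathbf{y})=\frac{\mu_g}{2}\|\mathbf{y}-\mathbf{x}\|^2$, which gives $\mathbf{y}_t^*(\mathbf{x})=\mathbf{x}$ and $\nabla^2_{\mathbf{x}\mathbf{y}}g_t=-\mu_g\mathbf{I}$. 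Set $f_t(\mathbf{x},\mathbf{y})=\sigma_t\,\bar f(\mathbf{y})$, where $\bar f$ is a scaled zero-chain function, for example $\bar f(\mathbf{y})=\lambda\,\hat f(\mathbf{y}/\lambda)$ with Carmon's $\hat f$ made bounded and Lipschitz. Then $F_t(\mathbf{x})=\sigma_t\bar f(\mathbf{x})$.

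The first step is to check the zero-chain property against Definition~\ref{dfn:alg}. All the oracle quantities appearing in the span update (\ref{dfn:paper_span}) are $\nabla_{\mathbf{y}}g_i=\mu_g(\mathbf{y}-\mathbf{x})$, $\nabla_\mathbf{x} f_i=0$, $\nabla^2_{\mathbf{x}\mathbf{y}}g_i=-\mu_g\mathbf{I}$, $\nabla^2_{\mathbf{y}\mathbf{y}}g_i=\mu_g\mathbf{I}$, and $\nabla_\mathbf{y}f_i=\sigma_i\nabla\bar f$. All of these preserve the coordinate chain: if $\mathbf{x},\mathbf{y}$ are supported on the first $k$ coordinates, then $\nabla\bar f$ is supported on the first $k+1$. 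Within one round, however, the $K$ inner expansions and the $S$-fold products can discover many coordinates. We therefore change the time-variation structure: $\bar f$ is not fixed. Instead, the environment switches to a fresh rotated chain, using a disjoint block of coordinates, at $N=\Theta(1+V_T)$ switching times.

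The main construction runs as follows. Partition $[T]$ into $N$ blocks and in block $j$ use $F_t=\bar f_j$, where each $\bar f_j$ lives on its own orthogonal subspace. This is why we need $d_1=d_2\ge\Omega(1+V_T)$: it gives room for $N$ orthogonal copies. Each switch costs $|F_{t-1}-F_t|\le 2Q$, so $V_T\le 2QN$, and we choose $N\asymp V_T/Q$. This also gives $V_T=o(T)$ whenever $N=o(T)$. Because the algorithm's iterates at the first round of block $j$ lie in the span generated by the previous functions, they are orthogonal to the new block's subspace. Placing each $\bar f_j$ so that at any point with zero projection onto its subspace the gradient has norm at least a constant $c$, for instance $\bar f_j(\mathbf{x})=\phi(\langle \mathbf{u}_j,\mathbf{x}\rangle)$ with $\phi'(0)=c$, we get $\|\nabla F_t(\mathbf{x}_t)\|\ge c$ at the first round of every block. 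The regret then satisfies $\mathrm{Reg}(T)\ge Nc^2=\Omega(V_T)$. A constant $\Omega(1)$ comes from round $1$, where $\mathbf{x}_1=0$ and $\|\nabla F_1(0)\|\ge c$. Together these give $\Omega(1+V_T)$. Boundedness (Assumption~\ref{asm3}) and the smoothness constants in Assumption~\ref{asm2} are ensured by taking $\phi$ to be a bounded smooth function such as $\phi(s)=c\,\tanh(s)$, and $f_t$ depends only on $\mathbf{y}$.

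The main obstacle is making the orthogonality argument rigorous despite the span class. At round $t$, the algorithm may query the new $f_t,g_t$ before choosing $\mathbf{x}_t$? In Definition~\ref{dfn:alg}, $\mathbf{x}_t\in\mathcal{H}^t_\mathbf{x}$ is built only from information with indices $i<t$, since $\mathcal{H}_\mathbf{x}^{t}$ is expanded at the end of round $t-1$ using functions $i\le t-1$. We should check that this holds. If it does, $\mathbf{x}_t$ lies in the span of $\mathbf{e}$-directions $\{\mathbf{u}_1,\dots,\mathbf{u}_{j-1}\}$ and is orthogonal to $\mathbf{u}_j$, so $\nabla F_t(\mathbf{x}_t)=\phi'(0)\mathbf{u}_j$. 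We also need to verify that the mixed terms $\nabla^2_{\mathbf{x}\mathbf{y}}g$ and the products with $\nabla^2_{\mathbf{y}\mathbf{y}}g$ are multiples of the identity, so they never leave the span of previously revealed $\mathbf{u}_i$. To be safe we use a resisting-oracle choice: pick $\mathbf{u}_j$ orthogonal to the (at most finite-dimensional) span generated so far. This requires the dimension bound above, enlarged by a constant factor if the spans from earlier blocks have larger dimension. In that case we simply pick $\mathbf{u}_j$ among the unused coordinates, because every oracle answer in our instance lies in $\mathrm{span}\{\mathbf{x},\mathbf{y},\mathbf{u}_1,\dots,\mathbf{u}_{j-1}\}$.
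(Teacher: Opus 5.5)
Your proposal is correct and is essentially the paper's proof. The inner problem $g_t=\frac{\mu_g}{2}\|\mathbf{y}-\mathbf{x}\|^2$ gives $\mathbf{y}_t^*(\mathbf{x})=\mathbf{x}$ and makes every Hessian factor a scalar multiple of the identity. The horizon is split into $N=\Theta(1+V_T)$ blocks, each with an upper-level function depending on one fresh coordinate: the paper uses $c\sigma([\mathbf{y}]_k)+c\sigma([\mathbf{x}]_k)$, you use $c\tanh(\langle\mathbf{u}_j,\mathbf{y}\rangle)$. The span structure then forces a constant gradient norm at the first round of every block. The timing point you flag is settled directly by Definition~\ref{dfn:alg}: $\mathcal{H}_\mathbf{x}^{t+1}$ and $\mathcal{H}_\mathbf{y}^{t+1}$ involve only $f_i,g_i$ with $i\le t$, so the zero-chain and resisting-oracle detours are unnecessary.
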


\begin{remark}
We also analyze the standard bilevel local regret of SOBOW and OBBO, summarized in Table~\ref{tab:1} with complete proofs in Appendices~\ref{sec:proof_SOBOW} and~\ref{sec:proof_OBBO}, respectively. Our analysis shows that their regret upper bounds remain dependent on $H_{2,T}$, which is consistent with the window-averaged regret reported in their original work. In contrast, our AOBO algorithm eliminates the dependence on $H_{2,T}$, albeit at the expense of additional $H_{2,T}$ total gradient queries. Given that $H_{2,T}$ typically satisfies $H_{2,T} \leq O(T\log T)$, we can still guarantee the total computational cost of $\mathcal{I}_T \leq O(T\log T)$.
\end{remark}


\subsection{Fully Single-loop Online Bilevel Optimize}\label{sec:3.2}

Although AOBO achieves the optimal regret bound compared with SOBOW and OBBO, it requires to perform multiple iterations in each round. Thus we further propose a single-loop variant of AOBO, called Fully Single-loop Online Bilevel Optimizer (FSOBO), where each subproblem is solved with only one iteration. The algorithm is presented in Algorithm~\ref{alg:FSOBO}.
The following theorem provides a regret upper bound of FSOBO.

\begin{theorem}\label{thm:CTHO_upper}
Under Assumptions~\ref{asm1}-\ref{asm3}, let $\beta \leq \frac{1}{L_{g,1}}$, $\alpha \leq \frac{2\kappa_F\mu_g^2}{(\mu_g + L_{g,1})(\kappa_F\mu_g^2 + C_\beta)}$, $\gamma < \min \big\{\frac{1}{2L_F}, \sqrt{\frac{(1-\rho)C_1}{12C_\mathbf{x}}} \big\}$ for some constants $C_1$, $C_\beta$, $C_\mathbf{x}$ and $\kappa_F$ (defined in (\ref{eq:kappa_F})), Algorithm~\ref{alg:FSOBO} can guarantee $\mathrm{Reg}(T) \leq O(1 + V_T + H_{2,T} + E_{2,T})$. Here $E_{2,T} = E_{\mathbf{y}\mathbf{y},T}^g + E_{\mathbf{y},T}^f$, for $\vz = (\vx, \vy)$,
\begin{align*}
    &E_{\mathbf{y}\mathbf{y},T}^g =  \sum_{t=2}^T \sup_\mathbf{z}\left\|\nabla_{\mathbf{y}\mathbf{y}}^2 g_{t-1}(\mathbf{z}) - \nabla_{\mathbf{y}\mathbf{y}}^2 g_t(\mathbf{z})\right\|^2, \quad E_{\mathbf{y},T}^f = \sum_{t=2}^T \sup_\mathbf{z}\left\|\nabla_\mathbf{y} f_{t-1}(\mathbf{z}) - \nabla_\mathbf{y} f_t(\mathbf{z})\right\|^2.
\end{align*}

\end{theorem}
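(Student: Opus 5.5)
The proof is a Lyapunov argument for the fully single-loop FSOBO. Algorithm~\ref{alg:FSOBO} is not reproduced above, so I assume each round performs one gradient step $\mathbf{y}_{t+1}=\mathbf{y}_t-\alpha\nabla_\mathbf{y}g_t(\mathbf{x}_t,\mathbf{y}_t)$, then one projected step $\mathbf{v}_{t+1}=\mathcal{P}_\mathcal{V}(\mathbf{v}_t-\beta\nabla_\mathbf{v}\Phi_t(\mathbf{x}_t,\mathbf{y}_{t+1},\mathbf{v}_t))$, and finally the proximal update (\ref{eq:x_update}) with $\widetilde{\nabla}f_t(\mathbf{x}_t,\mathbf{y}_{t+1},\mathbf{v}_{t+1})$.

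I track the potential
\begin{align*}
\Psi_t = F_t(\mathbf{x}_t) + c_\mathbf{y}\|\mathbf{y}_t-\mathbf{y}_{t-1}^*(\mathbf{x}_t)\|^2 + c_\mathbf{v}\|\mathbf{v}_t-\mathbf{v}_{t-1}^*(\mathbf{x}_t)\|^2 .
\end{align*}
The constants $c_\mathbf{y},c_\mathbf{v}$ are chosen so that the cross terms cancel. Their relation to $\kappa_F$, $C_\beta$ and $C_1$ is exactly what the conditions on $\alpha$ and $\gamma$ encode.

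\textbf{Step 1 (upper-level descent).} $F_t$ is $L_F$-smooth by Lemma~\ref{lem:L_F}, and $\gamma\le 1/(2L_F)$. The standard projected-gradient-mapping analysis then gives
\begin{align*}
F_t(\mathbf{x}_{t+1})\le F_t(\mathbf{x}_t)-c\gamma\|\mathcal{G}_\mathcal{X}(\mathbf{x}_t,\nabla F_t(\mathbf{x}_t),\gamma)\|^2+C\gamma\|\widetilde\nabla f_t-\nabla F_t(\mathbf{x}_t)\|^2 .
\end{align*}
Here I use non-expansiveness of the prox map to compare gradient mappings built from the exact and the approximate hypergradient. The hypergradient error is bounded, by the Lipschitz assumptions, by $C(\|\mathbf{y}_{t+1}-\mathbf{y}_t^*(\mathbf{x}_t)\|^2+\|\mathbf{v}_{t+1}-\mathbf{v}_t^*(\mathbf{x}_t)\|^2)$. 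Passing to $F_{t+1}(\mathbf{x}_{t+1})$ costs at most $\sup_\mathbf{x}|F_t-F_{t+1}|$, and summing these costs gives $V_T$.

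\textbf{Step 2 (tracking recursions).} One gradient step on a strongly convex, smooth function contracts the error:
\begin{align*}
\|\mathbf{y}_{t+1}-\mathbf{y}_t^*(\mathbf{x}_t)\|^2\le\rho\|\mathbf{y}_t-\mathbf{y}_t^*(\mathbf{x}_t)\|^2 .
\end{align*}
I then shift the reference point using Young's inequality:
\begin{align*}
\|\mathbf{y}_t-\mathbf{y}_t^*(\mathbf{x}_t)\|^2\le(1+\epsilon)\|\mathbf{y}_t-\mathbf{y}_{t-1}^*(\mathbf{x}_{t-1})\|^2+C_\epsilon\big(\|\mathbf{y}_{t-1}^*(\mathbf{x}_t)-\mathbf{y}_t^*(\mathbf{x}_t)\|^2+L_\mathbf{y}^2\|\mathbf{x}_t-\mathbf{x}_{t-1}\|^2\big).
\end{align*}
Since $\|\mathbf{x}_t-\mathbf{x}_{t-1}\|^2=\gamma^2\|\mathcal{G}\|^2$ at the approximate gradient, the drift terms sum to $H_{2,T}$ plus a term of order $\gamma^2\sum\|\mathcal{G}\|^2$.

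The $\mathbf{v}$ recursion works the same way. Projected gradient on the $\mu_g$-strongly convex quadratic $\Phi_t$ contracts, with an extra error from using $\mathbf{y}_{t+1}$ in place of $\mathbf{y}_t^*$. That extra error is $C_\beta\|\mathbf{y}_{t+1}-\mathbf{y}_t^*\|^2$, which is where $C_\beta$ enters. The minimizer shift between rounds is controlled by $\|\mathbf{v}_{t-1}^*(\mathbf{x})-\mathbf{v}_t^*(\mathbf{x})\|\le C\big(\|\nabla_{\mathbf{y}\mathbf{y}}^2g_{t-1}-\nabla_{\mathbf{y}\mathbf{y}}^2g_t\|+\|\nabla_\mathbf{y}f_{t-1}-\nabla_\mathbf{y}f_t\|+\|\mathbf{y}_{t-1}^*-\mathbf{y}_t^*\|\big)$, with each quantity taken at the common point. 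This bound follows from the formula $\mathbf{v}^*=(\nabla_{\mathbf{y}\mathbf{y}}^2g)^{-1}\nabla_\mathbf{y}f$ and a perturbation estimate for the matrix inverse. Squared and summed, it produces $E_{\mathbf{y}\mathbf{y},T}^g+E_{\mathbf{y},T}^f+H_{2,T}$.

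\textbf{Step 3 (combine and telescope).} I add the three inequalities with weights $c_\mathbf{y},c_\mathbf{v}$. The weights are picked so that the $(1-\rho)$ contraction absorbs both the hypergradient-error terms from Step 1 and the $\mathbf{y}\to\mathbf{v}$ coupling; the condition on $\alpha$ ensures this. The $\gamma^2\|\mathcal{G}\|^2$ terms from the $\mathbf{x}$-movement are absorbed into the $-c\gamma\|\mathcal{G}\|^2$ descent, and this is what $\gamma<\sqrt{(1-\rho)C_1/(12C_\mathbf{x})}$ is for. Telescoping $\Psi_t$ and using $|F_t|\le Q$ together with bounded initial errors gives
\begin{align*}
\gamma\sum_t\|\mathcal{G}\|^2\le O(1+V_T+H_{2,T}+E_{2,T}).
\end{align*}

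\textbf{Main obstacle.} The delicate step is this combination: the three error sequences feed into each other, and the constants must be balanced so every positive coupling term is dominated. A secondary subtlety is converting the approximate gradient mapping (which is what $\mathbf{x}_{t+1}$ actually uses) into the exact one in (\ref{eq:reg}). This conversion costs $2\|\widetilde\nabla f_t-\nabla F_t\|^2$, which is again absorbed by the tracking terms.
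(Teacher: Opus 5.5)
Your argument is correct, but it takes a genuinely different route from the paper.

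\textbf{The paper's route.} The paper never builds a joint potential with $F_t$. It bounds the hypergradient error by $\Delta_t=\kappa_F\mu_g^2\|\mathbf{y}_{t+1}-\mathbf{y}_t^*(\mathbf{x}_t)\|^2+8L_{g,1}^2\|\mathbf{v}_{t+1}-\mathbf{w}_t^*\|^2$, with \emph{fixed} weights inherited from (\ref{A.11_2}). It then proves $\Delta_t\le\rho\Delta_{t-1}+C_\mathbf{x}\|\mathbf{x}_t-\mathbf{x}_{t-1}\|^2+(\text{variation terms})$ and sums this to get $\sum_t\|\widetilde\nabla f_t-\nabla F_t\|^2\le O(1+H_{2,T}+E_{2,T})+O(\gamma^2)\,\mathrm{Reg}(T)$. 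Only then does it plug into Lemma~\ref{lem:begin} and absorb the $\mathrm{Reg}(T)$ term.

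\textbf{Where the routes really differ.} The paper measures the $\mathbf{v}$-error against $\mathbf{w}_t^*=\mathbf{v}_t^*(\mathbf{x}_t,\mathbf{y}_{t+1})$, the exact minimizer of the quadratic actually being iterated on. So the projected step contracts with no extra error. The price is that this target moves with the iterate, which produces $C_\beta\|\mathbf{y}_t-\mathbf{y}_{t+1}\|^2=C_\beta\alpha^2\|\nabla_\mathbf{y}g_t(\mathbf{x}_t,\mathbf{y}_t)\|^2$ in Lemma~\ref{lem:v_bound_2}. Because the weights are fixed, this term has to be cancelled. The paper cancels it against the negative co-coercivity term $-(\frac{2\alpha}{\mu_g+L_{g,1}}-\alpha^2)\|\nabla_\mathbf{y}g_t(\mathbf{x}_t,\mathbf{y}_t)\|^2$ coming from (\ref{eq:sc_3}) in Lemma~\ref{lem:y_bound_2}. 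That cancellation is the sole purpose of the condition on $\alpha$.

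You instead measure the $\mathbf{v}$-error against the exact $\mathbf{v}_t^*(\mathbf{x}_t)$. Then the coupling appears as $\|\mathbf{y}_{t+1}-\mathbf{y}_t^*(\mathbf{x}_t)\|^2$, which is itself contracted. With free weights you absorb it by taking $c_\mathbf{y}\gg c_\mathbf{v}$. Both weights should be of order $\gamma$: that dominates the error terms from Step 1, and it makes the $\mathbf{x}$-movement terms $O(\gamma^3)\|\mathcal{G}\|^2$.

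\textbf{What each buys.} Your route needs no co-coercivity cancellation. It needs only $\alpha<2/(\mu_g+L_{g,1})$, which the theorem's hypothesis implies. The paper's route gives explicit constants and an error-summation step that is reused almost verbatim for SOBOW and WOBO-SL.

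\textbf{What to correct.} You say your weights are ``exactly what the conditions on $\alpha$ and $\gamma$ encode.'' That is inaccurate. In the paper $C_\beta$ multiplies $\|\mathbf{y}_t-\mathbf{y}_{t+1}\|^2$ (and $\|\mathbf{x}_t-\mathbf{x}_{t-1}\|^2$), not $\|\mathbf{y}_{t+1}-\mathbf{y}_t^*\|^2$. In your scheme the $\alpha$-condition plays no role. Your smallness threshold on $\gamma$ will also come out with constants different from $\sqrt{(1-\rho)C_1/(12C_\mathbf{x})}$. So you prove the same $O(1+V_T+H_{2,T}+E_{2,T})$ bound under your own threshold on $\gamma$.

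Also, your potential at $t=1$ refers to $\mathbf{y}_0^*$ and $\mathbf{v}_0^*$, which are undefined. Either start the telescoping at $t=2$ or set $f_0:=f_1$ and $g_0:=g_1$.
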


\begin{algorithm}[tb]
    \caption{Fully Single-loop Online Bilevel Optimizer (FSOBO)}
    \label{alg:FSOBO}
    \textbf{Input}: $\mathbf{x}_1$, $\mathbf{y}_1$, $\mathbf{v}_1$, $\alpha$, $\beta$, $\gamma$. \\
    \textbf{Output}: Decision sequences: $\{\mathbf{x}_t\}_{t=1}^T$, $\{\mathbf{y}_t\}_{t=1}^T$. 
    
    \begin{algorithmic}[1] 
        \FOR{\(t = 1\) to \(T\)}
        \STATE Output $\mathbf{x}_t$ and $\mathbf{y}_t$ and receive feedback $f_t$ and $g_t$
        \STATE
        Set $ \mathbf{y}_{t+1} \leftarrow \mathbf{y}_t - \alpha  \nabla_{\mathbf{y}} g_t(\mathbf{x}_t,\mathbf{y}_t) $, $\mathbf{v}_{t+1} \leftarrow \mathcal{P}_{\mathcal{V}}( \mathbf{v}_t - \beta \nabla\Phi_t(\mathbf{x}_t, \mathbf{y}_{t+1}, \mathbf{v}_t) )$
        \STATE Calculate $\widetilde{\nabla}f_t(\mathbf{x}_t, \mathbf{y}_{t+1}, \mathbf{v}_{t+1})$ by (\ref{eq:hypergrad}) and use it to update $\mathbf{x}_{t+1}$ by (\ref{eq:x_update})
        \ENDFOR
    \end{algorithmic}
\end{algorithm}

Theorem~\ref{thm:CTHO_upper} shows that by introducing additional environmental variation $E_{2,T}$, we only requires one iteration of subproblem solver in each round and thus reduce the computational cost. 
Intuitively, $E_{2,T}=o(T)$ guarantees the stability of the gradient descent update for the linear system (\ref{eq:Phi_t}), thereby justifying the single-iteration scheme. 
Note that SOGD~\cite{nazari2025stochastic} also provides single-loop guarantees for stochastic OBO. However, its bound relies on stronger sublinear variation regularities (see Table~\ref{tab:1}). 

\section{Online Bilevel Optimization with Window-Averaged  Local Regret}\label{sec:4}

Existing regret analyses rely on specific regularity measures of environmental variation, typically assuming that these variations grow sublinearly. However, such assumptions are inherently unobservable a priori. In this section, we investigate the OBO problem with window-averaged local regret. We propose a novel algorithm whose regret bound is independent of environmental variations, providing a more robust performance guarantee.

\subsection{Definition of Window-Averaged Bilevel Local Regret}\label{sec:4.1}

We first introduce a novel \textit{window-averaged bilevel local regret} in the following definition.
\begin{definition}[Window-Averaged Bilevel Local Regret]\label{dfn:window_regret}
Given a window size $w \leq T$ and weight parameter $\eta \in (0,1)$, consider the averaged objective functions:
\begin{align}\label{eq:avgfunc}
    \widehat{f}_{t,w}(\mathbf{x}, \mathbf{y}) := \frac{1}{W}\sum_{i=0}^{w-1}\eta^if_{t-i}(\mathbf{x}, \mathbf{y}) \quad \text{and} \quad \widehat{g}_{t,w}(\mathbf{x}, \mathbf{y}) := \frac{1}{W}\sum_{i=0}^{w-1}\eta^ig_{t-i}(\mathbf{x}, \mathbf{y}), 
\end{align}
where $W := \sum_{i=0}^{w-1}\eta^i$ and $f_t \equiv 0$ for $t<0$. Suppose the inner-level optimum on $\widehat{g}_{t,w}$ is  $\mathbf{y}_{t,w}^*(\mathbf{x}) := \arg\min_\mathbf{y}\widehat{g}_{t,w}(\mathbf{x}, \mathbf{y})$. Our window-averaged bilevel local regret is defined as
\begin{align}
    \text{Reg}_{w}(T) := \sum_{t=1}^T\left\|\mathcal{G}_\mathcal{X}\left(\mathbf{x}_t, \nabla\widehat{f}_{t,w}(\mathbf{x}_t, \mathbf{y}_{t,w}^*(\mathbf{x}_t)), \gamma\right)\right\|^2. \label{eq:win_reg}
\end{align}
\end{definition}

Our definition is similar to the definition in \citet{tarzanagh2024online}, however, they do not take into account the averaging of the inner functions, which partly explains the additional dependency of their results on $H_{1,T}$ and $H_{2,T}$, which describe the drift of $\mathbf{y}_t^*(\mathbf{x})$ with time step $t$.


\subsection{Window-Averaged Online Bilevel Optimizer}\label{sec:4.2}

\begin{algorithm}[tb]
    \caption{Window-averaged Online Bilevel Optimizer (WOBO)}
    \label{alg:WOBO}
    \textbf{Input}: $\mathbf{x}_1$, $\mathbf{y}_1$, $\mathbf{v}_1$, $\alpha$, $\beta$, $\gamma$, $w$, $\eta$, $\delta$. \\
    \textbf{Output}: Decision sequences: $\{\mathbf{x}_t\}_{t=1}^T$, $\{\mathbf{y}_t\}_{t=1}^T$. 
    
    \begin{algorithmic}[1] 
        \FOR{\(t = 1\) to \(T\)}
        \STATE Output $\mathbf{x}_t$ and $\mathbf{y}_t$ and receive feedback $f_t$ and $g_t$
        \STATE Set $(\mathbf{x}_{t+1}, \mathbf{y}_{t+1}, \mathbf{v}_{t+1}) \leftarrow (\mathbf{x}_t, \mathbf{y}_t, \mathbf{v}_t)$
        \REPEAT
        \STATE \(
        \mathbf{y}_{t+1} \leftarrow \mathbf{y}_{t+1} - \alpha\nabla_\mathbf{y}\widehat{g}_{t,w}(\mathbf{x}_{t+1}, \mathbf{y}_{t+1})
        \)
        \STATE  \(
        \mathbf{v}_{t+1} \leftarrow \mathbf{v}_{t+1} - \frac{\beta}{W}\sum_{i=0}^{w-1}\eta^i\Phi_{t-i}(\vx_{t+1}, \vy_{t+1}, \vv_{t+1})
        \)
        \STATE Calculate $\widetilde{\nabla}\widehat{f}_{t,w}(\mathbf{x}_{t+1}, \mathbf{y}_{t+1}, \mathbf{v}_{t+1})$ by (\ref{eq:win_hypergrad}) and use it to update $\mathbf{x}_{t+1}$ by (\ref{eq:x_update})
        \UNTIL condition~(\ref{eq:condition}) is satisfied (for WOBO-SL, the loop terminates after one iteration)
        \ENDFOR
    \end{algorithmic}
\end{algorithm}

Similar to (\ref{eq:y_update}) and (\ref{eq:v_update}), we apply gradient descent to solve $\min_\vy\widehat{g}_{t,w}(\vx_t, \vy)$ and the averaged linear system, respectively.
Note that the constraint set $\mathcal{Y}$ in Assumption~\ref{asm4} is prior information, so the algorithm is not allowed to directly apply projected gradient methods over it.
The window-averaged hypergradient can be calculated by the following  formula:
\begin{align}
    \widetilde{\nabla}\widehat{f}_{t,w}(\mathbf{x}, \mathbf{y}, \mathbf{v}) = \nabla_\mathbf{x}\widehat{f}_{t,w}(\mathbf{x}, \mathbf{y}) - \nabla_{\mathbf{x}\mathbf{y}}^2\widehat{g}_{t,w}(\mathbf{x}, \mathbf{y})\mathbf{v}. \label{eq:win_hypergrad}
\end{align}
We extend the adaptive inner-loop iteration strategy in Algorithm~\ref{alg:AOBO} and aim to capture the sublinear environmental
variation within a given time window. We propose Window-averaged Online Bilevel Optimizer (WOBO) in Algorithm~\ref{alg:WOBO} which adopts following error tolerance condition:
{\small
\begin{align}
    \left\| \widetilde{\nabla} \widehat{f}_{t,w}(\mathbf{x}_{t+1}, \mathbf{y}_{t+1}, \mathbf{v}_{t+1}) \right\|^2 + \kappa_F \left\| \nabla_\mathbf{y} \widehat{g}_{t,w}(\mathbf{x}_{t+1}, \mathbf{y}_{t+1}) \right\|^2 
    + 8\kappa_g^2 \left\| \nabla_\mathbf{v} \widehat{\Phi}_{t,w}(\mathbf{x}_{t+1}, \mathbf{y}_{t+1}, \mathbf{v}_{t+1}) \right\|^2 \leq \frac{\delta^2}{W^2}\label{eq:condition}
\end{align}}
The essence of Eq.(\ref{eq:condition}) is to use a computable upper bound of $\nabla\widehat{f}_{t,w}(\mathbf{x}, \mathbf{y}_{t,w}^*(\mathbf{x}))$ at each round to capture a sequence of $(\mathbf{x}, \mathbf{y}, \mathbf{v})$ that makes the hypergradient sufficiently small.
The following theorem establishes an upper bound on the proposed regret in~\eqref{eq:win_reg} for WOBO.
\begin{theorem}\label{thm:unres_win_dl}
Under Assumptions~\ref{asm1}-\ref{asm3} and~\ref{asm4}, let $\delta>0$, $\beta = \frac{1}{L_{g,1}}$, $\alpha = \frac{1}{L_{g,1}}$ and $\gamma \leq \min \{\frac{1}{4L_F}, \frac{1}{456\kappa_g^4L_{g,1}L_\mathbf{v}\sqrt{\kappa_F}} \}$ with some constant $L_\vv$ (defined in Lemma~\ref{lem:L_v}), WOBO can obtain $\mathrm{Reg}_w(T) \leq O(T/W^2)$, and the total number of iterations satisfies $\mathcal{I}_T \leq O(WT)$.
\end{theorem}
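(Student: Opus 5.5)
The plan has two parts: a per-round regret bound, which follows from the stopping rule (\ref{eq:condition}), and an amortized count of inner iterations, obtained from a potential-function descent argument.

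\textbf{Step 1 (per-round regret).} Since the while-loop only terminates once condition (\ref{eq:condition}) holds, the final iterate $(\mathbf{x}_{t+1},\mathbf{y}_{t+1},\mathbf{v}_{t+1})$ of round $t$ satisfies a computable certificate. I want to turn this into a bound on the true window-averaged hypergradient at $\mathbf{x}_{t+1}$. The standard AID error decomposition gives
\begin{align*}
\big\|\nabla\widehat{f}_{t,w}(\mathbf{x},\mathbf{y}_{t,w}^*(\mathbf{x})) - \widetilde{\nabla}\widehat{f}_{t,w}(\mathbf{x},\mathbf{y},\mathbf{v})\big\| \le C\big(\|\mathbf{y}-\mathbf{y}_{t,w}^*(\mathbf{x})\| + \|\mathbf{v}-\mathbf{v}_{t,w}^*(\mathbf{x})\|\big).
\end{align*}
Strong convexity of $\widehat{g}_{t,w}$ bounds the first term by $\mu_g^{-1}\|\nabla_\mathbf{y}\widehat{g}_{t,w}\|$. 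Strong convexity of $\widehat{\Phi}_{t,w}$, together with the Lipschitz dependence of $\mathbf{v}^*$ on $\mathbf{y}$, bounds the second term by $\mu_g^{-1}\|\nabla_\mathbf{v}\widehat{\Phi}_{t,w}\| + \kappa_g\|\mathbf{y}-\mathbf{y}^*\|$. The weights $\kappa_F$ and $8\kappa_g^2$ in (\ref{eq:condition}) are exactly those needed for this chain, so
\begin{align*}
\|\nabla\widehat{f}_{t,w}(\mathbf{x}_{t+1},\mathbf{y}_{t,w}^*(\mathbf{x}_{t+1}))\|^2 \le O(\delta^2/W^2).
\end{align*}
Nonexpansiveness of the gradient mapping gives $\|\mathcal{G}_\mathcal{X}\|\le\|\mathbf{g}\|$ in the unconstrained case, or the analogous property via the projection. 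The regret, however, is evaluated at $\mathbf{x}_t$ with $\widehat f_{t,w}$, whereas the certificate is for $\widehat f_{t-1,w}$ at $\mathbf{x}_t$. I therefore compare $\widehat f_{t,w}$ with $\widehat f_{t-1,w}$. Their difference is $\frac{1}{W}\big(f_t - \eta^{w}f_{t-w}\big) + \frac{\eta-1}{W}\sum_i \eta^i f_{t-1-i}$, with the analogous expression for $g$. Using Assumptions~\ref{asm2} and~\ref{asm4} ($\mathbf{y}^*$ lies in a domain of diameter $D$, and gradients are bounded on that domain), the gradient difference of the averaged hypergradients at a fixed $\mathbf{x}$ is $O(1/W)$. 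This includes the shift of $\mathbf{y}_{t,w}^*$, which is $O(1/W)$ by the implicit-function bound applied to $\nabla_\mathbf{y}\widehat g$. Summing the squares over $t$ then yields $\mathrm{Reg}_w(T)\le \sum_t O(\delta^2/W^2 + 1/W^2) = O(T/W^2)$.

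\textbf{Step 2 (iteration count).} Define the potential
\begin{align*}
\Psi_t(\mathbf{x},\mathbf{y},\mathbf{v}) = \widehat F_{t,w}(\mathbf{x}) + c_1\|\mathbf{y}-\mathbf{y}_{t,w}^*(\mathbf{x})\|^2 + c_2\|\mathbf{v}-\mathbf{v}_{t,w}^*(\mathbf{x})\|^2.
\end{align*}
I would show that every inner iteration in which (\ref{eq:condition}) fails decreases $\Psi_t$ by at least $c\,\delta^2/W^2$. This is the usual single-loop bilevel descent lemma: the step $\gamma\le 1/(4L_F)$ gives descent in $\widehat F$; one gradient step on $\mathbf{y}$ or $\mathbf{v}$ contracts the tracking errors; and the cross terms coming from movement in $\mathbf{x}$ are absorbed using the $\gamma$ restriction involving $\kappa_g^4 L_{g,1}L_\mathbf{v}\sqrt{\kappa_F}$. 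Between rounds, replacing $t-1$ by $t$ increases $\Psi$ by at most $O(1/W)$. The $\widehat F$ term changes by $O(Q/W)$ by Assumption~\ref{asm3} and the averaging, and the tracking terms change by $O(D/W + 1/W^2)$ by Assumption~\ref{asm4}. Telescoping over all rounds gives
\begin{align*}
\frac{\delta^2}{W^2}\,\mathcal{I}_T \le O\big(\Psi_1 + T/W\big),
\end{align*}
hence $\mathcal{I}_T\le O(WT)$ for constant $\delta$, plus $T$ for the final check in each round.

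\textbf{Main obstacle.} I expect the hardest part to be the descent lemma for $\Psi$. The $\mathbf{x}$-update uses the inexact hypergradient at the current $(\mathbf{y},\mathbf{v})$, so the decrease must be expressed in terms of the very left-hand side of (\ref{eq:condition}), not just the true gradient. Getting the constants $\kappa_F$ and $8\kappa_g^2$ to line up requires careful Young-inequality balancing, together with the Lipschitz continuity of $\mathbf{v}^*$ given by Lemma~\ref{lem:L_v}. A second concern is that $\mathbf{y}$ is unprojected. Keeping $\mathbf{y}$ within $O(D)$ of $\mathcal{Y}$ should follow from the monotone decrease of the tracking error within each round.
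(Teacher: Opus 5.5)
Your proposal is correct and is essentially the paper's proof. Step~1 matches it exactly: the round-$(t-1)$ stopping certificate at $(\mathbf{x}_t,\mathbf{y}_t,\mathbf{v}_t)$, plus the $O(1/W)$ window drift of Lemmas~\ref{lem:y_bound_win_1} and~\ref{lem:v_bound_win_1}. Your potential argument in Step~2 is the same descent-plus-tracking count that the paper bookkeeps as within-round sums (Lemmas~\ref{lem:y_bound_win_2}, \ref{lem:v_bound_win_2}), with each round's initial tracking errors reset to $O((1+\delta^2)/W^2)$ by the previous stopping rule and the $\widehat{f}_{t,w}$ drift telescoped via Lemma~\ref{lem:F-F_win}.

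One caveat, shared with the paper: the per-iteration decrease is in $\|\mathcal{G}_\mathcal{X}(\mathbf{x},\widetilde{\nabla}\widehat{f}_{t,w},\gamma)\|^2$, which dominates $\|\widetilde{\nabla}\widehat{f}_{t,w}\|^2$ only when $\mathcal{X}=\mathbb{R}^{d_1}$. For a constrained $\mathcal{X}$, condition~(\ref{eq:condition}) should therefore use the gradient mapping; otherwise the inner loop need not terminate.
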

Theorem~\ref{thm:unres_win_dl} shows that, without imposing any sublinear assumption on environmental variation, WOBO achieves sublinear regret when $W=o(T)$.
In the following theorem, we establish a lower bound on our window-averaged regret that matches our upper bound.
\begin{theorem}\label{thm:unres_win_lower}
Consider any online hypergradient-based algorithm, under Assumptions~\ref{asm1}-\ref{asm3} and~\ref{asm4} with $d_1=d_2 \geq \Omega(T)$, given window size $w > 0$ and $\eta \in(0,1)$, there exist $\{f_t, g_t\}_{t=1}^T$ for which $\mathrm{Reg}_w(T) \geq \Omega(T/W^2)$.
\end{theorem}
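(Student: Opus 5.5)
The plan is to adapt the classical lower-bound construction for window-averaged local regret in online non-convex optimization (Hazan et al.) to the bilevel setting. We let the inner problem be trivial, so the bilevel structure collapses, and then choose an adversarial sequence of upper-level losses so that the averaged gradient cannot be driven below order $1/W$.

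Concretely, take $\mathcal{X}=[-1,1]^{d_1}$ (or a ball) and $g_t(\mathbf{x},\mathbf{y})=\frac{\mu_g}{2}\|\mathbf{y}\|^2$ for all $t$. Then $\mathbf{y}_t^*(\mathbf{x})=\mathbf{y}_{t,w}^*(\mathbf{x})=0$, Assumptions~\ref{asm1}, \ref{asm2}(ii)-(iii) and \ref{asm4} hold trivially (with $\mathcal{Y}=\{0\}$), and $\nabla_{\mathbf{x}\mathbf{y}}^2 g_t\equiv 0$. The hypergradient therefore equals $\nabla_\mathbf{x} f_t(\mathbf{x},0)$, and the span condition in Definition~\ref{dfn:alg} reduces to $\mathbf{x}_{t+1}\in\mathrm{Span}\{\mathbf{x}_i,\nabla_\mathbf{x} f_i(\cdot)\}$.

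For the upper level, I would use linear losses along unit directions: $f_t(\mathbf{x},\mathbf{y})=\langle \mathbf{e}_{j_t},\mathbf{x}\rangle$ plus a bounded term in $\mathbf{y}$ if needed for Lipschitzness. These are bounded by $Q$ on $\mathcal{X}$, $1$-Lipschitz and smooth. The adversary works in blocks. At the start of each block it picks a fresh coordinate $j$ that the algorithm's span has not yet touched; this is possible because $d_1\geq\Omega(T)$ and each round adds only $O(1)$ new coordinates. Within a block the sign pattern is arranged so that the window-averaged gradient $\frac{1}{W}\sum_{i}\eta^i \nabla f_{t-i}$ has magnitude $\gtrsim 1/W$ along a coordinate $j$ that was first revealed at time $t$. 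The simplest arrangement gives, at each time $t$, a new direction $\mathbf{e}_{t}$ with weight $\eta^0=1$, so $\nabla\widehat f_{t,w}$ has a component $\mathbf{e}_t/W$.

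The key step is to bound the gradient mapping from below: $\|\mathcal{G}_\mathcal{X}(\mathbf{x}_t,\mathbf{g},\gamma)\|\gtrsim |g_{j}|$ on the coordinate $j=t$. This holds because $\mathbf{x}_t$ lies in the span of previously revealed coordinates, so $[\mathbf{x}_t]_j=0$ is interior to the box in that coordinate. For a box, projection acts coordinatewise, so that coordinate of the gradient mapping equals $g_j=1/W$ whenever $\gamma/W\le 1$. Summing over $t$ gives $\mathrm{Reg}_w(T)\ge \sum_t 1/W^2=T/W^2$. Since $W\ge1$, this is also $\Omega(1)$ per horizon when needed.

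The main obstacle is making the argument robust to an algorithm that might place $\mathbf{x}_t$ on the boundary before seeing coordinate $j$. The span condition together with $\mathcal{H}_\mathbf{x}^1=\{0\}$ rules this out, but I must check that the HVP terms in (\ref{dfn:paper_span}) vanish. They do, because $\nabla_{\mathbf{x}\mathbf{y}}^2 g\equiv0$, so no hidden coordinates leak. I also need to verify that $f_t\equiv0$ for $t<0$ and the sign choices keep $|f_t|\le Q$.
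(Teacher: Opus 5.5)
Your proposal is correct. It uses the same core mechanism as the paper's proof: one fresh coordinate per round, the span restriction with $\mathcal{H}_\mathbf{x}^1=\{0\}$ forcing $[\mathbf{x}_t]_t=0$, and the undiscounted $i=0$ term of the window contributing a component of size $\Theta(1/W)$ along $\mathbf{e}_t$. The instance you build on top of it is different.

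\textbf{The paper's instance.} It works on $\mathcal{X}=\mathbb{R}^d$, so $\mathcal{G}_\mathcal{X}$ reduces to the plain gradient. It gets Assumption~\ref{asm3} from bounded sigmoid losses $f_t=c\sigma([\mathbf{y}]_t)+c\sigma([\mathbf{x}]_t)$. These are coupled to a nontrivial inner problem $g=\frac{\mu}{2}\|\mathbf{y}-\vec{\phi}(\mathbf{x})\|^2$ with $\mathbf{y}^*(\mathbf{x})=\vec{\phi}(\mathbf{x})$. The per-round bound then comes from $\sigma'(0)=1/4$.

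\textbf{Your instance.} You decouple the inner level and use linear losses, so gradients are constant. You never have to check leakage through $\mathbf{y}$-subspaces or Hessian-vector-product terms, and the constants in Assumption~\ref{asm2} are met without any tuning of $\mu$. The price is that linear losses force a bounded $\mathcal{X}$ to satisfy Assumption~\ref{asm3}, so you must analyze the projection. Coordinatewise clipping on a box does this cleanly. It also means projected methods genuinely stay inside the span class on this instance, which is a nice consistency check the paper never needs.

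\textbf{Trade-off.} The paper's construction gives a genuinely bilevel hard instance with $\nabla^2_{\mathbf{x}\mathbf{y}}g\neq0$, so the bound is visibly not an artifact of a decoupled inner problem. Yours is more elementary.

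\textbf{Minor cleanups.}
\begin{itemize}
\item The block and sign-pattern discussion is unnecessary. The fresh-coordinate-per-round arrangement is the whole proof.
\item Take $f_t(\mathbf{x},\mathbf{y})=c\langle\mathbf{e}_t,\mathbf{x}\rangle$ with $c=\min\{L_{f,0},Q\}$, and set $f_t\equiv 0$ for $t\le 0$.
\item You can drop the requirement $\gamma c/W\le 1$. In general the $t$-th coordinate of $\mathcal{G}_\mathcal{X}(\mathbf{x}_t,\cdot,\gamma)$ equals $\min\{c/W,1/\gamma\}$. Since $W\ge1$, this is at least $\min\{c,1/\gamma\}/W$, which still gives $\mathrm{Reg}_w(T)\ge \min\{c,1/\gamma\}^2\,T/W^2$.
\end{itemize}
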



Note that although WOBO achieves optimal regret, it requires multiple iterations in each round. To address this, we consider a single-loop variant of WOBO, where the updates in lines 4–8 of Algorithm~\ref{alg:WOBO} are performed only once per round. We refer to this variant as WOBO-SL. The following theorem shows that WOBO-SL still preserves a sublinear regret when $W=o(T)$.
\begin{theorem}\label{thm:unres_win_sl}
    Under Assumptions~\ref{asm1}-\ref{asm4}, let $\beta\leq \frac{1}{L_{g,1}}$, $\alpha \leq \frac{4\kappa_F\mu_gL_{g,1}}{\kappa_g\mu_g(2\kappa_F\mu_g(\mu_g + L_{g,1}) + 8c_\beta L_{g,1}^3)}$ and $\gamma \leq \min\{\frac{1}{2L_F}, \sqrt{\frac{1-\rho}{48c_\vx}}\}$ with some constants $c_\beta$ and $c_\vx$ (defined in (\ref{eq:c_beta}) and (\ref{eq:c_x}), respectively), WOBO-SL can obtain $\mathrm{Reg}_w(T) \leq O(T/W)$.
\end{theorem}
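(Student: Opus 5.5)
The plan is to adapt the single-loop potential argument behind Theorem~\ref{thm:CTHO_upper} to the window-averaged functions $\widehat f_{t,w},\widehat g_{t,w}$. We then show that every variation term now arising carries a factor $1/W$ and is uniformly bounded by Assumptions~\ref{asm3} and~\ref{asm4}, so no sublinear-variation hypothesis is needed.

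\textbf{Step 1: averaged quantities and descent.} Define $\widehat F_{t,w}(\vx)=\widehat f_{t,w}(\vx,\vy_{t,w}^*(\vx))$ and $\widehat\Phi_{t,w}$ analogously. Averaging preserves all the structural constants: $\widehat g_{t,w}$ is $\mu_g$-strongly convex with $L_{g,1}$-Lipschitz gradient, and $\widehat f_{t,w}$ satisfies Assumption~\ref{asm2}. Hence Lemma~\ref{lem:L_F} gives $L_F$-smoothness of $\widehat F_{t,w}$, and Lemma~\ref{lem:L_v} gives $L_\vv$-Lipschitzness of $\vv_{t,w}^*$. The standard projected-descent inequality with $\gamma\le 1/(2L_F)$ then yields
\begin{align*}
\widehat F_{t,w}(\vx_{t+1})\le \widehat F_{t,w}(\vx_t)-c\gamma\|\mathcal G_\mathcal X(\vx_t,\nabla\widehat F_{t,w}(\vx_t),\gamma)\|^2+C\gamma\big(\|\vy_{t+1}-\vy^*_{t,w}(\vx_t)\|^2+\|\vv_{t+1}-\vv^*_{t,w}(\vx_t)\|^2\big).
\end{align*}

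\textbf{Step 2: error recursions.} One gradient step on $\widehat g_{t,w}$ contracts by $\rho$, and similarly for $\vv$. This gives
\begin{align*}
e^\vy_{t+1}\le(1+\epsilon)\rho\, e^\vy_t+C_\epsilon\big(\gamma^2 \|\mathcal G_t\|^2+\|\vy^*_{t,w}(\vx_t)-\vy^*_{t-1,w}(\vx_t)\|^2\big),
\end{align*}
and an analogous bound for $\vv$ with additional drift terms $\|\nabla_{\vy\vy}^2\widehat g_{t,w}-\nabla_{\vy\vy}^2\widehat g_{t-1,w}\|$ and $\|\nabla_\vy\widehat f_{t,w}-\nabla_\vy\widehat f_{t-1,w}\|$. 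The key observation is that
\begin{align*}
\widehat g_{t,w}-\widehat g_{t-1,w}=\tfrac1W\Big(g_t-\eta^{w}g_{t-w}-(1-\eta)\textstyle\sum_{i=1}^{w-1}\eta^{i-1}g_{t-i}\Big),
\end{align*}
and the same telescoping applies to $\widehat f$.
\begin{itemize}
\item For $\eta=1$, every drift is at most $O(1/W)$ times a uniform gradient bound.
\item For $\eta<1$, the extra sum is $O((1-\eta)\cdot W)/W=O(1-\eta)$ at worst. This is the delicate case. Here I would instead bound $\|\vy^*_{t,w}-\vy^*_{t-1,w}\|$ via the first-order condition: $\nabla_\vy\widehat g_{t,w}(\vx,\vy^*_{t-1,w})$ differs from zero by $\frac1W\|\nabla_\vy g_t-\eta^w\nabla_\vy g_{t-w}\|$ plus a term equal to $(1-\eta)\cdot$(optimality residual of the previous averaged function). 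That residual vanishes, because $W\widehat g_{t-1,w}$ is exactly the sum appearing in the extra term. So the shift is $\frac1{W\mu_g}\|\nabla_\vy g_t(\vx,\vy^*_{t-1,w})-\eta^w\nabla_\vy g_{t-w}(\cdot)\|$.
\end{itemize}
The remaining gradient norms are bounded using Assumption~\ref{asm4}: points of $\mathcal Y$ lie within $D$ of $\vy_t^*(\vx)$, so $\|\nabla_\vy g_t\|\le L_{g,1}D$. Consequently each drift squared is $O(1/W^2)$, and the $\vv$-drifts likewise.

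\textbf{Step 3: potential and telescoping.} Form the Lyapunov function $\Psi_t=\widehat F_{t,w}(\vx_t)+K_\vy e^\vy_t+K_\vv e^\vv_t$. The constraint $\gamma\le\sqrt{(1-\rho)/(48c_\vx)}$ absorbs the $\gamma^2\|\mathcal G_t\|^2$ feedback terms, giving
\begin{align*}
c'\gamma\|\mathcal G_t\|^2\le\Psi_t-\Psi_{t+1}+[\widehat F_{t+1,w}(\vx_{t+1})-\widehat F_{t,w}(\vx_{t+1})]+O(1/W^2).
\end{align*}
Summing over $t$, the function-switch term is the problematic piece: its naive bound is the window analogue of $V_T$. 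Using the same telescoping identity, it equals $\frac1W(F_{t+1}-\eta^wF_{t+1-w})$ plus an $O(1-\eta)$ correction, whose sum is $O(T/W)$ by Assumption~\ref{asm3}; the mismatch between $\vy^*$ of single and averaged $g$ is controlled through the $O(1/W)$ shift bound above. Together with $\sum_t O(1/W^2)\le O(T/W)$ and $\Psi_1-\Psi_{T+1}=O(1)$, this yields $\mathrm{Reg}_w(T)\le O(T/W)$.

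The main obstacle is Step 3's function-switch term when $\eta<1$. The $(1-\eta)$-weighted part of $\widehat F_{t+1,w}-\widehat F_{t,w}$ is not a difference of consecutive single functions, so its $O(1/W)$ per-round bound, rather than $O(1)$, must be extracted by exploiting $\sum_t$ and the geometric weights. If that fails, I would fall back to the decomposition $W\widehat F_{t+1,w}=F_{t+1}+\eta\, W\widehat F_{t,w}-\eta^wF_{t+1-w}$, which gives the difference as $\frac1W(F_{t+1}-\eta^wF_{t+1-w})-(1-\eta)\widehat F_{t,w}$, with the last term bounded by $Q/W\cdot(1-\eta)W/W$.
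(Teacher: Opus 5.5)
Your plan is correct and is essentially the paper's proof. The $O(1/W)$ shift of $\vy^*_{t,w}$ and $\vv^*_{t,w}$, obtained from $\widehat g_{t,w}-\eta\,\widehat g_{t-1,w}=\frac1W(g_t-\eta^w g_{t-w})$ and Assumption~\ref{asm4}, is Lemmas~\ref{lem:y_bound_win_1}--\ref{lem:v_bound_win_1}; the contracting error recursion with $O(1/W^2)$ drift and $\gamma$-absorbed $\|\vx_t-\vx_{t-1}\|^2$ feedback is the body of the proof, except that the paper sums that recursion geometrically and plugs it into Lemma~\ref{lem:begin_win} instead of forming one Lyapunov function; and the $T/W$ rate comes only from the function-switch sum, Lemma~\ref{lem:F-F_win}.

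The case $\eta<1$ that you flag as the main obstacle is not one: since $W=(1-\eta^w)/(1-\eta)$ one has $1-\eta\le 1/W$, so every $(1-\eta)$-weighted term is automatically $O(1/W)$. That is why the paper just evaluates at the common point $\vz=(\vx_t,\vy^*_{t-1,w}(\vx_t))$, bounds $|\widehat f_{t,w}(\vz)-\widehat f_{t-1,w}(\vz)|\le 2Q/W$ (the absolute coefficients sum to $2$), and pays $L_{f,0}\|\vy^*_{t,w}(\vx_t)-\vy^*_{t-1,w}(\vx_t)\|=O(1/W)$ for changing the inner point. Your ``fallback'' is therefore the actual argument. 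The single-versus-averaged minimizer mismatch only affects the size of $f_i(\vz)$, which is an $O(1)$ constant by Assumption~\ref{asm4}; it is not something the shift bound needs to control.
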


\section{Experiments}\label{sec:6}

In this section, we conduct one simulation experiment and two real-data experiments to validate our theory and evaluate the effectiveness of our algorithms.

\subsection{Synthetic Experiment}\label{sec:toy_experiment}

In Section 3, our theoretical analysis relies on certain measures of environmental variation, such as $V_T$ and $H_{2,T}$, which are intractable to compute on real-world datasets. Therefore, we validate our theoretical results on a toy example, where the environmental variation can be computed exactly and controlled via specific parameters.
We devide $f_t$ into $N$ blocks. For $k\in[N]$ and $B=\lceil T/N \rceil$, we set $f_{(k-1)B+1}(\mathbf{x},\mathbf{y}) = f_{(k-1)B+2}(\mathbf{x},\mathbf{y}) = \cdots = f_{kB}(\mathbf{x},\mathbf{y}) = \tilde{f}_k(\mathbf{x},\mathbf{y})$ and set $g_t$ analogously. Here we define
\begin{align*}
    \tilde{f}_k(\mathbf{x}, \mathbf{y}) = \left(1 - e^{-[\mathbf{y}]_1^{10}}\right)\tanh(10[\mathbf{x}]_k) \quad \text{and} \quad \tilde{g}_k(\mathbf{x}, \mathbf{y}) = 0.5\left\|\mathbf{y} - a_k\mathbf{e}_1\right\|^2, 
\end{align*}
where $a_k = 1.5(-1)^{\left\lceil \frac{k}{n} \right\rceil}$ for some $n\in \mathbb{N}^+$. We run AOBO, SOBOW, and OBBO for $T=1000$, with the inner learning rate $\alpha=0.1$ and the outer learning rate $\gamma=0.05$. $\delta=0.5$ for AOBO and the number of inner iterations for OBBO is $K=10$. The variations $V_T$ and $H_{2,T}$ can be controlled by adjusting $B$ and $n$. Our experimental results are shown in the following Table~\ref{tab:variation_comparison}.
\begin{table}[t]
\caption{Comparison of algorithm regrets under environments with different variation patterns.}
\label{tab:variation_comparison}
\centering
\small
\setlength{\tabcolsep}{5pt}
\begin{tabular}{lccccc}
\toprule
\textbf{Pattern (a)} & $(B{=}500, n{=}1)$ & $(B{=}250, n{=}2)$ & $(B{=}100, n{=}5)$ & $(B{=}50, n{=}10)$ & $(B{=}10, n{=}50)$ \\
\midrule
$V_T$     & 2.0  & 4.0  & 10.0  & 20.0  & 100.0 \\
$H_{2,T}$ & 18.0 & 18.0 & 18.0  & 18.0  & 18.0 \\
\midrule
AOBO  & 300.011  & 500.018  & 1100.030  & 2100.015  & 10100.005 \\
SOBOW & 954.342  & 1154.342 & 1754.336  & 2754.323  & 11064.305 \\
OBBO  & 400.173  & 600.170  & 1200.165  & 2200.156  & 10200.104 \\
\midrule
\textbf{Pattern (b)} & $n{=}50$ & $n{=}25$ & $n{=}10$ & $n{=}5$ & $n{=}1$ \\
\midrule
$V_T$     & 100.0 & 100.0 & 100.0 & 100.0 & 100.0 \\
$H_{2,T}$ & 18.0  & 36.0  & 90.0  & 180.0 & 900.0 \\
\midrule
AOBO  & 10100.005 & 10100.005 & 10100.005 & 10100.005 & 10100.005 \\
SOBOW & 11064.305 & 11688.409 & 13560.512 & 16591.754 & 99717.328 \\
OBBO  & 10200.104 & 10400.103 & 11000.098 & 12000.090 & 20000.039 \\
\bottomrule
\end{tabular}
\vspace{-0.1in}
\end{table}
In Pattern(a), we fix $H_{2,T}$ and compare the regret under different $V_T$.
From the table, we can observe that the regrets of AOBO are approximately satisfies $\mathrm{Reg}(T) \approx 100(1+V_T)$, which validates our theoretical regret bound in Theorem~\ref{thm:SOGO_upper}. In Pattern(b), we fix $V_T$ and set $B=10$, vary $n$ to adjust $H_{2,T}$ for comparison.
It shows that the regret of AOBO does not rely on $H_{2,T}$, whereas the regret of SOBOW and OBBO increases as $H_{2,T}$ grows.

\subsection{Online Data Hyper-cleaning}\label{sec:hyper_cleaning}

\begin{figure*}[t]
\centering
\begin{tabular}{ccc}
\includegraphics[width=0.3\linewidth]{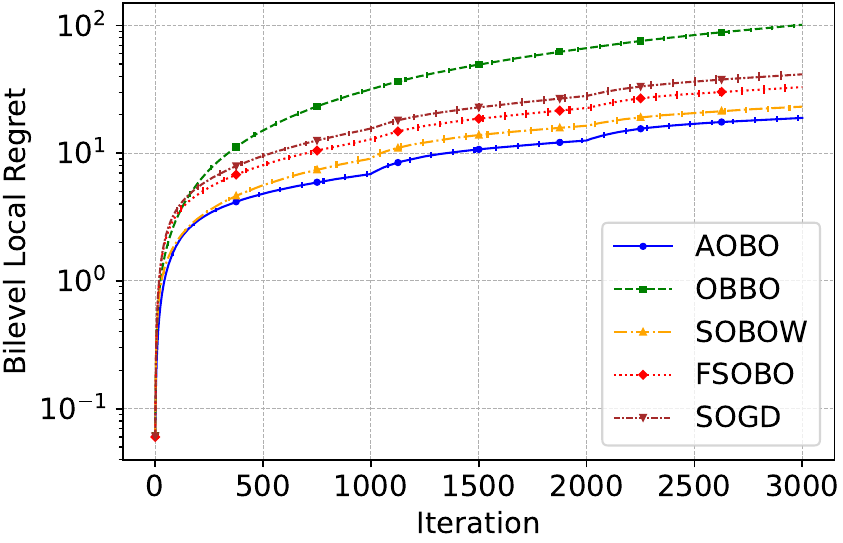} &
\includegraphics[width=0.3\linewidth]{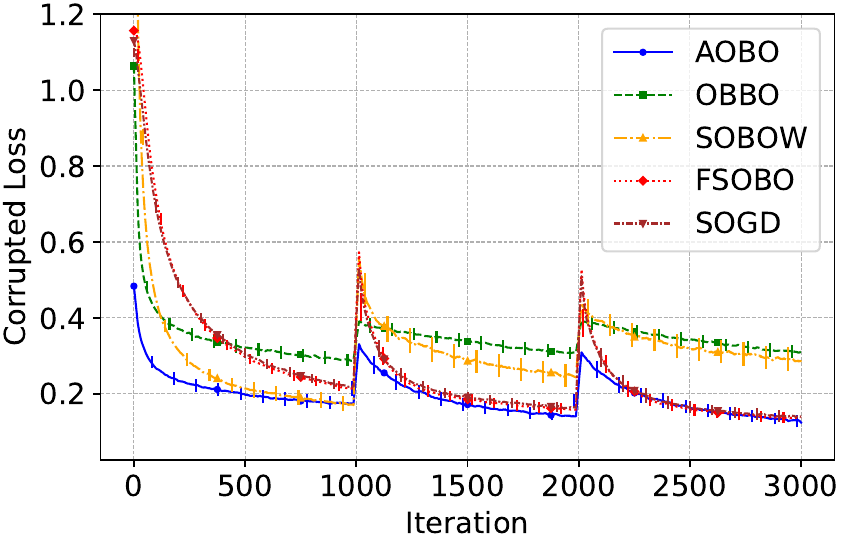} &
\includegraphics[width=0.3\linewidth]{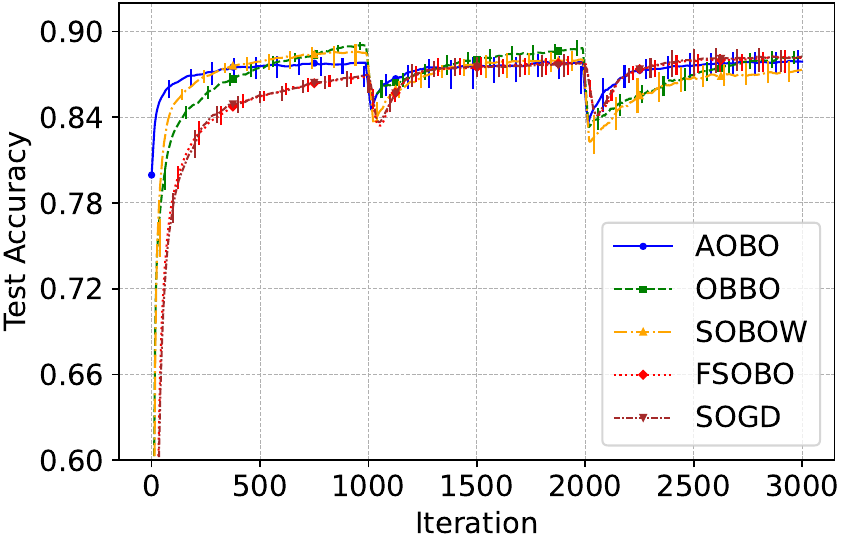} \\
(a) regret & 
(b) training loss & 
(c) test accuracy \\
\end{tabular}
\caption{Performance of different algorithms on the online hyper-cleaning.}
\label{fig:hyperclean_standard}
\end{figure*}

Hyper-cleaning formulates label-noise learning as a bilevel optimization problem, where training samples are assigned adaptive weights to down-weight noisy data and up-weight clean data. In the online setting, data arrive sequentially, requiring the optimizer to update efficiently and robustly under changing data distributions. At each step $t$, the optimizer receives a noisy training set $\mathcal{D}_t^{\mathrm{tr}}$ and a clean validation set $\mathcal{D}_t^{\mathrm{val}}$. The model parameters $\theta_t$ are updated by the inner problem, while the sample weights $w_t$ are optimized through the outer objective, forming an online bilevel framework.
\begin{align}
    \min_{w\in\mathbb{R}^{d_1}} F_t(w) =& \frac{1}{\left| \mathcal{D}_t^{\mathrm{val}} \right|}\sum_{(x_i,y_i)\in \mathcal{D}_t^{\mathrm{val}} } \mathcal{L}(f(\theta_t^*(w);x_i),y_i) \quad \mathrm{s.t.} \quad \theta_t^*(w) = \mathop{\rm argmin}_{\theta\in\mathbb{R}^{d_2}} g_t(w,\theta), \nonumber\\
    &\text{where} \quad 
    g_t(w,\theta) = \frac{1}{\left|\mathcal{D}_t^{\mathrm{tr}}\right|}\sum_{(x_i,y_i)\in \mathcal{D}_t^{\mathrm{tr}}}\mathcal{L}\left( \sigma(w)f(\theta;x_i),y_i \right). \label{eq:hyper_cleaning}
\end{align}
Here, $\mathcal{L}$ denotes the cross-entropy loss and $\sigma(\cdot)$ is the sigmoid function. We evaluate this task on MNIST~\cite{726791} using a linear classifier.
We conduct five repeated runs with a data stream starting from a training set with 10\% label noise. Corrupted labels are consistently replaced with a fixed class label (set to 1), causing a stronger misleading effect on the optimizer. Each method runs for 3000 steps, divided into three phases: steps 0--999 with 10\% noisy data, steps 1,000--1,999 with 20\%, and steps 2,000--2,999 with 30\%. 
In Figure~\ref{fig:hyperclean_standard}, we compare the performance of the algorithms listed in Table~\ref{tab:1}, where AOBO achieves the lowest cumulative regret. In Figure~\ref{fig:hyperclean_w_eta_compare}, we compare the effects of different window parameters on WOBO. The result shows that larger $w$ and $\eta$ can reduce the regret, which confirms the conclusion of Theorem~\ref{thm:unres_win_dl}. 

\subsection{Parametric Loss Tuning for Imbalanced Data}

\begin{figure*}[t]
\centering
\begin{tabular}{ccc}
\includegraphics[width=0.3\linewidth]{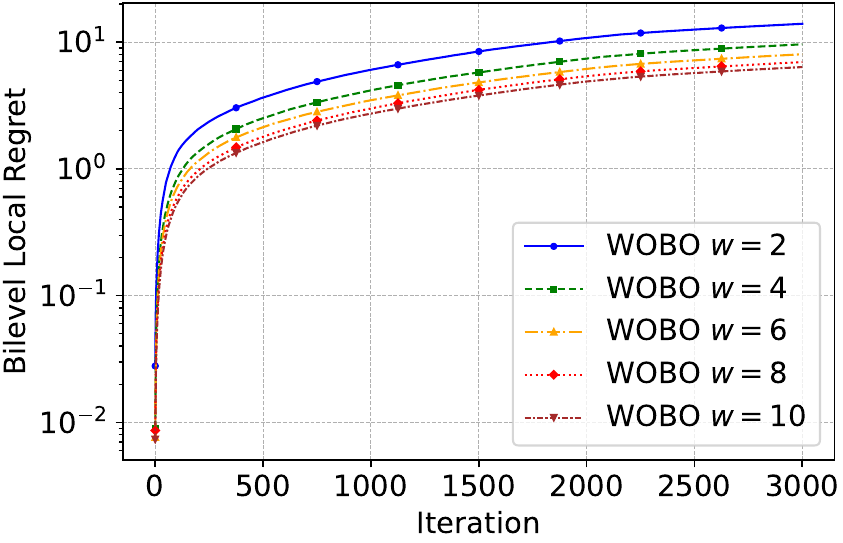} &
\includegraphics[width=0.3\linewidth]{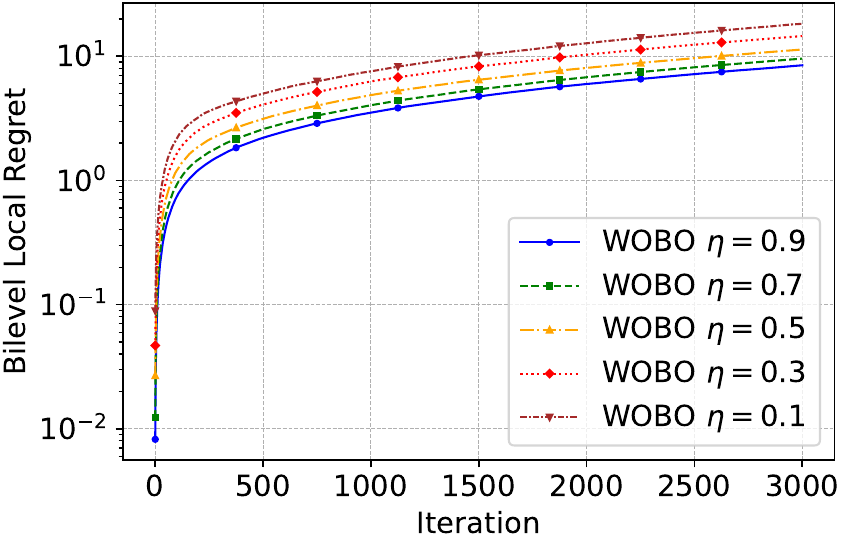} &
\includegraphics[width=0.3\linewidth]{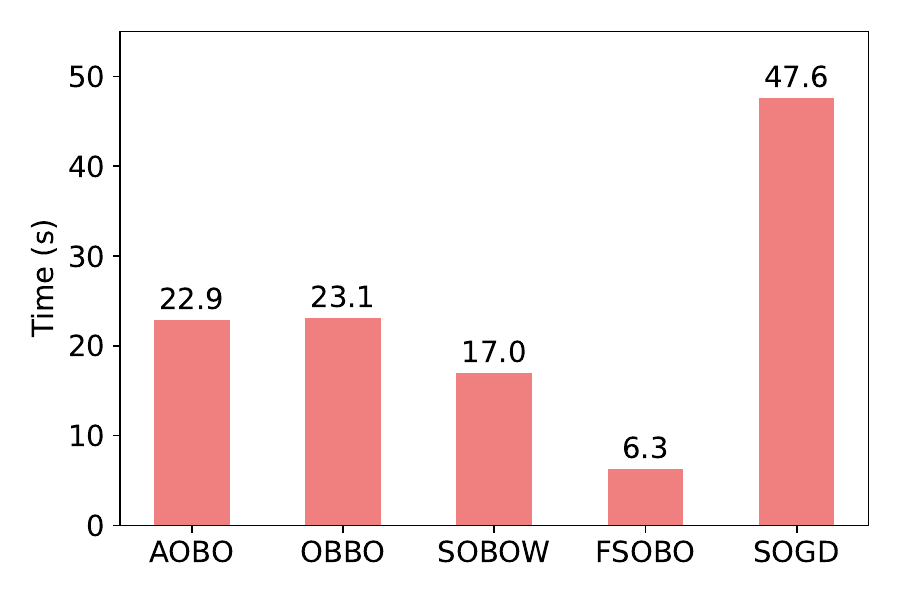} \\
(a) regret under different $w$ & 
(b) regret under different $\eta$ & 
(c) runing time 
\end{tabular}
\caption{Performance of WOBO under different window size $w$ with fixed $\eta=0.9$, and different weight $\eta$ with fixed $w=5$, in subfigures (a) and (b), respectively. The running time of different algorithms in Figure~\ref{fig:hyperclean_standard} is shown in subfigure (c).}
\label{fig:hyperclean_w_eta_compare}
\end{figure*}

\begin{figure*}[t]
\begin{tabular}{ccc}
\includegraphics[width=0.3\linewidth]{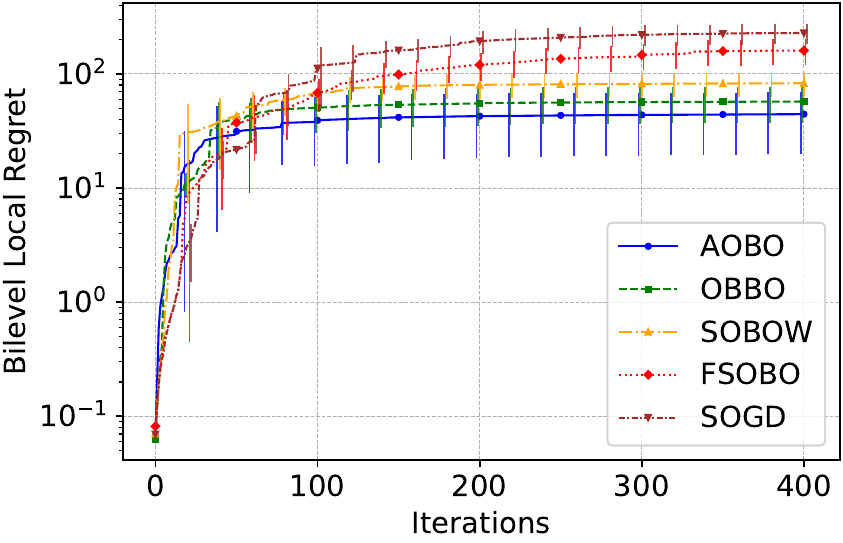} &
\includegraphics[width=0.3\linewidth]{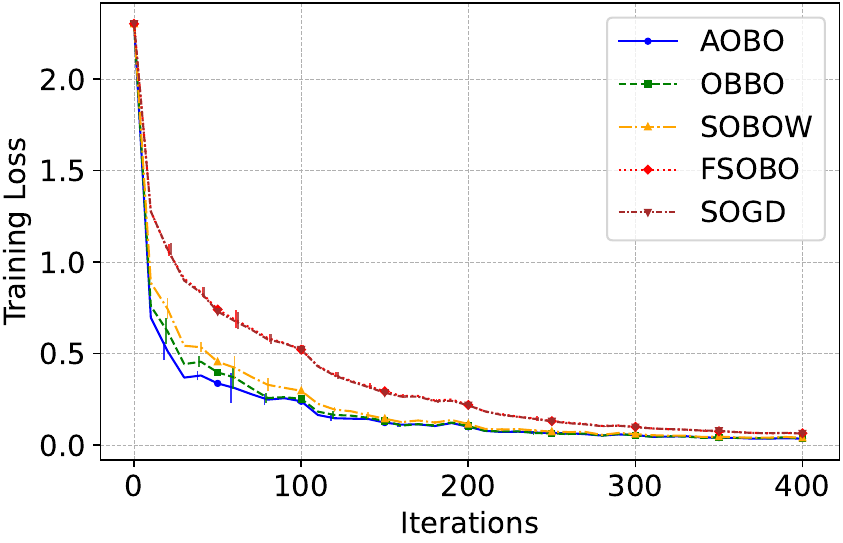} &
\includegraphics[width=0.3\linewidth]{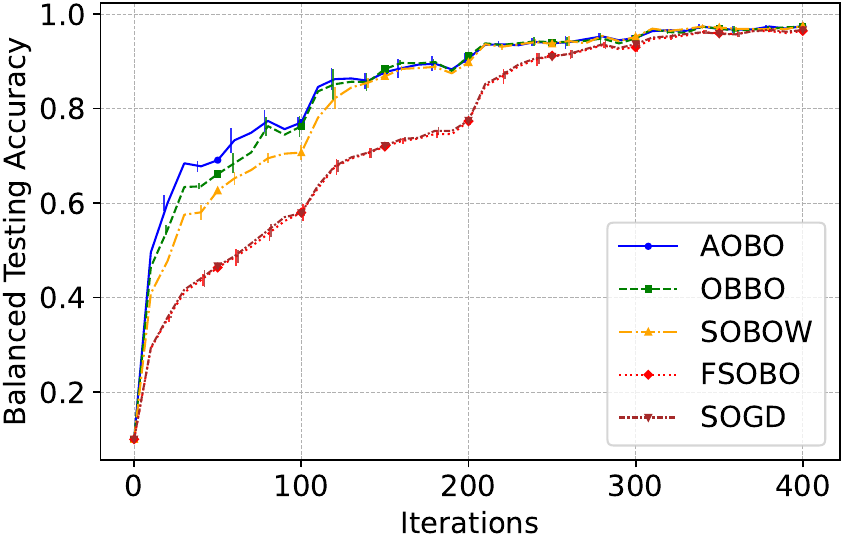} \\
(a) regret & 
(b) training loss & 
(c) test accuracy 
\end{tabular}
\caption{Performance of different algorithms on the parametric loss tuning for imbalanced data.}
\label{fig:losstuning_standard}
\end{figure*}

Tuning for imbalanced data addresses the bias induced by skewed class distributions, where models often favor majority classes over minority classes \cite{tarzanagh2024online,nazari2025stochastic}. We use a parametric training loss to improve class balance and reduce overfitting. The model is trained on $\mathcal{D}_t^{\mathrm{tr}}$, while the logit-adjustment parameters $\lambda = (\gamma_j, \Delta_j)_{j=1}^J$ are treated as outer variables and optimized on $\mathcal{D}_t^{\mathrm{val}}$. Similar to (\ref{eq:hyper_cleaning}), the problem is formulated as
\begin{align}
    \min_{\lambda\in\mathbb{R}^{d_1}} F_t(\lambda) &= \frac{1}{\left| \mathcal{D}_t^{\mathrm{val}} \right|}\sum_{(x_i,y_i)\in \mathcal{D}_t^{\mathrm{val}} } u_{y_i}\mathcal{L}(f(\theta_t^*(\lambda);x_i), y_i), \quad \text{s.t.} \quad \theta_t^*(\lambda) = \mathop{\rm argmin}_{\theta\in\mathbb{R}^{d_2}} g_t(\lambda,\theta) \nonumber\\
    &\text{where} \quad g_t(\lambda,\theta) = -\frac{1}{\left|\mathcal{D}_t^{\mathrm{tr}}\right|}\sum_{(x_i,y_i)\in \mathcal{D}_t^{\mathrm{tr}}}\log\frac{\exp(\gamma_{y_i}[f(\theta;x_i)]_{y_i} + \Delta_{y_i})}{\sum_{j=1}^J \exp(\gamma_j[f(\theta;x_i)]_j + \Delta_j)}. \nonumber
\end{align}
Here, $u_j$ is the reciprocal class proportion for class $j$~\cite{DBLP:journals/corr/abs-2201-01212}. We use a lightweight CNN on MNIST with four convolutional blocks, followed by a flattening layer and a final linear layer producing $10$ logits.
We evaluated performance over $400$ timesteps in four $100$-timestep phases, transitioning from $0.6^i$ to $0.9^i$ distribution for each class $i = 0, 1, \dots , 9$. Our experimental results are shown in Figure~\ref{fig:losstuning_standard}. AOBO achieves the smallest cumulative local regret and the best balanced accuracy on the test set. Moreover, FSOBO attains a smaller regret than SOGD, which corroborates our theoretical results.

\section{Conclusion and Discussion}
This work studies the non-convex–strongly-convex OBO problem under the standard local regret and window-averaged local regret. 
We first propose a novel algorithm with adaptive inner-loop iterations and obtain an optimal bound under the standard bilevel local regret. We also provide a tighter upper bound for a fully single-loop algorithm. Then we introduce a novel window-averaged bilevel local regret to evaluate performance in linearly varying environments, derive a lower bound for this regret, and show that our proposed algorithm is optimal. In addition, our algorithm supports an efficient single-loop structure.
In future work, we will extend our methods to OBO problems with stochastic noise. In addition, studying OBO problems with nonconvex inner-level objectives is another interesting and important direction, which could substantially broaden the applicability of OBO algorithms.

\bibliographystyle{plainnat}
\bibliography{references}


\clearpage
\appendix

\section{Experimental Details and Additional Results}

In this section, we provide additional experiments to further validate our theoretical results, together with detailed experimental settings for Section~\ref{sec:6}. 
Appendices~\ref{sec:detail_hyper_cleaning} and~\ref{sec:detail_loss_tuning} provide the detailed settings for real-data experiments in Section~\ref{sec:6}, including the parameter choices for all algorithms. 
In Appendix~\ref{sec:winavg_compare}, we further report the comparison results of the window-averaged algorithms in Table~\ref{tab:2} on the data hyper-cleaning experiment. 
Finally, in Appendix~\ref{sec:cifar10}, we provide supplementary results on the CIFAR10 dataset for the loss tuning experiment, validating the feasibility of our algorithm on high-dimensional tasks.

\subsection{Details on Online Data Hyper-cleaning}\label{sec:detail_hyper_cleaning}

Hyper-cleaning uses a bilevel optimization framework to assign weights to training samples with label noise, down-weighting noisy data and up-weighting clean data to improve model performance. In the online hyper-cleaning setting, where data arrives continuously, the optimization algorithm must be efficient, adaptive, and robust to changing data distributions, ensuring effective noise suppression and reliable real-time learning.

In this experiment, we utilize the MNIST dataset to simulate an online data hyper-cleaning task with time-varying label noise. The dataset contains 60,000 training images, which are split into a corrupted training set of 45,000 samples and a clean validation set of 15,000 samples. The model employed is a linear classifier with parameters $\theta \in \mathbb{R}^{d \times C}$, where $d=28 \times 28 = 784$ and $C=10$.

We introduce systematic label noise to the training stream by flipping the labels of a portion of samples to class `1`. The noise rate $\rho$ serves as the environmental variable and changes over time: $\rho=0.1$ for $t \in [1, 1000]$, $\rho=0.2$ for $t \in [1001, 2000]$, and $\rho=0.3$ for $t \in [2001, 3000]$. The online process runs for $T=3000$ steps with a batch size of 1000.

The inner objective is to train the classifier on the potentially noisy training data using a weighted cross-entropy loss, where the sample weights are learned to down-weight identified noisy samples. The inner function is defined as:
\begin{align*}
    g_t(\mathbf{w}, \theta) = \frac{1}{|\mathcal{D}_t^{\mathrm{tr}}|} \sum_{(x_i, y_i) \in \mathcal{D}_t^{\mathrm{tr}}} \ell_{\text{ce}}(\sigma(w_i)f(\theta; x_i), y_i),
\end{align*}
where $w_i$ is the learnable weight parameter associated with the training sample index $i$, and $\sigma(\cdot)$ is the sigmoid function.
The outer objective is to minimize the standard cross-entropy loss on the clean validation set with respect to the weight vector $\mathbf{w}$:
\begin{align*}
    F_t(\mathbf{w}) = \frac{1}{|\mathcal{D}_t^{\mathrm{val}}|} \sum_{(x_j, y_j) \in \mathcal{D}_t^{\mathrm{val}}} \ell_{\text{ce}}(f(\theta^*(\mathbf{w}); x_j), y_j).
\end{align*}

Regarding hyperparameters, we set the inner learning rate $\alpha=0.05$, the linear system learning rate $\beta=0.01$ and the outer learning rate $\gamma=0.05$. For AOBO, the inner-loop error tolerance is $\delta=0.1$. For OBBO, we fix the number of inner optimization steps at $K=5$. For SOGD, we set the momentum coefficient to $0.9$. The linear system iteration number are set to $5$ for all methods. 

\subsection{Details on Parametric Loss Tuning for Imbalanced Data}\label{sec:detail_loss_tuning}

Tuning for imbalanced data focuses on addressing the challenges posed by skewed class distributions in machine learning tasks. In such cases, models, especially deep neural networks with strong fitting capabilities, tend to be biased toward the majority class, leading to poor performance on the minority class.

In this experiment, we apply OBO to the parametric loss tuning task on imbalanced data streams. We use the MNIST dataset and simulate a time-varying class imbalance environment. The data stream consists of 400 steps with a batch size of 64, divided into 4 phases of 100 steps each. In each phase, the class distribution of the training data follows a geometric progression $n_c \propto \mu^c$ for class $c \in \{0, \dots, 9\}$, where the imbalance factor $\mu$ changes sequentially in $\{0.6, 0.7, 0.8, 0.9\}$. The validation set remains balanced throughout the process.

The model is a Convolutional Neural Network (CNN) with four convolutional blocks ($3\times3$ Conv--ReLU--$2\times2$ MaxPool--BatchNorm), followed by a linear classifier. The feature dimension before the linear layer is 64. The inner objective is to train the classifier by minimizing the logit-adjusted cross-entropy loss on the imbalanced training data, which is defined as:
\begin{align*}
    g_t(\mathbf{y}, \theta) = \frac{1}{|\mathcal{D}_t^{\mathrm{tr}}|} \sum_{(x_i, y_i) \in \mathcal{D}_t^{\mathrm{tr}}} \ell_{\text{adj}}(f(\theta; x_i), y_i; \mathbf{d}, \mathbf{l}),
\end{align*}
where $f(\theta; x)$ denotes the logits output by the model, and $\ell_{\text{adj}}$ is the adjusted loss with learnable parameters $\mathbf{y} = (\mathbf{d}, \mathbf{l})$:
\begin{align*}
    \ell_{\text{adj}}(\mathbf{z}, y; \mathbf{d}, \mathbf{l}) = -\log \frac{\exp(\sigma(d_y) z_y + l_y)}{\sum_{j=1}^{C} \exp(\sigma(d_j) z_j + l_j)}.
\end{align*}
Here, $\mathbf{d}$ controls the temperature-like scaling and $\mathbf{l}$ acts as a bias correction. The outer objective $F_t(\mathbf{y})$ is to minimize the standard cross-entropy loss on the balanced validation set.
\begin{align*}
    F_t(\mathbf{y}) = \frac{1}{|\mathcal{D}_t^{\mathrm{val}}|} \sum_{(x_i, y_i) \in \mathcal{D}_t^{\mathrm{val}}} \left(-\left[f(\theta^*(\mathbf{y});x_i)\right]_{y_i} + \log\sum_{j=1}^{C}e^{\left[f(\theta^*(\mathbf{y});x_i)\right]_{j}}\right).
\end{align*}

For all online bilevel optimizers, we set the inner learning rate $\alpha=0.1$ and the outer learning rate $\gamma=0.001$. We provide specific hyperparameter settings for the baselines as follows: for OBBO, we set the number of inner optimization steps and Neumann approximation terms to $K=5$; for SOBOW, we solve the linear system using Conjugate Gradient (CG) with $M = 5$ iterations; for AOBO, the inner loop adopts an adaptive stopping criterion with error threshold $\delta_t = 0.1$, and the linear system is solved via 5 CG iterations; for FSOBO and SOGD, we update the auxiliary variable for the linear system solution with a learning rate of $\beta=0.001$.

\subsection{Comparison of Window-Averaged Algorithms on Online Data Hyper-Cleaning}\label{sec:winavg_compare}

\begin{figure*}   
\begin{tabular}{ccc}
\includegraphics[width=0.3\linewidth]{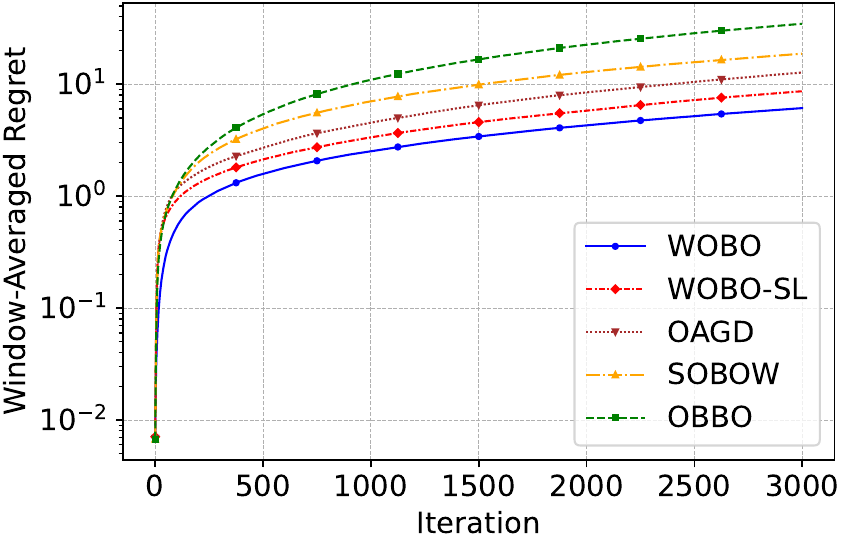} &
\includegraphics[width=0.3\linewidth]{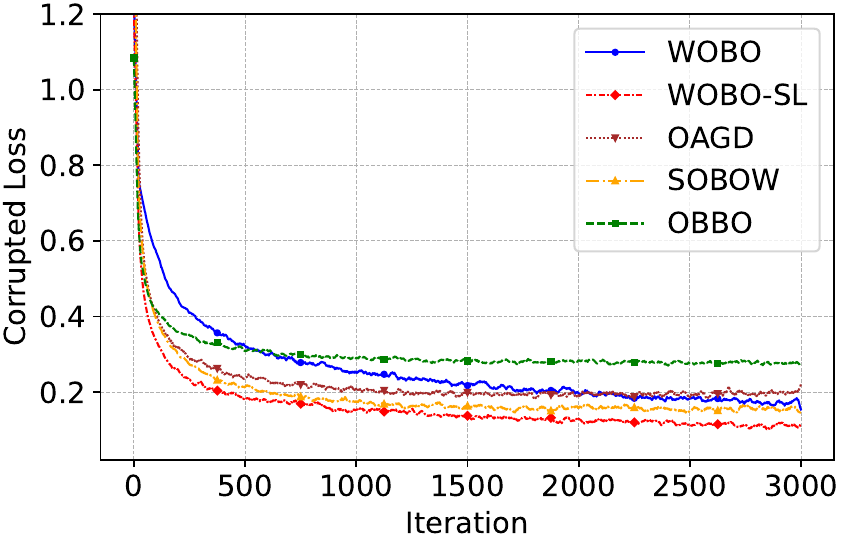} &
\includegraphics[width=0.3\linewidth]{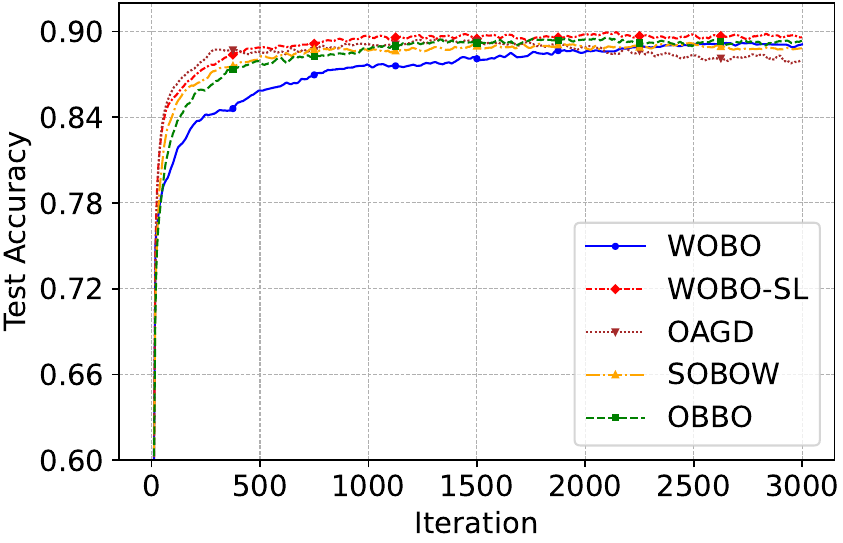} \\
(a) regret & 
(b) training loss & 
(c) test accuracy 
\end{tabular}
\caption{Performance of different window-averaegd algorithms on the online hyper-cleaning with $w=10$, $\eta=0.9$.}
\label{fig:hyperclean_winavg_compare}
\end{figure*}

In this section, we compare different window-averaged algorithms on the online data hyper-cleaning experiment to validate the theoretical results summarized in Table~\ref{tab:2}. Unlike the setting in Section~\ref{sec:hyper_cleaning}, we consider a nearly linearly varying environment, where the proportion of corrupted data increases every 20 steps, from an initial 10\% to 30\%. The results are shown in Figure~\ref{fig:hyperclean_winavg_compare}.

Here we set the inner learning rate to $\alpha=0.05$, the linear-system learning rate to $\beta=0.01$, and the outer learning rate to $\gamma=0.05$. For WOBO, the inner-loop error tolerance is set to $\delta=0.1$. For OBBO, the number of inner optimization steps is fixed as $K=5$. The number of linear-system iterations is set to $5$ for all methods. As shown in the figure, WOBO and WOBO-SL achieve the lowest and second-lowest cumulative window-averaged regret, respectively, while maintaining accuracy comparable to the other algorithms. In addition, WOBO-SL attains the lowest training loss, which we attribute to its model-parameter updates over the windowed data stream.

\subsection{Additional CIFAR10 Results for Parametric Loss Tuning under Imbalanced Data}\label{sec:cifar10}

\begin{figure*}

\begin{center}
\begin{tabular}{cc}
\includegraphics[width=0.45\linewidth]{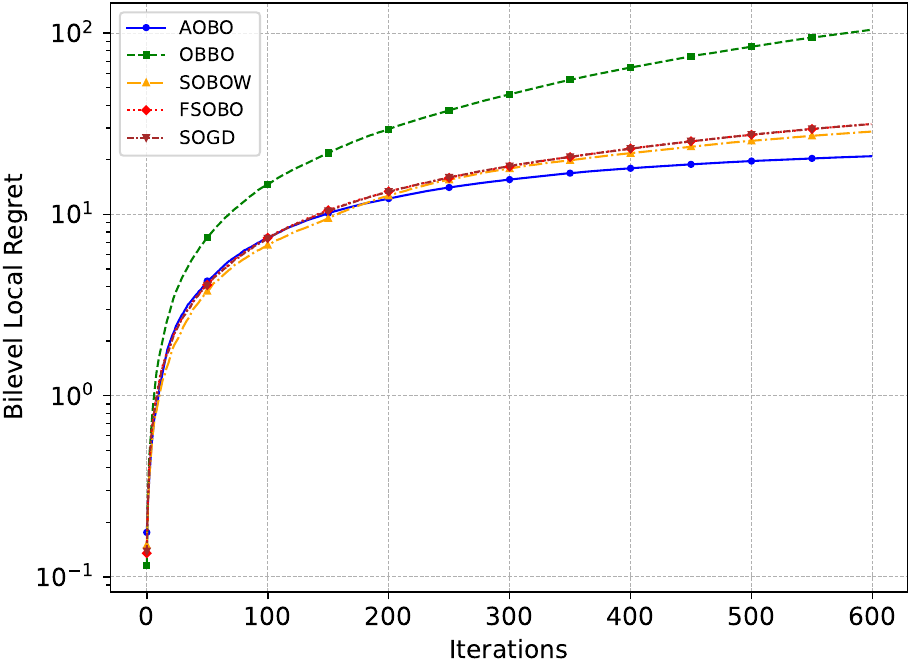} &
\includegraphics[width=0.45\linewidth]{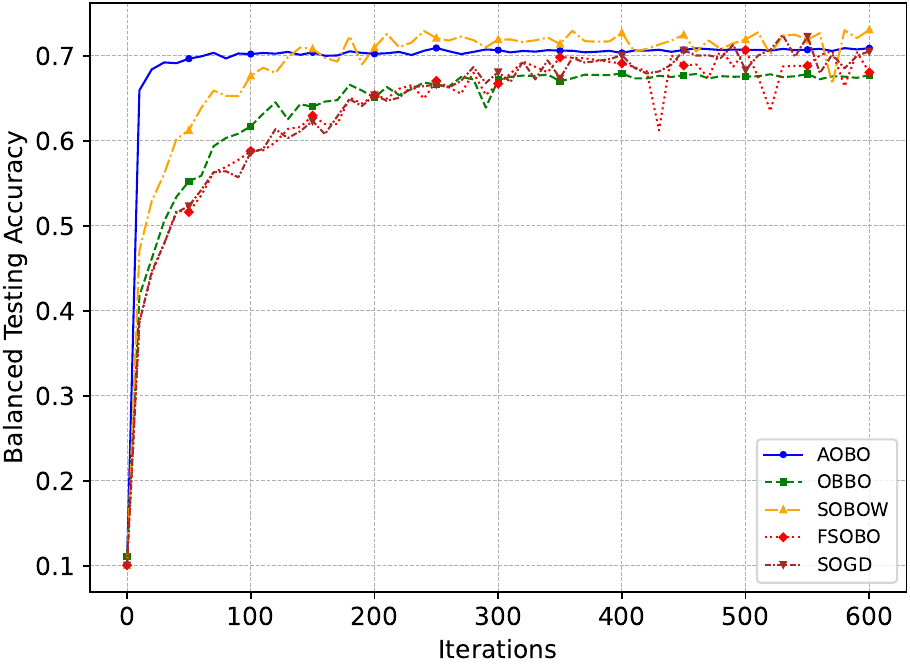} \\
(a) regret & 
(b) test accuracy 
\end{tabular}
\end{center}

\begin{center}
\begin{tabular}{cc}
\includegraphics[width=0.45\linewidth]{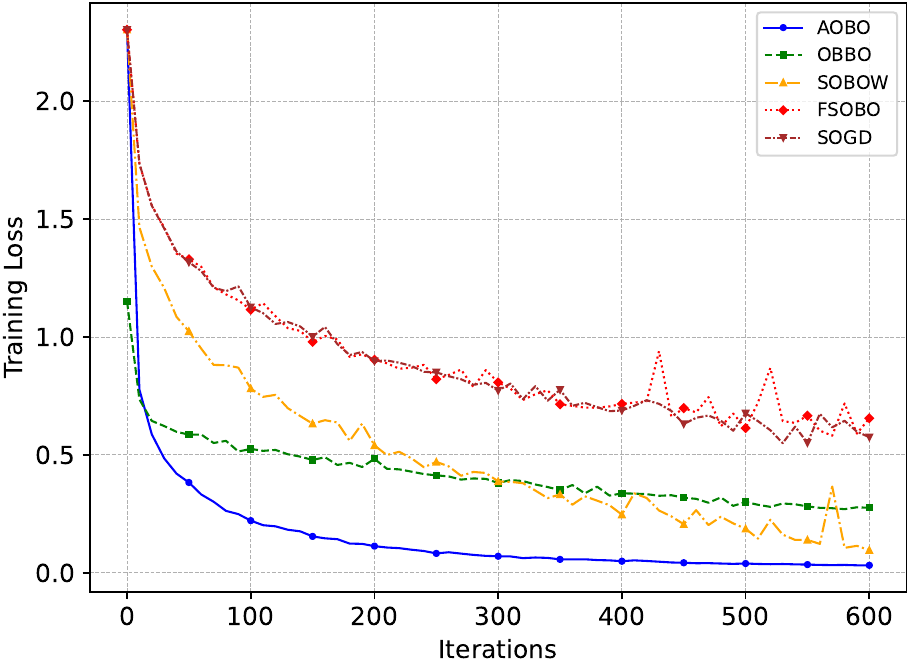} &
\includegraphics[width=0.45\linewidth]{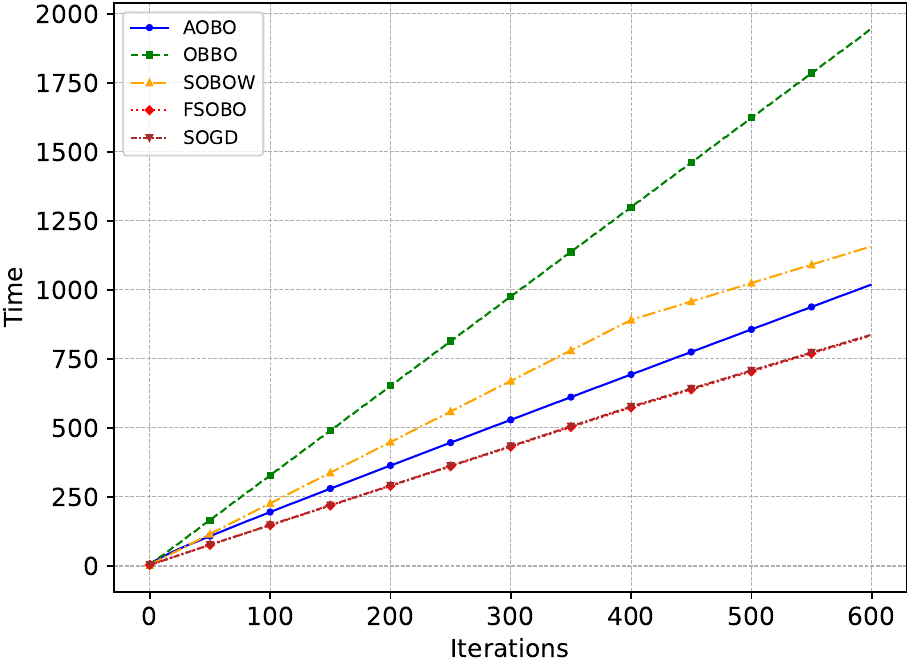} \\
(c) training loss & 
(d) runing time
\end{tabular}
\end{center}

\caption{Performance of different algorithms on the parametric loss tuning for imbalanced data under CIFAR10 dataset.}
\label{fig:losstuning_standard_cifar10}
\end{figure*}

To further evaluate the effectiveness of our algorithms on higher-dimensional datasets, we supplement the loss tuning experiment for imbalanced data with CIFAR-10 results, as shown in Figure~\ref{fig:losstuning_standard_cifar10}. We run $600$ steps, where each step uses data from a partitioned subset with batch size $3000$.

For all online bilevel optimizers, we set the inner learning rate to $\alpha=0.5$ and the outer learning rate to $\gamma=0.005$. The baseline-specific hyperparameters are set as follows. For OBBO, the number of inner optimization steps and Neumann approximation terms is set to $K=10$. For SOBOW, the linear system is solved by Conjugate Gradient (CG) with $M=15$ iterations. For AOBO, the inner loop uses an adaptive stopping criterion with error threshold $\delta_t=0.1$, and the linear system is solved using $5$ CG iterations. For FSOBO and SOGD, the auxiliary variable for solving the linear system is updated with learning rate $\beta=0.005$.
The results show that AOBO achieves strong performance in test accuracy, cumulative regret, and training loss, while requiring less running time than SOBOW and OBBO. This further validates the effectiveness of our algorithm.

\section{Proofs of Section~\ref{sec:3} under Standard Bilevel Local Regret}\label{sec:proof_A}

In this section, we provide the proofs of the regret upper bounds for AOBO and FSOBO, described in Algorithms~\ref{alg:AOBO} and~\ref{alg:FSOBO}, respectively, as well as the lower bound for the standard bilevel local regret. These proofs do not rely on any window-averaging strategy, but require certain sublinear variation assumptions.

In Section~\ref{sec:proof_A_lem}, we present several preliminary lemmas used throughout this section. 
Section~\ref{sec:proof_SOGO} proves the regret upper bound for AOBO, while Section~\ref{sec:proof_SOGO_lower} establishes the corresponding lower bound. 
Section~\ref{sec:proof_CTHO} provides the proof for OFSBO. 
In Sections~\ref{sec:proof_SOBOW} and~\ref{sec:proof_OBBO}, we further derive the upper bounds for SOBOW~\cite{lin2023non} and OBBO~\cite{bohne2024online} under the standard bilevel local regret. 
Finally, Section~\ref{sec:proof_SOGD} shows that SOGD~\cite{nazari2025stochastic} may achieve an improved upper bound under deterministic OBO.

\subsection{Preliminary Lemmas}\label{sec:proof_A_lem}

Under Assumptions~\ref{asm1} and~\ref{asm2}, the inner function $g_t(\mathbf{x}, \mathbf{y})$ is $L_{g,1}$-smooth and $\mu_g$-strongly convex with respect to $\mathbf{y}\in\mathbb{R}^{d_2}$, we present the additional properties used below.

For all $t\in[T]$, any $\mathbf{x}\in\mathcal{X}$, $\mathbf{y}_1$, $\mathbf{y}_2\in\mathbb{R}_{d_2}$, it holds that 
\begin{align}
    & \left\|\nabla_\mathbf{y}g_t(\mathbf{x}, \mathbf{y}_1) - \nabla_\mathbf{y}g_t(\mathbf{x}, \mathbf{y}_2)\right\| \leq L_{g,1}\left\|\mathbf{y}_1 - \mathbf{y}_2\right\| \label{eq:sc_1} \\
    & \left\|\nabla_\mathbf{y}g_t(\mathbf{x}, \mathbf{y}_1) - \nabla_\mathbf{y}g_t(\mathbf{x}, \mathbf{y}_2)\right\| \geq \mu_g\left\|\mathbf{y}_1 - \mathbf{y}_2\right\| \label{eq:sc_2},
\end{align}
also we have
\begin{align}
    \left\langle \nabla_\mathbf{y} g_t(\mathbf{x}, \mathbf{y}_1) - \nabla_\mathbf{y} g_t(\mathbf{x}, \mathbf{y}_2), \mathbf{y}_1 - \mathbf{y}_2 \right\rangle \geq& \frac{\mu_gL_{g,1}}{\mu_g + L_{g,1}}\left\|\mathbf{y}_1 - \mathbf{y}_2\right\|^2 \nonumber\\
    & + \frac{1}{\mu_g + L_{g,1}}\left\|\nabla_\mathbf{y} g_t(\mathbf{x}, \mathbf{y}_1) - \nabla_\mathbf{y}g_t(\mathbf{x}, \mathbf{y}_2)\right\|^2. \label{eq:sc_3}
\end{align}

\begin{lemma}[Lemma 16 in \cite{tarzanagh2024online}]\label{lem:L_F}
Under Assumptions~\ref{asm1},~\ref{asm2}, for all $t\in[T]$, any $\mathbf{x}_1$, $\mathbf{x}_2\in\mathcal{X}$, we have
\begin{align*}
    \|\nabla f_t(\mathbf{x}_1, \mathbf{y}_t^*(\mathbf{x}_1)) - \nabla f_t(\mathbf{x}_2, \mathbf{y}_t^*(\mathbf{x}_2))\| \leq L_F \|\mathbf{x}_1 - \mathbf{x}_2\|,
\end{align*}
where 
\begin{align*}
    L_F := L_{f,1} + \frac{L_{f,0}L_{g,2}}{\mu_g} +\frac{2L_{g,1}L_{f,1}}{\mu_g} + \frac{L_{g,1}^2L_{f,1}}{\mu_g^2} + \frac{2L_{f,0}L_{g,1}L_{g,2}}{\mu_g^2} + \frac{L_{f,0}L_{g,1}^2L_{g,2}}{\mu_g^3}.
\end{align*}
\end{lemma}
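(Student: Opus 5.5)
The statement to prove is Lemma~\ref{lem:L_F}, the Lipschitz continuity of the hypergradient $\nabla f_t(\mathbf{x},\mathbf{y}_t^*(\mathbf{x}))$. I would use the closed form
\[
\nabla f_t(\mathbf{x}, \mathbf{y}_t^*(\mathbf{x})) = \nabla_\mathbf{x} f_t(\mathbf{x}, \mathbf{y}_t^*(\mathbf{x})) - \nabla_{\mathbf{x}\mathbf{y}}^2 g_t(\mathbf{x}, \mathbf{y}_t^*(\mathbf{x}))\,[\nabla_{\mathbf{y}\mathbf{y}}^2 g_t(\mathbf{x}, \mathbf{y}_t^*(\mathbf{x}))]^{-1}\nabla_\mathbf{y} f_t(\mathbf{x}, \mathbf{y}_t^*(\mathbf{x})).
\]
I would then telescope the difference at $\mathbf{x}_1$ and $\mathbf{x}_2$ into one term per factor. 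The analysis has four ingredients.

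\textbf{Lipschitz continuity of $\mathbf{y}_t^*$.} Differentiating the optimality condition $\nabla_\mathbf{y} g_t(\mathbf{x},\mathbf{y}_t^*(\mathbf{x}))=0$ gives
\[
\nabla \mathbf{y}_t^*(\mathbf{x}) = -[\nabla_{\mathbf{y}\mathbf{y}}^2 g_t]^{-1}\nabla_{\mathbf{y}\mathbf{x}}^2 g_t.
\]
Since $\|\nabla^2_{\mathbf{x}\mathbf{y}}g_t\|\le L_{g,1}$ and $\|[\nabla^2_{\mathbf{y}\mathbf{y}}g_t]^{-1}\|\le 1/\mu_g$, the map $\mathbf{y}_t^*$ is $\kappa_g$-Lipschitz. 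Alternatively, I would compare $\nabla_\mathbf{y} g_t(\mathbf{x}_1,\cdot)$ at the two optima and use (\ref{eq:sc_2}). Consequently, with $\mathbf{z}_i=(\mathbf{x}_i,\mathbf{y}_t^*(\mathbf{x}_i))$,
\[
\|\mathbf{z}_1-\mathbf{z}_2\| \le (1+L_{g,1}/\mu_g)\|\mathbf{x}_1-\mathbf{x}_2\|.
\]

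\textbf{Bounding each factor.} Using the Lipschitz constants from Assumption~\ref{asm2}:
\begin{itemize}
\item $\|\nabla_\mathbf{x} f_t(\mathbf{z}_1)-\nabla_\mathbf{x} f_t(\mathbf{z}_2)\| \le L_{f,1}(1+\kappa_g)\|\Delta\mathbf{x}\|$.
\item The cross term splits into three pieces:
\[
\begin{aligned}
&\|A_1-A_2\|\,\|H_1^{-1}\|\,\|b_1\| + \|A_2\|\,\|H_1^{-1}-H_2^{-1}\|\,\|b_1\| + \|A_2\|\,\|H_2^{-1}\|\,\|b_1-b_2\| \\
&\quad\le L_{g,2}(1+\kappa_g)\tfrac{L_{f,0}}{\mu_g}\|\Delta\mathbf{x}\| + L_{g,1}\tfrac{L_{g,2}(1+\kappa_g)}{\mu_g^2}L_{f,0}\|\Delta\mathbf{x}\| + \tfrac{L_{g,1}}{\mu_g}L_{f,1}(1+\kappa_g)\|\Delta\mathbf{x}\|.
\end{aligned}
\]
Here $A=\nabla_{\mathbf{x}\mathbf{y}}^2g_t$, $H=\nabla_{\mathbf{y}\mathbf{y}}^2 g_t$ and $b=\nabla_\mathbf{y} f_t$.
\item $\|b\|\le L_{f,0}$ and $\|A\|\le L_{g,1}$.
\item The inverse difference is bounded via $H_1^{-1}-H_2^{-1}=H_1^{-1}(H_2-H_1)H_2^{-1}$, giving norm at most $L_{g,2}\|\mathbf{z}_1-\mathbf{z}_2\|/\mu_g^2$.
\end{itemize}

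\textbf{Matching the constant.} Expanding $(1+\kappa_g)$ in each piece yields exactly the six monomials of $L_F$:
\begin{itemize}
\item $L_{f,1}$ and $L_{g,1}L_{f,1}/\mu_g$ from the first bound;
\item $L_{f,0}L_{g,2}/\mu_g$ and $L_{f,0}L_{g,1}L_{g,2}/\mu_g^2$ from the $A$-difference;
\item $L_{f,0}L_{g,1}L_{g,2}/\mu_g^2$ and $L_{f,0}L_{g,1}^2L_{g,2}/\mu_g^3$ from the inverse-difference;
\item $L_{g,1}L_{f,1}/\mu_g$ and $L_{g,1}^2L_{f,1}/\mu_g^2$ from the $b$-difference.
\end{itemize}
Summing gives coefficients $2$ on $L_{g,1}L_{f,1}/\mu_g$ and on $L_{f,0}L_{g,1}L_{g,2}/\mu_g^2$, which is the stated $L_F$.

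The only mildly delicate step is the resolvent identity for the inverse Hessians, together with bookkeeping so that no extra factor is lost. One point needs care: the assumption says "$\nabla f_t$ is $L_{f,1}$-Lipschitz", and I would interpret this as Lipschitz continuity in the joint variable $\mathbf{z}$, which also bounds the partial gradients $\nabla_\mathbf{x} f_t$ and $\nabla_\mathbf{y} f_t$. With that interpretation, everything else is routine.
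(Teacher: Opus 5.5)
Your proof is correct. The three-piece splitting of $A_1H_1^{-1}b_1-A_2H_2^{-1}b_2$, the resolvent identity, and $\|\mathbf{z}_1-\mathbf{z}_2\|\le(1+\kappa_g)\|\mathbf{x}_1-\mathbf{x}_2\|$ reproduce $L_F$ monomial by monomial, and this is the standard argument behind the cited Lemma~16 of Tarzanagh et al.; the paper gives no proof of its own and simply imports that result.

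The only addition is that the same joint-variable reading of Lipschitz continuity is also needed for $\nabla_{\mathbf{y}} g_t$ and for the second-order derivatives of $g_t$. This is what gives $\|\nabla^2_{\mathbf{x}\mathbf{y}}g_t\|\le L_{g,1}$ and the $L_{g,2}\|\mathbf{z}_1-\mathbf{z}_2\|$ bounds, and it is consistent with how the paper uses Assumption~\ref{asm2} elsewhere.
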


\begin{lemma}[Lemma 16 in \cite{tarzanagh2024online}]\label{lem:L_y}
Under Assumptions~\ref{asm1},~\ref{asm2}, for all $t\in[T]$, any $\mathbf{x}_1$, $\mathbf{x}_2\in\mathcal{X}$, we have
\begin{align*}
    \left\|\mathbf{y}_t^*(\mathbf{x}_1) - \mathbf{y}_t^*(\mathbf{x}_2)\right\| \leq \frac{L_{g,1}}{\mu_g} \|\mathbf{x}_1 - \mathbf{x}_2\|.
\end{align*}
\end{lemma}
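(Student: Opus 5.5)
The statement to prove is Lemma~\ref{lem:L_y}: $\mathbf{y}_t^*(\cdot)$ is $\kappa_g$-Lipschitz on $\mathcal{X}$. I would use only the first-order optimality condition of the inner problem together with strong monotonicity of $\nabla_\mathbf{y} g_t(\mathbf{x},\cdot)$. No implicit-function argument is needed.

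Fix $t$ and $\mathbf{x}_1,\mathbf{x}_2\in\mathcal{X}$, and write $\mathbf{y}_i=\mathbf{y}_t^*(\mathbf{x}_i)$. By Assumption~\ref{asm1} each minimizer is unique and unconstrained over $\mathbb{R}^{d_2}$. Hence
\begin{align*}
\nabla_\mathbf{y} g_t(\mathbf{x}_1,\mathbf{y}_1)=0=\nabla_\mathbf{y} g_t(\mathbf{x}_2,\mathbf{y}_2).
\end{align*}
Next apply (\ref{eq:sc_2}) at the fixed point $\mathbf{x}_1$ to the pair $\mathbf{y}_1,\mathbf{y}_2$:
\begin{align*}
\mu_g\|\mathbf{y}_1-\mathbf{y}_2\|\le \|\nabla_\mathbf{y} g_t(\mathbf{x}_1,\mathbf{y}_1)-\nabla_\mathbf{y} g_t(\mathbf{x}_1,\mathbf{y}_2)\| = \|\nabla_\mathbf{y} g_t(\mathbf{x}_2,\mathbf{y}_2)-\nabla_\mathbf{y} g_t(\mathbf{x}_1,\mathbf{y}_2)\|.
\end{align*}
The equality holds because $\nabla_\mathbf{y} g_t(\mathbf{x}_1,\mathbf{y}_1)=0=\nabla_\mathbf{y} g_t(\mathbf{x}_2,\mathbf{y}_2)$, so I may swap the first term for the second.

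The right-hand side changes only the $\mathbf{x}$-argument. Assumption~\ref{asm2}(ii) states that $\nabla_\mathbf{y} g_t$ is $L_{g,1}$-Lipschitz jointly in $(\mathbf{x},\mathbf{y})$. Therefore the right-hand side is at most $L_{g,1}\|\mathbf{x}_1-\mathbf{x}_2\|$. Dividing by $\mu_g$ gives the claim with constant $L_{g,1}/\mu_g=\kappa_g$.

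The only delicate point is reading Assumption~\ref{asm2}(ii) as joint Lipschitz continuity, which also covers the $\mathbf{x}$-direction. This is the standard convention and matches \cite{tarzanagh2024online}. If only $\mathbf{y}$-Lipschitzness were intended, I would instead use the bound $\|\nabla^2_{\mathbf{x}\mathbf{y}} g_t\|\le L_{g,1}$. I would then apply the mean value theorem along the segment from $\mathbf{x}_1$ to $\mathbf{x}_2$ at fixed $\mathbf{y}_2$, which gives the same estimate. Otherwise the argument is routine.
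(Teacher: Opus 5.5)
Your proof is correct and complete. The two optimality conditions, the strong-monotonicity bound (\ref{eq:sc_2}) at the frozen point $\mathbf{x}_1$, and the $\mathbf{x}$-Lipschitzness of $\nabla_\mathbf{y} g_t(\cdot,\mathbf{y}_2)$ give exactly the constant $\kappa_g$.

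The paper does not prove the lemma itself; it imports it from \cite{tarzanagh2024online}. The way the paper uses the result later reflects the usual derivative-based route. Examples are the implicit relation $\nabla\mathbf{y}_{t,w}^*(\mathbf{x})\nabla_{\mathbf{y}\mathbf{y}}^2\widehat{g}_{t,w}+\nabla_{\mathbf{x}\mathbf{y}}^2\widehat{g}_{t,w}=0$ in Section~\ref{sec:proof_B_lem}, and the step $\|\nabla\mathbf{y}_{t,w}^*(\mathbf{x})\|\le\kappa_g$ in the proof of Lemma~\ref{lem:y_bound_win_2}. That route differentiates the optimality condition to get $\nabla\mathbf{y}_t^*(\mathbf{x})=-\nabla_{\mathbf{x}\mathbf{y}}^2 g_t\,[\nabla_{\mathbf{y}\mathbf{y}}^2 g_t]^{-1}$. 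It then bounds the two factors by $L_{g,1}$ and $1/\mu_g$ and integrates along a segment.

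Your argument is more elementary. It uses only first-order information and needs neither the implicit function theorem nor invertibility of the Hessian. It also works verbatim on any $\mathcal{X}$, with no segment integration. What the derivative route buys is the explicit Jacobian, which the paper needs anyway for the hypergradient formula (\ref{eq:hypergrad}). The norm bound $\|\nabla\mathbf{y}^*\|\le\kappa_g$ that the paper invokes later follows from your Lipschitz estimate wherever $\mathbf{y}^*$ is differentiable.

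One correction to your last paragraph: the fallback is not a genuine alternative. The bound $\|\nabla_{\mathbf{x}\mathbf{y}}^2 g_t\|\le L_{g,1}$ is just the infinitesimal form of $\mathbf{x}$-Lipschitzness of $\nabla_\mathbf{y} g_t$. It does not follow from $\mathbf{y}$-Lipschitzness together with Assumption~\ref{asm2}(iii).

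Under a genuinely $\mathbf{y}$-only reading the lemma is false. Take $g_t(\mathbf{x},\mathbf{y})=\frac{\mu_g}{2}\|\mathbf{y}-c\mathbf{x}\|^2$: it has constant Hessians, and $\nabla_\mathbf{y} g_t$ is $\mu_g$-Lipschitz in $\mathbf{y}$, so $\kappa_g=1$ under that reading. Yet $\mathbf{y}_t^*(\mathbf{x})=c\mathbf{x}$ with $c$ arbitrary.

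So the joint reading is forced, and it is the one the paper uses. For example, (\ref{A.11_2}) relies on it, and the lower-bound constructions verify $\|\nabla_{\mathbf{x}\mathbf{y}}^2 g_t\|\le L_{g,1}$ as part of checking Assumption~\ref{asm2}(ii).
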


\begin{lemma}\label{lem:v_cons}
Under Assumptions~\ref{asm1},~\ref{asm2}, for all $t\in[T]$ and $\mathbf{x}\in\mathcal{X}$, $\mathbf{y}\in\mathbb{R}^{d_2}$, we have
\begin{align*}
    \|\mathbf{v}_t^*(\mathbf{x}, \mathbf{y})\| \leq \frac{L_{f,0}}{\mu_g}, &\quad \text{where } \mathbf{v}_t^*(\mathbf{x}, \mathbf{y}) := \mathop{\rm argmin}_{\mathbf{v}\in\mathbb{R}^{d_2}} \Phi_t(\mathbf{x}, \mathbf{y}, \mathbf{v}).
\end{align*}
\end{lemma}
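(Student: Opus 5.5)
The claim is that the unconstrained minimizer $\mathbf{v}_t^*(\mathbf{x},\mathbf{y})$ of the quadratic $\Phi_t(\mathbf{x},\mathbf{y},\cdot)$ in (\ref{eq:Phi_t}) has norm at most $L_{f,0}/\mu_g$. The plan is to solve the first-order optimality condition in closed form and bound the inverse Hessian and the linear term separately.

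\textbf{Step 1 (strong convexity in $\mathbf{v}$).} The function $\Phi_t(\mathbf{x},\mathbf{y},\mathbf{v})$ is a quadratic in $\mathbf{v}$ with Hessian $\nabla_{\mathbf{y}\mathbf{y}}^2 g_t(\mathbf{x},\mathbf{y})$. By Assumption~\ref{asm1}, $g_t(\mathbf{x},\cdot)$ is $\mu_g$-strongly convex on all of $\mathbb{R}^{d_2}$. Twice differentiability follows from Assumption~\ref{asm2}(iii), so we get $\nabla_{\mathbf{y}\mathbf{y}}^2 g_t(\mathbf{x},\mathbf{y})\succeq \mu_g \mathbf{I}$ at every $\mathbf{y}$, not only at $\mathbf{y}_t^*(\mathbf{x})$. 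Hence $\Phi_t(\mathbf{x},\mathbf{y},\cdot)$ is $\mu_g$-strongly convex, and its minimizer exists, is unique, and satisfies
\[
\nabla_{\mathbf{y}\mathbf{y}}^2 g_t(\mathbf{x},\mathbf{y})\,\mathbf{v}_t^*(\mathbf{x},\mathbf{y}) = \nabla_\mathbf{y} f_t(\mathbf{x},\mathbf{y}).
\]

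\textbf{Step 2 (bound on the linear term).} Assumption~\ref{asm2}(i) says $f_t$ is $L_{f,0}$-Lipschitz jointly in $(\mathbf{x},\mathbf{y})$. This gives $\|\nabla f_t(\mathbf{x},\mathbf{y})\|\le L_{f,0}$, and in particular $\|\nabla_\mathbf{y} f_t(\mathbf{x},\mathbf{y})\|\le L_{f,0}$.

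\textbf{Step 3 (conclusion).} Using the symmetric positive definite Hessian from Step 1,
\[
\|\mathbf{v}_t^*(\mathbf{x},\mathbf{y})\| \le \big\|(\nabla_{\mathbf{y}\mathbf{y}}^2 g_t(\mathbf{x},\mathbf{y}))^{-1}\big\|\,\|\nabla_\mathbf{y} f_t(\mathbf{x},\mathbf{y})\| \le \frac{L_{f,0}}{\mu_g}.
\]
If one prefers to avoid the operator-norm step, an equivalent route is to take the inner product of the optimality equation with $\mathbf{v}^*$. This gives $\mu_g\|\mathbf{v}^*\|^2 \le \langle \nabla_\mathbf{y} f_t, \mathbf{v}^*\rangle \le L_{f,0}\|\mathbf{v}^*\|$, and dividing by $\|\mathbf{v}^*\|$ (the case $\mathbf{v}^*=0$ is trivial) yields the same bound.

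The only point needing care is the one noted in Step 1: the lower eigenvalue bound must hold at an arbitrary $\mathbf{y}\in\mathbb{R}^{d_2}$. This is exactly what uniform strong convexity in $\mathbf{y}$ provides, so there is no real obstacle. As a consequence, $\mathbf{v}_t^*(\mathbf{x},\mathbf{y})\in\mathcal{V}$, which justifies the projection onto $\mathcal{V}$ used in (\ref{eq:v_update}).
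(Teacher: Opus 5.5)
Your proof is correct and follows the same route as the paper. Both arguments use the first-order condition to write $\mathbf{v}_t^*(\mathbf{x},\mathbf{y}) = [\nabla_{\mathbf{y}\mathbf{y}}^2 g_t(\mathbf{x},\mathbf{y})]^{-1}\nabla_\mathbf{y} f_t(\mathbf{x},\mathbf{y})$ and bound the two factors by $1/\mu_g$ and $L_{f,0}$; your justification of the inverse bound via $\lambda_{\min}(\nabla_{\mathbf{y}\mathbf{y}}^2 g_t)\ge\mu_g$ at every $\mathbf{y}$ (from Assumption~\ref{asm1}) is the accurate form of the step the paper writes as $\|[\nabla_{\mathbf{y}\mathbf{y}}^2 g_t]^{-1}\|\le\|\nabla_{\mathbf{y}\mathbf{y}}^2 g_t\|^{-1}$ and attributes to Assumption~\ref{asm2}.
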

\begin{proof}
notice that
\begin{align*}
    \nabla_\mathbf{v} \Phi_t(\mathbf{x}, \mathbf{y}, \mathbf{v}_t^*(\mathbf{x}, \mathbf{y})
    ) = \nabla_{\mathbf{y}\mathbf{y}}^2 g_t(\mathbf{x}, \mathbf{y}) \mathbf{v}_t^*(\mathbf{x}, \mathbf{y}) - \nabla_\mathbf{y} f_t(\mathbf{x}, \mathbf{y}) = 0,
\end{align*}
thus we can see
\begin{align*}
    \mathbf{v}_t^*(\mathbf{x}, \mathbf{y}) =& \left[ \nabla_{\mathbf{y}\mathbf{y}}^2 g_t(\mathbf{x}, \mathbf{y}) \right]^{-1} \nabla_\mathbf{y} f_t(\mathbf{x}, \mathbf{y}) \\
    \|\mathbf{v}_t^*(\mathbf{x}, \mathbf{y})\| =& \left\| \left[ \nabla_{\mathbf{y}\mathbf{y}}^2 g_t(\mathbf{x}, \mathbf{y}) \right]^{-1} \nabla_\mathbf{y} f_t(\mathbf{x}, \mathbf{y}) \right\| \\
    \leq& \left\| \nabla_{\mathbf{y}\mathbf{y}}^2 g_t(\mathbf{x}, \mathbf{y}) \right\|^{-1} \left\|\nabla_\mathbf{y} f_t(\mathbf{x}, \mathbf{y})\right\| \leq \frac{L_{f,0}}{\mu_g} ,
\end{align*}
where the last inequality comes from Assumption~\ref{asm2}.
\end{proof}

The following lemma provides a fundamental bound for solving linear system $\Phi_t(\mathbf{x}, \mathbf{y}, \cdot)$ with respect to $\mathbf{v}$.

\begin{lemma}\label{lem:v_bound_1}
Under Assumptions~\ref{asm1},~\ref{asm2}, consider the inner iteration process of $\Phi_t(\mathbf{x}_t, \mathbf{y}_{t+1}, \cdot)$ in Algorithms~\ref{alg:AOBO} and~\ref{alg:FSOBO} for all $t\in[T]$, shown as
\begin{align*}
    \mathbf{v}_t^{m+1} \leftarrow \mathcal{P}_\mathcal{V} \left(\mathbf{v}_t^m - \beta\nabla_\mathbf{v}\Phi_t(\mathbf{x}_t, \mathbf{y}_{t+1}, \mathbf{v}_t^m)\right).
\end{align*}
If set $\beta \leq \frac{1}{L_{g,1}}$ and the above iteration performs $M_t$ times for time step $t$, we can obtain
\begin{align*}
    \left\|\mathbf{v}_{t+1} - \mathbf{v}_t^*(\mathbf{x}_t, \mathbf{y}_t^*(\mathbf{x}_t))\right\|^2 \leq \frac{32L_{f,0}^2}{\mu_g^2}(1-\beta\mu_g)^{M_t} + \left(\frac{4L_{f,1}^2}{\mu_g^2} + \frac{4L_{f,0}^2L_{g,2}^2}{\mu_g^4}\right)\left\|\mathbf{y}_{t+1} - \mathbf{y}_t^*(\mathbf{x}_t)\right\|^2.
\end{align*}
\end{lemma}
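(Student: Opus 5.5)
The bound follows by splitting the error into an optimization part and a perturbation part. Write $\mathbf{v}_{t+1}=\mathbf{v}_t^{M_t+1}$ and introduce the intermediate target $\widehat{\mathbf{v}}_t := \mathbf{v}_t^*(\mathbf{x}_t,\mathbf{y}_{t+1})$, the exact minimizer of the quadratic $\Phi_t(\mathbf{x}_t,\mathbf{y}_{t+1},\cdot)$ that the projected iteration is actually solving. By $\|a+b\|^2\le 2\|a\|^2+2\|b\|^2$,
\begin{align*}
\left\|\mathbf{v}_{t+1}-\mathbf{v}_t^*(\mathbf{x}_t,\mathbf{y}_t^*(\mathbf{x}_t))\right\|^2 \le 2\left\|\mathbf{v}_{t+1}-\widehat{\mathbf{v}}_t\right\|^2 + 2\left\|\widehat{\mathbf{v}}_t-\mathbf{v}_t^*(\mathbf{x}_t,\mathbf{y}_t^*(\mathbf{x}_t))\right\|^2 .
\end{align*}
The first term is handled by linear convergence of projected gradient descent, the second by Lipschitz continuity of $\mathbf{v}_t^*$ in $\mathbf{y}$.

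\textbf{Optimization term.} $\Phi_t(\mathbf{x}_t,\mathbf{y}_{t+1},\cdot)$ is a quadratic with Hessian $\nabla_{\mathbf{y}\mathbf{y}}^2 g_t(\mathbf{x}_t,\mathbf{y}_{t+1})$, whose spectrum lies in $[\mu_g,L_{g,1}]$ by Assumptions~\ref{asm1}--\ref{asm2}. So it is $\mu_g$-strongly convex and $L_{g,1}$-smooth.
\begin{itemize}
\item By Lemma~\ref{lem:v_cons}, $\widehat{\mathbf{v}}_t\in\mathcal{V}$, so $\widehat{\mathbf{v}}_t=\mathcal{P}_\mathcal{V}(\widehat{\mathbf{v}}_t-\beta\nabla_\mathbf{v}\Phi_t(\mathbf{x}_t,\mathbf{y}_{t+1},\widehat{\mathbf{v}}_t))$ is a fixed point of the projected map.
\item The projection is nonexpansive, and the gradient step is the linear map $\mathbf{I}-\beta\nabla^2_{\mathbf{y}\mathbf{y}}g_t$, whose eigenvalues lie in $[0,1-\beta\mu_g]$ when $\beta\le 1/L_{g,1}$.
\item Hence each iteration contracts the distance to $\widehat{\mathbf{v}}_t$ by at least $(1-\beta\mu_g)$, and after $M_t$ steps
\begin{align*}
\|\mathbf{v}_{t+1}-\widehat{\mathbf{v}}_t\|^2\le (1-\beta\mu_g)^{2M_t}\|\mathbf{v}_t^1-\widehat{\mathbf{v}}_t\|^2 \le (1-\beta\mu_g)^{M_t}\|\mathbf{v}_t^1-\widehat{\mathbf{v}}_t\|^2 .
\end{align*}
\item The starting point $\mathbf{v}_t^1=\mathbf{v}_t$ is in $\mathcal{V}$, either as an output of an earlier projection or by choosing $\mathbf{v}_1\in\mathcal{V}$. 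Therefore $\|\mathbf{v}_t^1-\widehat{\mathbf{v}}_t\|\le 2L_{f,0}/\mu_g$.
\end{itemize}
This gives an initial-error factor of $4L_{f,0}^2/\mu_g^2$, and after the factor 2 from the split, $8L_{f,0}^2/\mu_g^2$. The stated constant $32$ is looser, so the remaining slack can absorb a cruder bound such as $\|a-b\|^2\le 2\|a\|^2+2\|b\|^2$, or a different radius convention for $\mathcal{V}$.

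\textbf{Perturbation term.} Here we bound $\|\mathbf{v}_t^*(\mathbf{x},\mathbf{y})-\mathbf{v}_t^*(\mathbf{x},\mathbf{y}')\|$ with $A=\nabla_{\mathbf{y}\mathbf{y}}^2g_t(\mathbf{x},\mathbf{y})$, $A'=\nabla_{\mathbf{y}\mathbf{y}}^2g_t(\mathbf{x},\mathbf{y}')$, $b=\nabla_\mathbf{y}f_t(\mathbf{x},\mathbf{y})$, $b'=\nabla_\mathbf{y}f_t(\mathbf{x},\mathbf{y}')$. Use the decomposition
\begin{align*}
A^{-1}b-A'^{-1}b' = A^{-1}(b-b') + (A^{-1}-A'^{-1})b' = A^{-1}(b-b') + A^{-1}(A'-A)A'^{-1}b' .
\end{align*}
\begin{itemize}
\item The first piece is bounded by $\frac{L_{f,1}}{\mu_g}\|\mathbf{y}-\mathbf{y}'\|$.
\item The second piece is bounded by $\frac{1}{\mu_g}\cdot L_{g,2}\|\mathbf{y}-\mathbf{y}'\|\cdot\frac{L_{f,0}}{\mu_g}$.
\end{itemize}
Squaring with $(a+b)^2\le 2a^2+2b^2$ and multiplying by the outer factor 2 yields exactly $\left(\frac{4L_{f,1}^2}{\mu_g^2}+\frac{4L_{f,0}^2L_{g,2}^2}{\mu_g^4}\right)\|\mathbf{y}_{t+1}-\mathbf{y}_t^*(\mathbf{x}_t)\|^2$.

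\textbf{Main obstacle.} The delicate step is the contraction claim for the projected iteration: it requires the fixed-point property of $\widehat{\mathbf{v}}_t$ (via Lemma~\ref{lem:v_cons} and $\mathcal{V}$) and bookkeeping of the initial distance using membership in $\mathcal{V}$. The remaining steps are routine constant tracking.
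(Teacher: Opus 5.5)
Your proposal is correct and follows the paper's proof. It uses the same split through $\mathbf{v}_t^*(\mathbf{x}_t,\mathbf{y}_{t+1})$, linear convergence of projected gradient descent for the first term, and the same $A^{-1}(b-b')+A^{-1}(A'-A)A'^{-1}b'$ perturbation bound for the second. The differences are minor: you get the per-step contraction from the spectrum of $\mathbf{I}-\beta\nabla_{\mathbf{y}\mathbf{y}}^2 g_t$ rather than the paper's co-coercivity/strong-convexity inequalities, and you bound $\|\mathbf{v}_t-\widehat{\mathbf{v}}_t\|$ directly by the diameter of $\mathcal{V}$ instead of detouring through $\mathbf{w}_{t-1}^*$, which gives the sharper constant $8$ in place of $32$.
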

\begin{proof}
We can split the bound between $\mathbf{v}_{t+1}$ and $\mathbf{v}_t^*(\mathbf{x}_t, \mathbf{y}_t^*(\mathbf{x}_t))$ into two parts - the bound between $\mathbf{v}_{t+1}$ and $\mathbf{v}_t^*(\mathbf{x}_t, \mathbf{y}_{t+1})$, and the bound between $\mathbf{v}_t^*(\mathbf{x}_t, \mathbf{y}_{t+1})$ and $\mathbf{v}_t^*(\mathbf{x}_t, \mathbf{y}_t^*(\mathbf{x}_t))$. For simplicity, we define $\mathbf{w}_t^* := \mathbf{v}_t^*(\mathbf{x}_t, \mathbf{y}_{t+1})$ and $\mathbf{v}_t^* := \mathbf{v}_t^*(\mathbf{x}_t, \mathbf{y}_t^*(\mathbf{x}_t))$. It begins with
\begin{align}
    \|\mathbf{v}_{t+1} - \mathbf{v}_t^*\|^2 \leq 2\|\mathbf{v}_{t+1} - \mathbf{w}_t^*\|^2 + 2\|\mathbf{w}_t^* - \mathbf{v}_t^*\|^2. \label{A.6_1}
\end{align}
Let $\mathbf{w}_t$ represents
\begin{align*}
    \mathbf{w}_t^{m+1} = \mathbf{v}_t^m - \beta\widetilde{\nabla}\Phi_t(\mathbf{x}_t, \mathbf{y}_{t+1}, \mathbf{v}_t^m), \quad
    \mathbf{v}_t^{m+1} =\mathcal{P}_{\mathcal{V}}\left(\mathbf{w}_t^{m+1}\right).
\end{align*}
We begin with
\begin{align}
    \left\|\mathbf{v}_t^{m+1} - \mathbf{w}_t^*\right\|^2 \leq& \left\|\mathbf{w}_t^{m+1} - \mathbf{w}_t^*\right\|^2 \nonumber\\
    =& \left\|\mathbf{w}_t^{m+1} - \mathbf{v}_t^m + \mathbf{v}_t^m - \mathbf{w}_t^*\right\|^2 \nonumber\\
    =& \left\|\mathbf{w}_t^{m+1} - \mathbf{v}_t^m\right\|^2 + \left\|\mathbf{v}_t^m - \mathbf{w}_t^*\right\|^2 + 2\left\langle \mathbf{w}_t^{m+1} - \mathbf{v}_t^m, \mathbf{v}_t^m - \mathbf{w}_t^* \right\rangle \nonumber\\
    =& \beta^2\left\| \nabla \Phi_t(\mathbf{x}_t, \mathbf{y}_{t+1}, \mathbf{v}_t^m) \right\|^2 + \left\|\mathbf{v}_t^m {-} \mathbf{w}_t^*\right\|^2 - 2\beta\left\langle \nabla \Phi_t(\mathbf{x}_t, \mathbf{y}_{t+1}, \mathbf{v}_t^m), \mathbf{v}_t^m {-} \mathbf{w}_t^* \right\rangle. \label{A.6_2}
\end{align}
Now because of the $\mu_g$-strongly convexity and $L_{g,1}$-smoothness of $\Phi_t$, we have
\begin{align}
    \left\langle \nabla \Phi_t(\mathbf{x}_t, \mathbf{y}_{t+1}, \mathbf{v}_t^m), \mathbf{v}_t^m - \mathbf{w}_t^* \right\rangle \geq& \frac{1}{L_{g,1}} \left\| \nabla \Phi_t(\mathbf{x}_t, \mathbf{y}_{t+1}, \mathbf{v}_t^m) \right\|^2 \label{A.6_3} \\
    \left\langle \nabla \Phi_t(\mathbf{x}_t, \mathbf{y}_{t+1}, \mathbf{v}_t^m), \mathbf{v}_t^m - \mathbf{w}_t^* \right\rangle \geq& \mu_g\left\|\mathbf{v}_t^m - \mathbf{w}_t^*\right\|^2 \label{A.6_4}
\end{align}
and substitude (\ref{A.6_3}), (\ref{A.6_4}) into (\ref{A.6_2}) to have
\begin{align}
    &\left\|\mathbf{v}_t^{m+1} - \mathbf{w}_t^*\right\|^2 \nonumber\\
    \leq& \beta^2\left\| \nabla \Phi_t(\mathbf{x}_t, \mathbf{y}_{t+1}, \mathbf{v}_t^m) \right\|^2 + \left\|\mathbf{v}_t^m - \mathbf{w}_t^*\right\|^2 - \beta^2L_{g,1}\frac{1}{L_{g,1}}\left\|\nabla \Phi_t(\mathbf{x}_t, \mathbf{y}_{t+1}, \mathbf{v}_t^m)\right\|^2 \nonumber\\
    & - \left(2\beta - \beta^2L_{g,1}\right)\mu_g\left\|\mathbf{v}_t^m - \mathbf{w}_t^*\right\|^2 \nonumber\\
    \leq& \left(1-\beta\mu_g\right)\left\|\mathbf{v}_t^m - \mathbf{w}_t^*\right\|^2, 
\end{align}
and thus we have
\begin{align}
    \left\|\mathbf{v}_{t+1} - \mathbf{w}_t^*\right\|^2 \leq& (1-\beta\mu_g)^{M_t}\left\|\mathbf{v}_t - \mathbf{w}_t^*\right\|^2 \nonumber\\
    =& (1-\beta\mu_g)^{M_t}\left\|\mathbf{v}_t^m - \mathbf{w}_{t-1}^* + \mathbf{w}_{t-1}^* - \mathbf{w}_t^*\right\|^2 \nonumber\\
    \leq& 2(1-\beta\mu_g)^{M_t}\left\|\mathbf{v}_t^m - \mathbf{w}_{t-1}^*\right\|^2 + 2(1-\beta\mu_g)^{M_t}\left\|\mathbf{w}_{t-1}^* - \mathbf{w}_t^*\right\|^2. \label{A.6_5}
\end{align}
Now remind of Lemma~\ref{lem:v_cons}, we actually have
\begin{align*}
    \left\|\mathbf{v}_t^m - \mathbf{w}_{t-1}^*\right\|^2 \leq 2\left\|\mathbf{v}_t^m\right\|^2 + 2\left\|\mathbf{w}_{t-1}^*\right\|^2 \leq \frac{4L_{f,0}^2}{\mu_g^2}, \\
    \left\|\mathbf{w}_{t-1}^* - \mathbf{w}_t^*\right\|^2\leq 2\left\|\mathbf{w}_{t-1}^*\right\|^2 + 2\left\|\mathbf{w}_{t}^*\right\|^2 \leq \frac{4L_{f,0}^2}{\mu_g^2},
\end{align*}
and substitude into (\ref{A.6_5}), we obtain
\begin{align}
    \left\|\mathbf{v}_{t+1} - \mathbf{w}_t^*\right\|^2 \leq& \frac{16L_{f,0}^2}{\mu_g^2}(1-\beta\mu_g)^{M_t}. \label{A.6_6}
\end{align}

Now we consider about the bound of $\left\|\mathbf{w}_t^* - \mathbf{v}_t^*\right\|$, which begins with
\begin{align}
    &\|\mathbf{w}_t^* - \mathbf{v}_t^*\|^2 \nonumber \\
    =& \left\| \left( \nabla_{\mathbf{y}\mathbf{y}}^2 g_t(\mathbf{x}_t, \mathbf{y}_{t+1}) \right)^{-1} \nabla_\mathbf{y} f_t(\mathbf{x}_t, \mathbf{y}_{t+1}) - \left( \nabla_{\mathbf{y}\mathbf{y}}^2 g_t(\mathbf{x}_t, \mathbf{y}_t^*(\mathbf{x}_t)) \right)^{-1} \nabla_\mathbf{y} f_t(\mathbf{x}_t, \mathbf{y}_t^*(\mathbf{x}_t)) \right\|^2 \nonumber \\
    \leq& 2\left\|\left( \nabla_{\mathbf{y}\mathbf{y}}^2 g_t(\mathbf{x}_t, \mathbf{y}_{t+1}) \right)^{-1} \nabla_\mathbf{y} f_t(\mathbf{x}_t, \mathbf{y}_{t+1}) - \left( \nabla_{\mathbf{y}\mathbf{y}}^2 g_t(\mathbf{x}_t, \mathbf{y}_{t+1}) \right)^{-1} \nabla_\mathbf{y} f_t(\mathbf{x}_t, \mathbf{y}_t^*(\mathbf{x}_t)) \right\|^2 \nonumber \\
    & + 2\left\|\left( \nabla_{\mathbf{y}\mathbf{y}}^2 g_t(\mathbf{x}_t, \mathbf{y}_{t+1}) \right)^{-1} \nabla_\mathbf{y} f_t(\mathbf{x}_t, \mathbf{y}_t^*(\mathbf{x}_t)) - \left( \nabla_{\mathbf{y}\mathbf{y}}^2 g_t(\mathbf{x}_t, \mathbf{y}_t^*(\mathbf{x}_t)) \right)^{-1} \nabla_\mathbf{y} f_t(\mathbf{x}_t, \mathbf{y}_t^*(\mathbf{x}_t)) \right\|^2 \nonumber \\
    \leq& \frac{2L_{f,1}^2}{\mu_g^2} \|\mathbf{y}_{t+1} - \mathbf{y}_t^*(\mathbf{x}_t)\|^2 + 2L_{f,0}^2\left\|\left( \nabla_{\mathbf{y}\mathbf{y}}^2 g_t(\mathbf{x}_t, \mathbf{y}_{t+1}) \right)^{-1}  - \left( \nabla_{\mathbf{y}\mathbf{y}}^2 g_t(\mathbf{x}_t, \mathbf{y}_t^*(\mathbf{x}_t)) \right)^{-1} \right\|^2 \label{A.6_7} ,
\end{align}
where the last inequality comes from the smoothness of $f_t$ and $\nabla f_t$ being Lipschitz continuous, and for the second term, we can obtain
\begin{align}
    &\left\|\left( \nabla_{\mathbf{y}\mathbf{y}}^2 g_t(\mathbf{x}_t, \mathbf{y}_{t+1}) \right)^{-1} - \left( \nabla_{\mathbf{y}\mathbf{y}}^2 g_t(\mathbf{x}_t, \mathbf{y}_t^*(\mathbf{x}_t)) \right)^{-1} \right\|^2 \nonumber \\
    =& \left\| \left( \nabla_{\mathbf{y}\mathbf{y}}^2 g_t(\mathbf{x}_t, \mathbf{y}_{t+1}) \right)^{-1} \left( \nabla_{\mathbf{y}\mathbf{y}}^2 g_t(\mathbf{x}_t, \mathbf{y}_t^*(\mathbf{x}_t)) - \nabla_{\mathbf{y}\mathbf{y}}^2 g_t(\mathbf{x}_t, \mathbf{y}_{t+1}) \right) \left( \nabla_{\mathbf{y}\mathbf{y}}^2 g_t(\mathbf{x}_t, \mathbf{y}_t^*(\mathbf{x}_t)) \right)^{-1}\right\|^2 \nonumber \\
    =& \left\| \left( \nabla_{\mathbf{y}\mathbf{y}}^2 g_t(\mathbf{x}_t, \mathbf{y}_{t+1}) \right)^{-1} \right\|^2 \left\| \nabla_{\mathbf{y}\mathbf{y}}^2 g_t(\mathbf{x}_t, \mathbf{y}_t^*(\mathbf{x}_t)) {-} \nabla_{\mathbf{y}\mathbf{y}}^2 g_t(\mathbf{x}_t, \mathbf{y}_{t+1}) \right\|^2 \left\| \left( \nabla_{\mathbf{y}\mathbf{y}}^2 g_t(\mathbf{x}_t, \mathbf{y}_t^*(\mathbf{x}_t)) \right)^{-1} \right\|^2 \nonumber \\
    \leq& \frac{L_{g,2}^2}{\mu_g^4}\|\mathbf{y}_{t+1} - \mathbf{y}_t^*\|^2 \label{A.6_8} ,
\end{align}
where the last inequality comes from Assumption~\ref{asm2}, $\nabla_{\mathbf{y}\mathbf{y}} g_t$ is $L_{g,2}$-Lipschitz continuous, thus after subtituting (\ref{A.6_8}) into (\ref{A.6_7}), we obtain
\begin{align}
    \left\|\mathbf{w}_t^* - \mathbf{v}_t^*\right\|^2 \leq \left( \frac{2L_{f,1}^2}{\mu_g^2} + \frac{2L_{f,0}^2L_{g,2}^2}{\mu_g^4} \right) \left\|\mathbf{y}_{t+1} - \mathbf{y}_t^*\right\|^2. \label{A.6_9}
\end{align}

Finally, we substitude (\ref{A.6_6}) and (\ref{A.6_9}) into (\ref{A.6_1}) to obtain
\begin{align*}
    \left\|\mathbf{v}_{t+1} - \mathbf{v}_t^*\right\|^2 \leq \frac{32L_{f,0}^2}{\mu_g^2}(1-\beta\mu_g)^{M_t} + \left(\frac{4L_{f,1}^2}{\mu_g^2} + \frac{4L_{f,0}^2L_{g,2}^2}{\mu_g^4}\right)\left\|\mathbf{y}_{t+1} - \mathbf{y}_t^*(\mathbf{x}_t)\right\|^2,
\end{align*}
which finish the proof.
\end{proof}

Lemmas~\ref{lem:y_bound_2} and~\ref{lem:v_bound_2} are used to prove the upper bound of Algorithm~\ref{alg:FSOBO}. In particular, Lemma~\ref{lem:y_bound_2}, via the result in~\eqref{eq:sc_3}, further extracts a negative accumulation term involving the inner-level gradients. This term is then used in Lemma~\ref{lem:v_bound_2} to partially offset the additional error caused by solving the linear system $\Phi_t$ with only one iteration.

\begin{lemma}\label{lem:y_bound_2}
Under Assumptions~\ref{asm1},~\ref{asm2}, consider the inner iteration process of $g_t(\mathbf{x}_t, \cdot)$ in Algorithm~\ref{alg:FSOBO} with $\forall\alpha >0$ and $N=1$, we can obtain
\begin{align*}
    &\left\|\mathbf{y}_{t+1} - \mathbf{y}_t^*(\mathbf{x}_t)\right\|^2 \\
    \leq& \left(1 - \frac{\alpha\mu_gL_{g,1}}{\mu_g + L_{g,1}}\right)\left\|\mathbf{y}_t - \mathbf{y}_{t-1}^*(\mathbf{x}_{t-1})\right\|^2 - \left(\frac{2\alpha}{\mu_g + L_{g,1}} - \alpha^2\right)\left\|\nabla_\mathbf{y} g_t(\mathbf{x}_t, \mathbf{y}_t)\right\|^2 \\
    & + 2\left(1 - \frac{2\alpha\mu_gL_{g,1}}{\mu_g + L_{g,1}}\right)\left(1+\frac{\mu_g + L_{g,1}}{\alpha\mu_gL_{g,1}}\right)\left\|\mathbf{y}_{t-1}^*(\mathbf{x}_{t-1}) - \mathbf{y}_t^*(\mathbf{x}_{t-1})\right\|^2 \\
    & + 2\kappa_g^2\left(1 - \frac{2\alpha\mu_gL_{g,1}}{\mu_g + L_{g,1}}\right)\left(1+\frac{\mu_g + L_{g,1}}{\alpha\mu_gL_{g,1}}\right)\left\|\mathbf{x}_t - \mathbf{x}_{t-1}\right\|^2.
\end{align*}
\end{lemma}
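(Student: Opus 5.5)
We expand the single gradient step around the fixed point and use the co-coercivity inequality~(\ref{eq:sc_3}). Write $\mathbf{y}_t^* := \mathbf{y}_t^*(\mathbf{x}_t)$ and $c := \frac{\mu_g L_{g,1}}{\mu_g+L_{g,1}}$. Since $\nabla_\mathbf{y} g_t(\mathbf{x}_t,\mathbf{y}_t^*)=0$ and $\mathbf{y}_{t+1}=\mathbf{y}_t-\alpha\nabla_\mathbf{y} g_t(\mathbf{x}_t,\mathbf{y}_t)$, we get
\begin{align*}
\|\mathbf{y}_{t+1}-\mathbf{y}_t^*\|^2 = \|\mathbf{y}_t-\mathbf{y}_t^*\|^2 - 2\alpha\langle \nabla_\mathbf{y} g_t(\mathbf{x}_t,\mathbf{y}_t), \mathbf{y}_t-\mathbf{y}_t^*\rangle + \alpha^2\|\nabla_\mathbf{y} g_t(\mathbf{x}_t,\mathbf{y}_t)\|^2 .
\end{align*}
Applying~(\ref{eq:sc_3}) with $\mathbf{y}_1=\mathbf{y}_t$ and $\mathbf{y}_2=\mathbf{y}_t^*$ gives
\begin{align*}
\|\mathbf{y}_{t+1}-\mathbf{y}_t^*\|^2 \le (1-2\alpha c)\|\mathbf{y}_t-\mathbf{y}_t^*\|^2 - \Big(\frac{2\alpha}{\mu_g+L_{g,1}}-\alpha^2\Big)\|\nabla_\mathbf{y} g_t(\mathbf{x}_t,\mathbf{y}_t)\|^2 .
\end{align*}
This already contains the negative gradient term of the statement, with exactly the stated coefficient. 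No restriction on $\alpha$ is needed here; if the coefficient is positive it is simply a subtracted quantity.

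Next we relate $\|\mathbf{y}_t-\mathbf{y}_t^*(\mathbf{x}_t)\|$ to the previous error $\|\mathbf{y}_t-\mathbf{y}_{t-1}^*(\mathbf{x}_{t-1})\|$. By Young's inequality with parameter $\epsilon=\alpha c$,
\begin{align*}
\|\mathbf{y}_t-\mathbf{y}_t^*(\mathbf{x}_t)\|^2 \le (1+\alpha c)\|\mathbf{y}_t-\mathbf{y}_{t-1}^*(\mathbf{x}_{t-1})\|^2 + \Big(1+\frac{1}{\alpha c}\Big)\|\mathbf{y}_{t-1}^*(\mathbf{x}_{t-1})-\mathbf{y}_t^*(\mathbf{x}_t)\|^2 .
\end{align*}
We then split the last drift term as $\mathbf{y}_{t-1}^*(\mathbf{x}_{t-1})-\mathbf{y}_t^*(\mathbf{x}_{t-1}) + \mathbf{y}_t^*(\mathbf{x}_{t-1})-\mathbf{y}_t^*(\mathbf{x}_t)$, using $\|a+b\|^2\le 2\|a\|^2+2\|b\|^2$. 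Lemma~\ref{lem:L_y} bounds the second piece by $\kappa_g^2\|\mathbf{x}_t-\mathbf{x}_{t-1}\|^2$. This produces the factors $2(1+\frac{\mu_g+L_{g,1}}{\alpha\mu_g L_{g,1}})$ and $2\kappa_g^2(\cdot)$ that appear in the statement.

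Finally, we multiply through by $(1-2\alpha c)$. The contraction factor becomes $(1-2\alpha c)(1+\alpha c) = 1-\alpha c-2\alpha^2c^2 \le 1-\alpha c$, which is the stated factor. The drift terms carry exactly the prefactor $(1-2\alpha c)(1+\frac{1}{\alpha c})$, matching the statement.

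The one delicate point is the sign of $1-2\alpha c$. When it is nonnegative, multiplying the Young-inequality bound by it is legitimate and the argument goes through as above. If $\alpha$ is large enough that $1-2\alpha c<0$, we instead bound the first term by $0$; the right-hand side is then still an upper bound provided we read the coefficients in absolute value. Otherwise the statement should be understood under the implicit condition $2\alpha c\le 1$, which holds for the step sizes used in Theorem~\ref{thm:CTHO_upper}. We expect this bookkeeping to be the only obstacle; the rest is routine algebra.
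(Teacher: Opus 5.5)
Your proposal is correct and is essentially the paper's proof: expand the gradient step, apply~(\ref{eq:sc_3}), use Young's inequality with parameter $\alpha c$ where $c=\mu_gL_{g,1}/(\mu_g+L_{g,1})$, split the drift with Lemma~\ref{lem:L_y}, multiply through by $1-2\alpha c$, and use $(1-2\alpha c)(1+\alpha c)\le 1-\alpha c$. Your caveat on the sign of $1-2\alpha c$ is a genuine restriction and needs to be stated: the paper multiplies by this factor without comment, and despite the ``$\forall\alpha>0$'' the inequality can actually fail when $2\alpha c>1$ (the drift coefficients turn negative), whereas $2\alpha c\le 4\mu_gL_{g,1}/(\mu_g+L_{g,1})^2\le 1$ whenever $\alpha\le 2/(\mu_g+L_{g,1})$, which covers the step size in Theorem~\ref{thm:CTHO_upper}.
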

\begin{proof}
It begins with
\begin{align}
    &\left\|\mathbf{y}_{t+1} - \mathbf{y}_t^*(\mathbf{x}_t)\right\|^2 \nonumber\\
    =& \left\|\mathbf{y}_t - \alpha\nabla_\mathbf{y} g_t(\mathbf{x}_t, \mathbf{y}_t) - \mathbf{y}_t^*(\mathbf{x}_t)\right\|^2 \nonumber\\
    =& \left\|\mathbf{y}_t - \mathbf{y}_t^*(\mathbf{x}_t)\right\|^2 + \alpha^2\left\|\nabla_\mathbf{y} g_t(\mathbf{x}_t, \mathbf{y}_t)\right\|^2 - 2\alpha\left\langle \nabla_\mathbf{y} g_t(\mathbf{x}_t, \mathbf{y}_t), \mathbf{y}_t - \mathbf{y}_t^*(\mathbf{x}_t) \right\rangle, \label{A.7_1}
\end{align}
due to (\ref{eq:sc_3}), let $\mathbf{x} = \mathbf{x}_t$ $\mathbf{y}_1 = \mathbf{y}_t$ and $\mathbf{y}_2 = \mathbf{y}_t^*(\mathbf{x}_t)$, we have
\begin{align}
    \left\langle \nabla_\mathbf{y} g_t(\mathbf{x}_t, \mathbf{y}_t), \mathbf{y}_t - \mathbf{y}_t^*(\mathbf{x}_t) \right\rangle \geq \frac{\mu_gL_{g,1}}{\mu_g + L_{g,1}}\left\|\mathbf{y}_t - \mathbf{y}_t^*(\mathbf{x}_t)\right\|^2 + \frac{1}{\mu_g + L_{g,1}}\left\|\nabla_\mathbf{y} g_t(\mathbf{x}_t, \mathbf{y}_t)\right\|^2, \label{A.7_2}
\end{align}
and substitute (\ref{A.7_2}) into (\ref{A.7_1}), we obtain
\begin{align*}
    &\left\|\mathbf{y}_{t+1} - \mathbf{y}_t^*(\mathbf{x}_t)\right\|^2 \\
    \leq& \left(1 - \frac{2\alpha\mu_gL_{g,1}}{\mu_g + L_{g,1}}\right)\left\|\mathbf{y}_t - \mathbf{y}_t^*(\mathbf{x}_t)\right\|^2 - \left(\frac{2\alpha}{\mu_g + L_{g,1}} - \alpha^2\right)\left\|\nabla_\mathbf{y} g_t(\mathbf{x}_t, \mathbf{y}_t)\right\|^2 \\
    \leq& \left(1 - \frac{2\alpha\mu_gL_{g,1}}{\mu_g + L_{g,1}}\right)(1+\lambda)\left\|\mathbf{y}_t - \mathbf{y}_{t-1}^*(\mathbf{x}_{t-1})\right\|^2 - \left(\frac{2\alpha}{\mu_g + L_{g,1}} - \alpha^2\right)\left\|\nabla_\mathbf{y} g_t(\mathbf{x}_t, \mathbf{y}_t)\right\|^2 \\
    & + 2\left(1 - \frac{2\alpha\mu_gL_{g,1}}{\mu_g + L_{g,1}}\right)\left(1+\frac{1}{\lambda}\right)\left\|\mathbf{y}_{t-1}^*(\mathbf{x}_{t-1}) - \mathbf{y}_t^*(\mathbf{x}_{t-1})\right\|^2 \\
    & + 2\kappa_g^2\left(1 - \frac{2\alpha\mu_gL_{g,1}}{\mu_g + L_{g,1}}\right)\left(1+\frac{1}{\lambda}\right)\left\|\mathbf{x}_t - \mathbf{x}_{t-1}\right\|^2,
\end{align*}
let $\lambda = \frac{\alpha\mu_gL_{g,1}}{\mu_g + L_{g,1}}$, it follows that
\begin{align*}
    &\left\|\mathbf{y}_{t+1} - \mathbf{y}_t^*(\mathbf{x}_t)\right\|^2 \\
    \leq& \left(1 - \frac{\alpha\mu_gL_{g,1}}{\mu_g + L_{g,1}}\right)\left\|\mathbf{y}_t - \mathbf{y}_{t-1}^*(\mathbf{x}_{t-1})\right\|^2 - \left(\frac{2\alpha}{\mu_g + L_{g,1}} - \alpha^2\right)\left\|\nabla_\mathbf{y} g_t(\mathbf{x}_t, \mathbf{y}_t)\right\|^2 \\
    & + 2\left(1 - \frac{2\alpha\mu_gL_{g,1}}{\mu_g + L_{g,1}}\right)\left(1+\frac{\mu_g + L_{g,1}}{\alpha\mu_gL_{g,1}}\right)\left\|\mathbf{y}_{t-1}^*(\mathbf{x}_{t-1}) - \mathbf{y}_t^*(\mathbf{x}_{t-1})\right\|^2 \\
    & + 2\kappa_g^2\left(1 - \frac{2\alpha\mu_gL_{g,1}}{\mu_g + L_{g,1}}\right)\left(1+\frac{\mu_g + L_{g,1}}{\alpha\mu_gL_{g,1}}\right)\left\|\mathbf{x}_t - \mathbf{x}_{t-1}\right\|^2.
\end{align*}
\end{proof}

\begin{lemma}\label{lem:v_bound_2}
Under Assumptions~\ref{asm1},~\ref{asm2}, consider the inner iteration process of $\Phi_t(\mathbf{x}_t, \mathbf{y}_{t+1}, \cdot)$ in Algorithm~\ref{alg:FSOBO} with $M_t \equiv M = 1$, if set $\beta\leq\frac{1}{L_{g,1}}$ we can obtain
\begin{align*}
    &\left\|\mathbf{v}_{t+1} - \mathbf{w}_t^*\right\|^2 \\
    \leq& \left(1-\frac{\beta\mu_g}{2}\right)\left\|\mathbf{v}_t - \mathbf{w}_{t-1}^*\right\|^2 + \frac{4(1-\beta\mu_g)\left(1+\frac{2}{\beta\mu_g}\right)}{\mu_g^2}\left\|\nabla_\mathbf{y} f_{t-1}(\mathbf{x}_t, \mathbf{y}_{t+1}) - \nabla_\mathbf{y} f_t(\mathbf{x}_t, \mathbf{y}_{t+1})\right\|^2 \\
    & + \frac{4L_{f,0}^2(1-\beta\mu_g)\left(1+\frac{2}{\beta\mu_g}\right)}{\mu_g^4}\left\|\nabla_{\mathbf{y}\mathbf{y}}^2 g_{t-1}(\mathbf{x}_t, \mathbf{y}_{t+1}) - \nabla_{\mathbf{y}\mathbf{y}}^2 g_t(\mathbf{x}_t, \mathbf{y}_{t+1})\right\|^2 \\
    & +  \left(\frac{4L_{f,1}^2}{\mu_g^2} + \frac{4L_{f,0}^2L_{g,2}^2}{\mu_g^4}\right)(1-\beta\mu_g)\left(1+\frac{2}{\beta\mu_g}\right)\left(\|\mathbf{x}_{t-1} - \mathbf{x}_t\|^2 + \|\mathbf{y}_t - \mathbf{y}_{t+1}\|^2\right), 
\end{align*}
here for simplity, we define
\begin{align*}
    \mathbf{w}_t^* {:=} \mathbf{v}_t^*(\mathbf{x}_t, \mathbf{y}_{t+1}) {=}  \mathop{\rm argmin}_{\mathbf{v}\in\mathcal{V}}\Phi_t(\mathbf{x}_t, \mathbf{y}_{t+1}, \mathbf{v}), \quad \mathbf{v}_t^* {:=} \mathbf{v}_t^*(\mathbf{x}_t, \mathbf{y}_t^*(\mathbf{x}_t)) {=}  \mathop{\rm argmin}_{\mathbf{v}\in\mathcal{V}}\Phi_t(\mathbf{x}_t, \mathbf{y}_{t+1}, \mathbf{v}).
\end{align*}
\end{lemma}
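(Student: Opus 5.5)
The plan is to repeat the one-step contraction from the proof of Lemma~\ref{lem:v_bound_1} with $M_t=1$, and then move the anchor from $\mathbf{w}_t^*$ back to $\mathbf{w}_{t-1}^*$ with a Young inequality whose parameter is matched to the contraction rate.

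\textbf{Step 1 (one-step contraction).} Projection onto $\mathcal{V}$ is nonexpansive and $\mathbf{w}_t^*\in\mathcal{V}$ by Lemma~\ref{lem:v_cons}. With $\beta\le 1/L_{g,1}$, the computation (\ref{A.6_2})--(\ref{A.6_4}) therefore gives
\[
\|\mathbf{v}_{t+1}-\mathbf{w}_t^*\|^2\le(1-\beta\mu_g)\|\mathbf{v}_t-\mathbf{w}_t^*\|^2 .
\]

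\textbf{Step 2 (change of anchor).} Apply Young's inequality $\|a+b\|^2\le(1+\lambda)\|a\|^2+(1+1/\lambda)\|b\|^2$ with $a=\mathbf{v}_t-\mathbf{w}_{t-1}^*$, $b=\mathbf{w}_{t-1}^*-\mathbf{w}_t^*$ and $\lambda=\beta\mu_g/2$. Since $(1-\beta\mu_g)(1+\beta\mu_g/2)\le 1-\beta\mu_g/2$, this yields
\[
\|\mathbf{v}_{t+1}-\mathbf{w}_t^*\|^2\le\Big(1-\tfrac{\beta\mu_g}{2}\Big)\|\mathbf{v}_t-\mathbf{w}_{t-1}^*\|^2+(1-\beta\mu_g)\Big(1+\tfrac{2}{\beta\mu_g}\Big)\|\mathbf{w}_{t-1}^*-\mathbf{w}_t^*\|^2 .
\]
It remains to bound the drift $\|\mathbf{w}_{t-1}^*-\mathbf{w}_t^*\|^2$.

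\textbf{Step 3 (drift of the linear-system solution).} Recall $\mathbf{w}_{t-1}^*=\mathbf{v}_{t-1}^*(\mathbf{x}_{t-1},\mathbf{y}_t)$ and $\mathbf{w}_t^*=\mathbf{v}_t^*(\mathbf{x}_t,\mathbf{y}_{t+1})$. Insert the intermediate point $\mathbf{v}_{t-1}^*(\mathbf{x}_t,\mathbf{y}_{t+1})$ and use $(a+b)^2\le 2a^2+2b^2$, which splits the drift into a spatial part and a temporal part.
\begin{itemize}
\item \emph{Spatial part}, $\|\mathbf{v}_{t-1}^*(\mathbf{x}_{t-1},\mathbf{y}_t)-\mathbf{v}_{t-1}^*(\mathbf{x}_t,\mathbf{y}_{t+1})\|^2$. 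Argue exactly as in (\ref{A.6_7})--(\ref{A.6_8}), now perturbing the joint variable $\mathbf{z}=(\mathbf{x},\mathbf{y})$ instead of $\mathbf{y}$ alone. Lipschitzness of $\nabla_\mathbf{y} f$ and of $\nabla^2_{\mathbf{y}\mathbf{y}}g$ in $\mathbf{z}$ gives the bound $\big(\frac{2L_{f,1}^2}{\mu_g^2}+\frac{2L_{f,0}^2L_{g,2}^2}{\mu_g^4}\big)\big(\|\mathbf{x}_{t-1}-\mathbf{x}_t\|^2+\|\mathbf{y}_t-\mathbf{y}_{t+1}\|^2\big)$, using $\|\mathbf{z}-\mathbf{z}'\|^2=\|\Delta\mathbf{x}\|^2+\|\Delta\mathbf{y}\|^2$.
\item \emph{Temporal part}, $\|\mathbf{v}_{t-1}^*(\mathbf{z})-\mathbf{v}_t^*(\mathbf{z})\|^2$ at $\mathbf{z}=(\mathbf{x}_t,\mathbf{y}_{t+1})$. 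Write it as $H_{t-1}^{-1}\nabla_\mathbf{y} f_{t-1}-H_t^{-1}\nabla_\mathbf{y} f_t$ and split through $H_{t-1}^{-1}\nabla_\mathbf{y} f_t$. This gives $\frac{2}{\mu_g^2}\|\nabla_\mathbf{y} f_{t-1}-\nabla_\mathbf{y} f_t\|^2$. It also gives $2L_{f,0}^2\|H_{t-1}^{-1}-H_t^{-1}\|^2\le\frac{2L_{f,0}^2}{\mu_g^4}\|H_{t-1}-H_t\|^2$, by the identity $A^{-1}-B^{-1}=A^{-1}(B-A)B^{-1}$ and $\|H^{-1}\|\le 1/\mu_g$.
\end{itemize}
Multiplying through by the factor $2$ from the split produces exactly the constants $\frac{4}{\mu_g^2}$, $\frac{4L_{f,0}^2}{\mu_g^4}$ and $\big(\frac{4L_{f,1}^2}{\mu_g^2}+\frac{4L_{f,0}^2L_{g,2}^2}{\mu_g^4}\big)$ in the statement.

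\textbf{Main obstacle.} The delicate point is Step 3. The argmin defining $\mathbf{w}_t^*$ is taken over $\mathcal{V}$, so I must check that it equals the unconstrained solution; Lemma~\ref{lem:v_cons} guarantees this. Only then are the closed-form inverse-Hessian expressions valid. I also need the Lipschitz constants in Assumption~\ref{asm2} to apply to joint $(\mathbf{x},\mathbf{y})$ perturbations, which is how I read the assumption. With these checked, substituting Step 3 into Step 2 completes the proof.
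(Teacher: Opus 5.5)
Your proposal is correct and follows the paper's proof essentially step for step. The shared steps are the one-step contraction (\ref{A.6_5}) with $M_t=1$, Young's inequality with $\lambda=\beta\mu_g/2$, and the split of $\|\mathbf{w}_{t-1}^*-\mathbf{w}_t^*\|^2$ through $\mathbf{v}_{t-1}^*(\mathbf{x}_t,\mathbf{y}_{t+1})$ into a spatial term and a temporal term, bounded exactly as in (\ref{A.8_3})--(\ref{A.8_4}). The only cosmetic difference is that in the temporal term you pass through $H_{t-1}^{-1}\nabla_\mathbf{y} f_t$ where the paper passes through $H_t^{-1}\nabla_\mathbf{y} f_{t-1}$, and both give the same constants.
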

\begin{proof}
Similar to the proof in Lemma~\ref{lem:v_bound_1}, we divide $\left\|\mathbf{v}_{t+1} - \mathbf{v}_t^*\right\|$ into $\left\|\mathbf{v}_{t+1} - \mathbf{w}_t^*\right\|$ and $\left\|\mathbf{w}_t^* - \mathbf{v}_t^*\right\|^2$ and handle them separately.

It begins with (\ref{A.6_5}) in Lemma~\ref{lem:v_bound_1} but with $M_t \equiv 1$ in single-loop situation, which shown as:
\begin{align}
    \left\|\mathbf{v}_{t+1} - \mathbf{w}_t^*\right\| \leq& \left(1-\beta\mu_g\right)\left\|\mathbf{v}_t - \mathbf{w}_t^*\right\|^2 \nonumber\\
    \leq& (1-\beta\mu_g)(1+\lambda)\left\|\mathbf{v}_t - \mathbf{w}_{t-1}^*\right\|^2 + (1-\beta\mu_g)\left(1+\frac{1}{\lambda}\right)\left\|\mathbf{w}_{t-1}^* - \mathbf{w}_t^*\right\|^2 \nonumber\\
    \leq& \left(1-\frac{\beta\mu_g}{2}\right)\left\|\mathbf{v}_t - \mathbf{w}_{t-1}^*\right\|^2 + (1-\beta\mu_g)\left(1+\frac{2}{\beta\mu_g}\right)\left\|\mathbf{w}_{t-1}^* - \mathbf{w}_t^*\right\|^2, \label{A.8_1}
\end{align}
where $\lambda = \frac{\beta\mu_g}{2}$. Now we process $\left\|\mathbf{w}_{t-1}^* - \mathbf{w}_t^*\right\|^2$ in two parts separately, we have
\begin{align}
    \left\|\mathbf{w}_{t-1}^* - \mathbf{w}_t^*\right\|^2 =& \left\|\mathbf{v}_{t-1}^*(\mathbf{x}_{t-1}, \mathbf{y}_t) - \mathbf{v}_t^*(\mathbf{x}_t, \mathbf{y}_{t+1})\right\|^2 \nonumber\\
    \leq& 2\underbrace{\left\|\mathbf{v}_{t-1}^*(\mathbf{x}_{t-1}, \mathbf{y}_t) - \mathbf{v}_{t-1}^*(\mathbf{x}_{t}, \mathbf{y}_{t+1})\right\|^2}_{(h.1)} + 2\underbrace{\left\|\mathbf{v}_{t-1}^*(\mathbf{x}_t, \mathbf{y}_{t+1}) - \mathbf{v}_t^*(\mathbf{x}_t, \mathbf{y}_{t+1})\right\|^2}_{(h.2)}. \label{A.8_2}
\end{align}
For $(h.1)$, we have
\begin{align}
    &(h.1) \nonumber\\
    =& \left\|\left[\nabla_{\mathbf{y}\mathbf{y}}^2 g_{t-1}(\mathbf{x}_{t-1}, \mathbf{y}_t)\right]^{-1}\nabla_\mathbf{y} f_{t-1}(\mathbf{x}_{t-1}, \mathbf{y}_t) {-} \left[\nabla_{\mathbf{y}\mathbf{y}}^2 g_{t-1}(\mathbf{x}_{t}, \mathbf{y}_{t+1})\right]^{-1}\nabla_\mathbf{y} f_{t-1}(\mathbf{x}_{t}, \mathbf{y}_{t+1})\right\|^2 \nonumber\\
    \leq& 2\left\|\left[\nabla_{\mathbf{y}\mathbf{y}}^2 g_{t-1}(\mathbf{x}_{t-1}, \mathbf{y}_t)\right]^{-1}\nabla_\mathbf{y} f_{t-1}(\mathbf{x}_{t-1}, \mathbf{y}_t) {-} \left[\nabla_{\mathbf{y}\mathbf{y}}^2 g_{t-1}(\mathbf{x}_{t-1}, \mathbf{y}_t)\right]^{-1}\nabla_\mathbf{y} f_{t-1}(\mathbf{x}_{t}, \mathbf{y}_{t+1})\right\|^2 \nonumber\\
    & + 2\left\|\left[\nabla_{\mathbf{y}\mathbf{y}}^2 g_{t-1}(\mathbf{x}_{t-1}, \mathbf{y}_t)\right]^{-1}\nabla_\mathbf{y} f_{t-1}(\mathbf{x}_{t}, \mathbf{y}_{t+1}) {-} \left[\nabla_{\mathbf{y}\mathbf{y}}^2 g_{t-1}(\mathbf{x}_{t}, \mathbf{y}_{t+1})\right]^{-1}\nabla_\mathbf{y} f_{t-1}(\mathbf{x}_{t}, \mathbf{y}_{t+1})\right\|^2 \nonumber\\
    \leq& \left(\frac{2L_{f,1}^2}{\mu_g^2} + \frac{2L_{f,0}^2L_{g,2}^2}{\mu_g^4}\right)\left(\left\|\mathbf{x}_{t-1} - \mathbf{x}_t\right\|^2 + \left\|\mathbf{y}_t - \mathbf{y}_{t+1}\right\|^2\right), \label{A.8_3}
\end{align}
where the last inequality comes from the Assumption~\ref{asm2}. For $(h.2)$, we have
\begin{align}
    &(h.2) \nonumber\\
    =& \left\|\left[\nabla_{\mathbf{y}\mathbf{y}}^2 g_{t-1}(\mathbf{x}_{t}, \mathbf{y}_{t+1})\right]^{-1}\nabla_\mathbf{y} f_{t-1}(\mathbf{x}_{t}, \mathbf{y}_{t+1}) {-} \left[\nabla_{\mathbf{y}\mathbf{y}}^2 g_t(\mathbf{x}_{t}, \mathbf{y}_{t+1})\right]^{-1}\nabla_\mathbf{y} f_t(\mathbf{x}_{t}, \mathbf{y}_{t+1})\right\|^2 \nonumber\\
    \leq& 2\left\|\left[\nabla_{\mathbf{y}\mathbf{y}}^2 g_{t-1}(\mathbf{x}_{t}, \mathbf{y}_{t+1})\right]^{-1}\nabla_\mathbf{y} f_{t-1}(\mathbf{x}_{t}, \mathbf{y}_{t+1}) {-} \left[\nabla_{\mathbf{y}\mathbf{y}}^2 g_t(\mathbf{x}_{t}, \mathbf{y}_{t+1})\right]^{-1}\nabla_\mathbf{y} f_{t-1}(\mathbf{x}_{t}, \mathbf{y}_{t+1})\right\|^2 \nonumber\\
    & + 2\left\|\left[\nabla_{\mathbf{y}\mathbf{y}}^2 g_t(\mathbf{x}_{t}, \mathbf{y}_{t+1})\right]^{-1}\nabla_\mathbf{y} f_{t-1}(\mathbf{x}_{t}, \mathbf{y}_{t+1}) {-} \left[\nabla_{\mathbf{y}\mathbf{y}}^2 g_t(\mathbf{x}_{t}, \mathbf{y}_{t+1})\right]^{-1}\nabla_\mathbf{y} f_t(\mathbf{x}_{t}, \mathbf{y}_{t+1})\right\|^2 \nonumber\\
    \leq& \frac{2L_{f,0}^2}{\mu_g^4}\left\|\nabla_{\mathbf{y}\mathbf{y}}^2 g_{t-1}(\mathbf{x}_t, \mathbf{y}_{t+1}) - \nabla_{\mathbf{y}\mathbf{y}}^2 g_t(\mathbf{x}_t, \mathbf{y}_{t+1})\right\|^2 \nonumber\\
    & + \frac{2}{\mu_g^2}\left\|\nabla_\mathbf{y} f_{t-1}(\mathbf{x}_t, \mathbf{y}_{t+1}) - \nabla_\mathbf{y} f_t(\mathbf{x}_t, \mathbf{y}_{t+1})\right\|^2. \label{A.8_4}
\end{align}
Substitute (\ref{A.8_3}) and (\ref{A.8_4}) into (\ref{A.8_2}), we finally have
\begin{align}
    \left\|\mathbf{w}_{t-1}^* - \mathbf{w}_t^*\right\|^2 
    \leq& \left(\frac{4L_{f,1}^2}{\mu_g^2} + \frac{4L_{f,0}^2L_{g,2}^2}{\mu_g^4}\right)\left(\|\mathbf{x}_{t-1} - \mathbf{x}_t\|^2 + \|\mathbf{y}_t - \mathbf{y}_{t+1}\|^2\right) \\
    & + \frac{4L_{f,0}^2}{\mu_g^4}\left\|\nabla_{\mathbf{y}\mathbf{y}}^2 g_{t-1}(\mathbf{x}_t, \mathbf{y}_{t+1}) - \nabla_{\mathbf{y}\mathbf{y}}^2 g_t(\mathbf{x}_t, \mathbf{y}_{t+1})\right\|^2 \nonumber\\
    & + \frac{4}{\mu_g^2}\left\|\nabla_\mathbf{y} f_{t-1}(\mathbf{x}_t, \mathbf{y}_{t+1}) - \nabla_\mathbf{y} f_t(\mathbf{x}_t, \mathbf{y}_{t+1})\right\|^2, \label{A.8_5}
\end{align}
and substitute (\ref{A.8_5}) into (\ref{A.8_1}) to obtain
\begin{align}
    &\left\|\mathbf{v}_{t+1} - \mathbf{w}_t^*\right\|^2 \nonumber\\
    \leq& \left(1-\frac{\beta\mu_g}{2}\right)\left\|\mathbf{v}_t - \mathbf{w}_{t-1}^*\right\|^2 + \frac{4(1-\beta\mu_g)\left(1+\frac{2}{\beta\mu_g}\right)}{\mu_g^2}\left\|\nabla_\mathbf{y} f_{t-1}(\mathbf{x}_t, \mathbf{y}_{t+1}) - \nabla_\mathbf{y} f_t(\mathbf{x}_t, \mathbf{y}_{t+1})\right\|^2 \nonumber\\
    & + \frac{4L_{f,0}^2(1-\beta\mu_g)\left(1+\frac{2}{\beta\mu_g}\right)}{\mu_g^4}\left\|\nabla_{\mathbf{y}\mathbf{y}}^2 g_{t-1}(\mathbf{x}_t, \mathbf{y}_{t+1}) - \nabla_{\mathbf{y}\mathbf{y}}^2 g_t(\mathbf{x}_t, \mathbf{y}_{t+1})\right\|^2 \nonumber\\
    & +  \left(\frac{4L_{f,1}^2}{\mu_g^2} + \frac{4L_{f,0}^2L_{g,2}^2}{\mu_g^4}\right)(1-\beta\mu_g)\left(1+\frac{2}{\beta\mu_g}\right)\left(\|\mathbf{x}_{t-1} - \mathbf{x}_t\|^2 + \|\mathbf{y}_t - \mathbf{y}_{t+1}\|^2\right).
\end{align}
Here we finish the proof.
\end{proof}

\begin{lemma}\label{lem:2M+V_T}
Under Assumption~\ref{asm3}, we have
\begin{align*}
    \sum_{t=1}^{T} \left( f_t\left(\mathbf{x}_t, \mathbf{y}^*_t(\mathbf{x}_t) \right) - f_t\left(\mathbf{x}_{t+1}, \mathbf{y}^*_t(\mathbf{x}_{t+1}) \right) \right) \leq 2Q + V_T
\end{align*}
where $V_T$ is defined in (\ref{pathV}).
\end{lemma}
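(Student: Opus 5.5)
We will prove the bound by a telescoping argument. Write $F_t(\mathbf{x}) = f_t(\mathbf{x}, \mathbf{y}_t^*(\mathbf{x}))$. Each summand then becomes $F_t(\mathbf{x}_t) - F_t(\mathbf{x}_{t+1})$. The key idea is to insert and subtract $F_{t+1}(\mathbf{x}_{t+1})$ inside each summand, which gives
\begin{align*}
    \sum_{t=1}^{T}\left(F_t(\mathbf{x}_t) - F_t(\mathbf{x}_{t+1})\right) = \sum_{t=1}^{T}\left(F_t(\mathbf{x}_t) - F_{t+1}(\mathbf{x}_{t+1})\right) + \sum_{t=1}^{T}\left(F_{t+1}(\mathbf{x}_{t+1}) - F_t(\mathbf{x}_{t+1})\right).
\end{align*}

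The first sum telescopes to $F_1(\mathbf{x}_1) - F_{T+1}(\mathbf{x}_{T+1})$. By Assumption~\ref{asm3}, each term has absolute value at most $Q$, so this quantity is at most $2Q$.

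For the second sum, each term satisfies $F_{t+1}(\mathbf{x}_{t+1}) - F_t(\mathbf{x}_{t+1}) \leq \sup_{\mathbf{x}\in\mathcal{X}}|F_{t+1}(\mathbf{x}) - F_t(\mathbf{x})|$, because $\mathbf{x}_{t+1}\in\mathcal{X}$ (it is produced by the projection-type update (\ref{eq:x_update})). Summing over $t = 1,\dots,T-1$ gives exactly $\sum_{t=2}^{T}\sup_{\mathbf{x}}|F_{t-1}(\mathbf{x}) - F_t(\mathbf{x})| = V_T$ from (\ref{pathV}).

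The only delicate point is the boundary index $t=T$, which involves $F_{T+1}$, a function not covered by $V_T$. We will handle it with one of two standard conventions:
\begin{itemize}
\item Set $f_{T+1} := f_T$ and $g_{T+1} := g_T$. The extra variation term then vanishes and $F_{T+1}$ still satisfies Assumption~\ref{asm3}.
\item Avoid the insertion at $t=T$. Telescope only over $t=1,\dots,T-1$, which gives $F_1(\mathbf{x}_1) - F_T(\mathbf{x}_T) + \sum_{t=1}^{T-1}(F_{t+1}(\mathbf{x}_{t+1}) - F_t(\mathbf{x}_{t+1}))$. Adding the last summand $F_T(\mathbf{x}_T) - F_T(\mathbf{x}_{T+1})$ exactly cancels $-F_T(\mathbf{x}_T)$ and leaves $F_1(\mathbf{x}_1) - F_T(\mathbf{x}_{T+1})$. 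This is again at most $2Q$ by Assumption~\ref{asm3}, since $\mathbf{x}_{T+1}\in\mathcal{X}$.
\end{itemize}
The second route is cleaner and needs no extension of the function sequence, so we will use it. This boundary bookkeeping is the only step requiring care. Combining the two bounds gives $2Q + V_T$.
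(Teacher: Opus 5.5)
Your proof is correct and uses the same telescoping decomposition as the paper: insert $F_{t+1}(\mathbf{x}_{t+1})$, bound the telescoped part by $2Q$, and bound the cross-time differences by $V_T$. Your second treatment of the boundary index is more careful than the paper's, which telescopes all the way to $F_{T+1}(\mathbf{x}_{T+1})$ and silently absorbs the extra term $F_{T+1}(\mathbf{x}_{T+1}) - F_T(\mathbf{x}_{T+1})$ into $V_T$, even though the definition of $V_T$ does not cover it.
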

\begin{proof}
it naturally holds that
\begin{align*}
    &\sum_{t=1}^{T} \left( f_t\left(\mathbf{x}_t, \mathbf{y}^*_t(\mathbf{x}_t) \right) - f_t\left(\mathbf{x}_{t+1}, \mathbf{y}^*_t(\mathbf{x}_{t+1}) \right) \right) \\
    =& \sum_{t=1}^T\left(f_t(\mathbf{x}_t, \mathbf{y}_t^*(\mathbf{x}_t)) - f_{t+1}(\mathbf{x}_{t+1}, \mathbf{y}_{t+1}^*(\mathbf{x}_{t+1}))\right) \\
    & + \sum_{t=1}^T\left(f_{t+1}(\mathbf{x}_{t+1}, \mathbf{y}_{t+1}^*(\mathbf{x}_{t+1})) - f_t\left(\mathbf{x}_{t+1}, \mathbf{y}^*_t(\mathbf{x}_{t+1}) \right)\right) \\
    \leq& f_1(\mathbf{x}_1, \mathbf{y}_1^*(\mathbf{x}_1)) - f_{T+1}(\mathbf{x}_{T+1}, \mathbf{y}_{T+1}^*(\mathbf{x}_{T+1})) + V_T \leq 2Q+V_T,
\end{align*}
where the last inequality comes from Assumption~\ref{asm3}.
\end{proof}

The following lemma is fundamental for the proofs of algorithms~\ref{alg:AOBO},~\ref{alg:FSOBO} and~\ref{alg:CIGO}. It reveals that bilevel local regret defined in (\ref{eq:reg}) can be upper bounded to the accumulation of function value differences on the sequence $\{\mathbf{x}\}_{t=1}^{T+1}$ and the accumulation of hypergradient approximation error. The former has been dealt with by lemma~\ref{lem:2M+V_T} above, while the key to the different algorithms lies in the handling of hypergradient error.

\begin{lemma}\label{lem:begin}
Under Assumptions~\ref{asm1},~\ref{asm2}, for all $t\in[T]$, we can obtain
\begin{align*}
    &\sum_{t=1}^{T} \left\| \mathcal{G}_\mathcal{X}\left( \mathbf{x}_t, \nabla f_t(\mathbf{x}_t, \mathbf{y}_t^*(\mathbf{x}_t)), \gamma \right) \right\|^2 \\
    \leq& \frac{2}{\gamma\theta} \sum_{t=1}^{T} \left( f_t(\mathbf{x}_t, \mathbf{y}_t^*(\mathbf{x}_t)) - f_t(\mathbf{x}_{t+1}, \mathbf{y}_t^*(\mathbf{x}_{t+1})) \right) \\
    & + \left(2+\frac{1}{\lambda\theta}\right) \sum_{t=1}^{T} \left\| \nabla f_t(\mathbf{x}_t, \mathbf{y}_t^*(\mathbf{x}_t)) - \widetilde{\nabla} f_t(\mathbf{x}_t, \mathbf{y}_{t+1}, \mathbf{v}_{t+1}) \right\|^2,
\end{align*}
where \( \theta = 1 - \frac{\lambda}{2} - \frac{\gamma L_F}{2} \) for some constant $\lambda > 0$.
\end{lemma}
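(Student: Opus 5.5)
We prove the bound one round at a time and then sum over $t$. Fix $t$ and abbreviate $F_t(\mathbf{x}) = f_t(\mathbf{x}, \mathbf{y}_t^*(\mathbf{x}))$, $\nabla_t := \nabla f_t(\mathbf{x}_t, \mathbf{y}_t^*(\mathbf{x}_t))$ and $\widetilde{\nabla}_t := \widetilde{\nabla} f_t(\mathbf{x}_t, \mathbf{y}_{t+1}, \mathbf{v}_{t+1})$. Write $\widetilde{\mathcal{G}}_t := \mathcal{G}_\mathcal{X}(\mathbf{x}_t, \widetilde{\nabla}_t, \gamma)$, so that $\mathbf{x}_{t+1} = \mathbf{x}_t - \gamma\widetilde{\mathcal{G}}_t$, and write $\mathcal{G}_t := \mathcal{G}_\mathcal{X}(\mathbf{x}_t, \nabla_t, \gamma)$ for the exact gradient mapping.

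\textbf{Step 1 (descent).} By Lemma~\ref{lem:L_F}, $F_t$ is $L_F$-smooth on $\mathcal{X}$, so
\begin{align*}
F_t(\mathbf{x}_{t+1}) \leq F_t(\mathbf{x}_t) - \gamma\langle \nabla_t, \widetilde{\mathcal{G}}_t\rangle + \frac{L_F\gamma^2}{2}\|\widetilde{\mathcal{G}}_t\|^2 .
\end{align*}
The projection defining $\mathbf{x}_{t+1}$ has the optimality condition $\langle \widetilde{\nabla}_t + \widetilde{\mathcal{G}}_t, \mathbf{u} - \mathbf{x}_{t+1}\rangle \geq 0$ for all $\mathbf{u}\in\mathcal{X}$. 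Taking $\mathbf{u} = \mathbf{x}_t$ gives $\langle \widetilde{\nabla}_t, \widetilde{\mathcal{G}}_t\rangle \geq \|\widetilde{\mathcal{G}}_t\|^2$. Split $\nabla_t = \widetilde{\nabla}_t + (\nabla_t - \widetilde{\nabla}_t)$ and apply Young's inequality to the cross term:
\begin{align*}
|\langle \nabla_t - \widetilde{\nabla}_t, \widetilde{\mathcal{G}}_t\rangle| \leq \frac{\lambda}{2}\|\widetilde{\mathcal{G}}_t\|^2 + \frac{1}{2\lambda}\|\nabla_t - \widetilde{\nabla}_t\|^2 .
\end{align*}
Combining these yields
\begin{align*}
\gamma\theta\|\widetilde{\mathcal{G}}_t\|^2 \leq F_t(\mathbf{x}_t) - F_t(\mathbf{x}_{t+1}) + \frac{\gamma}{2\lambda}\|\nabla_t - \widetilde{\nabla}_t\|^2,
\qquad \theta = 1 - \tfrac{\lambda}{2} - \tfrac{\gamma L_F}{2}.
\end{align*}

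\textbf{Step 2 (exact vs.\ approximate mapping).} The gradient mapping is $1$-Lipschitz in its gradient argument, because it is a scaled difference of projections and projections are nonexpansive. Hence $\|\mathcal{G}_t - \widetilde{\mathcal{G}}_t\| \leq \|\nabla_t - \widetilde{\nabla}_t\|$, and therefore
\begin{align*}
\|\mathcal{G}_t\|^2 \leq 2\|\widetilde{\mathcal{G}}_t\|^2 + 2\|\nabla_t - \widetilde{\nabla}_t\|^2 .
\end{align*}

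\textbf{Step 3 (combine and sum).} Substitute Step 1 into Step 2:
\begin{align*}
\|\mathcal{G}_t\|^2 \leq \frac{2}{\gamma\theta}\big(F_t(\mathbf{x}_t) - F_t(\mathbf{x}_{t+1})\big) + \Big(\frac{1}{\lambda\theta} + 2\Big)\|\nabla_t - \widetilde{\nabla}_t\|^2 .
\end{align*}
Summing over $t = 1,\dots,T$ gives the stated bound, with coefficients exactly $\frac{2}{\gamma\theta}$ and $2 + \frac{1}{\lambda\theta}$.

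The only delicate point is Step 1: the inner-product inequality has to be taken from the optimality condition of the prox step, and the error term has to be absorbed through Young's inequality so that $\theta$ comes out exactly as stated. Everything else is routine. We implicitly require $\theta > 0$, which holds for instance with $\gamma \leq \frac{1}{2L_F}$ and $\lambda = \frac{1}{2}$.
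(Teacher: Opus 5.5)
Your proof is correct and follows essentially the same route as the paper's: smoothness of $F_t$, the prox inequality $\langle \widetilde{\nabla}_t, \widetilde{\mathcal{G}}_t\rangle \geq \|\widetilde{\mathcal{G}}_t\|^2$, Young's inequality with parameter $\lambda$, then $\|\mathcal{G}_t\|^2 \leq 2\|\widetilde{\mathcal{G}}_t\|^2 + 2\|\mathcal{G}_t - \widetilde{\mathcal{G}}_t\|^2$ combined with nonexpansiveness of the gradient mapping in its gradient argument, and it yields the same constants. One small slip: since $\frac{1}{\gamma}(\mathbf{x}_{t+1}-\mathbf{x}_t) = -\widetilde{\mathcal{G}}_t$, the prox optimality condition should read $\langle \widetilde{\nabla}_t - \widetilde{\mathcal{G}}_t, \mathbf{u} - \mathbf{x}_{t+1}\rangle \geq 0$, because with your $+$ sign the choice $\mathbf{u}=\mathbf{x}_t$ would only give $\langle \widetilde{\nabla}_t, \widetilde{\mathcal{G}}_t\rangle \geq -\|\widetilde{\mathcal{G}}_t\|^2$; the inequality you actually use is nonetheless the correct one.
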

\begin{proof}
We begin with the smoothness of \( f_t(\mathbf{x}, \mathbf{y}_t^*(\mathbf{x})) \), given by Lemma~\ref{lem:L_F}
\begin{align*}
    &f_t\left(\mathbf{x}_{t+1}, \mathbf{y}_t^*(\mathbf{x}_{t+1})\right) - f_t\left(\mathbf{x}_t, \mathbf{y}_t^*(\mathbf{x}_t)\right) \\
    \leq& \left\langle \nabla f_t(\mathbf{x}_t, \mathbf{y}_t^*(\mathbf{x}_t)), \mathbf{x}_{t+1} - \mathbf{x}_t \right\rangle + \frac{L_F}{2}\left\|\mathbf{x}_{t+1} - \mathbf{x}_t\right\|^2 \\
    =& - \gamma\left\langle \nabla f_t(\mathbf{x}_t, \mathbf{y}_t^*(\mathbf{x}_t)), \mathcal{G}_\mathcal{X}\left(\mathbf{x}_t, \widetilde{\nabla}f_t(\mathbf{x}_t, \mathbf{y}_{t+1}, \mathbf{v}_{t+1}), \gamma\right) \right\rangle \\
    & + \frac{\gamma^2L_F}{2}\left\|\mathcal{G}_\mathcal{X}\left(\mathbf{x}_t, \widetilde{\nabla}f_t(\mathbf{x}_t, \mathbf{y}_{t+1}, \mathbf{v}_{t+1}), \gamma\right)\right\|^2 \\
    =& - \gamma\left\langle \widetilde{\nabla}f_t(\mathbf{x}_t, \mathbf{y}_{t+1}, \mathbf{v}_{t+1}), \mathcal{G}_\mathcal{X}\left(\mathbf{x}_t, \widetilde{\nabla}f_t(\mathbf{x}_t, \mathbf{y}_{t+1}, \mathbf{v}_{t+1}), \gamma\right) \right\rangle \\
    & + \gamma \left\langle \widetilde{\nabla} f_t (\mathbf{x}_t, \mathbf{y}_{t+1}, \mathbf{v}_{t+1}) - \nabla f_t (\mathbf{x}_t, \mathbf{y}_t^*(\mathbf{x}_t)), \mathcal{G}_\mathcal{X}\left(\mathbf{x}_t, \widetilde{\nabla}f_t(\mathbf{x}_t, \mathbf{y}_{t+1}, \mathbf{v}_{t+1}), \gamma\right) \right\rangle \\
    & + \frac{\gamma^2L_F}{2}\left\|\mathcal{G}_\mathcal{X}\left(\mathbf{x}_t, \widetilde{\nabla}f_t(\mathbf{x}_t, \mathbf{y}_{t+1}, \mathbf{v}_{t+1}), \gamma\right)\right\|^2,
\end{align*}
then with Young's inequality,
\begin{align}
    &f_t(\mathbf{x}_{t+1}, \mathbf{y}_t^*(\mathbf{x}_{t+1})) - f_t(\mathbf{x}_t, \mathbf{y}_t^*(\mathbf{x}_t)) \nonumber\\
    \leq& -\gamma\left\|\mathcal{G}_\mathcal{X}\left(\mathbf{x}_t, \widetilde{\nabla} f_t(\mathbf{x}_t, \mathbf{y}_{t+1}, \mathbf{v}_{t+1}), \gamma\right)\right\|^2 {+} \frac{\gamma\lambda}{2}\left\|\mathcal{G}_\mathcal{X}\left(\mathbf{x}_t, \widetilde{\nabla}f_t(\mathbf{x}_t, \mathbf{y}_{t+1}, \mathbf{v}_{t+1}), \gamma\right)\right\|^2 \nonumber\\
    & + \frac{\gamma}{2\lambda}\left\| \widetilde{\nabla} f_t(\mathbf{x}_t, \mathbf{y}_{t+1}, \mathbf{v}_{t+1}) {-} \nabla f_t(\mathbf{x}_t, \mathbf{y}_t^*(\mathbf{x}_t)) \right\|^2 {+} \frac{\gamma^2L_F}{2}\left\|\mathcal{G}_\mathcal{X}\left(\mathbf{x}_t, \widetilde{\nabla}f_t(\mathbf{x}_t, \mathbf{y}_{t+1}, \mathbf{v}_{t+1}), \gamma\right)\right\|^2 \nonumber\\
    \leq& -\gamma\left( 1 - \frac{\lambda}{2} - \frac{\gamma L_F}{2} \right)\left\|\mathcal{G}_\mathcal{X}\left(\mathbf{x}_t, \widetilde{\nabla}f_t(\mathbf{x}_t, \mathbf{y}_{t+1}, \mathbf{v}_{t+1}), \gamma\right)\right\|^2 \nonumber\\
    & + \frac{\gamma}{2\lambda}\left\| \widetilde{\nabla} f_t(\mathbf{x}_t, \mathbf{y}_{t+1}, \mathbf{v}_{t+1}) - \nabla f_t(\mathbf{x}_t, \mathbf{y}_t^*(\mathbf{x}_t)) \right\|^2, \label{A.10_1}
\end{align}
for $\forall \lambda>0$. We also have
\begin{align}
    &-\left\|\mathcal{G}_\mathcal{X}\left(\mathbf{x}_t, \widetilde{\nabla}f_t(\mathbf{x}_t, \mathbf{y}_{t+1}, \mathbf{v}_{t+1}), \gamma\right)\right\|^2 \nonumber\\
    \leq& \left\|\mathcal{G}_\mathcal{X}(\mathbf{x}_t, \nabla f_t(\mathbf{x}_t, \mathbf{y}_t^*(\mathbf{x}_t)), \gamma) - \mathcal{G}_\mathcal{X}\left(\mathbf{x}_t, \widetilde{\nabla}f_t(\mathbf{x}_t, \mathbf{y}_{t+1}, \mathbf{v}_{t+1}), \gamma\right)\right\|^2 \nonumber\\
    & - \frac{1}{2} \left\|\mathcal{G}_\mathcal{X}(\mathbf{x}_t, \nabla f_t(\mathbf{x}_t, \mathbf{y}_t^*(\mathbf{x}_t)), \gamma)\right\|^2, \label{A.10_2}
\end{align}
and substituting (\ref{A.10_2}) into (\ref{A.10_1}), it follows that
\begin{align*}
    &f_t(\mathbf{x}_{t+1}, \mathbf{y}_t^*(\mathbf{x}_{t+1})) - f_t(\mathbf{x}_t, \mathbf{y}_t^*(\mathbf{x}_t)) \\
    \leq& -\frac{\gamma}{2}\left( 1 - \frac{\lambda}{2} - \frac{\gamma L_F}{2} \right) \left\|\mathcal{G}_\mathcal{X}(\mathbf{x}_t, \nabla f_t(\mathbf{x}_t, \mathbf{y}_t^*(\mathbf{x}_t)), \gamma)\right\|^2 \\
    & + \gamma\left( 1 - \frac{\lambda}{2} - \frac{\gamma L_F}{2} \right)\left\|\mathcal{G}_\mathcal{X}(\mathbf{x}_t, \nabla f_t(\mathbf{x}_t, \mathbf{y}_t^*(\mathbf{x}_t)), \gamma) - \mathcal{G}_\mathcal{X}\left(\mathbf{x}_t, \widetilde{\nabla}f_t(\mathbf{x}_t, \mathbf{y}_{t+1}, \mathbf{v}_{t+1}), \gamma\right)\right\|^2 \\
    & + \frac{\gamma}{2\lambda}\left\| \widetilde{\nabla} f_t(\mathbf{x}_t, \mathbf{y}_{t+1}, \mathbf{v}_{t+1}) - \nabla f_t(\mathbf{x}_t, \mathbf{y}_t^*(\mathbf{x}_t)) \right\|^2 \\
    \leq& -\frac{\gamma}{2}\left( 1 - \frac{\lambda}{2} - \frac{\gamma L_F}{2} \right) \|\mathcal{G}_\mathcal{X}(\mathbf{x}_t, \nabla f_t(\mathbf{x}_t, \mathbf{y}_t^*(\mathbf{x}_t)), \gamma)\|^2 \\
    & + \gamma\left( 1 - \frac{\lambda}{2} + \frac{1}{2\lambda} - \frac{\gamma L_F}{2} \right) \left\| \widetilde{\nabla} f_t(\mathbf{x}_t, \mathbf{y}_{t+1}, \mathbf{v}_{t+1}) - \nabla f_t(\mathbf{x}_t, \mathbf{y}_t^*(\mathbf{x}_t)) \right\|^2.
\end{align*}
Let \( \theta = 1 - \frac{\lambda}{2} - \frac{\gamma L_F}{2} \) and summing from \( 1, \dots, K \), we obtain
\begin{align*}
    &\sum_{t=1}^{T}\left( f_t(\mathbf{x}_{t+1}, \mathbf{y}_t^*(\mathbf{x}_{t+1})) - f_t(\mathbf{x}_t, \mathbf{y}_t^*(\mathbf{x}_t)) \right) \\
    \leq& -\frac{\gamma\theta}{2} \sum_{t=1}^T \left\|\mathcal{G}_\mathcal{X}(\mathbf{x}_t, \nabla f_t(\mathbf{x}_t, \mathbf{y}_t^*(\mathbf{x}_t)), \gamma)\right\|^2 \\
    & + \gamma\left(\theta + \frac{1}{2\lambda}\right)\left\| \widetilde{\nabla} f_t(\mathbf{x}_t, \mathbf{y}_{t+1}, \mathbf{v}_{t+1}) - \nabla f_t(\mathbf{x}_t, \mathbf{y}_t^*(\mathbf{x}_t)) \right\|^2,
\end{align*}
and after rearranging the terms, we finally have
\begin{align*}
    &\sum_{t=1}^{T} \left\| \mathcal{G}_\mathcal{X}\left( \mathbf{x}_t, \nabla f_t(\mathbf{x}_t, \mathbf{y}_t^*(\mathbf{x}_t)), \gamma \right) \right\|^2 \\
    \leq& \frac{2}{\gamma\theta} \sum_{t=1}^{T} \left( f_t(\mathbf{x}_t, \mathbf{y}_t^*(\mathbf{x}_t)) - f_t(\mathbf{x}_{t+1}, \mathbf{y}_t^*(\mathbf{x}_{t+1})) \right) \\
    & + \left(2+\frac{1}{\lambda\theta}\right) \sum_{t=1}^{T} \left\| \nabla f_t(\mathbf{x}_t, \mathbf{y}_t^*(\mathbf{x}_t)) - \widetilde{\nabla} f_t(\mathbf{x}_t, \mathbf{y}_{t+1}, \mathbf{v}_{t+1}) \right\|^2
\end{align*}
to finish the proof.
\end{proof}

\subsection{Proof of Theorem~\ref{thm:SOGO_upper}}\label{sec:proof_SOGO}

\begin{theorem}[Restatement of Theorem~\ref{thm:SOGO_upper}]
Under Assumptions~\ref{asm1}-\ref{asm3}, let $\alpha \leq \frac{1}{L_{g,1}}$, $M = \left\lceil-\frac{\ln T}{\ln\rho}\right\rceil$ and $\gamma \leq \frac{1}{2L_F}$, Algorithm~\ref{alg:AOBO} can obtain
\begin{align*}
    \mathrm{Reg}(T) \leq \frac{16Q}{\gamma} + 6\kappa_F\delta^2T + 768L_{f,0}^2\kappa_g^2 + \frac{8}{\gamma}V_T = O\left(\delta^2T + V_T\right),
\end{align*}
with total number of inner iterations $\mathcal{I}_T$ satisfies
\begin{align*}
    \mathcal{I}_T \leq& \left(1 {+} \frac{2\ln\delta}{\ln(1-\alpha\mu_g))}\right)T {+} \frac{3\delta^2\kappa_g^2}{\ln(1{-}\alpha\mu_g)^{-1}}T {+} \frac{3\kappa_g^2L_{f,0}^2L_{g,1}^2(1{+}\kappa_g)^2\gamma^2}{\ln(1{-}\alpha\mu_g)^{-1}}T {+} \frac{3L_{g,1}^2}{\ln(1{-}\alpha\mu_g)^{-1}}H_{2,T} \\
    =& O\left(T\log\delta^{-1} + H_{2,T}\right).
\end{align*}
\end{theorem}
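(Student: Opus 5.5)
The regret bound comes from Lemma~\ref{lem:begin} combined with Lemma~\ref{lem:2M+V_T}; the iteration count comes from a separate geometric-convergence argument for the while loop.

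\textbf{Regret bound.} Take $\lambda = 1/2$ in Lemma~\ref{lem:begin}. Since $\gamma\le \frac{1}{2L_F}$, we get $\theta = 1-\frac14-\frac{\gamma L_F}{2}\ge \frac12$. Lemma~\ref{lem:2M+V_T} bounds the function-value telescoping term by $2Q+V_T$, so the first term is at most $\frac{4}{\gamma}(2Q+V_T)$, which is the source of the $\frac{16Q}{\gamma}+\frac{8}{\gamma}V_T$ terms up to constants. It remains to bound the accumulated hypergradient error $\sum_t \|\nabla f_t(\mathbf{x}_t,\mathbf{y}_t^*(\mathbf{x}_t)) - \widetilde\nabla f_t(\mathbf{x}_t,\mathbf{y}_{t+1},\mathbf{v}_{t+1})\|^2$.

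Using the closed form of the hypergradient and \eqref{eq:hypergrad}, add and subtract $\nabla_{\mathbf{x}\mathbf{y}}^2 g_t(\mathbf{x}_t,\mathbf{y}_{t+1})\mathbf{v}_t^*$. Then Assumption~\ref{asm2} and Lemma~\ref{lem:v_cons} give
\[
\|\cdot\|^2\le 3\Big(L_{f,1}^2+\tfrac{L_{g,2}^2L_{f,0}^2}{\mu_g^2}\Big)\|\mathbf{y}_{t+1}-\mathbf{y}_t^*(\mathbf{x}_t)\|^2+3L_{g,1}^2\|\mathbf{v}_{t+1}-\mathbf{v}_t^*\|^2 .
\]
Apply Lemma~\ref{lem:v_bound_1} with $M_t=M$ to the last term; this yields a $\frac{32L_{f,0}^2}{\mu_g^2}(1-\beta\mu_g)^M$ term plus another multiple of $\|\mathbf{y}_{t+1}-\mathbf{y}_t^*(\mathbf{x}_t)\|^2$. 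The stopping rule \eqref{eq:AOBO_condition} gives $\|\mathbf{y}_{t+1}-\mathbf{y}_t^*(\mathbf{x}_t)\|\le\delta/\mu_g$ at every round, and no variation term enters. Collecting constants into $\kappa_F$ produces the $\kappa_F\delta^2T$ term.

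For the linear-system term, take $\rho=1-\beta\mu_g$. Then $M=\lceil -\ln T/\ln\rho\rceil$ gives $\rho^M\le 1/T$, so summing over $t$ leaves only an $O(L_{f,0}^2\kappa_g^2)$ constant. Together these give $O(1+\delta^2T+V_T)$.

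\textbf{Iteration count.} Let $K_t$ be the number of while-loop iterations at round $t$. With $\alpha\le 1/L_{g,1}$, gradient descent contracts, $\|\mathbf{y}^{k+1}-\mathbf{y}_t^*(\mathbf{x}_t)\|\le(1-\alpha\mu_g)^{1/2}\|\mathbf{y}^k-\mathbf{y}_t^*(\mathbf{x}_t)\|$ up to the exact rate. Moreover $\|\nabla_\mathbf{y} g_t\|\le L_{g,1}\|\mathbf{y}-\mathbf{y}_t^*(\mathbf{x}_t)\|$, so the loop stops once $L_{g,1}(1-\alpha\mu_g)^{k/2}\|\mathbf{y}_t-\mathbf{y}_t^*(\mathbf{x}_t)\|\le\delta$. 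Hence
\[
K_t\le 1+\frac{\ln\big(L_{g,1}^2\|\mathbf{y}_t-\mathbf{y}_t^*(\mathbf{x}_t)\|^2/\delta^2\big)}{\ln(1-\alpha\mu_g)^{-1}} .
\]
To control the warm start, split
\[
\|\mathbf{y}_t-\mathbf{y}_t^*(\mathbf{x}_t)\|^2\le 3\|\mathbf{y}_t-\mathbf{y}_{t-1}^*(\mathbf{x}_{t-1})\|^2+3\|\mathbf{y}_{t-1}^*(\mathbf{x}_{t-1})-\mathbf{y}_{t-1}^*(\mathbf{x}_t)\|^2+3\|\mathbf{y}_{t-1}^*(\mathbf{x}_t)-\mathbf{y}_t^*(\mathbf{x}_t)\|^2 .
\]
Each piece is bounded as follows:
\begin{itemize}
\item the first is at most $\delta^2/\mu_g^2$ by the previous round's stopping rule;
\item the second is at most $\kappa_g^2\|\mathbf{x}_t-\mathbf{x}_{t-1}\|^2$ by Lemma~\ref{lem:L_y}, and $\|\mathbf{x}_t-\mathbf{x}_{t-1}\|\le\gamma\|\widetilde\nabla f_{t-1}\|\le\gamma L_{f,0}(1+\kappa_g)$ by nonexpansiveness of the projection, using $\|\mathbf{v}\|\le L_{f,0}/\mu_g$;
\item the third is bounded by the summand of $H_{2,T}$.
\end{itemize}
Write $\ln(a/\delta^2)=2\ln\delta^{-1}+\ln a$ and use $\ln a\le a$ on the non-$\delta$ part. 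Summing over $t$ then gives $\mathcal{I}_T\le(1+\tfrac{2\ln\delta}{\ln(1-\alpha\mu_g)})T+O(\delta^2\kappa_g^2T+\gamma^2T+H_{2,T})$, matching the stated form.

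\textbf{Main obstacle.} The delicate step is this iteration count. Because of the logarithm, $H_{2,T}$ must enter additively rather than multiplicatively. The plan is to split $\ln(\sum_i a_i)$ into the $2\ln\delta^{-1}$ part and a linear remainder, for instance by normalizing so that $\ln(1+x)\le x$ applies, and to track the constant factors such as $3L_{g,1}^2$ carefully enough to match the displayed expression.
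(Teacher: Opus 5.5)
Your proposal is correct and follows the paper's proof essentially step for step. For the regret, you combine Lemma~\ref{lem:begin} with Lemma~\ref{lem:2M+V_T}, split the hypergradient error, and apply Lemma~\ref{lem:v_bound_1} together with the stopping rule; your choice $\lambda=1/2$ and your three-way split only yield constants no larger than the stated $16Q/\gamma$, $6\kappa_F$ and $768$. For $\mathcal{I}_T$, you use the same contraction-plus-$\ln a\le a$ argument with the identical three-term warm-start split, which reproduces the displayed coefficients exactly, and taking $\rho=1-\beta\mu_g$ in $M$ is the right reading of the statement, since the paper tacitly conflates the $\mathbf{v}$- and $\mathbf{y}$-contraction rates.
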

\begin{proof}
It begins with Lemma~\ref{lem:begin}, shown as:
\begin{align}
    &\sum_{t=1}^{T} \left\| \mathcal{G}_\mathcal{X}\left( \mathbf{x}_t, \nabla f_t(\mathbf{x}_t, \mathbf{y}_t^*(\mathbf{x}_t)), \gamma \right) \right\|^2 \nonumber\\
    \leq& \frac{2}{\gamma\theta} \sum_{t=1}^{T} \left( f_t(\mathbf{x}_t, \mathbf{y}_t^*(\mathbf{x}_t)) - f_t(\mathbf{x}_{t+1}, \mathbf{y}_t^*(\mathbf{x}_{t+1})) \right) \nonumber\\
    & + \left(2+\frac{1}{\lambda\theta}\right) \sum_{t=1}^{T} \left\| \nabla f_t(\mathbf{x}_t, \mathbf{y}_t^*(\mathbf{x}_t)) - \widetilde{\nabla} f_t(\mathbf{x}_t, \mathbf{y}_{t+1}, \mathbf{v}_{t+1}) \right\|^2, \label{A.11_1}
\end{align}
it then follows that
\begin{align}
    &\left\| \nabla f_t(\mathbf{x}_t, \mathbf{y}_t^*(\mathbf{x}_t)) - \widetilde{\nabla} f_t(\mathbf{x}_t, \mathbf{y}_{t+1}, \mathbf{v}_{t+1}) \right\|^2 \nonumber\\
    =& \left\| \nabla_\mathbf{x} f_t(\mathbf{x}_t, \mathbf{y}_t^*(\mathbf{x}_t)) - \nabla_\mathbf{x} f_t(\mathbf{x}_t, \mathbf{y}_{t+1}) + \nabla_{\mathbf{x}\mathbf{y}}^2 g_t(\mathbf{x}_t, \mathbf{y}_{t+1})\mathbf{v}_{t+1} - \nabla_{\mathbf{x}\mathbf{y}}^2 g_t(\mathbf{x}_t, \mathbf{y}_t^*(\mathbf{x}_t))\mathbf{v}_t^*(\mathbf{x}_t) \right\|^2 \nonumber\\
    \leq& 2\left\| \nabla_\mathbf{x} f_t(\mathbf{x}_t, \mathbf{y}_t^*(\mathbf{x}_t)) - \nabla_\mathbf{x} f_t(\mathbf{x}_t, \mathbf{y}_{t+1})\right\|^2 \nonumber\\
    & + 2\left\|\nabla_{\mathbf{x}\mathbf{y}}^2 g_t(\mathbf{x}_t, \mathbf{y}_{t+1})\mathbf{v}_{t+1} - \nabla_{\mathbf{x}\mathbf{y}}^2 g_t(\mathbf{x}_t, \mathbf{y}_t^*(\mathbf{x}_t))\mathbf{v}_t^*(\mathbf{x}_t) \right\|^2 \nonumber\\
    \overset{(i)}\leq& 2L_{f,1}^2\left\|\mathbf{y}_{t+1} - \mathbf{y}_t^*(\mathbf{x}_t)\right\|^2 + 4\left\|\nabla_{\mathbf{x}\mathbf{y}}^2 g_t(\mathbf{x}_t, \mathbf{y}_{t+1})\mathbf{v}_{t+1} - \nabla_{\mathbf{x}\mathbf{y}}^2 g_t(\mathbf{x}_t, \mathbf{y}_{t+1})\mathbf{v}_t^*(\mathbf{x}_t)\right\|^2 \nonumber\\
    & + 4\left\|\nabla_{\mathbf{x}\mathbf{y}}^2 g_t(\mathbf{x}_t, \mathbf{y}_{t+1})\mathbf{v}_t^*(\mathbf{x}_t) - \nabla_{\mathbf{x}\mathbf{y}}^2 g_t(\mathbf{x}_t, \mathbf{y}_t^*(\mathbf{x}_t))\mathbf{v}_t^*(\mathbf{x}_t)\right\|^2 \nonumber\\
    \overset{(ii)}\leq& \left(2L_{f,1}^2 + \frac{4L_{f,0}^2L_{g,2}^2}{\mu_g^2}\right)\left\|\mathbf{y}_{t+1} - \mathbf{y}_t^*(\mathbf{x}_t)\right\|^2 + 4L_{g,1}^2\left\|\mathbf{v}_{t+1} - \mathbf{v}_t^*(\mathbf{x}_t)\right\|^2 \label{A.11_2} \\
    \overset{(iii)}\leq& \left(2L_{f,1}^2 {+} \frac{4L_{f,0}^2L_{g,2}^2}{\mu_g^2} {+} \frac{16L_{f,1}^2L_{g,1}^2}{\mu_g^2} {+} \frac{16L_{f,0}^2L_{g,1}^2L_{g,2}^2}{\mu_g^4}\right)\left\|\mathbf{y}_{t+1} - \mathbf{y}_t^*(\mathbf{x}_t)\right\|^2 + \frac{128L_{f,0}^2L_{g,1}^2}{\mu_g^2}\rho^{M}, \label{A.11_3}
\end{align}
where $(i)$, $(ii)$ comes from the Lipschitz continuity of $\nabla f_t(\cdot)$ in Assumption~\ref{asm2} and Lemma~\ref{lem:v_cons}, $(iii)$ comes from Lemma~\ref{lem:v_bound_1} with $M_t \equiv M$. Due to (\ref{eq:sc_2}), let $\mathbf{x} = \mathbf{x}_t$, $\mathbf{y}_1 = \mathbf{y}_{t+1}$ and $\mathbf{y}_2 = \mathbf{y}_t^*(\mathbf{x}_t)$ we have
\begin{align*}
    \mu_g\left\|\mathbf{y}_{t+1} - \mathbf{y}_t^*(\mathbf{x}_t)\right\| \leq \left\|\nabla_\mathbf{y}g_t(\mathbf{x}_t, \mathbf{y}_{t+1})\right\| \leq \delta,
\end{align*}
which implies that
\begin{align*}
    &\left\| \nabla f_t(\mathbf{x}_t, \mathbf{y}_t^*(\mathbf{x}_t)) - \widetilde{\nabla} f_t(\mathbf{x}_t, \mathbf{y}_{t+1}, \mathbf{v}_{t+1}) \right\|^2 \\
    \leq& \left(\frac{2L_{f,1}^2}{\mu_g^2} + \frac{4L_{f,0}^2L_{g,2}^2}{\mu_g^4} + \frac{16L_{f,1}^2L_{g,1}^2}{\mu_g^4} + \frac{16L_{f,0}^2L_{g,1}^2L_{g,2}^2}{\mu_g^6}\right)\delta^2 + \frac{128L_{f,0}^2L_{g,1}^2}{\mu_g^2}\rho^M,
\end{align*}
and summing over $t\in[T]$ to get
\begin{align}
    \sum_{t=1}^T\left\| \nabla f_t(\mathbf{x}_t, \mathbf{y}_t^*(\mathbf{x}_t)) - \widetilde{\nabla} f_t(\mathbf{x}_t, \mathbf{y}_{t+1}, \mathbf{v}_{t+1}) \right\|^2 \leq \kappa_F\delta^2T + 128L_{f,0}^2\kappa_g^2\rho^MT, \label{A.11_4}
\end{align}
where
\begin{align}
    \kappa_F := \frac{2L_{f,1}^2}{\mu_g^2} + \frac{4L_{f,0}^2L_{g,2}^2}{\mu_g^4} + \frac{16L_{f,1}^2L_{g,1}^2}{\mu_g^4} + \frac{16L_{f,0}^2L_{g,1}^2L_{g,2}^2}{\mu_g^6}. \label{eq:kappa_F}
\end{align}
Finally, with Lemma~\ref{lem:2M+V_T} and substituting (\ref{A.11_4}) into (\ref{A.11_1}), we obtain
\begin{align*}
    \sum_{t=1}^{T} \left\| \mathcal{G}_\mathcal{X}\left( \mathbf{x}_t, \nabla f_t(\mathbf{x}_t, \mathbf{y}_t^*(\mathbf{x}_t)), \gamma \right) \right\|^2 
    \leq& \frac{4Q}{\gamma\theta} + \frac{2}{\gamma\theta}V_T + \left(2+\frac{1}{\lambda\theta}\right)\kappa_F\delta^2T \nonumber\\
    & + \left(2+\frac{1}{\lambda\theta}\right)\frac{128L_{f,0}^2L_{g,1}^2}{\mu_g^2}\rho^{M}T,
\end{align*}
let $M = \left\lceil-\frac{\ln T}{\ln\rho}\right\rceil$ and $\gamma \leq \frac{1}{2L_F}$, we have
\begin{align*}
    \mathrm{Reg}(T) \leq \frac{16Q}{\gamma} + 6\kappa_F\delta^2T + 768L_{f,0}^2\kappa_g^2 + \frac{8}{\gamma}V_T = O\left(\delta^2T + V_T\right).
\end{align*}
Next we prove the upper bound on the total number of inner iterations. Consider the sequence $\{\mathbf{y}_t^k\}_{k=1}^{K_t}$ generated in every $t$:
\begin{align*}
    \mathbf{y}_t^{k+1} \leftarrow \mathbf{y}_t^k - \alpha\nabla_\mathbf{y} g_t(\mathbf{x}_t, \mathbf{y}_t^k).
\end{align*}
Based on existing conclusions of gradient descent in strongly convex optimization problems, it implies that
\begin{align}
    \frac{\delta^2}{L_{g,1}^2} \overset{(i)}\leq \left\|\mathbf{y}_t^{K_t} - \mathbf{y}_t^*(\mathbf{x}_t)\right\|^2 \leq \rho^{K_t-1}\left\|\mathbf{y}_t^1 - \mathbf{y}_t^*(\mathbf{x}_t)\right\|^2, \label{A.11_5}
\end{align}
where $(i)$ comes from the fact that Algorithm~\ref{alg:AOBO} maintains iteration when $\left\|\nabla_\mathbf{y}g_t(\mathbf{x}_t, \mathbf{y}_t^k)\right\| > \delta$ and if let $\mathbf{x}=\mathbf{x}_t$, $\mathbf{y}_1 = \mathbf{y}_t^{K_t}$ and $\mathbf{y}_2 = \mathbf{y}_t^*(\mathbf{x}_t)$, in (\ref{eq:sc_1}) we have
\begin{align}
    L_{g,1}\left\|\mathbf{y}_t^{K_t} - \mathbf{y}_t^*(\mathbf{x}_t)\right\| \geq \left\|\nabla_\mathbf{y} g_t(\mathbf{x}_t, \mathbf{y}_t^{K_t})\right\| > \delta, \label{eq:sc_L}
\end{align}
also $\rho = 1 - \mu_g/L_{g,1}$ if $\alpha \leq 1/L_{g,1}$. Then rearranging the terms of (\ref{A.11_5}), we have
\begin{align*}
    \rho^{K_t - 1} \geq& \frac{\delta^2}{L_{g,1}^2\left\|\mathbf{y}_t - \mathbf{y}_t^*(\mathbf{x}_t)\right\|^2} \\
    (K_t-1)\ln\rho \geq& 2\ln\delta - \ln(L_{g,1}^2\left\|\mathbf{y}_t - \mathbf{y}_t^*(\mathbf{x}_t)\right\|^2) \\
    K_t \leq& 1 - \frac{\ln(L_{g,1}^2\left\|\mathbf{y}_t - \mathbf{y}_t^*(\mathbf{x}_t)\right\|^2)}{\ln\rho} + \frac{2\ln\delta}{\ln\rho} \\
    K_t \leq& 1 + \frac{L_{g,1}^2}{\ln\rho^{-1}}\left\|\mathbf{y}_t - \mathbf{y}_t^*(\mathbf{x}_t)\right\|^2 + \frac{2\ln\delta}{\ln\rho},
\end{align*}
thus
\begin{align}
    \sum_{t=1}^TK_t \leq \left(1 + \frac{2\ln\delta}{\ln\rho}\right)T + \frac{L_{g,1}^2}{\ln\rho^{-1}}\sum_{t=1}^T\left\|\mathbf{y}_t - \mathbf{y}_t^*(\mathbf{x}_t)\right\|^2. \label{A.11_6}
\end{align}
We also have
\begin{align}
    &\sum_{t=1}^T\left\|\mathbf{y}_t - \mathbf{y}_t^*(\mathbf{x}_t)\right\|^2 \nonumber\\
    \leq& 3\sum_{t=1}^T\left\|\mathbf{y}_t - \mathbf{y}_{t-1}^*(\mathbf{x}_{t-1})\right\|^2 {+} 3\sum_{t=1}^T\left\|\mathbf{y}_{t-1}^*(\mathbf{x}_{t-1}) - \mathbf{y}_{t-1}^*(\mathbf{x}_t)\right\|^2 {+} 3\sum_{t=1}^T\left\|\mathbf{y}_{t-1}^*(\mathbf{x}_t) - \mathbf{y}_t^*(\mathbf{x}_t)\right\|^2 \nonumber\\
    \leq& \frac{3\delta^2}{\mu_g^2}T + 3\gamma^2\kappa_g^2\sum_{t=1}^T\left\|\mathcal{G}_\mathcal{X}(\mathbf{x}_{t-1}, \widetilde{\nabla}f_{t-1}(\mathbf{x}_{t-1}, \mathbf{y}_t, \mathbf{v}_t), \gamma)\right\|^2 + 3H_{2,T} \nonumber\\
    \overset{(i)}\leq& \frac{3\delta^2}{\mu_g^2}T + 3\kappa_g^2L_{f,0}^2(1+\kappa_g)^2\gamma^2T + 3H_{2,T}, \label{A.11_7}
\end{align}
where $(i)$ comes from the fact that
\begin{align*}
    \left\|\mathcal{G}_\mathcal{X} (\mathbf{x}, \widetilde{\nabla}f_t(\mathbf{x}, \mathbf{y}, \mathbf{v}), \gamma)\right\| \leq \left\|\widetilde{\nabla}f_t(\mathbf{x}, \mathbf{y}, \mathbf{v})\right\| = \left\|\nabla_\mathbf{x}f_t(\mathbf{x}, \mathbf{y}) {-} \nabla_{\mathbf{x}\mathbf{y}}^2g_t(\mathbf{x}, \mathbf{y})\mathbf{v}\right\| \leq L_{f,0} {+} L_{g,1}\frac{L_{f,0}}{\mu_g}.
\end{align*}
Finally, substituting (\ref{A.11_7}) into (\ref{A.11_6}), it holds that
\begin{align*}
    \mathcal{I}_T := \sum_{t=1}^TK_t \leq& \left(1 {+} \frac{2\ln\delta}{\ln\rho}\right)T + \frac{3\delta^2\kappa_g^2}{\ln\rho^{-1}}T + \frac{3\kappa_g^2L_{f,0}^2L_{g,1}^2(1{+}\kappa_g)^2\gamma^2}{\ln\rho^{-1}}T + \frac{3L_{g,1}^2}{\ln\rho^{-1}}H_{2,T} \\
    =& O\left(T\log\delta^{-1} + H_{2,T}\right).
\end{align*}
Thus we finish the proof.
\end{proof}

\subsection{Proof of Theorem~\ref{thm:SOGO_lower}}\label{sec:proof_SOGO_lower}

In Section~\ref{sec:3.1}, we assert that AOBO reaches an regret upper bound of $O(1+V_T)$, which matches the lower bound of the proof in \citet{guan2024on} on single-layer online nonconvex optimization and thus Theorem~\ref{thm:SOGO_lower} naturally holds. Therefore, we believe that AOBO is optimal on OBO. Although the scenarios are different, $V_T$ is defined to characterize the change of the objective function value over time, and OBO is a more difficult scenario than single-layer optimization, so it should have at least a larger lower bound. For the sake of rigor and completeness, we still provide the proof of Theorem~\ref{thm:SOGO_lower}. Remind of the online hypergradient-based algorithm class defined in Definition~\ref{dfn:alg}, which is satisfied by AOBO, OBBO \cite{bohne2024online}, SOBOW \cite{lin2023non}, FSOBO and SOGD \cite{nazari2025stochastic} without noise.

\begin{definition}[Restatement of Definition~\ref{dfn:alg}]
Suppose there are totally $T$ time steps, the iterates $\{(\mathbf{x}_t, \mathbf{y}_t)\}_{t=1}^T$ are generated according to $(\mathbf{x}_t, \mathbf{y}_t)\in \mathcal{H}_\mathbf{x}^t, \mathcal{H}_\mathbf{y}^t$ with $\mathcal{H}_\mathbf{x}^1 = \mathcal{H}_\mathbf{y}^1 = \{0\}$. 

For all $t$, the linear subspace $\mathcal{H}_\mathbf{y}^t$ is allowed to expand $K$ times with $\mathcal{H}_\mathbf{y}^{t,1} = \mathcal{H}_\mathbf{y}^t$ for $k=1,\dots,K$, shown as:
\begin{align*}
    \mathcal{H}_\mathbf{y}^{t,k+1} \leftarrow \mathrm{Span}\big\{&\mathbf{y}_i, \nabla_\mathbf{y} g_i(\widetilde{\mathbf{x}}_i, \widetilde{\mathbf{y}}_i)\big\},
\end{align*}
where $\widetilde{\mathbf{x}}_i \in \mathcal{H}_\mathbf{x}^i$, $\mathbf{y}_i, \widetilde{\mathbf{y}}_i\in\mathcal{H}_\mathbf{y}^{t,j}$ and $i\in[1,t]$, $j\in[1,k]$.

Then with $\mathcal{H}_\mathbf{y}^{t+1} = \mathcal{H}_\mathbf{y}^{t,K}$, define $S > 0$, $\mathcal{H}_\mathbf{x}^t$ expand as follows:
\begin{align}
    &\mathcal{H}_\mathbf{x}^{t+1} \leftarrow \mathrm{Span}\Big\{ \mathbf{x}_i, \nabla_\mathbf{x} f_i(\widetilde{\mathbf{x}}_i, \widetilde{\mathbf{y}}_j), \nabla_{\mathbf{x}\mathbf{y}}^2g_i(\overline{\mathbf{x}}_i, \overline{\mathbf{y}}_j)\prod_{j=1}^s\left( \mathbf{I}_{d_2} - \alpha \nabla_{\mathbf{y}\mathbf{y}}^2 g_i(\mathbf{x}_i^s, \mathbf{y}_j^s)\right) \nabla_\mathbf{y} f_i(\widehat{\mathbf{x}}_i, \widehat{\mathbf{y}}_i)\Big\},\!\! \label{dfn:span}
\end{align}
where $\mathbf{x}_i, \widetilde{\mathbf{x}}_i, \overline{\mathbf{x}}_i, \mathbf{x}_i^s, \widehat{\mathbf{x}}_i \in\mathcal{H}_\mathbf{x}^i$, $\widetilde{\mathbf{y}}_j, \overline{\mathbf{y}}_j, \mathbf{y}_j^s, \widehat{\mathbf{y}}_j \in\mathcal{H}_\mathbf{y}^j$ with any $\alpha\in \mathbb{R}$ for $i\in[1,t]$, $j\in[1,t+1]$ and $s\in[1,S]$.
\end{definition}

Our proof follows the same idea as the proof of Theorem 1 in \citet{guan2024on}. Given a total budget of $V_T$ variation, the bilevel objective function $f_t(\mathbf{x}, \mathbf{y}_t^*(\mathbf{x}))$ we constructed has $\Omega(1 + V_T)$ times of change. Thus, we divide the total time steps into $\Omega(1 + V_T)$ blocks, functions within each block remains the same while changing across a new block. Now, we construct a series of functions whose gradients are orthogonal to each other and assign them to these blocks, the algorithm will be forced to restart the learning process when entering a new block and hence the accumulated local regret is $\Omega(1 + V_T)$.

\begin{theorem}[Restatement of Theorem~\ref{thm:SOGO_lower}]
Consider any online hypergradient-based algorithm $\mathcal{A}$ that satisfies Definition~\ref{dfn:alg} with any $K, S>0$, under Assumptions~\ref{asm1}-\ref{asm3} with $d_1=d_2 \geq \Omega(1+V_T)$, there exist $\{f_t, g_t\}_{t=1}^T$ that satisfy $V_T = o(T)$ for which $\mathrm{Reg}(T) \geq \Omega (1+V_T)$.
\end{theorem}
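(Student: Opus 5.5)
Following the strategy sketched above (after \citet{guan2024on}), I will build a block-structured adversarial instance in which each block forces the algorithm to discover a fresh coordinate. Set the number of blocks to $N = \lceil \max\{1, V_T/(2c)\}\rceil$, where $c$ is the amplitude constant below. Split $[T]$ into $N$ consecutive blocks of length $B = \lfloor T/N\rfloor$; since $V_T = o(T)$, we have $B \to \infty$.

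\textbf{Construction.} Take $d_1 = d_2 \ge N$, $\mathcal{X} = \mathbb{R}^{d_1}$ (or a large ball), so that $\mathcal{G}_\mathcal{X}$ reduces to the gradient. On block $k$ use the inner function $g_t(\mathbf{x},\mathbf{y}) = \tfrac12\|\mathbf{y} - \mathbf{x}\|^2$. This gives $\mu_g = L_{g,1} = 1$, $L_{g,2}=0$, and $\mathbf{y}_t^*(\mathbf{x}) = \mathbf{x}$. On the same block use $f_t(\mathbf{x},\mathbf{y}) = c\,\psi([\mathbf{y}]_k)$, where $\psi$ is smooth, bounded and Lipschitz, with $\psi'(0) \neq 0$ and $|\psi'(u)|$ bounded away from zero for all $|u|$ small (for example $\psi(u) = \tanh(u)$ or $\sin u$). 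Then $F_t(\mathbf{x}) = c\,\psi([\mathbf{x}]_k)$. Assumptions~\ref{asm1}--\ref{asm3} hold with constants independent of $T$, and $|F_t| \le c\sup|\psi|$. The function changes only at block boundaries, each change costing at most $2c\sup|\psi|$, so the instance has total variation $O(N) = O(1+V_T)$. Rescaling $c$ or $N$ makes this at most the prescribed $V_T$.

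\textbf{Span argument.} By induction on $t$ using Definition~\ref{dfn:alg}, I will show the following. While in block $k$, every queried object is supported on coordinates $\{1,\dots,k\}$:
\begin{itemize}
\item $\nabla_\mathbf{y} g_i = \mathbf{y} - \mathbf{x}$ stays in the span of earlier iterates;
\item $\nabla_\mathbf{x} f_i = 0$;
\item $\nabla_{\mathbf{x}\mathbf{y}}^2 g_i = -\mathbf{I}$ and $\nabla_{\mathbf{y}\mathbf{y}}^2 g_i = \mathbf{I}$;
\item $\nabla_\mathbf{y} f_i \propto \mathbf{e}_{k(i)}$.
\end{itemize}
Hence $\mathcal{H}_\mathbf{x}^t, \mathcal{H}_\mathbf{y}^t \subseteq \mathrm{Span}\{\mathbf{e}_1,\dots,\mathbf{e}_{k}\}$, and crucially $\mathbf{e}_k$ enters $\mathcal{H}_\mathbf{x}$ only after the first round of block $k$, since feedback $f_t$ is received after outputting $\mathbf{x}_t$. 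So at the first round $t_k$ of block $k$ we have $[\mathbf{x}_{t_k}]_k = 0$, and the hypergradient there is $c\,\psi'(0)\mathbf{e}_k$. This contributes $c^2\psi'(0)^2 = \Omega(1)$ to the regret, and summing over the $N$ blocks gives $\mathrm{Reg}(T) \ge \Omega(N) = \Omega(1+V_T)$.

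\textbf{Main obstacle.} The delicate step is the span induction. The $\mathcal{H}_\mathbf{x}^{t+1}$ update allows products $\prod_s(\mathbf{I}-\alpha\nabla^2_{\mathbf{y}\mathbf{y}}g_i)\nabla_\mathbf{y} f_i(\widehat{\mathbf{x}}_i,\widehat{\mathbf{y}}_i)$ evaluated at arbitrary points of earlier subspaces, and we must ensure no such term leaks a not-yet-revealed coordinate. Choosing $g$ with constant Hessians and $f_t$ depending only on $[\mathbf{y}]_{k(t)}$ makes every such term a multiple of some $\mathbf{e}_{k(i)}$ with $i \le t$, so the leak is ruled out. I would also double-check one point: if the intended regret uses the block's first-round gradient only once per block, the bound is $\Omega(N)$, which suffices. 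Choosing $\psi$ with $\psi'(0)\ne0$ avoids needing to argue about later rounds in the block. Finally, the requirement $d_1 \ge N = \Omega(1+V_T)$ matches the dimension hypothesis in the statement.
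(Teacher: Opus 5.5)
Your proposal is correct and follows essentially the same route as the paper's proof. Both arguments use the same three ingredients:
\begin{itemize}
\item a block construction with $N=\Theta(1+V_T)$ blocks and $g=\frac{\mu}{2}\|\mathbf{y}-\mathbf{x}\|^2$, so that $\mathbf{y}_t^*(\mathbf{x})=\mathbf{x}$;
\item an upper-level function that depends only on the $k$-th coordinate during block $k$;
\item the span induction from Definition~\ref{dfn:alg}, which gives $[\mathbf{x}_t]_k=0$ at the first round of each block and hence a hypergradient of norm $\Omega(1)$ per block.
\end{itemize}
The differences are cosmetic. Your $f_t$ drops the paper's direct $c\sigma([\mathbf{x}]_k)$ term, so the signal enters only through the implicit (HVP) part. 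You should also take $g=\frac{\mu_g}{2}\|\mathbf{y}-\mathbf{x}\|^2$, as the paper does, rather than normalizing $\mu_g=L_{g,1}=1$, so that the construction respects the prescribed assumption constants.
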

\begin{proof}
Assume $\mathbf{x}$ and $\mathbf{y}$ have the same dimension $d$. For any vector $\mathbf{a}$, We use $[\mathbf{a}]_i$ to represent the $i$-th coordinate of $\mathbf{a}$ and $[\mathbf{a}]_{i:j}$ represent the coordinates from the $i$-th to the $j$-th coordinate of $\mathbf{a}$.

\textbf{Step 1: Constructing the functions $f_t$, $g_t$ that satisfying Assumptions~\ref{asm1}-\ref{asm3}.}

We fixedly divide the function group $\{f_t\}_{t=1}^T$ into $N$ blocks, and the size of each block is $B = \left\lceil T/N \right\rceil$. For $k\in[N]$, we have $f_{(k-1)B+1}(\mathbf{x}, \mathbf{y}) = f_{(k-1)B+2}(\mathbf{x}, \mathbf{y}) = \dots = f_{kB}(\mathbf{x}, \mathbf{y}) = \widetilde{f}_k(\mathbf{x}, \mathbf{y})$, which is defined as
\begin{align*}
    \widetilde{f}_k(\mathbf{x}, \mathbf{y}) = c\sigma([\mathbf{y}]_k) + c\sigma([\mathbf{x}]_k), \quad \text{and} \quad g_t(\mathbf{x}, \mathbf{y}) \equiv g(\mathbf{x}, \mathbf{y}) = \frac{\mu}{2}\left\|\mathbf{y}-\mathbf{x}\right\|^2,
\end{align*}
where $\sigma(\cdot)$ denotes the sigmoid function. 
For any $\mathbf{x}, \mathbf{y}\in\mathbb{R}^{d}$ and $\mathbf{x}', \mathbf{y}'\in\mathbb{R}^{d}$, it holds that
\begin{align*}
    \left\|\nabla f_t(\mathbf{x}, \mathbf{y})\right\| \leq \frac{\sqrt{2}c}{4}, \quad \left\|\nabla f_t(\mathbf{x}, \mathbf{y}) - \nabla f_t(\mathbf{x}', \mathbf{y}')\right\| \leq \frac{\sqrt{3}c}{18}\left\|(\mathbf{x}, \mathbf{y}) - (\mathbf{x}', \mathbf{y}')\right\|,
\end{align*}
thus we set $c = \min\{\frac{Q}{2}, 2\sqrt{2}L_{f,0}, 6\sqrt{3}L_{f,1}\}$ to let $f_t$ satisfies Assumptions~\ref{asm2} and~\ref{asm3} for all $t\in[T]$. 

For any $\mathbf{x}, \mathbf{y}\in\mathbb{R}^d$, $\nabla_\mathbf{y}g_t(\mathbf{x}, \mathbf{y}) = \mu(\mathbf{y} - \mathbf{x})$, it follows that $\nabla_{\mathbf{y}\mathbf{y}}^2 g_t(\mathbf{x}, \mathbf{y}) = \mu\mathbf{I}_d$ and $\nabla_{\mathbf{x}\mathbf{y}}^2g_t(\mathbf{x}, \mathbf{y}) = -\mu\mathbf{I}_d$.
$g_t(\cdot)$ is $\mu_g$-strongly convex if $\mu = \mu_g$, and thus $\left\|\nabla_{\mathbf{x}\mathbf{y}}^2g_t(\mathbf{x}, \mathbf{y})\right\| = \mu_g \leq L_{g,1}$.
$\nabla_{\mathbf{y}\mathbf{y}}^2g_t(\cdot)$ and $\nabla_{\mathbf{x}\mathbf{y}}^2g_t(\mathbf{x}, \mathbf{y})$ satisfy $L_{g,2}$-Lipschitz continuity because they are all constant matrices.

Consider the exact function-value variation $V_T^F$ we constructed, it follows that
\begin{align*}
    V_T^F = \sum_{t=2}^T\sup_{\mathbf{x}}\left|f_t(\mathbf{x}, \mathbf{y}_t^*(\mathbf{x}) - f_{t-1}(\mathbf{x}, \mathbf{y}_{t-1}^*(\mathbf{x}))\right| =& \sum_{k=2}^N\sup_\mathbf{x}\left|2c\sigma([\mathbf{x}]_k) - 2c\sigma([\mathbf{x}]_{k-1})\right| \\
    \leq& Q(N-1) \leq V_T,
\end{align*}
where the last inequality holds if set $N = \left\lceil\frac{1 + V_T}{Q}\right\rceil$ and thus $B = \left\lfloor\frac{QT}{1 + V_T}\right\rfloor$.

\textbf{Step 2: Characterizing the lower bound of bilevel local regret.}

For the first block $k=1$, with
\begin{align*}
    \nabla_\mathbf{x}\widetilde{f}_1(\mathbf{x}, \mathbf{y}) = c\sigma'([\mathbf{x}]_1), \quad \nabla_\mathbf{y}\widetilde{f}_1(\mathbf{x}, \mathbf{y}) = c\sigma([\mathbf{y}]_1)
\end{align*}
and $\nabla_{\mathbf{y}\mathbf{y}}^2g_t(\mathbf{x}, \mathbf{y}) = -\nabla_{\mathbf{x}\mathbf{y}}^2g_t(\mathbf{x}, \mathbf{y}) = \mu_g\mathbf{I}_d$, according to the algorithm's subspace update rule (\ref{dfn:span}), we have
\begin{align*}
    \mathcal{H}_\mathbf{x}^{2} = \dots = \mathcal{H}_{\mathbf{x}}^{B+1} = \mathrm{Span}\{e_1\},
\end{align*}
which means that in the first block, only the first coordinate is activated, while the rest remain at $0$, and this situation continues until it moves to the next block.
After repeating the same steps, it implies that
\begin{align}
    \mathcal{H}_\mathbf{x}^{(k-1)B+2} = \dots = \mathcal{H}_\mathbf{x}^{kB+1} = \mathrm{Span}\{e_1, \dots, e_k\}. \label{eq:lower_bound_1}
\end{align}
Now note that $\mathbf{y}_t^*(\mathbf{x}) = \mathbf{x}$ for all $t\in[T]$, we have
\begin{align*}
    f_t(\mathbf{x}, \mathbf{y}_t^*(\mathbf{x})) = 2c\sigma([\mathbf{x}]_k), \quad \nabla f_t(\mathbf{x}, \mathbf{y}_t^*(\mathbf{x})) = 2c\sigma'([\mathbf{x}]_k)e_k.
\end{align*}
When in a new block, $t = (k-1)B + 1$, due to the equation (\ref{eq:lower_bound_1}), it follows that
\begin{align*}
    \mathcal{H}_\mathbf{x}^{(k-1)B+1} = \mathrm{Span}\{e_1, \dots, e_{k-1}\},
\end{align*}
thus 
\begin{align*}
    \left\|\nabla f_t(\mathbf{x}_t, \mathbf{y}_t^*(\mathbf{x}_t))\right\| = \left\|2c\sigma'([\mathbf{x}_{(k-1)B+1}]_{k})e_k\right\| = 2c\sigma'(0) = \frac{c}{2}.
\end{align*}
Finally, we finish the proof by
\begin{align*}
    \sum_{t=1}^T\left\|\nabla f_t(\mathbf{x}_t, \mathbf{y}_t^*(\mathbf{x}_t))\right\|^2 \geq& \sum_{k=1}^N\left\|2c\sigma'([\mathbf{x}_{(k-1)B+1}]_{k})e_k\right\|^2 \\
    =& 4Nc^2\sigma'(0)^2 
    = \frac{c^2}{16}\left\lceil\frac{1 + V_T}{Q}\right\rceil = \Omega(1 + V_T).
\end{align*}
\end{proof}

\subsection{Proof of Theorem~\ref{thm:CTHO_upper}}\label{sec:proof_CTHO}


\begin{theorem}[Restatement of Theorem~\ref{thm:CTHO_upper}]
Under Assumptions~\ref{asm1}-\ref{asm3}, let $\beta \leq \frac{1}{L_{g,1}}$, $\alpha \leq \frac{2\kappa_F\mu_g^2}{(\mu_g + L_{g,1})(\kappa_F\mu_g^2 + C_\beta)}$, $\gamma < \min\left\{\frac{1}{2L_F}, \sqrt{\frac{(1-\rho)C_1}{12C_\mathbf{x}}}\right\}$, $N = 1$ and $M_t \equiv M = 1$, Algorithm~\ref{alg:FSOBO} can guarantee
\begin{align*}
    \mathrm{Reg}(T) 
    \leq& \frac{16Q}{\gamma C_2} {+} \frac{8}{\gamma C_2}V_T {+} \frac{6\Delta_1'}{C_1C_2} {+} \frac{6C_\alpha}{(1{-}\rho)C_1C_2}H_{2,T} {+} \frac{6C_P}{(1{-}\rho)C_1C_2}E_{\mathbf{y},T}^f {+} \frac{6L_{f,0}^2C_P}{\mu_g^2(1{-}\rho)C_1C_2}E_{\mathbf{y}\mathbf{y},T}^g \\
    =& O(1 + V_T + H_{2,T} + E_{2,T}),
\end{align*}
where $\Delta_1'$, $C_1$, $C_2$, $C_\alpha$, $C_\beta$, $C_\mathbf{x}$, $C_P$ are some constants, $E_{2,T} = E_{\mathbf{y}\mathbf{y},T}^g + E_{\mathbf{y},T}^f$ and
\begin{align*}
    &E_{\mathbf{y}\mathbf{y},T}^g :=  \sum_{t=2}^T \sup_{\mathbf{x},\mathbf{y}}\left\|\nabla_{\mathbf{y}\mathbf{y}}^2 g_t(\mathbf{x}, \mathbf{y}) - \nabla_{\mathbf{y}\mathbf{y}}^2 g_{t-1}(\mathbf{x}, \mathbf{y})\right\|^2, \\
    &E_{\mathbf{y},T}^f := \sum_{t=2}^T \sup_{\mathbf{x},\mathbf{y}}\left\|\nabla_\mathbf{y} f_t(\mathbf{x},\mathbf{y}) - \nabla_\mathbf{y} f_{t-1}(\mathbf{x},\mathbf{y})\right\|^2.
\end{align*}
\end{theorem}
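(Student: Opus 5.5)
We start from Lemma~\ref{lem:begin} with $\gamma\le \frac{1}{2L_F}$ and a fixed $\lambda$ (say $\lambda=1/2$, so $\theta\ge 1/2$). Lemma~\ref{lem:2M+V_T} bounds the function-value telescoping term by $2Q+V_T$. The remaining work is to control the accumulated hypergradient error $\sum_t\|\nabla f_t(\mathbf{x}_t,\mathbf{y}_t^*(\mathbf{x}_t))-\widetilde{\nabla} f_t(\mathbf{x}_t,\mathbf{y}_{t+1},\mathbf{v}_{t+1})\|^2$. As in (\ref{A.11_2}), this error is at most a constant times $e_t:=\|\mathbf{y}_{t+1}-\mathbf{y}_t^*(\mathbf{x}_t)\|^2$ plus $4L_{g,1}^2\|\mathbf{v}_{t+1}-\mathbf{v}_t^*\|^2$. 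Splitting the latter through $\mathbf{w}_t^*$ as in (\ref{A.6_1}) and (\ref{A.6_9}), it suffices to bound $\sum_t e_t$ and $\sum_t u_t$, where $u_t:=\|\mathbf{v}_{t+1}-\mathbf{w}_t^*\|^2$. In the single-loop setting neither error is small per round, so we will bound both sums by a coupled potential argument.

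\textbf{Step 1: bound $\sum_t e_t$.} Lemma~\ref{lem:y_bound_2} gives a contraction
\[
e_t\le \rho\, e_{t-1} - c_\alpha\|\nabla_\mathbf{y} g_t(\mathbf{x}_t,\mathbf{y}_t)\|^2 + C_\alpha\sup_\mathbf{x}\|\mathbf{y}^*_{t-1}(\mathbf{x})-\mathbf{y}^*_t(\mathbf{x})\|^2 + C_\alpha\kappa_g^2\|\mathbf{x}_t-\mathbf{x}_{t-1}\|^2,
\]
where $\rho=1-\frac{\alpha\mu_gL_{g,1}}{\mu_g+L_{g,1}}$ and $c_\alpha=\frac{2\alpha}{\mu_g+L_{g,1}}-\alpha^2$. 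Summing over $t$ yields
\[
(1-\rho)\sum_t e_t\le e_0 + C_\alpha H_{2,T} + C_\alpha\kappa_g^2\sum_t\|\mathbf{x}_t-\mathbf{x}_{t-1}\|^2 - c_\alpha\sum_t\|\nabla_\mathbf{y}g_t(\mathbf{x}_t,\mathbf{y}_t)\|^2 .
\]
We keep the negative gradient term rather than dropping it.

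\textbf{Step 2: bound $\sum_t u_t$.} Lemma~\ref{lem:v_bound_2} gives a recursion with factor $1-\beta\mu_g/2$. After taking suprema, its drift terms sum to a constant times $E^f_{\mathbf{y},T}+L_{f,0}^2\mu_g^{-2}E^g_{\mathbf{y}\mathbf{y},T}$, plus $C_P\sum_t(\|\mathbf{x}_t-\mathbf{x}_{t-1}\|^2+\|\mathbf{y}_{t+1}-\mathbf{y}_t\|^2)$. The key observation is that $\|\mathbf{y}_{t+1}-\mathbf{y}_t\|^2=\alpha^2\|\nabla_\mathbf{y}g_t(\mathbf{x}_t,\mathbf{y}_t)\|^2$. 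Multiplying the $\mathbf{v}$-recursion by a weight and adding it to Step 1, the $\mathbf{y}$-increment term is absorbed by $-c_\alpha\sum\|\nabla_\mathbf{y} g_t\|^2$. This absorption is exactly what the stepsize condition $\alpha\le\frac{2\kappa_F\mu_g^2}{(\mu_g+L_{g,1})(\kappa_F\mu_g^2+C_\beta)}$ should encode, with $C_\beta$ collecting $C_P$ and the weights. The result is
\[
C_1\sum_t(\text{hypergradient error}) \le \frac{\Delta_1'+C_\alpha H_{2,T}+C_P E^f_{\mathbf{y},T}+L_{f,0}^2\mu_g^{-2}C_PE^g_{\mathbf{y}\mathbf{y},T}}{1-\rho} + C_\mathbf{x}\sum_t\|\mathbf{x}_t-\mathbf{x}_{t-1}\|^2 .
\]

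\textbf{Step 3 (main obstacle): close the loop on $\mathbf{x}$-movement.} We write $\|\mathbf{x}_{t+1}-\mathbf{x}_t\|^2=\gamma^2\|\mathcal{G}_\mathcal{X}(\mathbf{x}_t,\widetilde\nabla f_t,\gamma)\|^2$. By nonexpansiveness of the gradient mapping, this is at most $2\gamma^2\|\mathcal{G}_\mathcal{X}(\mathbf{x}_t,\nabla f_t,\gamma)\|^2+2\gamma^2\|\nabla f_t-\widetilde\nabla f_t\|^2$. Unlike AOBO, we cannot simply bound this by the crude constant $(L_{f,0}(1+\kappa_g))^2$, since that would produce a linear-in-$T$ term. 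Instead we feed it back into both inequalities: the regret itself and the hypergradient error then appear on both sides, multiplied by $\gamma^2 C_\mathbf{x}/((1-\rho)C_1)$. The condition $\gamma<\sqrt{(1-\rho)C_1/(12C_\mathbf{x})}$ makes these coefficients at most a small constant, so both can be moved to the left-hand side, leaving a factor $C_2>0$.

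Combining this with Lemma~\ref{lem:begin} and Lemma~\ref{lem:2M+V_T} gives the stated bound $O(1+V_T+H_{2,T}+E_{2,T})$. The delicate part will be the constant bookkeeping that makes the three absorptions consistent simultaneously: the $\mathbf{y}$-increments absorbed by the gradient term, and both the $\mathbf{x}$-movement and the regret absorbed via the choice of $\gamma$.
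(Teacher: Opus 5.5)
Your proposal is correct and follows essentially the same route as the paper. Both bound the hypergradient error by $\kappa_F\mu_g^2\|\mathbf{y}_{t+1}-\mathbf{y}_t^*(\mathbf{x}_t)\|^2+8L_{g,1}^2\|\mathbf{v}_{t+1}-\mathbf{w}_t^*\|^2$, combine Lemmas~\ref{lem:y_bound_2} and~\ref{lem:v_bound_2} so that the $\|\mathbf{y}_t-\mathbf{y}_{t+1}\|^2=\alpha^2\|\nabla_{\mathbf{y}} g_t(\mathbf{x}_t,\mathbf{y}_t)\|^2$ term is cancelled by the retained negative gradient term, and then absorb the $\mathbf{x}$-movement twice through the choice of $\gamma$ (giving $C_1$ and then $C_2$). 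The only difference is bookkeeping: the paper does the cancellation round by round inside the single potential $\Delta_t$ with rate $\rho=\max\{\rho_{\mathbf{y}},\rho_{\mathbf{v}}\}$ before telescoping, and that is what yields exactly the stated condition on $\alpha$; if you sum the two recursions separately first, the factors $1/(1-\rho_{\mathbf{y}})$ and $1/(1-\rho_{\mathbf{v}})$ change that constant, though not the final $O(1+V_T+H_{2,T}+E_{2,T})$ rate.
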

\begin{proof}
It begins with (\ref{A.11_2}),
\begin{align}
    &\left\|\nabla f_t(\mathbf{x}_t, \mathbf{y}_t^*(\mathbf{x}_t)) - \widetilde{\nabla} f_t(\mathbf{x}_t, \mathbf{y}_{t+1}, \mathbf{v}_{t+1})\right\| \nonumber\\
    \leq& \left(2L_{f,1}^2 + \frac{4L_{f,0}^2L_{g,2}^2}{\mu_g^2}\right)\left\|\mathbf{y}_{t+1} - \mathbf{y}_t^*(\mathbf{x}_t)\right\|^2 + 4L_{g,1}^2\left\|\mathbf{v}_{t+1} - \mathbf{v}_t^*(\mathbf{x}_t, \mathbf{y}_t^*(\mathbf{x}_t))\right\|^2 \nonumber\\
    \overset{(i)}\leq& \left(2L_{f,1}^2 + \frac{4L_{f,0}^2L_{g,2}^2}{\mu_g^2} + \frac{16L_{f,1}^2L_{g,1}^2}{\mu_g^2} + \frac{16L_{f,0}^2L_{g,1}^2L_{g,2}^2}{\mu_g^4}\right)\left\|\mathbf{y}_{t+1} - \mathbf{y}_t^*(\mathbf{x}_t)\right\|^2 \nonumber\\
    & + 8L_{g,1}^2\left\|\mathbf{v}_{t+1} - \mathbf{v}_t^*(\mathbf{x}_t, \mathbf{y}_{t+1})\right\|^2 \nonumber\\
    =:& \kappa_F\mu_g^2\left\|\mathbf{y}_{t+1} - \mathbf{y}_t^*(\mathbf{x}_t)\right\|^2 + 8L_{g,1}^2\left\|\mathbf{v}_{t+1} - \mathbf{w}_t^*\right\|^2 =: \Delta_t, \label{A.14_1}
\end{align}
where $(i)$ comes from the same fact in (\ref{A.6_9}), $\kappa_F$ reminds the same as (\ref{eq:kappa_F}) and $\mathbf{w}_t^* = \mathbf{v}_t^*(\mathbf{x}_t, \mathbf{y}_{t+1})$ just for simplicity. Then substituting Lemma~\ref{lem:y_bound_2} and~\ref{lem:v_bound_2} into (\ref{A.14_1}), with $\rho_\mathbf{y} := 1 - \frac{\alpha\mu_gL_{g,1}}{\mu_g + L_{g,1}}$ and $\rho_\mathbf{v} := 1-\frac{\beta\mu_g}{2}$, we have
\begin{align*}
    \Delta_t \leq& \kappa_F\mu_g^2\rho_\mathbf{y}\left\|\mathbf{y}_t - \mathbf{y}_{t-1}^*(\mathbf{x}_{t-1})\right\|^2 + 8L_{g,1}^2\rho_\mathbf{v}\left\|\mathbf{v}_t - \mathbf{w}_{t-1}^*\right\|^2 \\
    & - \kappa_F\mu_g^2\left(\frac{2\alpha}{\mu_g + L_{g,1}} - \alpha^2\right)\left\|\nabla_\mathbf{y} g_t(\mathbf{x}_t, \mathbf{y}_t)\right\|^2 \\
    &  + 2\kappa_g^2\kappa_F\mu_g^2\left(1+\frac{\mu_g + L_{g,1}}{\alpha\mu_gL_{g,1}}\right)\left\|\mathbf{x}_t - \mathbf{x}_{t-1}\right\|^2 \\
    & + 2\kappa_F\mu_g^2\left(1+\frac{\mu_g + L_{g,1}}{\alpha\mu_gL_{g,1}}\right)\left\|\mathbf{y}_{t-1}^*(\mathbf{x}_{t-1}) - \mathbf{y}_t^*(\mathbf{x}_{t-1})\right\|^2 \\
    & + 32\kappa_g^2\left(1+\frac{2}{\beta\mu_g}\right)\left\|\nabla_\mathbf{y} f_{t-1}(\mathbf{x}_t, \mathbf{y}_{t+1}) - \nabla_\mathbf{y} f_t(\mathbf{x}_t, \mathbf{y}_{t+1})\right\|^2 \\
    & + 32\kappa_g^2\left(1+\frac{2}{\beta\mu_g}\right)\frac{L_{f,0}^2}{\mu_g^2}\left\|\nabla_{\mathbf{y}\mathbf{y}}^2 g_{t-1}(\mathbf{x}_t, \mathbf{y}_{t+1}) - \nabla_{\mathbf{y}\mathbf{y}}^2 g_t(\mathbf{x}_t, \mathbf{y}_{t+1})\right\|^2 \\
    & + 8L_{g,1}^2\left(\frac{4L_{f,1}^2}{\mu_g^2} + \frac{4L_{f,0}^2L_{g,2}^2}{\mu_g^4}\right)\left(1+\frac{2}{\beta\mu_g}\right)\left(\|\mathbf{x}_{t-1} - \mathbf{x}_t\|^2 + \|\mathbf{y}_t - \mathbf{y}_{t+1}\|^2\right), 
\end{align*}
for simplicity, let
\begin{align*}
    &C_\beta := 8L_{g,1}^2\left(\frac{4L_{f,1}^2}{\mu_g^2} + \frac{4L_{f,0}^2L_{g,2}^2}{\mu_g^4}\right)\left(1+\frac{2}{\beta\mu_g}\right), \quad C_\mathbf{x} := 2\kappa_g^2\kappa_F\mu_g^2\left(1+\frac{\mu_g + L_{g,1}}{\alpha\mu_gL_{g,1}}\right) + C_\beta, \\
    &C_\alpha := 2\kappa_F\mu_g^2\left(1+\frac{\mu_g + L_{g,1}}{\alpha\mu_gL_{g,1}}\right), \quad C_P := 32\kappa_g^2\left(1+\frac{2}{\beta\mu_g}\right),
\end{align*}
then it was shown as:
\begin{align}
    \Delta_t \leq& \kappa_F\mu_g^2\rho_\mathbf{y}\left\|\mathbf{y}_t {-} \mathbf{y}_{t-1}^*(\mathbf{x}_{t-1})\right\|^2 + 8L_{g,1}^2\rho_\mathbf{v}\left\|\mathbf{v}_t {-} \mathbf{w}_{t-1}^*\right\|^2 \nonumber\\
    & - \left(\frac{2\kappa_F\mu_g^2\alpha}{\mu_g + L_{g,1}} - \left(\kappa_F\mu_g^2{+}C_\beta\right)\alpha^2\right)\left\|\nabla_\mathbf{y} g_t(\mathbf{x}_t, \mathbf{y}_t)\right\|^2 \nonumber\\
    & + C_\alpha\left\|\mathbf{y}_{t-1}^*(\mathbf{x}_{t-1}) - \mathbf{y}_t^*(\mathbf{x}_{t-1})\right\|^2 + C_\mathbf{x}\left\|\mathbf{x}_t - \mathbf{x}_{t-1}\right\|^2 \nonumber\\
    & + C_P\left\|\nabla_\mathbf{y} f_{t-1}(\mathbf{x}_t, \mathbf{y}_{t+1}) - \nabla_\mathbf{y} f_t(\mathbf{x}_t, \mathbf{y}_{t+1})\right\|^2 \nonumber\\
    & + \frac{L_{f,0}^2C_P}{\mu_g^2}\left\|\nabla_{\mathbf{y}\mathbf{y}}^2 g_{t-1}(\mathbf{x}_t, \mathbf{y}_{t+1}) - \nabla_{\mathbf{y}\mathbf{y}}^2 g_t(\mathbf{x}_t, \mathbf{y}_{t+1})\right\|^2. \label{A.14_2}
\end{align}
Let $\alpha \leq \frac{2\kappa_F\mu_g^2}{(\mu_g + L_{g,1})(\kappa_F\mu_g^2 + C_\beta)}$, then we obtain
\begin{align}
    \Delta_t \leq& \kappa_F\mu_g^2\rho_{\mathbf{y}}\left\|\mathbf{y}_t - \mathbf{y}_{t-1}^*(\mathbf{x}_{t-1})\right\|^2 + 8L_{g,1}^2\rho_{\mathbf{v}}\left\|\mathbf{v}_t - \mathbf{w}_{t-1}^*\right\|^2 + C_\mathbf{x}\left\|\mathbf{x}_t - \mathbf{x}_{t-1}\right\|^2 \nonumber\\
    & + C_\alpha\left\|\mathbf{y}_{t-1}^*(\mathbf{x}_{t-1}) {-} \mathbf{y}_t^*(\mathbf{x}_{t-1})\right\|^2 + C_P\left\|\nabla_\mathbf{y} f_{t-1}(\mathbf{x}_t, \mathbf{y}_{t+1}) - \nabla_\mathbf{y} f_t(\mathbf{x}_t, \mathbf{y}_{t+1})\right\|^2 \nonumber\\
    & + \frac{L_{f,0}^2C_P}{\mu_g^2}\left\|\nabla_{\mathbf{y}\mathbf{y}}^2 g_{t-1}(\mathbf{x}_t, \mathbf{y}_{t+1}) - \nabla_{\mathbf{y}\mathbf{y}}^2 g_t(\mathbf{x}_t, \mathbf{y}_{t+1})\right\|^2 \nonumber\\
    \leq& \rho\Delta_{t-1} + C_\mathbf{x}\left\|\mathbf{x}_t - \mathbf{x}_{t-1}\right\|^2 + C_\alpha\sup_\mathbf{x}\left\|\mathbf{y}_{t-1}^*(\mathbf{x}) - \mathbf{y}_t^*(\mathbf{x})\right\|^2 \nonumber\\
    & + C_P\sup_\mathbf{z}\left\|\nabla_\mathbf{y} f_{t-1}(\mathbf{z}) - \nabla_\mathbf{y} f_t(\mathbf{z})\right\|^2 + \frac{L_{f,0}^2C_P}{\mu_g^2}\sup_\mathbf{z}\left\|\nabla_{\mathbf{y}\mathbf{y}}^2 g_{t-1}(\mathbf{z}) - \nabla_{\mathbf{y}\mathbf{y}}^2 g_t(\mathbf{z})\right\|^2, \nonumber
\end{align}
where in the last inequality we set $\rho := \max\{\rho_\mathbf{y}, \rho_\mathbf{v}\}$. Telescoping the above inequality $t-1$ times, we obtain
\begin{align}
    \Delta_t \leq& \rho^{t-1}\Delta_1 + C_\mathbf{x}\sum_{j=0}^{t-2}\rho^j\left\|\mathbf{x}_{t-j} - \mathbf{x}_{t-1-j}\right\|^2 + C_\alpha\sum_{j=0}^{t-2}\rho^j\sup_\mathbf{x}\left\|\mathbf{y}_{t-1-j}^*(\mathbf{x}) - \mathbf{y}_{t-j}^*(\mathbf{x})\right\|^2 \nonumber\\
    & + C_P\sum_{j=0}^{t-2}\rho^j\sup_\mathbf{z}\left\|\nabla_\mathbf{y} f_{t-1-j}(\mathbf{z}) - \nabla_\mathbf{y} f_{t-j}(\mathbf{z})\right\|^2 \nonumber\\
    & + \frac{L_{f,0}^2C_P}{\mu_g^2}\sum_{j=0}^{t-2}\rho^j\sup_\mathbf{z}\left\|\nabla_{\mathbf{y}\mathbf{y}}^2 g_{t-1-j}(\mathbf{z}) - \nabla_{\mathbf{y}\mathbf{y}}^2 g_{t-j}(\mathbf{z})\right\|^2, \label{A.14_3}
\end{align}
and substituting (\ref{A.14_3}) into (\ref{A.14_1}) and summing over $t = 2,3,\dots,T$, we have
\begin{align*}
    &\sum_{t=2}^{T} \left\| \nabla f_t(\mathbf{x}_t, \mathbf{y}_t^*(\mathbf{x}_t)) - \widetilde{\nabla} f_t(\mathbf{x}_t, \mathbf{y}_{t+1}, \mathbf{v}_{t+1}) \right\|^2 \\
    \leq& \sum_{t=2}^{T}\rho^{t-1}\Delta_1 + C_\mathbf{x}\sum_{t=2}^{T}\sum_{j=0}^{t-2}\rho^j\left\|\mathbf{x}_{t-j} - \mathbf{x}_{t-1-j}\right\|^2 + C_\alpha\sum_{t=2}^{T}\sum_{j=0}^{t-2}\rho^j\sup_\mathbf{x}\left\|\mathbf{y}_{t-1-j}^*(\mathbf{x}) - \mathbf{y}_{t-j}^*(\mathbf{x})\right\|^2 \\
    & + C_P\sum_{t=2}^{T}\sum_{j=0}^{t-2}\rho^j\sup_\mathbf{z}\left\|\nabla_\mathbf{y} f_{t-1-j}(\mathbf{z}) - \nabla_\mathbf{y} f_{t-j}(\mathbf{z})\right\|^2 \\
    & + \frac{L_{f,0}^2C_P}{\mu_g^2}\sum_{t=2}^{T}\sum_{j=0}^{t-2}\rho^j\sup_\mathbf{z}\left\|\nabla_{\mathbf{y}\mathbf{y}}^2 g_{t-1-j}(\mathbf{z}) - \nabla_{\mathbf{y}\mathbf{y}}^2 g_{t-j}(\mathbf{z})\right\|^2 \\
    \leq& \frac{\Delta_1}{1-\rho} + \frac{C_\mathbf{x}}{1-\rho}\sum_{t=2}^{T}\left\|\mathbf{x}_t - \mathbf{x}_{t-1}\right\|^2 + \frac{C_\alpha}{1-\rho}H_{2,T} + \frac{C_P}{1-\rho}E_{\mathbf{y},T}^f + \frac{L_{f,0}^2C_P}{\mu_g^2(1-\rho)}E_{\mathbf{y}\mathbf{y},T}^g, 
\end{align*}
where the last inequality comes from the fact that $\sum_{j=0}^{t-2}\rho^j < \sum_{j=0}^\infty\rho^j = \frac{1}{1-\rho}$ for $\forall\rho \in(0,1)$. Then with (\ref{eq:x_update_2}), 
\begin{align*}
    &\sum_{t=1}^{T} \left\| \nabla f_t(\mathbf{x}_t, \mathbf{y}_t^*(\mathbf{x}_t)) - \widetilde{\nabla} f_t(\mathbf{x}_t, \mathbf{y}_{t+1}, \mathbf{v}_{t+1}) \right\|^2 \\
    \leq& \frac{\Delta_1}{1-\rho} + \frac{\gamma^2C_\mathbf{x}}{1-\rho}\sum_{t=2}^{T}\left\|\mathcal{G}_\mathcal{X} (\mathbf{x}_t, \widetilde{\nabla}f_t(\mathbf{x}_t, \mathbf{y}_{t+1}, \mathbf{v}_{t+1}))\right\|^2 + \frac{C_\alpha}{1-\rho}H_{2,T} + \frac{C_P}{1-\rho}E_{\mathbf{y},T}^f \\
    & + \frac{L_{f,0}^2C_P}{\mu_g^2(1-\rho)}E_{\mathbf{y}\mathbf{y},T}^g + \left(1 - \frac{2\gamma^2C_\mathbf{x}}{1-\rho}\right)\sum_{t=2}^{T} \left\| \nabla f_t(\mathbf{x}_t, \mathbf{y}_t^*(\mathbf{x}_t)) - \widetilde{\nabla} f_t(\mathbf{x}_t, \mathbf{y}_{t+1}, \mathbf{v}_{t+1}) \right\|^2 \\
    & + \left\|\nabla f_1(\mathbf{x}_1, \mathbf{y}_1^*(\mathbf{x}_1)) -  \widetilde{\nabla}f_1(\mathbf{x}_1, \mathbf{y}_2, \mathbf{v}_2)\right\|^2 \\
    \leq& \Delta_1' + \frac{2\gamma^2C_\mathbf{x}}{1{-}\rho}\sum_{t=1}^{T}\left\|\mathcal{G}_\mathcal{X} (\mathbf{x}_t, \nabla f_t(\mathbf{x}_t, \mathbf{y}_t^*(\mathbf{x}_t)))\right\|^2 + \frac{C_\alpha}{1{-}\rho}H_{2,T} + \frac{C_P}{1{-}\rho}E_{\mathbf{y},T}^f + \frac{L_{f,0}^2C_P}{\mu_g^2(1{-}\rho)}E_{\mathbf{y}\mathbf{y},T}^g,
\end{align*}
let $\gamma \leq \sqrt{\frac{1-\rho}{2C_\mathbf{x}}}$, the constant terms are integrated into 
\begin{align*}
    \Delta_1' := \frac{\Delta_1}{1-\rho} + \left\|\nabla f_1(\mathbf{x}_1, \mathbf{y}_1^*(\mathbf{x}_1)) -  \widetilde{\nabla}f_1(\mathbf{x}_1, \mathbf{y}_2, \mathbf{v}_2)\right\|^2
\end{align*}
and $C_1 := 1 - \frac{2\gamma^2C_\mathbf{x}}{1-\rho}$, it implies that
\begin{align}
    &\sum_{t=1}^{T} \left\| \nabla f_t(\mathbf{x}_t, \mathbf{y}_t^*(\mathbf{x}_t)) - \widetilde{\nabla} f_t(\mathbf{x}_t, \mathbf{y}_{t+1}, \mathbf{v}_{t+1}) \right\|^2 \nonumber\\
    \leq& \frac{\Delta_1'}{C_1} + \frac{2\gamma^2C_\mathbf{x}}{(1-\rho)C_1}\sum_{t=1}^{T}\left\|\mathcal{G}_\mathcal{X} (\mathbf{x}_t, \nabla f_t(\mathbf{x}_t, \mathbf{y}_t^*(\mathbf{x}_t)))\right\|^2 \nonumber\\
    & + \frac{C_\alpha}{(1-\rho)C_1}H_{2,T} + \frac{C_P}{(1-\rho)C_1}E_{\mathbf{y},T}^f + \frac{L_{f,0}^2C_P}{\mu_g^2(1-\rho)C_1}E_{\mathbf{y}\mathbf{y},T}^g. \label{A.14_4}
\end{align}

Now, remind of Lemma~\ref{lem:begin}, we still have 
\begin{align}
    &\sum_{t=1}^{T} \left\| \mathcal{G}_\mathcal{X}\left( \mathbf{x}_t, \nabla f_t(\mathbf{x}_t, \mathbf{y}_t^*(\mathbf{x}_t)), \gamma \right) \right\|^2 \nonumber\\
    \leq& \frac{2}{\gamma\theta} \sum_{t=1}^{T} \left( f_t(\mathbf{x}_t, \mathbf{y}_t^*(\mathbf{x}_t)) - f_t(\mathbf{x}_{t+1}, \mathbf{y}_t^*(\mathbf{x}_{t+1})) \right) \nonumber\\
    & + \left(2+\frac{1}{\lambda\theta}\right) \sum_{t=1}^{T} \left\| \nabla f_t(\mathbf{x}_t, \mathbf{y}_t^*(\mathbf{x}_t)) - \widetilde{\nabla} f_t(\mathbf{x}_t, \mathbf{y}_{t+1}, \mathbf{v}_{t+1}) \right\|^2, \label{A.14_5}
\end{align}
after substituting (\ref{A.14_4}) into (\ref{A.14_5}), we finally obtain
\begin{align*}
    &\sum_{t=1}^{T} \left\| \mathcal{G}_\mathcal{X}\left( \mathbf{x}_t, \nabla f_t(\mathbf{x}_t, \mathbf{y}_t^*(\mathbf{x}_t)), \gamma \right) \right\|^2 \\
    \leq& \frac{2}{\gamma\theta}(2Q + V_T) + \left(2+\frac{1}{\lambda\theta}\right)\left(\frac{\Delta_1'}{C_1} {+} \frac{C_\alpha}{(1-\rho)C_1}H_{2,T} {+} \frac{C_P}{(1-\rho)C_1}E_{\mathbf{y},T}^f {+} \frac{L_{f,0}^2C_P}{\mu_g^2(1-\rho)C_1}E_{\mathbf{y}\mathbf{y},T}^g\right) \\
    & + \left(2+\frac{1}{\lambda\theta}\right)\frac{2\gamma^2C_\mathbf{x}}{(1-\rho)C_1}\sum_{t=1}^{T}\left\|\mathcal{G}_\mathcal{X} (\mathbf{x}_t, \nabla f_t(\mathbf{x}_t, \mathbf{y}_t^*(\mathbf{x}_t)))\right\|^2
\end{align*}
let $\gamma \leq \frac{1}{2L_F}$ and $\lambda = 1$, then $\theta = 1 - \frac{1}{2} - \frac{\gamma L_F}{2} \geq \frac{1}{4}$, which implies that
\begin{align*}
    &\left(1 - \frac{12\gamma^2C_\mathbf{x}}{(1-\rho)C_1}\right)\sum_{t=1}^{T} \left\| \mathcal{G}_\mathcal{X}\left( \mathbf{x}_t, \nabla f_t(\mathbf{x}_t, \mathbf{y}_t^*(\mathbf{x}_t)), \gamma \right) \right\|^2 \\
    \leq& \frac{16Q}{\gamma} + \frac{8}{\gamma}V_T + \frac{6\Delta_1'}{C_1} + \frac{6C_\alpha}{(1-\rho)C_1}H_{2,T} + \frac{6C_P}{(1-\rho)C_1}E_{\mathbf{y},T}^f + \frac{6L_{f,0}^2C_P}{\mu_g^2(1-\rho)C_1}E_{\mathbf{y}\mathbf{y},T}^g,
\end{align*}
To ensure $C_2 := 1 - \frac{12\gamma^2C_\mathbf{x}}{(1-\rho)C_1} > 0$ we set
\begin{align*}
    \gamma \leq \sqrt{\frac{(1-\rho)C_1}{12C_\mathbf{x}}} < \sqrt{\frac{1-\rho}{2C_\mathbf{x}}}.
\end{align*}
Finally, we obtain
\begin{align*}
    &\sum_{t=1}^{T} \left\| \mathcal{G}_\mathcal{X}\left( \mathbf{x}_t, \nabla f_t(\mathbf{x}_t, \mathbf{y}_t^*(\mathbf{x}_t)), \gamma \right) \right\|^2 \\
    \leq& \frac{16Q}{\gamma C_2} + \frac{8}{\gamma C_2}V_T + \frac{6\Delta_1'}{C_1C_2} + \frac{6C_\alpha}{(1-\rho)C_1C_2}H_{2,T} + \frac{6C_P}{(1-\rho)C_1C_2}E_{\mathbf{y},T}^f + \frac{6L_{f,0}^2C_P}{\mu_g^2(1-\rho)C_1C_2}E_{\mathbf{y}\mathbf{y},T}^g \\
    =& O(1 + V_T + H_{2,T} + E_{2,T}),
\end{align*}
where $E_{2,T} := E_{\mathbf{y},T}^f + E_{\mathbf{y}\mathbf{y},T}^g$ to finish the proof.
\end{proof}

\subsection{Proof of SOBOW under Standard Bilevel Local Regret}\label{sec:proof_SOBOW}

In this section, we prove that SOBOW \cite{lin2023non} can obtain an upper bound $O(1 + V_T + H_{2,T})$ under regret defined in (\ref{eq:reg}) with $w=1$.
\begin{lemma}[\citet{lin2023non}]\label{lem:y_bound_1_sobow}
Under Assumptions~\ref{asm1},~\ref{asm2}, consider the inner iteration process of $g_t(\mathbf{x}_t, \cdot)$ in Algorithm~\ref{alg:CIGO}, set $\alpha < \frac{1}{L_{g,1}}$ we can have
\begin{align*}
    \|\mathbf{y}_{t+1} - \mathbf{y}_t^*(\mathbf{x}_t)\|^2 \leq& \left(1 - \frac{\alpha\mu_g}{2}\right) \|\mathbf{y}_t - \mathbf{y}_{t-1}^*(\mathbf{x}_{t-1})\|^2 \\
    &+ 2\left(1 + \frac{2}{\alpha\mu_g}\right) (1-\alpha\mu_g) \|\mathbf{y}_{t-1}^*(\mathbf{x}_{t-1}) - \mathbf{y}_t^*(\mathbf{x}_{t-1})\|^2 \\
    & + 2\left(1 + \frac{2}{\alpha\mu_g}\right) \frac{(1 - \alpha\mu_g) L_{g,1}^2}{\mu_g^2}\|\mathbf{x}_{t-1} - \mathbf{x}_t\|^2.
\end{align*}
\end{lemma}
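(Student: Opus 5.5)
This is a one-step contraction for the SOBOW inner loop: Algorithm~\ref{alg:CIGO} is a warm-started gradient step on $g_t(\mathbf{x}_t,\cdot)$, of the form $\mathbf{y}_{t+1}=\mathbf{y}_t-\alpha\nabla_\mathbf{y} g_t(\mathbf{x}_t,\mathbf{y}_t)$. If several inner steps are taken per round, the contraction factor only improves. I would prove it in three steps.

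\textbf{Step 1: contraction toward the current optimum.} Expand $\|\mathbf{y}_t-\alpha\nabla_\mathbf{y} g_t(\mathbf{x}_t,\mathbf{y}_t)-\mathbf{y}_t^*(\mathbf{x}_t)\|^2$ exactly as in the proof of Lemma~\ref{lem:y_bound_2}. This time I would use the plain strong-convexity and co-coercivity bounds from (\ref{eq:sc_1})--(\ref{eq:sc_2}) rather than (\ref{eq:sc_3}), as in the $\mathbf{v}$-iteration of Lemma~\ref{lem:v_bound_1}. Since $\nabla_\mathbf{y} g_t(\mathbf{x}_t,\mathbf{y}_t^*(\mathbf{x}_t))=0$, the assumption $\alpha<1/L_{g,1}$ absorbs the $\alpha^2\|\nabla g\|^2$ term and gives
\begin{align*}
\|\mathbf{y}_{t+1}-\mathbf{y}_t^*(\mathbf{x}_t)\|^2\le(1-\alpha\mu_g)\|\mathbf{y}_t-\mathbf{y}_t^*(\mathbf{x}_t)\|^2 .
\end{align*}

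\textbf{Step 2: shift to the previous round's target.} Apply Young's inequality with parameter $\lambda=\alpha\mu_g/2$:
\begin{align*}
\|\mathbf{y}_t-\mathbf{y}_t^*(\mathbf{x}_t)\|^2\le(1+\lambda)\|\mathbf{y}_t-\mathbf{y}_{t-1}^*(\mathbf{x}_{t-1})\|^2+(1+\lambda^{-1})\|\mathbf{y}_{t-1}^*(\mathbf{x}_{t-1})-\mathbf{y}_t^*(\mathbf{x}_t)\|^2 .
\end{align*}
The elementary bound $(1-\alpha\mu_g)(1+\alpha\mu_g/2)\le 1-\alpha\mu_g/2$ then produces the first coefficient in the statement.

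\textbf{Step 3: split the drift term.} Insert $\mathbf{y}_t^*(\mathbf{x}_{t-1})$ and use $\|a+b\|^2\le2\|a\|^2+2\|b\|^2$. This separates the drift into the pure time-variation part $\|\mathbf{y}_{t-1}^*(\mathbf{x}_{t-1})-\mathbf{y}_t^*(\mathbf{x}_{t-1})\|^2$ and the part $\|\mathbf{y}_t^*(\mathbf{x}_{t-1})-\mathbf{y}_t^*(\mathbf{x}_t)\|^2$. Lemma~\ref{lem:L_y} bounds the latter by $(L_{g,1}^2/\mu_g^2)\|\mathbf{x}_{t-1}-\mathbf{x}_t\|^2$. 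Collecting the factors $2(1+2/(\alpha\mu_g))(1-\alpha\mu_g)$ gives exactly the stated inequality.

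Nothing here is a real obstacle. The only care needed is matching constants: Young's parameter must be $\alpha\mu_g/2$, and the factor $(1-\alpha\mu_g)$ must be kept on the drift terms rather than bounded by $1$.
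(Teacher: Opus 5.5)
Your proof is correct and follows the same template as the paper's proof of Lemma~\ref{lem:y_bound_2}; the paper imports the present lemma from \citet{lin2023non} without proof. The steps are a one-step $(1-\alpha\mu_g)$ contraction, Young's inequality with parameter $\alpha\mu_g/2$, then splitting the drift through $\mathbf{y}_t^*(\mathbf{x}_{t-1})$ and applying Lemma~\ref{lem:L_y}, and the constants come out exactly as stated.

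One small fix to Step 1: the bounds you need are $\langle\nabla_\mathbf{y} g_t(\mathbf{x}_t,\mathbf{y}_t),\mathbf{y}_t-\mathbf{y}_t^*(\mathbf{x}_t)\rangle\ge\mu_g\|\mathbf{y}_t-\mathbf{y}_t^*(\mathbf{x}_t)\|^2$ and $\ge L_{g,1}^{-1}\|\nabla_\mathbf{y} g_t(\mathbf{x}_t,\mathbf{y}_t)\|^2$. These are the inner-product forms used in (\ref{A.6_3})--(\ref{A.6_4}), or can be obtained from (\ref{eq:sc_3}). They are not the norm inequalities (\ref{eq:sc_1})--(\ref{eq:sc_2}), which do not imply them on their own.
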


\begin{lemma}[Based on Lemma 5.5 in \citet{lin2023non}]\label{lem:v_bound_sobow}
Under Assumptions~\ref{asm1},~\ref{asm2}, let $\alpha \leq \frac{1}{L_{g,1}}$, $\lambda \leq \frac{1}{L_{g,1}}$ and $M_{t+1} - M_t \geq \frac{\log(1-\frac{\alpha\mu_g}{2})}{2\log(1-\lambda\mu_g)}$. We can have that
\begin{align*}
    \left\|\mathbf{v}_{t+1} - \mathbf{v}_t^*(\mathbf{x}, \mathbf{y}_t^*(\mathbf{x}_t))\right\|^2 \leq c\left\|\mathbf{y}_{t+1} - \mathbf{y}_t^*(\mathbf{x}_t)\right\|^2 + \epsilon_t^2,
\end{align*}
where $c > 0$ is some constant and the error $\epsilon_t^2$ decays with $t$, i.e., $\epsilon_{t+1}^2 \leq \left(1-\frac{\alpha\mu_g}{2}\right)\epsilon_t^2$.
\end{lemma}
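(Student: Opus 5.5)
The plan is to follow the template of Lemma~\ref{lem:v_bound_1}, but with a warm start and a growing iteration count, so that the initialization error contracts geometrically across rounds instead of being bounded crudely by $\frac{16L_{f,0}^2}{\mu_g^2}(1-\lambda\mu_g)^{M_t}$. Write $\mathbf{w}_t^*:=\mathbf{v}_t^*(\mathbf{x}_t,\mathbf{y}_{t+1})$ and $\mathbf{v}_t^*:=\mathbf{v}_t^*(\mathbf{x}_t,\mathbf{y}_t^*(\mathbf{x}_t))$. As in (\ref{A.6_1}), split
\begin{align*}
\|\mathbf{v}_{t+1}-\mathbf{v}_t^*\|^2\le 2\|\mathbf{v}_{t+1}-\mathbf{w}_t^*\|^2+2\|\mathbf{w}_t^*-\mathbf{v}_t^*\|^2 .
\end{align*}
The second term is handled exactly by (\ref{A.6_9}). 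It gives $c\|\mathbf{y}_{t+1}-\mathbf{y}_t^*(\mathbf{x}_t)\|^2$ with
\begin{align*}
c=\frac{4L_{f,1}^2}{\mu_g^2}+\frac{4L_{f,0}^2L_{g,2}^2}{\mu_g^4}.
\end{align*}
It therefore remains to define $\epsilon_t^2:=2\|\mathbf{v}_{t+1}-\mathbf{w}_t^*\|^2$, or a slightly inflated upper bound of it, and show that it decays.

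For the first term, the contraction step of Lemma~\ref{lem:v_bound_1} with step size $\lambda\le 1/L_{g,1}$ applies to the $M_t$ (projected) gradient iterations on the strongly convex quadratic $\Phi_t(\mathbf{x}_t,\mathbf{y}_{t+1},\cdot)$. It gives
\begin{align*}
\|\mathbf{v}_{t+1}-\mathbf{w}_t^*\|^2\le(1-\lambda\mu_g)^{M_t}\|\mathbf{v}_t-\mathbf{w}_t^*\|^2 .
\end{align*}
Since $\|\mathbf{v}_t-\mathbf{w}_t^*\|\le 2L_{f,0}/\mu_g$ by Lemma~\ref{lem:v_cons}, we obtain the uniform bound
\begin{align*}
\|\mathbf{v}_{t+1}-\mathbf{w}_t^*\|^2\le\frac{4L_{f,0}^2}{\mu_g^2}(1-\lambda\mu_g)^{M_t}.
\end{align*}
We then set
\begin{align*}
\epsilon_t^2:=\frac{8L_{f,0}^2}{\mu_g^2}(1-\lambda\mu_g)^{M_t}.
\end{align*}
The decay property follows directly from the schedule. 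The hypothesis $M_{t+1}-M_t\ge\frac{\log(1-\alpha\mu_g/2)}{2\log(1-\lambda\mu_g)}$ gives
\begin{align*}
(1-\lambda\mu_g)^{M_{t+1}-M_t}\le(1-\alpha\mu_g/2)^{1/2}\le 1-\frac{\alpha\mu_g}{2}\cdot\frac12 ,
\end{align*}
and in any case the factor is at most $(1-\alpha\mu_g/2)^{1/2}$. Then I compare with the stated factor $1-\frac{\alpha\mu_g}{2}$. This is the main obstacle, because $\sqrt{1-a}\ge 1-a$, so the stated rate is stronger than the one the hypothesis gives. I would handle it in one of two ways. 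The first is to read the ``2'' in the hypothesis as applying to squared norms: iterate norms contract by $(1-\lambda\mu_g)^{1/2}$ per step, so squared norms contract by $(1-\lambda\mu_g)$, and the exponent bookkeeping must be done carefully. The second, if that fails, is to inflate $\epsilon_t$ slightly, for example using $(1-\lambda\mu_g)^{2M_t}$ from a sharper contraction of the unprojected quadratic iteration with factor $(1-\lambda\mu_g)$ in norm. That yields exactly
\begin{align*}
\epsilon_{t+1}^2/\epsilon_t^2\le(1-\lambda\mu_g)^{2(M_{t+1}-M_t)}\le 1-\frac{\alpha\mu_g}{2}.
\end{align*}
So I would use the norm-contraction factor $(1-\lambda\mu_g)$ for gradient descent on a $\mu_g$-strongly convex, $L_{g,1}$-smooth quadratic with $\lambda\le 1/L_{g,1}$, since $\|I-\lambda H\|\le 1-\lambda\mu_g$. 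Nonexpansiveness of $\mathcal{P}_\mathcal{V}$ preserves it because $\mathbf{w}_t^*\in\mathcal{V}$.

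Finally, I would double-check that $\mathbf{w}_t^*\in\mathcal{V}$ via Lemma~\ref{lem:v_cons}, so that the projection argument is valid. Combining the two terms then yields the claimed bound with this $c$ and a geometrically decaying $\epsilon_t^2$.
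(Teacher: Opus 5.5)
Your argument is correct once you commit to your second fix; discard the first pass and the ``reinterpret the 2'' alternative. The squared-norm rate borrowed from Lemma~\ref{lem:v_bound_1} only gives $\epsilon_{t+1}^2/\epsilon_t^2\le(1-\frac{\alpha\mu_g}{2})^{1/2}$, which is too weak. What works is the norm contraction. Take $H=\nabla_{\mathbf{y}\mathbf{y}}^2 g_t(\mathbf{x}_t,\mathbf{y}_{t+1})$, which is symmetric with spectrum in $[\mu_g,L_{g,1}]$, and $\lambda\le 1/L_{g,1}$. Before projection, one step maps $\mathbf{v}-\mathbf{w}_t^*$ to $(I-\lambda H)(\mathbf{v}-\mathbf{w}_t^*)$, and $\|I-\lambda H\|\le 1-\lambda\mu_g$. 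The projection $\mathcal{P}_\mathcal{V}$ is nonexpansive and $\mathbf{w}_t^*\in\mathcal{V}$ by Lemma~\ref{lem:v_cons}. Hence $\epsilon_t^2:=\frac{8L_{f,0}^2}{\mu_g^2}(1-\lambda\mu_g)^{2M_t}$ dominates $2\|\mathbf{v}_{t+1}-\mathbf{w}_t^*\|^2$. It also satisfies $\epsilon_{t+1}^2/\epsilon_t^2=(1-\lambda\mu_g)^{2(M_{t+1}-M_t)}\le 1-\frac{\alpha\mu_g}{2}$, exactly under the stated schedule.

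Drop the warm-start framing as well. Algorithm~\ref{alg:CIGO} restarts every round from a fixed $\mathbf{v}^0$. Your proof only needs a uniform bound on the initial distance to $\mathbf{w}_t^*$, e.g.\ $2L_{f,0}/\mu_g$ when $\mathbf{v}^0\in\mathcal{V}$. All of the decay comes from $M_t$ growing linearly in $t$, which is why SOBOW needs $O(T^2)$ Hessian-vector products in Table~\ref{tab:1}. If you take the ``conjugate gradient'' wording of Algorithm~\ref{alg:CIGO} literally, the same argument still works via the standard CG bound. This uses $\frac{\sqrt{\kappa_g}-1}{\sqrt{\kappa_g}+1}\le 1-\frac{1}{\kappa_g}\le 1-\lambda\mu_g$, at the price of an extra factor $4\kappa_g$ in $\epsilon_t^2$.

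The paper itself gives no proof of this lemma. It imports Lemma~5.5 of \citet{lin2023non} and only remarks that Lin et al.'s weaker hypothesis (a bound on $\|\nabla_{\mathbf{y}}f_t\|$ at a single point) is implied here by $L_{f,0}$-Lipschitzness. Your route is a self-contained reconstruction from ingredients already in the appendix:
\begin{itemize}
\item the split (\ref{A.6_1});
\item the perturbation bound (\ref{A.6_9}), giving $c=\frac{4L_{f,1}^2}{\mu_g^2}+\frac{4L_{f,0}^2L_{g,2}^2}{\mu_g^4}$;
\item Lemma~\ref{lem:v_cons}.
\end{itemize}
This buys explicit constants and shows where the factor $2$ in the schedule condition comes from: a per-step norm contraction, squared. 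The citation buys brevity and covers Lin et al.'s weaker boundedness assumption, which your uniform bound does not, since it rests on $\|\nabla_{\mathbf{y}}f_t\|\le L_{f,0}$ everywhere. Under Assumption~\ref{asm2} that difference is immaterial.
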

\begin{remark}
\cite{lin2023non} Lemma 5.5 above was obtained under weaker assumptions. They assumed that there exists at least one point $\mathbf{x}'$ in $\mathcal{X}$ such that the gradient $\left\|\nabla_\mathbf{y} f_t(\mathbf{x}', \mathbf{y}_t^*(\mathbf{x}'))\right\|$ has an upper bound $\rho$. We do not have this assumption, but note that the upper bound of the gradient still has $L_{f,0}$, due to the lipschitz continuity in Assumption~\ref{asm2}, the above lemma still holds under our stronger assumptions.
\end{remark}

\begin{algorithm}[tb]
    \caption{SOBOW \cite{lin2023non} without window averaging}
    \label{alg:CIGO}
    \textbf{Input}: $\mathbf{x}_1$, $\mathbf{y}_1$, $\mathbf{v}_1$, $\alpha$, $\beta$, $\gamma$. \\
    \textbf{Output}: Decision sequences: $\{\mathbf{x}_t\}_{t=1}^T$, $\{\mathbf{y}_t\}_{t=1}^T$. 
    
    \begin{algorithmic}[1] 
        \FOR{\(t = 1\) to \(T\)}
        \STATE Output $\mathbf{x}_t$ and $\mathbf{y}_t$ and receive feedback $f_t$ and $g_t$
        \STATE
        \( \mathbf{y}_{t+1} \leftarrow \mathbf{y}_t - \alpha \nabla_{\mathbf{y}} g_t(\mathbf{x}_t,\mathbf{y}_t) \)
        \STATE Solve $\mathop{\arg\min}_{\mathbf{v}\in\mathcal{V}}\Phi_t(\mathbf{x}_t, \mathbf{y}_{t+1}, \mathbf{v})$ defined in (\ref{eq:Phi_t}) using $M_t$ steps of conjugate gradient starting from a fixed $\mathbf{v}^0$ with step size $\lambda$ to obtain $\mathbf{v}_{t+1}$
        \STATE Calculate $\widetilde{\nabla}f_t(\mathbf{x}_t, \mathbf{y}_{t+1}, \mathbf{v}_{t+1})$ by (\ref{eq:hypergrad}) and use it to update $\mathbf{x}_{t+1}$ by (\ref{eq:x_update})
        \ENDFOR
    \end{algorithmic}
\end{algorithm}

\begin{theorem}\label{thm:CIGO_upper}
Under Assumptions~\ref{asm1}-\ref{asm3}, let $\alpha \leq \frac{1}{L_{g,1}}$, $\lambda \leq \frac{1}{L_{g,1}}$, $M_{t+1} - M_t \geq \frac{\log(1-\frac{\alpha\mu_g}{2})}{2\log(1-\lambda\mu_g)}$ and $\gamma < \min\left\{\frac{1}{2L_F}, \frac{\alpha\mu_g}{4\kappa_g}\sqrt{\frac{C_1}{6C_\mathbf{y}}}\right\}$ where $L_F$ is the smoothness parameter of $f_t(\mathbf{x}, \mathbf{y}_t^*(\mathbf{x}))$ with respect to $\mathbf{x}$, Algorithm~\ref{alg:CIGO} can guarantee
\begin{align*}
    \mathrm{Reg}(T) \leq& \frac{16Q}{\gamma C_2} + \frac{8}{\gamma C_2}V_T + \frac{6}{C_2}\left\|\nabla f_1(\mathbf{x}_1, \mathbf{y}_1^*(\mathbf{x}_1)) {-} \widetilde{\nabla}f_1(\mathbf{x}_1, \mathbf{y}_2, \mathbf{v}_2)\right\|^2 + \frac{6\Delta_1}{(1{-}\rho)C_1C_2} \\
    & + \frac{24C_\mathbf{y}}{\alpha\mu_g(1{-}\rho)C_1C_2}H_{2,T}= O(1 + V_T + H_{2,T}),
\end{align*}
where $\rho=1-\frac{\alpha\mu_g}{2}$, $C_1$, $C_2$ and $C_\mathbf{y}$ are some constants. 
\end{theorem}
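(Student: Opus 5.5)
The argument mirrors the proof of Theorem~\ref{thm:CTHO_upper}, with the $\mathbf{v}$-error handled by the decaying sequence of Lemma~\ref{lem:v_bound_sobow} instead of a single projected step. We start from Lemma~\ref{lem:begin} with $\lambda=1$ and $\gamma\le\frac{1}{2L_F}$, so that $\theta\ge\frac14$. Lemma~\ref{lem:2M+V_T} then controls the function-value telescoping part by $\frac{8}{\gamma}(2Q+V_T)$. This leaves the accumulated hypergradient error as the quantity to bound.

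\textbf{Hypergradient error in terms of $\Delta_t$.} Using (\ref{A.11_2}) and Lemma~\ref{lem:v_bound_sobow}, the per-step error is at most
\[
C_\mathbf{y}\|\mathbf{y}_{t+1}-\mathbf{y}_t^*(\mathbf{x}_t)\|^2+4L_{g,1}^2\epsilon_t^2 ,
\]
where $C_\mathbf{y}:=2L_{f,1}^2+4L_{f,0}^2L_{g,2}^2/\mu_g^2+4L_{g,1}^2c$. Define $\Delta_t$ to be this quantity, so the task is to bound $\sum_{t=1}^T\Delta_t$.

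\textbf{Recursion for $\Delta_t$.} Apply Lemma~\ref{lem:y_bound_1_sobow} to the $\mathbf{y}$-part and $\epsilon_{t+1}^2\le\rho\,\epsilon_t^2$ to the $\mathbf{v}$-part, with $\rho=1-\frac{\alpha\mu_g}{2}$. Using $1+\frac{2}{\alpha\mu_g}\le\frac{4}{\alpha\mu_g}$, this gives
\[
\Delta_t\le\rho\Delta_{t-1}+\tfrac{8C_\mathbf{y}}{\alpha\mu_g}\sup_\mathbf{x}\|\mathbf{y}_{t-1}^*(\mathbf{x})-\mathbf{y}_t^*(\mathbf{x})\|^2+\tfrac{8C_\mathbf{y}\kappa_g^2}{\alpha\mu_g}\|\mathbf{x}_t-\mathbf{x}_{t-1}\|^2 .
\]
Unrolling the recursion and summing geometric series as in (\ref{A.14_3}) yields
\[
\sum_{t\ge2}\Delta_t\le\tfrac{\Delta_1}{1-\rho}+\tfrac{8C_\mathbf{y}}{\alpha\mu_g(1-\rho)}H_{2,T}+\tfrac{8C_\mathbf{y}\kappa_g^2}{\alpha\mu_g(1-\rho)}\sum_t\|\mathbf{x}_t-\mathbf{x}_{t-1}\|^2 .
\]
The $t=1$ term, $\|\nabla f_1-\widetilde\nabla f_1\|^2$, is kept separately.

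\textbf{The $\mathbf{x}$-movement term (main obstacle).} Write $\|\mathbf{x}_t-\mathbf{x}_{t-1}\|=\gamma\|\mathcal{G}_\mathcal{X}(\mathbf{x}_{t-1},\widetilde\nabla f_{t-1},\gamma)\|$. By nonexpansiveness of the projection and Young's inequality, this is at most $2\gamma^2$ times $\|\mathcal{G}_\mathcal{X}(\mathbf{x}_{t-1},\nabla f_{t-1},\gamma)\|^2$ plus the hypergradient error at $t-1$. Part of the resulting error sum is absorbed back into the left-hand side, with $C_1:=1-\frac{16\gamma^2C_\mathbf{y}\kappa_g^2}{\alpha\mu_g(1-\rho)}>0$. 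What remains is a multiple of $\gamma^2\sum_t\|\mathcal{G}_\mathcal{X}(\mathbf{x}_t,\nabla f_t,\gamma)\|^2$, which is exactly the regret.

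\textbf{Closing the argument.} Substitute back into Lemma~\ref{lem:begin}; the prefactor is $2+\frac{1}{\lambda\theta}\le6$. The regret then appears on both sides with coefficient $\frac{c'\gamma^2\kappa_g^2C_\mathbf{y}}{\alpha\mu_g(1-\rho)C_1}$ on the right. Since $1-\rho=\frac{\alpha\mu_g}{2}$, this is of order $\gamma^2\kappa_g^2C_\mathbf{y}/(\alpha\mu_g)^2$. The step-size condition $\gamma<\frac{\alpha\mu_g}{4\kappa_g}\sqrt{C_1/(6C_\mathbf{y})}$ is designed precisely to make $C_2:=1-(\cdot)>0$. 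Dividing through by $C_2$ gives the displayed bound, which is $O(1+V_T+H_{2,T})$.
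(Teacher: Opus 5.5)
Your proposal follows the paper's proof essentially step by step. It uses the same $\Delta_t$ built from (\ref{A.11_2}) and Lemma~\ref{lem:v_bound_sobow}, and the same contraction $\Delta_t\le\rho\Delta_{t-1}+\cdots$ from Lemma~\ref{lem:y_bound_1_sobow} together with $\epsilon_t^2\le\rho\,\epsilon_{t-1}^2$. It absorbs $\sum_t\|\mathbf{x}_t-\mathbf{x}_{t-1}\|^2$ the same way, via nonexpansiveness of the gradient mapping (this defines $C_1$), and closes the same way with $\lambda=1$, $\theta\ge\frac14$ and $C_2$.

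One bookkeeping caveat concerns your bound $1+\frac{2}{\alpha\mu_g}\le\frac{4}{\alpha\mu_g}$, which discards the factor $(1-\alpha\mu_g)$ and so doubles your constants. With your $C_\mathbf{y}$ and $C_1$, the coefficient of $\mathrm{Reg}(T)$ on the right becomes $\frac{96\gamma^2\kappa_g^2C_\mathbf{y}}{\alpha\mu_g(1-\rho)C_1}$. The stated threshold $\gamma<\frac{\alpha\mu_g}{4\kappa_g}\sqrt{C_1/(6C_\mathbf{y})}$ only bounds this by $2$, not by $1$. To actually get $C_2>0$, use $2\bigl(1+\frac{2}{\alpha\mu_g}\bigr)(1-\alpha\mu_g)\le\frac{4}{\alpha\mu_g}$ as the paper does, or shrink $\gamma$ by a factor $\sqrt{2}$.
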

\begin{proof}
It begins with (\ref{A.11_2}),
\begin{align}
    &\left\| \nabla f_t(\mathbf{x}_t, \mathbf{y}_t^*(\mathbf{x}_t)) - \widetilde{\nabla} f_t(\mathbf{x}_t, \mathbf{y}_{t+1}, \mathbf{v}_{t+1}) \right\|^2 \nonumber\\
    \leq& \left(2L_{f,1}^2 + \frac{4L_{f,0}^2L_{g,2}^2}{\mu_g^2}\right)\left\|\mathbf{y}_{t+1} - \mathbf{y}_t^*(\mathbf{x}_t)\right\|^2 + 4L_{g,1}^2\left\|\mathbf{v}_{t+1} - \mathbf{v}_t^*(\mathbf{x}_t)\right\|^2, \label{A.12_1}
\end{align}
with substituting Lemma~\ref{lem:v_bound_sobow} into (\ref{A.12_1}), we have
\begin{align*}
    &\left\| \nabla f_t(\mathbf{x}_t, \mathbf{y}_t^*(\mathbf{x}_t)) - \widetilde{\nabla} f_t(\mathbf{x}_t, \mathbf{y}_{t+1}, \mathbf{v}_{t+1}) \right\|^2 \\
    \leq& \left(2L_{f,1}^2 + \frac{4L_{f,0}^2L_{g,2}^2}{\mu_g^2} + c\right)\left\|\mathbf{y}_{t+1} - \mathbf{y}_t^*(\mathbf{x}_t)\right\|^2 + 4L_{g,1}^2\epsilon_t^2 =: \Delta_t,
\end{align*}
for simplicity, define $C_\mathbf{y} := 2L_{f,1}^2 + \frac{4L_{f,0}^2L_{g,2}^2}{\mu_g^2} + c$. With Lemma~\ref{lem:y_bound_1_sobow}, it holds that
\begin{align}
    \Delta_t \leq& C_\mathbf{y}\rho\left\|\mathbf{y}_t - \mathbf{y}_{t-1}^*(\mathbf{x}_{t-1})\right\|^2 + 4\rho L_{g,1}^2\epsilon_{t-1}^2 \nonumber\\
    & + 2C_\mathbf{y}\left(1 + \frac{2}{\alpha\mu_g}\right)(1 - \alpha\mu_g)\left\|\mathbf{y}_{t-1}^*(\mathbf{x}_{t-1}) - \mathbf{y}_t^*(\mathbf{x}_{t-1})\right\|^2 \nonumber\\
    & + 2\kappa_g^2C_\mathbf{y}\left(1 + \frac{2}{\alpha\mu_g}\right)(1 - \alpha\mu_g)\|\mathbf{x}_{t-1} - \mathbf{x}_t\|^2 \nonumber\\
    \leq& \rho\Delta_{t-1} + \frac{4C_\mathbf{y}}{\alpha\mu_g} \left\|\mathbf{y}_{t-1}^*(\mathbf{x}_{t-1}) - \mathbf{y}_t^*(\mathbf{x}_{t-1})\right\|^2 + \frac{4\kappa_g^2C_\mathbf{y}}{\alpha\mu_g} \|\mathbf{x}_{t-1} - \mathbf{x}_t\|^2 \nonumber\\
    \leq& \rho^{t-1}\Delta_1 + \frac{4C_\mathbf{y}}{\alpha\mu_g} \sum_{j=0}^{t-2} \rho^j \left\|\mathbf{y}_{t-1-j}^*(\mathbf{x}_{t-1-j}) - \mathbf{y}_{t-j}^*(\mathbf{x}_{t-1-j})\right\|^2 \nonumber\\
    & + \frac{4\kappa_g^2C_\mathbf{y}}{\alpha\mu_g} \sum_{j=0}^{t-2} \rho^j \|\mathbf{x}_{t-1-j} - \mathbf{x}_{t-j}\|^2. \label{A.12_2}
\end{align}
Finally, substituting (\ref{A.12_2}) into (\ref{A.12_1}) we have
\begin{align}
    &\|\nabla f_t (\mathbf{x}_t, \mathbf{y}^*_t(\mathbf{x}_t)) - \widetilde{\nabla} f_t(\mathbf{x}_t, \mathbf{y}_{t+1}, \mathbf{v}_{t+1})\|^2 \nonumber\\
    \leq& \rho^{t-1}\Delta_1 + \frac{4C_\mathbf{y}}{\alpha\mu_g} \sum_{j=0}^{t-2} \rho^j \left\|\mathbf{y}_{t-1-j}^*(\mathbf{x}_{t-1-j}) {-} \mathbf{y}_{t-j}^*(\mathbf{x}_{t-1-j})\right\|^2 + \frac{4\kappa_g^2C_\mathbf{y}}{\alpha\mu_g} \sum_{j=0}^{t-2} \rho^j \|\mathbf{x}_{t-1-j} {-} \mathbf{x}_{t-j}\|^2. \label{A.12_3}
\end{align}
Then, summing (\ref{A.12_3}) over $t=2,\dots,T$ to obtain
\begin{align}
    &\sum_{t=2}^T\|\nabla f_t (\mathbf{x}_t, \mathbf{y}^*_t(\mathbf{x}_t)) - \widetilde{\nabla} f_t(\mathbf{x}_t, \mathbf{y}_{t+1}, \mathbf{v}_{t+1})\|^2 \nonumber\\ 
    \leq& \sum_{t=2}^T\rho^{t-1} \Delta_1 + \frac{4C_\mathbf{y}}{\alpha\mu_g} \sum_{t=2}^T\sum_{j=0}^{t-2} \rho^j \|\mathbf{y}_{t-1-j}^*(\mathbf{x}_{t-1-j}) - \mathbf{y}_{t-j}^*(\mathbf{x}_{t-1-j})\|^2 \nonumber\\
    & + \frac{4\kappa_g^2C_\mathbf{y}}{\alpha\mu_g} \sum_{t=2}^T\sum_{j=0}^{t-2} \rho^j \|\mathbf{x}_{t-1-j} - \mathbf{x}_{t-j}\|^2 \nonumber\\
    \overset{(i)}\leq& \frac{\Delta_1}{1-\rho} + \frac{4C_\mathbf{y}}{\alpha\mu_g(1-\rho)} \sum_{t=2}^T \|\mathbf{y}_{t-1}^*(\mathbf{x}_{t-1}) - \mathbf{y}_t^*(\mathbf{x}_{t-1})\|^2 + \frac{4\kappa_g^2C_\mathbf{y}}{\alpha\mu_g(1-\rho)} \sum_{t=2}^T\|\mathbf{x}_{t-1} - \mathbf{x}_t\|^2 \nonumber\\
    \overset{(ii)}\leq& \frac{\Delta_1}{1-\rho} + \frac{4C_\mathbf{y}}{\alpha\mu_g(1-\rho)} H_{2,T} + \frac{4\kappa_g^2C_\mathbf{y}}{\alpha\mu_g(1-\rho)} \sum_{t=1}^T \left\|\mathbf{x}_{t-1} - \mathbf{x}_t\right\|^2, \label{A.12_4}
\end{align}
where $(i)$ comes from the fact that $\sum_{j=0}^{t-2}\rho^j < \sum_{j=0}^\infty\rho^j = \frac{1}{1-\rho}$ for $\forall\rho \in(0,1)$, and $(ii)$ follows the definition of $H_{2,T}$ in (\ref{pathV}). Then note that
\begin{align}
    \mathbf{x}_t - \mathbf{x}_{t-1} = \gamma\mathcal{G}_\mathcal{X} (\mathbf{x}_{t-1}, \widetilde{\nabla}f_{t-1}(\mathbf{x}_{t-1}, \mathbf{y}_t, \mathbf{v}_t), \gamma), \label{eq:x_update_2}
\end{align}
and substitute it into (\ref{A.12_4}) we have
\begin{align*}
    &\sum_{t=2}^T\|\nabla f_t (\mathbf{x}_t, \mathbf{y}^*_t(\mathbf{x}_t)) - \widetilde{\nabla} f_t(\mathbf{x}_t, \mathbf{y}_{t+1}, \mathbf{v}_{t+1})\|^2 \\ 
    \leq& \frac{\Delta_1}{1-\rho} + \frac{4C_\mathbf{y}}{\alpha\mu_g(1-\rho)} H_{2,T} + \frac{4\kappa_g^2C_\mathbf{y}}{\alpha\mu_g(1-\rho)} \sum_{t=1}^T \left\|\mathcal{G}_\mathcal{X}(\mathbf{x}_t, \widetilde{\nabla} f_t(\mathbf{x}_t, \mathbf{y}_{t+1}, \mathbf{v}_{t+1}), \gamma)\right\|^2 \\
    &\left(1 - \frac{8\gamma^2\kappa_g^2C_\mathbf{y}}{\alpha\mu_g(1-\rho)} \right)\sum_{t=2}^T\|\nabla f_t (\mathbf{x}_t, \mathbf{y}^*_t(\mathbf{x}_t)) - \widetilde{\nabla} f_t(\mathbf{x}_t, \mathbf{y}_{t+1}, \mathbf{v}_{t+1})\|^2 \\ 
    \leq& \frac{\Delta_1}{1-\rho} + \frac{4C_\mathbf{y}}{\alpha\mu_g(1-\rho)} H_{2,T} + \frac{8\gamma^2\kappa_g^2C_\mathbf{y}}{\alpha\mu_g(1-\rho)} \sum_{t=1}^T \left\|\mathcal{G}_\mathcal{X}(\mathbf{x}_t, \nabla f_t(\mathbf{x}_t, \mathbf{y}_t^*(\mathbf{x}_t)), \gamma)\right\|^2,
\end{align*}
let $C_1 := 1 - \frac{8\gamma^2\kappa_g^2C_\mathbf{y}}{\alpha\mu_g(1-\rho)} > 0$, set $\gamma < \frac{\alpha\mu_g}{4\kappa_g}\sqrt{\frac{1}{C_\mathbf{y}}}$, it then holds that
\begin{align}
    &\sum_{t=2}^T\|\nabla f_t (\mathbf{x}_t, \mathbf{y}^*_t(\mathbf{x}_t)) - \widetilde{\nabla} f_t(\mathbf{x}_t, \mathbf{y}_{t+1}, \mathbf{v}_{t+1})\|^2 \nonumber\\
    \leq& \frac{\Delta_1}{(1-\rho)C_1} + \frac{4C_\mathbf{y}}{\alpha\mu_g(1-\rho)C_1}H_{2,T} + \frac{8\gamma^2\kappa_g^2C_\mathbf{y}}{\alpha\mu_g(1-\rho)C_1}\sum_{t=1}^T \left\|\mathcal{G}_\mathcal{X}(\mathbf{x}_t, \nabla f_t(\mathbf{x}_t, \mathbf{y}_t^*(\mathbf{x}_t)), \gamma)\right\|^2. \label{A.12_5}
\end{align}

Now, remind of Lemma~\ref{lem:begin},
\begin{align}
    &\sum_{t=1}^{T} \left\| \mathcal{G}_\mathcal{X}\left( \mathbf{x}_t, \nabla f_t(\mathbf{x}_t, \mathbf{y}_t^*(\mathbf{x}_t)), \gamma \right) \right\|^2 \nonumber\\
    \leq& \frac{2}{\gamma\theta} \sum_{t=1}^{T} \left( f_t(\mathbf{x}_t, \mathbf{y}_t^*(\mathbf{x}_t)) - f_t(\mathbf{x}_{t+1}, \mathbf{y}_t^*(\mathbf{x}_{t+1})) \right) \nonumber\\
    & + \left(2+\frac{1}{\lambda\theta}\right) \sum_{t=1}^{T} \left\| \nabla f_t(\mathbf{x}_t, \mathbf{y}_t^*(\mathbf{x}_t)) - \widetilde{\nabla} f_t(\mathbf{x}_t, \mathbf{y}_{t+1}, \mathbf{v}_{t+1}) \right\|^2, \label{A.12_6}
\end{align}
and substituting (\ref{A.12_5}) into it, we have
\begin{align}
    &\sum_{t=1}^{T} \left\| \mathcal{G}_\mathcal{X}\left( \mathbf{x}_t, \nabla f_t(\mathbf{x}_t, \mathbf{y}_t^*(\mathbf{x}_t)), \gamma \right) \right\|^2 \nonumber\\
    \leq& \frac{4Q}{\gamma\theta} + \frac{2}{\gamma\theta}V_T + \left(2+\frac{1}{\lambda\theta}\right)\left\|\nabla f_1(\mathbf{x}_1, \mathbf{y}_1^*(\mathbf{x}_1)) - \widetilde{\nabla}f_1(\mathbf{x}_1, \mathbf{y}_2, \mathbf{v}_2)\right\|^2 + \left(2+\frac{1}{\lambda\theta}\right)\frac{\Delta_1}{(1-\rho)C_1} \nonumber\\
    & + \left(2+\frac{1}{\lambda\theta}\right)\frac{4C_\mathbf{y}}{\alpha\mu_g(1-\rho)C_1}H_{2,T} \nonumber\\
    & + \left(2+\frac{1}{\lambda\theta}\right)\frac{8\gamma^2\kappa_g^2C_\mathbf{y}}{\alpha\mu_g(1-\rho)C_1}\sum_{t=1}^T\left\|\mathcal{G}_\mathcal{X}(\mathbf{x}_t, \nabla f_t(\mathbf{x}_t, \mathbf{y}_t^*(\mathbf{x}_t)), \gamma)\right\|^2,
\end{align}
let $\lambda=1$ and $\gamma\leq \frac{1}{2L_F}$, thus $\theta = 1 - \frac{1}{2} - \frac{\gamma L_F}{2} \geq \frac{1}{4}$, we obtain
\begin{align*}
    &\left(1 - \frac{48\gamma^2\kappa_g^2C_\mathbf{y}}{\alpha\mu_g(1-\rho)C_1}\right)\sum_{t=1}^{T} \left\| \mathcal{G}_\mathcal{X}\left( \mathbf{x}_t, \nabla f_t(\mathbf{x}_t, \mathbf{y}_t^*(\mathbf{x}_t)), \gamma \right) \right\|^2 \\
    \leq& \frac{16Q}{\gamma} + \frac{8}{\gamma}V_T + 6\left\|\nabla f_1(\mathbf{x}_1, \mathbf{y}_1^*(\mathbf{x}_1)) - \widetilde{\nabla}f_1(\mathbf{x}_1, \mathbf{y}_2, \mathbf{v}_2)\right\|^2 + \frac{6\Delta_1}{(1-\rho)C_1} + \frac{24C_\mathbf{y}
    }{\alpha\mu_g(1-\rho)C_1}H_{2,T},
\end{align*}
with $\gamma \leq \frac{\alpha\mu_g}{4\kappa_g}\sqrt{\frac{C_1}{6C_\mathbf{y}}}$ and $C_2 := 1 - \frac{48\gamma^2\kappa_FL_{g,1}^2}{(1-\rho)C_1}$, which implies that
\begin{align*}
    &\sum_{t=1}^{T} \left\| \mathcal{G}_\mathcal{X}\left( \mathbf{x}_t, \nabla f_t(\mathbf{x}_t, \mathbf{y}_t^*(\mathbf{x}_t)), \gamma \right) \right\|^2 \\
    \leq& \frac{16}{\gamma C_2} + \frac{8}{\gamma C_2}V_T + \frac{6}{C_2}\left\|\nabla f_1(\mathbf{x}_1, \mathbf{y}_1^*(\mathbf{x}_1)) - \widetilde{\nabla}f_1(\mathbf{x}_1, \mathbf{y}_2, \mathbf{v}_2)\right\|^2 + \frac{6\Delta_1}{(1-\rho)C_1C_2} \\
    & + \frac{24C_\mathbf{y}}{\alpha\mu_g(1-\rho)C_1C_2}H_{2,T} = O(1 + V_T + H_{2,T}).
\end{align*}
Thus we finish the proof.
\end{proof}

\subsection{Proof of OBBO under Standard Bilevel Local Regret}\label{sec:proof_OBBO}

\begin{algorithm}[tb]
    \caption{OBBO \cite{bohne2024online} without window averaging}
    \label{alg:OBBO}
    \textbf{Input}: $\mathbf{x}_1$, $\mathbf{y}_1$, $\eta$, $\alpha$. \\
    \textbf{Output}: Decision sequences: $\{\mathbf{x}_t\}_{t=1}^T$, $\{\mathbf{y}_t\}_{t=1}^T$. 
    
    \begin{algorithmic}[1] 
        \FOR{\(t = 1\) to \(T\)}
        \STATE Output $\mathbf{x}_t$ and $\mathbf{y}_t$ and receive feedback $f_t$ and $g_t$
        \STATE \(\mathbf{y}_t^0 \leftarrow \mathbf{y}_t\)
        \FOR{\(k = 1\) to \(K\)}
        \STATE \( \mathbf{y}_t^k \leftarrow \mathbf{y}_t^{k-1} - \alpha \nabla_{\mathbf{y}} g_t(\mathbf{x}_t,\mathbf{y}_t) \)
        \ENDFOR
        \STATE Calculate $\widetilde{\nabla}f_t(\mathbf{x}_t, \mathbf{y}_t^K) := \frac{\partial f_t(\mathbf{x}_t, \mathbf{y}_t^K)}{\partial\mathbf{x}}$ by (\ref{eq:ITD}) and set $\mathbf{y}_{t+1} = \mathbf{y}_t^K$
        \STATE use $\widetilde{\nabla}f_t(\mathbf{x}_t, \mathbf{y}_{t+1})$ to update $\mathbf{x}_{t+1}$ by (\ref{eq:x_update})
        \ENDFOR
    \end{algorithmic}
\end{algorithm}

In this section, we prove that OBBO \cite{bohne2024online} can obtain an upper bound $O(1 + V_T + H_{2,T})$ under regret defined in (\ref{eq:reg}) with $w=1$. The following lemma shows the ITD approach approximating the hypergradient used by Algorithm~\ref{alg:OBBO}.
\begin{lemma}[Lemma 4.2 in \citet{bohne2024online}]
The partial $\frac{\partial f_t(\mathbf{x}_t, \mathbf{y}_t^K)}{\partial\mathbf{x}}$ takes an analytical form of 
\begin{align}
    \frac{\partial f_t(\mathbf{x}_t, \mathbf{y}_t^K)}{\partial\mathbf{x}} = \nabla_\mathbf{x}f_t(\mathbf{x}_t, \mathbf{y}_t^K) - \eta\sum_{k=0}^{K-1}\nabla_{\mathbf{x}\mathbf{y}}^2g_t(\mathbf{x}_t, \mathbf{y}_t^k)\mathbf{H}_{\mathbf{y}\mathbf{y}}\nabla_\mathbf{y}f_t(\mathbf{x}_t, \mathbf{y}_t^K), \label{eq:ITD}
\end{align}
where $\mathbf{H}_{\mathbf{y}\mathbf{y}} := \prod_{j=k+1}^{K-1}\left(\mathbf{I}_{d_2} - \eta\nabla_{\mathbf{y}\mathbf{y}}^2g_t(\mathbf{x}_t, \mathbf{y}_t^j)\right)$.
\end{lemma}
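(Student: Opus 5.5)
The approach is a direct chain-rule computation through the $K$ unrolled inner gradient steps, which is what ITD differentiates. Fix $t$ and regard the inner iterates as functions of $\mathbf{x}$, defined recursively by
\begin{align*}
\mathbf{y}_t^k(\mathbf{x}) = \mathbf{y}_t^{k-1}(\mathbf{x}) - \eta\nabla_\mathbf{y} g_t(\mathbf{x}, \mathbf{y}_t^{k-1}(\mathbf{x})), \qquad k=1,\dots,K.
\end{align*}
Here $\mathbf{y}_t^0 = \mathbf{y}_t$ is the warm start from the previous round. I will treat it as a constant, independent of the current $\mathbf{x}$, as ITD does; this is what makes the sum in (\ref{eq:ITD}) start with no boundary term. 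I will also read the inner step as evaluating the gradient at $\mathbf{y}_t^{k-1}$, with step size $\eta$ (the algorithm as displayed writes $\alpha$ and $\mathbf{y}_t$; I take both to be notational slips). The quantity to compute is the total derivative
\begin{align*}
\frac{\partial f_t(\mathbf{x}, \mathbf{y}_t^K(\mathbf{x}))}{\partial\mathbf{x}} = \nabla_\mathbf{x} f_t(\mathbf{x}, \mathbf{y}_t^K) + J_K\nabla_\mathbf{y} f_t(\mathbf{x}, \mathbf{y}_t^K),
\end{align*}
where $J_k := \partial \mathbf{y}_t^k/\partial\mathbf{x}$ is the $d_1\times d_2$ matrix, in the same orientation as $\nabla_{\mathbf{x}\mathbf{y}}^2 g_t$.

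\textbf{Step 1: recursion for $J_k$.} Differentiating the update with the chain rule, and using symmetry of $\nabla_{\mathbf{y}\mathbf{y}}^2 g_t$, gives
\begin{align*}
J_k = J_{k-1}\left(\mathbf{I}_{d_2} - \eta\nabla_{\mathbf{y}\mathbf{y}}^2 g_t(\mathbf{x}, \mathbf{y}_t^{k-1})\right) - \eta\nabla_{\mathbf{x}\mathbf{y}}^2 g_t(\mathbf{x}, \mathbf{y}_t^{k-1}), \qquad J_0 = 0.
\end{align*}

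\textbf{Step 2: unroll by induction on $K$.} The claim to prove is
\begin{align*}
J_K = -\eta\sum_{k=0}^{K-1}\nabla_{\mathbf{x}\mathbf{y}}^2 g_t(\mathbf{x}, \mathbf{y}_t^k)\prod_{j=k+1}^{K-1}\left(\mathbf{I}_{d_2} - \eta\nabla_{\mathbf{y}\mathbf{y}}^2 g_t(\mathbf{x}, \mathbf{y}_t^j)\right),
\end{align*}
with the product taken in increasing $j$ from left to right, and an empty product equal to $\mathbf{I}_{d_2}$.
\begin{itemize}
\item Base cases. $K=1$ gives $J_1 = -\eta\nabla_{\mathbf{x}\mathbf{y}}^2 g_t(\mathbf{x},\mathbf{y}_t^0)$. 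As a sanity check, $K=2$ gives $-\eta\nabla_{\mathbf{x}\mathbf{y}}^2 g_t(\mathbf{y}_t^0)(\mathbf{I}-\eta\nabla_{\mathbf{y}\mathbf{y}}^2 g_t(\mathbf{y}_t^1)) - \eta\nabla_{\mathbf{x}\mathbf{y}}^2 g_t(\mathbf{y}_t^1)$, which matches the formula.
\item Inductive step. Substitute the formula for $J_K$ into the recursion for $J_{K+1}$. Right-multiplying by $(\mathbf{I}-\eta\nabla_{\mathbf{y}\mathbf{y}}^2 g_t(\mathbf{y}_t^K))$ extends every product by the factor $j=K$. The new term $-\eta\nabla_{\mathbf{x}\mathbf{y}}^2 g_t(\mathbf{y}_t^K)$ is exactly the $k=K$ summand, whose product is empty.
\end{itemize}
Plugging $J_K$ into the total derivative yields (\ref{eq:ITD}), with $\mathbf{H}_{\mathbf{y}\mathbf{y}}$ understood as depending on the summation index $k$. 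Finally, evaluate at $\mathbf{x}=\mathbf{x}_t$.

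\textbf{Main obstacle.} The computation itself is routine. The care needed is in the conventions: the orientation of $\nabla_{\mathbf{x}\mathbf{y}}^2 g_t$ relative to the Jacobian of $\nabla_\mathbf{y} g_t$, the left-to-right order of the non-commuting product, and the treatment of $\mathbf{y}_t^0$ as independent of $\mathbf{x}$. I will fix these explicitly at the start. The check that the $K=2$ case reproduces the stated ordering confirms the conventions are consistent with (\ref{eq:ITD}).
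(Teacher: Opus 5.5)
Your proof is correct. The recursion $J_k = J_{k-1}(\mathbf{I}_{d_2}-\eta\nabla_{\mathbf{y}\mathbf{y}}^2 g_t(\mathbf{x},\mathbf{y}_t^{k-1})) - \eta\nabla_{\mathbf{x}\mathbf{y}}^2 g_t(\mathbf{x},\mathbf{y}_t^{k-1})$ with $J_0=0$, unrolled by induction, is the standard ITD argument. The paper gives no proof of its own and imports the formula from \citet{bohne2024online}, whose derivation is this same chain-rule telescoping; your reading of the algorithm's $\alpha$ and $\mathbf{y}_t$ as typos for $\eta$ and $\mathbf{y}_t^{k-1}$ is the one under which the stated formula holds.
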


The following lemma gives the hypergradient estimation error upper bound at time step $t$ in OBBO.
\begin{lemma}[Lemma 5.1 in \citet{bohne2024online}]\label{lem:obbo}
Under Assumptions~\ref{asm1}-\ref{asm3}, let $\eta\leq\min\{\frac{1}{\mu_g}, \frac{1}{L_{g,1}}\}$ and $K\geq1$, we then obtain
\begin{align*}
    &\left\|\frac{\partial f_t(\mathbf{x}_t, \mathbf{y}_t^K)}{\partial \mathbf{x}} - \nabla F_t(\mathbf{x}_t)\right\|^2 \\
    \leq& \delta_t + A\sum_{j=0}^{t-2}\nu^j\left\|\frac{\partial f_{t-1-j,w}(\mathbf{x}_{t-1-j}, \mathbf{y}_{t-1-j}^K)}{\partial \mathbf{x}} - \nabla F_{t-1-j,w}(\mathbf{x}_{t-1-j})\right\|^2 \\
    & + B\sum_{j=0}^{t-2}\nu^j\left\|\mathcal{G}_\mathcal{X}(\mathbf{x}_{t-1-j}, \nabla F_{t-1-j,w}(\mathbf{x}_{t-1-j}), \alpha)\right\|^2 \\
    & + C\sum_{j=0}^{t-2}\nu^j\left\|\mathbf{y}_{t-j}^*(\mathbf{x}_{t-1-j}) - \mathbf{y}_{t-1-j}^*(\mathbf{x}_{t-1-j})\right\|^2,
\end{align*}
where $\delta_t = 3L_3^2(1-\eta\mu_g)^{2K} + 3L_\beta\nu^{t-1}\Delta_\beta$, $\nu = (1-\frac{\eta\mu_g}{2})(1-\eta\mu_g)^{K-1}$ with $L_3$, $L_\beta$, $\Delta_\beta$, $A$, $B$ and $C$ are some constants.
\end{lemma}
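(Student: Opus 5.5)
The plan is to split the ITD hypergradient error at round $t$ into two parts. The first is the error from truncating the unrolled differentiation after $K$ steps. The second is the error inherited from the warm start $\mathbf{y}_t^0=\mathbf{y}_t=\mathbf{y}_{t-1}^K$. I then propagate the warm-start error through a one-step recursion over rounds and unroll it. Throughout, the window-indexed quantities $\partial f_{t-1-j,w}$, $\nabla F_{t-1-j,w}$ are read with $w=1$, so they coincide with $\partial f_{t-1-j}$ and $\nabla F_{t-1-j}$.

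\emph{Step 1 (per-round ITD error).} Fix $t$ and write $\mathbf{y}^*=\mathbf{y}_t^*(\mathbf{x}_t)$. Compare (\ref{eq:ITD}) with the closed form $\nabla F_t=\nabla_\mathbf{x}f_t-\nabla^2_{\mathbf{x}\mathbf{y}}g_t\,\mathbf{v}_t^*$. I write $\mathbf{v}_t^*=\eta\sum_{k\ge 0}(\mathbf{I}-\eta\nabla^2_{\mathbf{y}\mathbf{y}}g_t(\mathbf{x}_t,\mathbf{y}^*))^k\nabla_\mathbf{y}f_t(\mathbf{x}_t,\mathbf{y}^*)$ via the Neumann series. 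The difference then splits into three pieces:
\begin{itemize}
\item the tail $\sum_{k\ge K}$, bounded by $\frac{L_{g,1}L_{f,0}}{\mu_g}(1-\eta\mu_g)^K$;
\item the mismatch between the Jacobian products evaluated at $\mathbf{y}_t^j$ and at $\mathbf{y}^*$, controlled with the $L_{g,2}$-Lipschitzness of second derivatives;
\item the mismatch between $\nabla f_t(\cdot,\mathbf{y}_t^K)$ and $\nabla f_t(\cdot,\mathbf{y}^*)$, controlled with $L_{f,1}$.
\end{itemize}
By the contraction $\|\mathbf{y}_t^j-\mathbf{y}^*\|\le(1-\eta\mu_g)^{j/2}\|\mathbf{y}_t-\mathbf{y}^*\|$ (valid for $\eta\le 1/L_{g,1}$), and since the geometric sums are $O(1/(\eta\mu_g))$, the last two pieces are $O\big((1-\eta\mu_g)^{(K-1)/2}\|\mathbf{y}_t-\mathbf{y}^*\|\big)$. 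Squaring with $(a+b+c)^2\le 3(a^2+b^2+c^2)$ gives
\[
\Big\|\tfrac{\partial f_t(\mathbf{x}_t,\mathbf{y}_t^K)}{\partial\mathbf{x}}-\nabla F_t(\mathbf{x}_t)\Big\|^2\le 3L_3^2(1-\eta\mu_g)^{2K}+3L_\beta(1-\eta\mu_g)^{K-1}\beta_t,
\]
where $\beta_t:=\|\mathbf{y}_t-\mathbf{y}_t^*(\mathbf{x}_t)\|^2$.

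\emph{Step 2 (recursion for $\beta_t$).} Since $\mathbf{y}_t=\mathbf{y}_{t-1}^K$, I have $\|\mathbf{y}_t-\mathbf{y}_{t-1}^*(\mathbf{x}_{t-1})\|^2\le(1-\eta\mu_g)^K\beta_{t-1}$. Applying Young's inequality with parameter $\lambda=\eta\mu_g/2$, together with Lemma~\ref{lem:L_y}, gives
\[
\beta_t\le\nu\,\beta_{t-1}+c_1\|\mathbf{y}_t^*(\mathbf{x}_{t-1})-\mathbf{y}_{t-1}^*(\mathbf{x}_{t-1})\|^2+c_2\kappa_g^2\|\mathbf{x}_t-\mathbf{x}_{t-1}\|^2,
\]
with $\nu=(1-\frac{\eta\mu_g}{2})(1-\eta\mu_g)^{K-1}$ after absorbing one factor. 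Unrolling yields $\nu^{t-1}\beta_1$ plus $\nu^j$-weighted sums of the drift terms. The drift sum is exactly the $C$-term, with the index shift $j\mapsto t-1-j$.

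\emph{Step 3 (the $\mathbf{x}$-movement).} Here $\mathbf{x}_t-\mathbf{x}_{t-1}=\alpha\,\mathcal{G}_\mathcal{X}(\mathbf{x}_{t-1},\partial f_{t-1}/\partial\mathbf{x},\alpha)$. The gradient mapping is $1$-Lipschitz in its gradient argument (nonexpansiveness of the projection). Hence
\[
\|\mathbf{x}_t-\mathbf{x}_{t-1}\|^2\le 2\alpha^2\|\mathcal{G}_\mathcal{X}(\mathbf{x}_{t-1},\nabla F_{t-1}(\mathbf{x}_{t-1}),\alpha)\|^2+2\alpha^2\|\partial f_{t-1}/\partial\mathbf{x}-\nabla F_{t-1}(\mathbf{x}_{t-1})\|^2,
\]
which produces the $B$- and $A$-terms after unrolling. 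Multiplying by $3L_\beta(1-\eta\mu_g)^{K-1}\le 3L_\beta$ and setting $\Delta_\beta:=\beta_1$ gives $\delta_t$ and constants $A$, $B$, $C$ as stated.

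I expect Step 1 to be the main obstacle. The difficulty is bounding the difference of the two products of $K$ matrices, evaluated along the trajectory versus at $\mathbf{y}^*$, by a telescoping argument. This has to be done carefully enough that the result scales like $(1-\eta\mu_g)^{(K-1)/2}\|\mathbf{y}_t-\mathbf{y}^*\|$ rather than only like $K\,\|\mathbf{y}_t-\mathbf{y}^*\|$. I would first try the standard ITD argument of \citet{ji2021bilevel}: swap one factor at a time and sum the resulting geometric series.
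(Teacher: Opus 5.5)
Your proposal is correct. The paper gives no proof of its own here: it imports Lemma~5.1 of \citet{bohne2024online} and only applies it with $w=1$, which matches your reading. Your three steps are essentially the argument behind that lemma: the Ji et al.-style ITD bound, the warm-start recursion with Young parameter $\eta\mu_g/2$ that produces $\nu$, and nonexpansiveness of $\mathcal{G}_\mathcal{X}$ in its gradient argument. In the Euclidean case your constants come out as $A=B=12\alpha^2\big(1+\frac{2}{\eta\mu_g}\big)L_\beta\kappa_g^2$, the same form as the $A$ and $B$ quoted from Bohne et al.\ in the proof of Theorem~\ref{thm:improved_OBBO}.
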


\begin{theorem}\label{thm:improved_OBBO}
Under Assumptions~\ref{asm1}-\ref{asm3}, let $\eta\leq\min\{\frac{1}{\mu_g}, \frac{1}{L_{g,1}}\}$, $\alpha\leq\min\left\{\frac{1}{2L_F}, \frac{\rho\sqrt{1-\nu}}{\kappa_g\sqrt{108C_{\mu_g}L_\beta}}, \frac{\sqrt{1-\nu}}{9\kappa_g\sqrt{C_{\mu_g}L_\beta}}\right\}$ and $K = \frac{\log T}{\log ((1-\eta\mu_g)^{-1})} + 1$ where $L_F$ is the smoothness parameter of $f_t(\mathbf{x}, \mathbf{y}_t^*(\mathbf{x}))$ with respect to $\mathbf{x}$, Algorithm~\ref{alg:OBBO} can guarantee
\begin{align*}
    \mathrm{Reg}(T) \leq& \frac{16Q + 8V_T}{c\gamma} + \frac{51}{4c}\left(\frac{\Delta_\beta L_\beta}{1-\nu} + L_3^2\right) + \frac{27CH_{2,T}}{4c(1-\nu)} + \frac{6}{c}\left\|\nabla F_t(\mathbf{x}_t) - \frac{\partial f_1(\mathbf{x}_1, \mathbf{y}_1^K)}{\partial\mathbf{x}}\right\|^2 \\
    =& O(1 + V_T + H_{2,T}),
\end{align*}
where $c$, $C$, $\Delta_\beta$, $L_\beta$ are some constants, $\nu = (1-\frac{\eta\mu_g}{2})(1-\eta\mu_g)$.
\end{theorem}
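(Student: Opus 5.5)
The plan is to follow the same template as the proofs of Theorems~\ref{thm:CTHO_upper} and~\ref{thm:CIGO_upper}. The starting point is Lemma~\ref{lem:begin} with $\lambda=1$ and $\gamma\leq\frac{1}{2L_F}$, so that $\theta\geq\frac14$. Combined with Lemma~\ref{lem:2M+V_T}, this bounds $\mathrm{Reg}(T)$ by $\frac{16Q+8V_T}{\gamma}$ plus $6\sum_{t=1}^T e_t$, where
\begin{align*}
e_t:=\Big\|\frac{\partial f_t(\mathbf{x}_t,\mathbf{y}_t^K)}{\partial\mathbf{x}}-\nabla F_t(\mathbf{x}_t)\Big\|^2
\end{align*}
is the ITD hypergradient error. (The outer step size is called $\alpha$ in the statement of Theorem~\ref{thm:improved_OBBO} and $\gamma$ in the bound; I treat them as the same quantity.) Everything then reduces to bounding $\sum_t e_t$ using Lemma~\ref{lem:obbo} with $w=1$.

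\textbf{Summing the recursion.} I sum the inequality of Lemma~\ref{lem:obbo} over $t=2,\dots,T$ and swap the order of the double sums, using $\sum_{j\geq0}\nu^j=\frac{1}{1-\nu}$. This gives
\begin{align*}
\sum_{t=2}^T e_t \leq \sum_{t=2}^T\delta_t+\frac{A}{1-\nu}\sum_{t=1}^T e_t+\frac{B}{1-\nu}\sum_{t=1}^T\|\mathcal{G}_t\|^2+\frac{C}{1-\nu}H_{2,T},
\end{align*}
where $\mathcal{G}_t$ is the gradient mapping at the exact hypergradient. The last term uses $\|\mathbf{y}_{t}^*(\mathbf{x}_{t-1})-\mathbf{y}_{t-1}^*(\mathbf{x}_{t-1})\|^2\leq\sup_\mathbf{x}\|\cdot\|^2$. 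For the $\delta_t$ sum, $\sum_t 3L_\beta\nu^{t-1}\Delta_\beta\leq\frac{3L_\beta\Delta_\beta}{1-\nu}$. With $K=\frac{\log T}{\log((1-\eta\mu_g)^{-1})}+1$ we get $(1-\eta\mu_g)^{2K}\leq T^{-2}$, so $\sum_t 3L_3^2(1-\eta\mu_g)^{2K}=O(L_3^2)$. Together these produce the constant $\frac{\Delta_\beta L_\beta}{1-\nu}+L_3^2$. The $t=1$ term is kept separately, which gives the $\|\nabla F_1-\partial f_1/\partial\mathbf{x}\|^2$ term in the bound.

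\textbf{Absorption (main obstacle).} The terms $A\sum e_t$ and $B\sum\|\mathcal{G}_t\|^2$ must be absorbed into the left-hand sides. In Bohne et al.'s lemma, $A$ and $B$ carry factors of the outer step size (through $\|\mathbf{x}_{t}-\mathbf{x}_{t-1}\|^2=\gamma^2\|\mathcal{G}(\cdot,\widetilde{\nabla}f,\gamma)\|^2$ and the $\kappa_g$-Lipschitzness of $\mathbf{y}^*$); I expect $A,B\lesssim\gamma^2\kappa_g^2C_{\mu_g}L_\beta$. The step-size conditions $\gamma\leq\frac{\sqrt{1-\nu}}{9\kappa_g\sqrt{C_{\mu_g}L_\beta}}$ and $\gamma\leq\frac{\rho\sqrt{1-\nu}}{\kappa_g\sqrt{108C_{\mu_g}L_\beta}}$ are designed to make, respectively:
\begin{itemize}
\item $\frac{A}{1-\nu}\leq\frac{1}{81}$, so the $e_t$ terms move to the left with factor $1-\frac{A}{1-\nu}$ bounded below;
\item $6\cdot\frac{B}{1-\nu}$, after the factor 6 from Lemma~\ref{lem:begin}, at most a fixed fraction less than one.
\end{itemize}
I would check these constants by writing $A,B$ explicitly from Lemma~\ref{lem:obbo}. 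If $B$ multiplies the approximate rather than the exact gradient mapping, I would first convert using the nonexpansiveness of the gradient mapping, $\|\mathcal{G}(\mathbf{x},\mathbf{g}_1)\|^2\leq2\|\mathcal{G}(\mathbf{x},\mathbf{g}_2)\|^2+2\|\mathbf{g}_1-\mathbf{g}_2\|^2$; this enlarges $A$ but leaves the absorption step intact.

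Finally I rearrange. Setting $c:=1-\frac{6B}{(1-\nu)(1-A/(1-\nu))}>0$ gives
\begin{align*}
c\,\mathrm{Reg}(T)\leq\frac{16Q+8V_T}{\gamma}+O\Big(\frac{\Delta_\beta L_\beta}{1-\nu}+L_3^2\Big)+\frac{O(C)}{1-\nu}H_{2,T}+6e_1.
\end{align*}
Dividing by $c$ yields the stated bound with the explicit constants $\frac{51}{4c}$ and $\frac{27}{4c}$, which come from the factor $\frac{1}{1-A/(1-\nu)}\cdot 6\cdot 3\ldots$ bookkeeping. This establishes $O(1+V_T+H_{2,T})$.
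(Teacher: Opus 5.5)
Your proposal follows the paper's proof step for step: Lemma~\ref{lem:begin} with $\lambda=1$ together with Lemma~\ref{lem:2M+V_T}, then summing Lemma~\ref{lem:obbo} over $t$ with the geometric-series swap, absorbing the $A$-term and then the $B$-term, and dividing by $c$. One bookkeeping correction: $A=12\gamma^2C_{\mu_g}L_\beta\kappa_g^2/\rho^2$ carries the $1/\rho^2$ factor, so it is the $\rho$-dependent step-size condition that gives $A/(1-\nu)\le\frac19$ (hence the factor $\frac98$), while the condition $\gamma\le\frac{\sqrt{1-\nu}}{9\kappa_g\sqrt{C_{\mu_g}L_\beta}}$ controls $B=12\gamma^2C_{\mu_g}L_\beta\kappa_g^2$ so that $\frac{27B}{4(1-\nu)}\le1$. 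You have these two roles swapped, but both conditions are imposed, so the argument is unaffected. Also, careful tracking gives the constant $6\cdot\frac98\cdot3=\frac{81}{4}$ in front of $\frac{\Delta_\beta L_\beta}{1-\nu}+L_3^2$ rather than the stated $\frac{51}{4}$, which does not change the $O(1+V_T+H_{2,T})$ conclusion.
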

\begin{proof}
It begins with Lemma~\ref{lem:obbo} with $w=1$, we have
\begin{align*}
    &\left\|\frac{\partial f_t(\mathbf{x}_t, \mathbf{y}_t^K)}{\partial \mathbf{x}} - \nabla F_t(\mathbf{x}_t)\right\|^2 \\
    \leq& \delta_t + A\sum_{j=0}^{t-2}\nu^j\left\|\frac{\partial f_{t-1-j}(\mathbf{x}_{t-1-j}, \mathbf{y}_{t-1-j}^K)}{\partial \mathbf{x}} - \nabla F_{t-1-j}(\mathbf{x}_{t-1-j})\right\|^2 \\
    & + B\sum_{j=0}^{t-2}\nu^j\left\|\mathcal{G}_\mathcal{X}(\mathbf{x}_{t-1-j}, \nabla F_{t-1-j}(\mathbf{x}_{t-1-j}), \alpha)\right\|^2 \\
    & + C\sum_{j=0}^{t-2}\nu^j\left\|\mathbf{y}_{t-j}^*(\mathbf{x}_{t-1-j}) - \mathbf{y}_{t-1-j}^*(\mathbf{x}_{t-1-j})\right\|^2,
\end{align*}
summing the above inequality, we then obtain
\begin{align*}
    &\sum_{t=2}^T\left\|\frac{\partial f_t(\mathbf{x}_t, \mathbf{y}_t^K)}{\partial \mathbf{x}} - \nabla F_t(\mathbf{x}_t)\right\|^2 \\
    \leq& 3L_3^2T(1-\eta\mu_g)^{2K} + 3L_\beta\Delta_\beta\sum_{t=2}^T\nu^{t-1} \\
    & + A\sum_{t=2}^T\sum_{j=0}^{t-2}\nu^j\left\|\frac{\partial f_{t-1-j}(\mathbf{x}_{t-1-j}, \mathbf{y}_{t-1-j}^K)}{\partial \mathbf{x}} - \nabla F_{t-1-j}(\mathbf{x}_{t-1-j})\right\|^2 \\
    & + B\sum_{t=2}^T\sum_{j=0}^{t-2}\nu^j\left\|\mathcal{G}_\mathcal{X}(\mathbf{x}_{t-1-j}, \nabla F_{t-1-j}(\mathbf{x}_{t-1-j}), \alpha)\right\|^2 \\
    & + C\sum_{t=2}^T\sum_{j=0}^{t-2}\nu^j\left\|\mathbf{y}_{t-j}^*(\mathbf{x}_{t-1-j}) - \mathbf{y}_{t-1-j}^*(\mathbf{x}_{t-1-j})\right\|^2 \\
    \leq& 3L_3^2 + \frac{3L_\beta\Delta_\beta}{1-\nu} + \frac{A}{1-\nu}\sum_{t=2}^T\left\|\frac{\partial f_t(\mathbf{x}_t, \mathbf{y}_t^K)}{\partial \mathbf{x}} - \nabla F_t(\mathbf{x}_t)\right\|^2 \\
    & + \frac{B}{1-\nu}\sum_{t=1}^T\left\|\mathcal{G}_\mathcal{X}(\mathbf{x}_{t}, \nabla F_{t}(\mathbf{x}_{t}), \alpha)\right\|^2 \\
    & + \frac{C}{1-\nu}\sum_{t=2}^T\left\|\mathbf{y}_{t}^*(\mathbf{x}_{t-1}) - \mathbf{y}_{t-1}^*(\mathbf{x}_{t-1})\right\|^2,
\end{align*}
where the last inequality comes from the fact that $\sum_{j=0}^{t-2}\nu^j < \sum_{j=0}^\infty\nu^j = \frac{1}{1-\nu}$ for $\forall\rho \in(0,1)$. After rearranging the terms, it holds that
\begin{align}
    \left(1 - \frac{A}{1-\nu}\right)\sum_{t=2}^T\left\|\frac{\partial f_t(\mathbf{x}_t, \mathbf{y}_t^K)}{\partial \mathbf{x}} - \nabla F_t(\mathbf{x}_t)\right\|^2 \leq& 3L_3^2 + \frac{B}{1-\nu}\sum_{t=1}^T\left\|\mathcal{G}_\mathcal{X}(\mathbf{x}_{t}, \nabla F_{t}(\mathbf{x}_{t}), \alpha)\right\|^2 \nonumber\\
    & + \frac{3L_\beta\Delta_\beta}{1-\nu}  + \frac{CH_{2,T}}{1-\nu}, \label{OBBO_1}
\end{align}
with the expression of $A$ in Lemma B.3 \citet{bohne2024online}, 
\begin{align*}
    A = \frac{12\alpha^2C_{\mu_g}L_{\beta}\kappa_g^2}{\rho^2}, \quad 0<\alpha\leq\frac{\rho\sqrt{1-\nu}}{\kappa_g\sqrt{108C_{\mu_g}L_\beta}},
\end{align*}
substituting the above inequality into (\ref{OBBO_1}), it follows that
\begin{align}
    \sum_{t=2}^T\left\|\frac{\partial f_t(\mathbf{x}_t, \mathbf{y}_t^K)}{\partial \mathbf{x}} - \nabla F_t(\mathbf{x}_t)\right\|^2 \leq& \frac{27}{8}\left(\frac{\Delta_\beta L_\beta}{1-\nu} + L_3^2\right) + \frac{9B}{8(1-\nu)}\sum_{t=1}^T\left\|\mathcal{G}_\mathcal{X}(\mathbf{x}_{t}, \nabla F_{t}(\mathbf{x}_{t}), \alpha)\right\|^2 \nonumber\\
    & + \frac{9CH_{2,T}}{8(1-\nu)}. \label{OBBO_2}
\end{align}

Now remind of Lemma~\ref{lem:begin}, with $\widetilde{\nabla}f_t(\mathbf{x}_t, \mathbf{y}_{t+1}, \mathbf{v}_{t+1}) = \frac{\partial f_t(\mathbf{x}_t, \mathbf{y}_t^K)}{\partial\mathbf{x}}$, we naturally have
\begin{align}
    \sum_{t=1}^{T} \left\| \mathcal{G}_\mathcal{X}\left( \mathbf{x}_t, \nabla f_t(\mathbf{x}_t, \mathbf{y}_t^*(\mathbf{x}_t)), \gamma \right) \right\|^2 
    \leq& \frac{2}{\gamma\theta} \sum_{t=1}^{T} \left( f_t(\mathbf{x}_t, \mathbf{y}_t^*(\mathbf{x}_t)) - f_t(\mathbf{x}_{t+1}, \mathbf{y}_t^*(\mathbf{x}_{t+1})) \right) \nonumber\\
    & + \left(2+\frac{1}{\lambda\theta}\right) \sum_{t=1}^{T} \left\| \nabla f_t(\mathbf{x}_t, \mathbf{y}_t^*(\mathbf{x}_t)) - \frac{\partial f_t(\mathbf{x}_t, \mathbf{y}_t^K)}{\partial\mathbf{x}} \right\|^2. \label{OBBO_3}
\end{align}
we still set $\alpha \leq \frac{1}{2L_F}$ and $\lambda=1$, $2 + \frac{1}{\lambda\theta} \leq 6$, the same as in Theorem~\ref{thm:CIGO_upper}, also with Lemma~\ref{lem:2M+V_T}, it then holds that
\begin{align*}
    \sum_{t=1}^{T} \left\| \mathcal{G}_\mathcal{X}\left( \mathbf{x}_t, \nabla F_t(\mathbf{x}_t), \gamma \right) \right\|^2 \leq& \frac{4Q + 2V_T}{\gamma\theta} + \frac{51}{4}\left(\frac{\Delta_\beta L_\beta}{1-\nu} + L_3^2\right) \\
    & + \frac{27B}{4(1-\nu)}\sum_{t=1}^T\left\|\mathcal{G}_\mathcal{X}(\mathbf{x}_{t}, \nabla F_{t}(\mathbf{x}_{t}), \alpha)\right\|^2 \\
    & + \frac{27CH_{2,T}}{4(1-\nu)} + 6\left\|\nabla F_t(\mathbf{x}_t) - \frac{\partial f_1(\mathbf{x}_1, \mathbf{y}_1^K)}{\partial\mathbf{x}}\right\|^2,
\end{align*}
with $B = 12\alpha^2C_{\mu_g}L_\beta\kappa_g^2$ in Lemma B.3 \cite{bohne2024online}, 
\begin{align}
    &\left(1 - \frac{81\alpha^2C_{\mu_g}L_\beta\kappa_g^2}{1-\nu}\right)\sum_{t=1}^{T} \left\| \mathcal{G}_\mathcal{X}\left( \mathbf{x}_t, \nabla F_t(\mathbf{x}_t), \gamma \right) \right\|^2 \\
    \leq& \frac{4Q + 2V_T}{\gamma\theta} + \frac{51}{4}\left(\frac{\Delta_\beta L_\beta}{1-\nu} + L_3^2\right) + \frac{27CH_{2,T}}{4(1-\nu)} + 6\left\|\nabla F_t(\mathbf{x}_t) - \frac{\partial f_1(\mathbf{x}_1, \mathbf{y}_1^K)}{\partial\mathbf{x}}\right\|^2. \label{OBBO_4}
\end{align}
We set
\begin{align*}
    \alpha\leq \frac{\sqrt{1-\nu}}{9\kappa_g\sqrt{C_{\mu_g}L_\beta}} \quad \text{and} \quad c := 1 - \frac{81\alpha^2C_{\mu_g}L_\beta\kappa_g^2}{1-\nu},
\end{align*}
and substitute into (\ref{OBBO_4}) to get
\begin{align*}
    \sum_{t=1}^{T} \left\| \mathcal{G}_\mathcal{X}\left( \mathbf{x}_t, \nabla F_t(\mathbf{x}_t), \gamma \right) \right\|^2 \leq& \frac{4Q + 2V_T}{c\gamma\theta} + \frac{51}{4c}\left(\frac{\Delta_\beta L_\beta}{1-\nu} + L_3^2\right) + \frac{27CH_{2,T}}{4c(1-\nu)} \\
    & + \frac{6}{c}\left\|\nabla F_t(\mathbf{x}_t) - \frac{\partial f_1(\mathbf{x}_1, \mathbf{y}_1^K)}{\partial\mathbf{x}}\right\|^2 \\
    =& O(1 + V_T + H_{2,T}).
\end{align*}
Thus we finish the proof.
\end{proof}

\subsection{Proof of SOGD in Deterministic Online Bilevel Optimization}\label{sec:proof_SOGD}

In the work of \citet{nazari2025stochastic}, the objective functions $f_t(\cdot)$, $g_t(\cdot)$ in problem (\ref{eq:obo}) they studied had the following random noise:
\begin{align*}
    f_t(\mathbf{x}, \mathbf{y}_t^*(\mathbf{x})) = \mathbb{E}_{\xi_t\sim\mathcal{D}_f}\left[f_t(\mathbf{x}, \mathbf{y}_t^*(\mathbf{x});\xi_t)\right], \quad g_t(\mathbf{x}, \mathbf{y}) = \mathbb{E}_{\zeta_t\sim\mathcal{D}_g}\left[g_t(\mathbf{x}, \mathbf{y};\zeta_t)\right],
\end{align*}
where $\mathcal{D}_f, \mathcal{D}_g$ are data distributions. The unbiased stochastic gradient is revealed, and the variance follows the bounded assumption as follows:
\begin{align*}
    &\mathbb{E} \left[ \| \nabla_\mathbf{x} f_t(\mathbf{x}, \mathbf{y}; \xi_t) - \nabla_\mathbf{x} f_t(\mathbf{x}, \mathbf{y}) \|^2 \right] \leq \sigma_{f\mathbf{x}}^2, \quad \mathbb{E} \left[ \| \nabla_\mathbf{y} g_t(\mathbf{x}, \mathbf{y}; \zeta) - \nabla_\mathbf{y} g_t(\mathbf{x}, \mathbf{y}) \|^2 \right] \leq \sigma_{g\mathbf{y}}^2, \\
    &\mathbb{E} \left[ \| \nabla^2_{\mathbf{y}\mathbf{y}} g_t(\mathbf{x}, \mathbf{y}; \zeta) - \nabla^2_{\mathbf{y}\mathbf{y}} g_t(\mathbf{x}, \mathbf{y}) \|^2 \right] \leq \sigma_{g\mathbf{y}\mathbf{y}}^2, \quad \mathbb{E} \left[ \| \nabla^2_{\mathbf{x}\mathbf{y}} g_t(\mathbf{x}, \mathbf{y}; \zeta) - \nabla^2_{\mathbf{x}\mathbf{y}} g_t(\mathbf{x}, \mathbf{y}) \|^2 \right] \leq \sigma_{g\mathbf{x}\mathbf{y}}^2, \\
    &\mathbb{E} \left[ \| \nabla_\mathbf{y} f_t(\mathbf{x}, \mathbf{y}; \xi_t) - \nabla_\mathbf{y} f_t(\mathbf{x}, \mathbf{y}) \|^2 \right] \leq \sigma_{f\mathbf{y}}^2,
\end{align*}
they also define $\sigma^2 := \sigma_{g\mathbf{y}}^2 + \sigma_{f\mathbf{y}}^2 + \sigma_{g\mathbf{y}\mathbf{y}}^2 + \sigma_{g\mathbf{x}\mathbf{y}}^2 + \sigma_{f\mathbf{x}}^2$.

We now briefly demonstrate the improved bound of SOGD in a deterministic environment, i.e., $\sigma = 0$.
\begin{theorem}\label{thm:SOGD}
Under Assumptions~\ref{asm1}-\ref{asm3}, let $\alpha_t \equiv\alpha$, $\beta_t\equiv\beta=c_\beta\alpha$, $\delta_t\equiv\delta=c_\delta\alpha$, $\gamma_t\equiv\gamma=c_\gamma\alpha^2$, $\eta_t\equiv\eta=c_\eta\alpha^2$ and $\lambda_t\equiv\lambda=c_\lambda\alpha^2$ with some constants $c$, $c_\beta$, $c_\delta$, $c_\gamma$, $c_\eta$ and $c_\lambda$, SOGD \cite{nazari2025stochastic} can guarantees the following bound on the bilevel local regret defined in (\ref{eq:reg}) for problem (\ref{eq:obo}):
\begin{align*}
    \mathrm{Reg}(T) \leq O\left(1 + V_T + H_{2,T} + E_{2,T} + P_T\right),
\end{align*}
where $P_T$ is defined in (\ref{eq:P_T}).
\end{theorem}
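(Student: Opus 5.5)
The proof will follow the template already used for FSOBO (Theorem~\ref{thm:CTHO_upper}) and SOBOW (Theorem~\ref{thm:CIGO_upper}), with one extra ingredient: the momentum search direction of SOGD. With $\sigma=0$, every expectation in the analysis of \citet{nazari2025stochastic} collapses to a deterministic quantity, and all variance terms vanish. The starting point is Lemma~\ref{lem:begin}, applied with $\widetilde{\nabla} f_t$ replaced by the momentum direction $\mathbf{d}_t$ that SOGD actually uses to update $\mathbf{x}_{t+1}$. Combined with Lemma~\ref{lem:2M+V_T}, this bounds $\mathrm{Reg}(T)$ by $O((Q+V_T)/\gamma)$ plus a constant times $\sum_t\|\nabla f_t(\mathbf{x}_t,\mathbf{y}_t^*(\mathbf{x}_t))-\mathbf{d}_t\|^2$. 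So the task reduces to controlling this accumulated direction error.

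The error splits into two parts: $\|\mathbf{d}_t-\widetilde{\nabla}f_t(\mathbf{x}_t,\mathbf{y}_{t+1},\mathbf{v}_{t+1})\|^2$ and $\|\widetilde{\nabla}f_t-\nabla f_t\|^2$.

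For the second part, I will reuse (\ref{A.14_1}): it is bounded by $\Delta_t=\kappa_F\mu_g^2\|\mathbf{y}_{t+1}-\mathbf{y}_t^*(\mathbf{x}_t)\|^2+8L_{g,1}^2\|\mathbf{v}_{t+1}-\mathbf{w}_t^*\|^2$. The single-step recursions of Lemma~\ref{lem:y_bound_2} and Lemma~\ref{lem:v_bound_2} apply verbatim to the one-step $\mathbf{y}$- and $\mathbf{v}$-updates of SOGD, with step sizes $\alpha$ and $\beta=c_\beta\alpha$. This gives $\Delta_t\le\rho\Delta_{t-1}$ plus four error terms:
\begin{itemize}
\item $C_\mathbf{x}\|\mathbf{x}_t-\mathbf{x}_{t-1}\|^2$,
\item the $H_{2,T}$-increment,
\item the $E^f_{\mathbf{y},T}$-increment,
\item the $E^g_{\mathbf{y}\mathbf{y},T}$-increment.
\end{itemize}
Unrolling exactly as in (\ref{A.14_3}) and summing yields $O(1+H_{2,T}+E_{2,T})+\frac{C_\mathbf{x}}{1-\rho}\sum_t\|\mathbf{x}_t-\mathbf{x}_{t-1}\|^2$.

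For the momentum part, I will write the deterministic SOGD recursion as $\mathbf{d}_{t}=(1-\eta)\mathbf{d}_{t-1}+\eta\widetilde{\nabla}f_t$, possibly with a correction term whose coefficient is $\lambda$. Subtracting $\widetilde{\nabla}f_t$ gives $\mathbf{e}_t=(1-\eta)\mathbf{e}_{t-1}+(1-\eta)(\widetilde{\nabla}f_{t-1}-\widetilde{\nabla}f_t)$. Young's inequality with parameter $\eta$ then gives $\|\mathbf{e}_t\|^2\le(1-\eta/2)\|\mathbf{e}_{t-1}\|^2+\frac{2}{\eta}\|\widetilde{\nabla}f_{t}-\widetilde{\nabla}f_{t-1}\|^2$. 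The consecutive difference of the estimator splits three ways:
\begin{itemize}
\item a spatial part, Lipschitz in $(\mathbf{x},\mathbf{y},\mathbf{v})$ and hence bounded by $\|\mathbf{x}_t-\mathbf{x}_{t-1}\|^2+\|\mathbf{y}_{t+1}-\mathbf{y}_t\|^2+\|\mathbf{v}_{t+1}-\mathbf{v}_t\|^2$;
\item a temporal part, $\sup_\mathbf{z}\|\nabla_\mathbf{x}f_t-\nabla_\mathbf{x}f_{t-1}\|^2+\sup_\mathbf{z}\|\nabla^2_{\mathbf{x}\mathbf{y}}g_t-\nabla^2_{\mathbf{x}\mathbf{y}}g_{t-1}\|^2$, which I define as $P_T$ in (\ref{eq:P_T}) after summation;
\item the remaining $\nabla_\mathbf{y}f$ and $\nabla^2_{\mathbf{y}\mathbf{y}}g$ variations, which are absorbed into $E_{2,T}$.
\end{itemize}
The $\mathbf{y}$- and $\mathbf{v}$-increments are $O(\alpha^2)$ times the gradient norms. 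Those norms are in turn controlled by $\Delta_t$ and by the negative term $-\|\nabla_\mathbf{y}g_t\|^2$ extracted in Lemma~\ref{lem:y_bound_2}.

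The key step, and the main obstacle, is closing the coupled system with the prescribed step-size scalings. Because $\eta=c_\eta\alpha^2$, the momentum recursion inflates its perturbations by $1/\eta^2\sim\alpha^{-4}$ after summation. Two terms must therefore be small enough to be absorbed:
\begin{itemize}
\item the $\sum\|\mathbf{x}_t-\mathbf{x}_{t-1}\|^2=\gamma^2\sum\|\mathcal{G}_\mathcal{X}(\mathbf{x}_t,\mathbf{d}_t,\gamma)\|^2$ feedback, where $\gamma=c_\gamma\alpha^2$ contributes $\gamma^2/\eta^2=O(c_\gamma^2/c_\eta^2)$;
\item the $\mathbf{y},\mathbf{v}$ increments, which contribute $\alpha^2/\eta^2$ times $\Delta_t$.
\end{itemize}
I would handle this with a single Lyapunov potential, $\Psi_t=\|\mathbf{e}_t\|^2+a\Delta_t$ with $a$ chosen proportional to $1/\eta$. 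I will then choose $c_\gamma,c_\lambda$ small relative to $c_\eta$ and $c_\beta,c_\delta$ so that all cross terms are dominated by the contractions $\eta/2$ and $1-\rho$. If the naive $1/\eta$ weighting fails, I fall back on SOGD's correction term (the $\lambda$-scaled difference $\widetilde{\nabla}f_t(\mathbf{x}_t)-\widetilde{\nabla}f_t(\mathbf{x}_{t-1})$), which cancels the spatial drift at first order. Once $\Psi$ telescopes, the accumulated error is $O(1+H_{2,T}+E_{2,T}+P_T)+c\sum_t\|\mathcal{G}_\mathcal{X}(\mathbf{x}_t,\nabla f_t,\gamma)\|^2$ with $c<1/12$. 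Substituting into Lemma~\ref{lem:begin} with $\lambda=1$ and $\gamma\le 1/(2L_F)$, and moving this last term to the left exactly as in the proof of Theorem~\ref{thm:CTHO_upper}, gives $\mathrm{Reg}(T)\le O(1+V_T+H_{2,T}+E_{2,T}+P_T)$.
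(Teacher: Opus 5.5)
Your proposal does not analyze SOGD. It analyzes an algorithm you reconstruct, and that reconstruction conflicts with SOGD's parameterization.

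\textbf{Parameter roles.} In SOGD, $\alpha_t$ is the upper-level step size: the inequality of Nazari et al.\ that the paper's proof starts from (their (122)) weights $\|\mathcal{G}_\mathcal{X}(\mathbf{x}_t,\nabla f_t(\mathbf{x}_t,\mathbf{y}_t^*(\mathbf{x}_t)),\alpha_t)\|^2$ by $\alpha_t/2$. The $\alpha^2$-scaled parameters $\gamma,\eta,\lambda$ are momentum weights. You instead take $\alpha$ as the $\mathbf{y}$-step and $\gamma=c_\gamma\alpha^2$ as the outer step, give $\delta$ no role, and make $\lambda$ an optional correction coefficient.

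\textbf{Lower-level updates.} You assume plain one-step $\mathbf{y}$- and $\mathbf{v}$-updates, so that Lemmas~\ref{lem:y_bound_2} and~\ref{lem:v_bound_2} apply verbatim. But SOGD's bound contains $G_{\mathbf{y},T}$, the variation of $\nabla_\mathbf{y}g$ along the iterates, on top of $H_{2,T}$. Together with the three $\alpha^2$-scaled weights, this indicates that SOGD likely uses momentum-type directions for the lower-level variables as well. In that case each of those recursions acquires a direction-error term that needs its own recursion.

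\textbf{The closing step.} Closing the coupled system is the decisive step, and you leave it conditional (``if the naive $1/\eta$ weighting fails, I fall back on\dots''). So the argument is never completed for a fixed update rule.

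\textbf{The variation term.} You propose to define $P_T$ as the summed sup-variation of $\nabla_\mathbf{x}f$ and $\nabla^2_{\mathbf{x}\mathbf{y}}g$. But $P_T=D_{\mathbf{x},T}+G_{\mathbf{y},T}+G_{\mathbf{x}\mathbf{y},T}$ is already fixed by (\ref{eq:P_T}), with trajectory-evaluated $G$-terms. Even a closed argument would therefore prove a bound in a different variation quantity, one that does not imply the stated bound.

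The idea you are missing is that no new analysis is needed. The paper takes Nazari et al.'s inequality (122),
$\sum_t\frac{\alpha_t}{2}\|\mathcal{G}_\mathcal{X}(\cdot)\|^2+\Lambda\le O\big(V_T+H_{2,T}/\alpha_T+\frac{\sigma^2}{b}\sum_t\alpha_t^3+\Psi_T/\alpha_T\big)$,
and sets $\sigma=0$. The variance term $\frac{\sigma^2}{b}\sum_t\alpha_t^3$ is what forced the decaying schedule $\alpha_t=(c+t)^{-1/3}$. Once it vanishes, one may take $\alpha_t\equiv\alpha$ constant. Then the prefactors $1/\alpha_T$ and $1/\alpha_T^2$ are constants, and the regret is $O(1+V_T+H_{2,T}+\Psi_T)$. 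Finally, regroup $\Psi_T$ using $G_{\mathbf{y}\mathbf{y},T}+D_{\mathbf{y},T}\le E_{2,T}$ and $D_{\mathbf{x},T}+G_{\mathbf{y},T}+G_{\mathbf{x}\mathbf{y},T}=P_T$, which yields exactly the stated bound.
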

\begin{proof}
We begin with equation (122) in their proof of Section C.6, shown as:
\begin{align*}
    &\sum_{t=1}^{T} \frac{\alpha_t}{2} \mathbb{E} \left[ \left\| \mathcal{G}_{\mathcal{X}} \left( \mathbf{x}_t, \nabla f_t (\mathbf{x}_t, \mathbf{y}^*_t(\mathbf{x}_t)), \alpha_t \right) \right\|^2 \right] + \Lambda \\
    \leq& O \left( V_T {+} \frac{H_{2,T}}{\alpha_T} {+} \frac{\sigma^2}{b} \sum_{t=1}^{T} \alpha_t^3 {+} \frac{G_{\mathbf{y},T} {+} G_{\mathbf{y}\mathbf{y},T} {+} G_{\mathbf{x}\mathbf{y},T} {+} D_{\mathbf{x},T} {+} D_{\mathbf{y}, T}}{\alpha_T}\right), 
\end{align*}
where 
\begin{align}
    &D_{\mathbf{x}, T} := \sum_{t=2}^T\sup_{\mathbf{x},\mathbf{y}}\left\|\nabla_\mathbf{x}f_{t-1}(\mathbf{x}, \mathbf{y}) - \nabla_\mathbf{x}f_t(\mathbf{x}, \mathbf{y})\right\|^2, \nonumber\\
    &D_{\mathbf{y},T} := \sum_{t=2}^T\sup_{\mathbf{x},\mathbf{y}}\left\|\nabla_\mathbf{y}f_{t-1}(\mathbf{x}, \mathbf{y}) - \nabla_\mathbf{y}f_t(\mathbf{x}, \mathbf{y})\right\|^2, \nonumber\\
    &G_{\mathbf{y}\mathbf{y},T} := \sum_{t=2}^T \left\|\nabla_{\mathbf{y}\mathbf{y}}^2g_{t-1}(\mathbf{x}_t, \mathbf{y}_t) - \nabla_{\mathbf{y}\mathbf{y}}^2g_t(\mathbf{x}_t, \mathbf{y}_t)\right\|^2, \nonumber\\
    &G_{\mathbf{x}\mathbf{y},T} := \sum_{t=2}^T \left\|\nabla_{\mathbf{x}\mathbf{y}}^2g_{t-1}(\mathbf{x}_t, \mathbf{y}_t) - \nabla_{\mathbf{x}\mathbf{y}}^2g_t(\mathbf{x}_t, \mathbf{y}_t)\right\|^2, \nonumber\\
    &G_{\mathbf{y},T} := \sum_{t=2}^T \left\|\nabla_{\mathbf{y}}g_{t-1}(\mathbf{x}_t, \mathbf{y}_t) - \nabla_{\mathbf{y}}g_t(\mathbf{x}_t, \mathbf{y}_t)\right\|^2, \nonumber\\
    &\Psi_T := G_{\mathbf{y},T} + G_{\mathbf{y}\mathbf{y},T} + G_{\mathbf{x}\mathbf{y},T} + D_{\mathbf{x},T} + D_{\mathbf{y}, T}. \label{eq:Psi_T}
\end{align}
With $\Lambda \geq - \frac{\sigma^2}{\alpha_0}$ and $\alpha_t = \frac{1}{\sqrt[3]{c+t}}$ for some constant $c$ is set, which means $\sum_{t=1}^T\alpha_t^3 \leq \log(T+1)$, the stochastic bilevel local regret bound is achieved as 
\begin{align}
    \sum_{t=1}^{T} \mathbb{E} \left[ \left\| \mathcal{G}_{\mathcal{X}} \left( \mathbf{x}_t, \nabla f_t (\mathbf{x}_t, \mathbf{y}^*_t(\mathbf{x}_t)), \alpha_t \right) \right\|^2 \right] \leq& O\left(\frac{1}{\alpha_T}(\sigma^2 + V_T) + \frac{1}{\alpha_T^2}\Psi_T\right) \label{A.15_1}\\
    =& O\left(T^{1/3}(\sigma^2 + V_T) + T^{2/3}\Psi_T\right). \nonumber
\end{align}
Now consider the deterministic situation, which means $\sigma = 0$ and (\ref{A.15_1}) becomes
\begin{align*}
    &\sum_{t=1}^{T} \left\| \mathcal{G}_{\mathcal{X}} \left( \mathbf{x}_t, \nabla f_t (\mathbf{x}_t, \mathbf{y}^*_t(\mathbf{x}_t)), \alpha_t \right) \right\|^2 \\
    \leq& O \left(\frac{1}{\alpha_T}(1 + V_T) + \frac{1}{\alpha_T^2}(H_{2,T} + G_{\mathbf{y},T} + G_{\mathbf{y}\mathbf{y},T} + G_{\mathbf{x}\mathbf{y},T} + D_{\mathbf{x},T} + D_{\mathbf{y}, T})\right) \\
    \leq& O\left(\frac{1}{\alpha_T}(1 + V_T) + \frac{1}{\alpha_T^2}(H_{2,T} + E_{2,T} + P_T\right),
\end{align*}
where $E_{2,T}$ is defined in Theorem~\ref{thm:CTHO_upper} and
\begin{align}
    P_T := D_{\mathbf{x},T} + G_{\mathbf{y},T} + G_{\mathbf{x}\mathbf{y},T}. \label{eq:P_T}
\end{align}
Note that we can now set $\alpha_t \equiv \alpha = \frac{1}{c}$, since term $\sum_{t=1}^T\alpha_t^3$ is already zero, and the step size no longer needs to decay with $t$ to ensure this term is sublinear. We thus obtain the following upper-bound guarantee for SOGD in the noise-free setting:
\begin{align*}
    \mathrm{Reg}(T) \leq O\left(1 + V_T + H_{2,T} + E_{2,T} + P_T\right).
\end{align*}
\end{proof}

\section{Proofs of Section~\ref{sec:4} under Window-Averaged Bilevel Local Regret}\label{sec:proof_B}

In this section, we provide proofs of Theorems~\ref{thm:unres_win_dl},~\ref{thm:unres_win_lower} and~\ref{thm:unres_win_sl} under the window averaged bilevel local regret presented in Section~\ref{sec:4}. 
In Section~\ref{sec:proof_B_lem}, we present the key Lemmas~\ref{lem:y_bound_win_3} and~\ref{lem:v_bound_win_3} that characterize how the time variations of $\vy$ and $\vv$ are averaged by the window.

 \subsection{Preliminary Lemmas}\label{sec:proof_B_lem}

Now we consider the window-averaged hypergradient.
Note that $\widehat{f}_{t,w}(\cdot)$ and $\widehat{g}_{t,w}(\cdot)$ are linear combinations of $f_t(\cdot)$ and $g_t(\cdot)$ for some $t\in[T]$, and therefore also satisfy assumptions~\ref{asm1}-\ref{asm3}. We similarily have
\begin{align*}
    \nabla\mathbf{y}_{t,w}^*(\mathbf{x})\nabla_{\mathbf{y}\mathbf{y}}^2\widehat{g}_{t,w}&(\mathbf{x}, \mathbf{y}_{t,w}^*(\mathbf{x})) + \nabla_{\mathbf{x}\mathbf{y}}^2\widehat{g}_{t,w}(\mathbf{x}, \mathbf{y}_{t,w}^*(\mathbf{x}))=0,
\end{align*}
which follows from $\nabla_\mathbf{y}\widehat{g}_{t,w}(\mathbf{x}, \mathbf{y}_{t,w}^*(\mathbf{x})) = 0$ and reveals the closed form of window-averaged implicit gradient $\nabla\mathbf{y}_{t,w}^*(\mathbf{x})$. Similar to (\ref{eq:hypergrad}), $\nabla \widehat{f}_{t,w}(\mathbf{x}, \mathbf{y}_{t,w}^*(\mathbf{x}))$ is then obtained by
\begin{align*}
    \nabla \widehat{f}_{t,w}(\mathbf{x}, \mathbf{y}_{t,w}^*(\mathbf{x})) = \nabla_\mathbf{x}\widehat{f}_{t,w}(\mathbf{x}, \mathbf{y}_{t,w}^*(\mathbf{x})) - \nabla_{\mathbf{x}\mathbf{y}}^2\widehat{g}_{t,w}(\mathbf{x}, \mathbf{y}_{t,w}^*(\mathbf{x}))\mathbf{v}_{t,w}^*(\mathbf{x}, \mathbf{y}_{t,w}^*(\vx)).
\end{align*}
Here $\mathbf{v}_{t,w}^*(\mathbf{x}):= \mathop{\arg\min}_{\mathbf{v}\in\mathcal{V}}\widehat{\Phi}_{t,w}(\mathbf{x}, \mathbf{y}_{t,w}^*(\mathbf{x}), \mathbf{v})$, where
\begin{align*}
    \widehat{\Phi}_{t,w}(\mathbf{x}, \mathbf{y}, \mathbf{v}) := \frac{1}{2}\mathbf{v}^\top\nabla_{\mathbf{y}\mathbf{y}}^2\widehat{g}_{t,w}(\mathbf{x}, \mathbf{y})\mathbf{v} - \mathbf{v}^\top\nabla_\mathbf{y}\widehat{f}_t(\mathbf{x},\mathbf{y}). 
\end{align*}
Note that we also have
\begin{align*}
    \widehat{\Phi}_{t,w}(\mathbf{x}, \mathbf{y}, \mathbf{v}) =& \frac{1}{2}\mathbf{v}^{\top}\nabla_{\mathbf{y}\mathbf{y}}^2\widehat{g}_{t,w}(\mathbf{x}, \mathbf{y})\mathbf{v} - \mathbf{v}^{\top}\nabla_\mathbf{y}\widehat{f}_{t,w}(\mathbf{x}, \mathbf{y}) \\
    =& \frac{1}{2W}\sum_{i=0}^{w-1}\eta^i \mathbf{v}^\top \nabla_{\mathbf{y}\mathbf{y}}^2g_{t-i}(\mathbf{x}, \mathbf{y}) - \frac{1}{W}\sum_{i=0}^{w-1}\eta^i \mathbf{v}^\top\nabla_\mathbf{y} f_{t-i}(\mathbf{x}, \mathbf{y}),
\end{align*}
and
\begin{align*}
    \nabla_\mathbf{v}\widehat{\Phi}_{t,w}(\mathbf{x}, \mathbf{y}, \mathbf{v}) =& \frac{1}{W} \sum_{i=0}^{w-1}\eta^i\nabla_{\mathbf{y}\mathbf{y}}^2g_{t-i}(\mathbf{x}, \mathbf{y})\mathbf{v} - \frac{1}{W}\sum_{i=0}^{w-1}\eta^i\nabla_\mathbf{y} f_{t-i}(\mathbf{x}, \mathbf{y}) \\
    =& \frac{1}{W}\sum_{i=0}^{w-1}\eta^i\nabla_\mathbf{v}\Phi_{t-i}(\mathbf{x}, \mathbf{y}, \mathbf{v}).
\end{align*}

Due to the $\mu_g$-strongly convex and $L_{g,1}$-smooth of $\widehat{g}_{t,w}(\mathbf{x}, \cdot)$ for any $\mathbf{x}\in\mathcal{X}$, we have
\begin{align}
    & \left\|\nabla_\mathbf{y}\widehat{g}_{t,w}(\mathbf{x}, \mathbf{y}_1) - \nabla_\mathbf{y}\widehat{g}_{t,w}(\mathbf{x}, \mathbf{y}_2)\right\| \leq L_{g,1}\left\|\mathbf{y}_1 - \mathbf{y}_2\right\| \label{eq:sc_4} \\
    & \left\|\nabla_\mathbf{y}\widehat{g}_{t,w}(\mathbf{x}, \mathbf{y}_1) - \nabla_\mathbf{y}\widehat{g}_{t,w}(\mathbf{x}, \mathbf{y}_2)\right\| \geq \mu_g\left\|\mathbf{y}_1 - \mathbf{y}_2\right\| \label{eq:sc_5}.
\end{align}

\begin{lemma}\label{lem:y_bound_win_1}
Under Assumptions~\ref{asm1},~\ref{asm2} and~\ref{asm4}, for all $t\in[T]$ and any $\mathbf{x}\in\mathcal{X}$ we have
\begin{align*}
    \left\|\mathbf{y}_{t-1,w}^*(\mathbf{x}) - \mathbf{y}_{t,w}^*(\mathbf{x})\right\| \leq \frac{(1+\eta^w)L_{g,1}D}{\mu_gW}.
\end{align*}
\end{lemma}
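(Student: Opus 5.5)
We bound the drift of the window-averaged inner optimum by comparing the first-order optimality conditions of two consecutive averaged inner problems. Fix $\mathbf{x}\in\mathcal{X}$ and write $\mathbf{y}_1 := \mathbf{y}_{t-1,w}^*(\mathbf{x})$ and $\mathbf{y}_2 := \mathbf{y}_{t,w}^*(\mathbf{x})$. Since $\widehat{g}_{t,w}(\mathbf{x},\cdot)$ is $\mu_g$-strongly convex and $\nabla_\mathbf{y}\widehat{g}_{t,w}(\mathbf{x},\mathbf{y}_2)=0$, \eqref{eq:sc_5} gives
\begin{align*}
\mu_g\|\mathbf{y}_1-\mathbf{y}_2\| \leq \|\nabla_\mathbf{y}\widehat{g}_{t,w}(\mathbf{x},\mathbf{y}_1)\| = \|\nabla_\mathbf{y}\widehat{g}_{t,w}(\mathbf{x},\mathbf{y}_1) - \nabla_\mathbf{y}\widehat{g}_{t-1,w}(\mathbf{x},\mathbf{y}_1)\|,
\end{align*}
where the equality uses $\nabla_\mathbf{y}\widehat{g}_{t-1,w}(\mathbf{x},\mathbf{y}_1)=0$. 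It therefore suffices to bound the difference of the two averaged gradients at the single point $\mathbf{y}_1$ by $(1+\eta^w)L_{g,1}D/W$.

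Next we expand the definition \eqref{eq:avgfunc}. Shifting the index in $\widehat{g}_{t-1,w}$ gives
\begin{align*}
W\big(\nabla_\mathbf{y}\widehat{g}_{t,w} - \nabla_\mathbf{y}\widehat{g}_{t-1,w}\big) = \nabla_\mathbf{y} g_t + \sum_{i=1}^{w-1}(\eta^i-\eta^{i-1})\nabla_\mathbf{y} g_{t-i} - \eta^{w-1}\nabla_\mathbf{y} g_{t-w},
\end{align*}
with all terms evaluated at $(\mathbf{x},\mathbf{y}_1)$. The scalar coefficients add to zero: $1+(\eta^{w-1}-1)-\eta^{w-1}=0$. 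Because of this, each $\nabla_\mathbf{y} g_{s}(\mathbf{x},\mathbf{y}_1)$ can be replaced by $\nabla_\mathbf{y} g_{s}(\mathbf{x},\mathbf{y}_1)-\nabla_\mathbf{y} g_{s}(\mathbf{x},\mathbf{y}_s^*(\mathbf{x}))$ without changing the sum, since each subtracted term is zero. By \eqref{eq:sc_1} and Assumption~\ref{asm4}, each such difference has norm at most $L_{g,1}\|\mathbf{y}_1-\mathbf{y}_s^*(\mathbf{x})\|\le L_{g,1}D$. This step requires $\mathbf{y}_1\in\mathcal{Y}$. I would justify it by noting that $\mathbf{y}_1$ is a minimizer of a positive combination of the $g_s$ whose individual minimizers all lie in $\mathcal{Y}$. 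If $\mathcal{Y}$ is not convex, I would instead interpret $D$ as a bound on the relevant diameter, which is the intended reading of Assumption~\ref{asm4}.

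The triangle inequality then bounds the expanded sum by $L_{g,1}D\big(1+\sum_{i=1}^{w-1}(\eta^{i-1}-\eta^i)+\eta^{w-1}\big)$. Using $\eta\le 1$, this coefficient telescopes to $1+(1-\eta^{w-1})+\eta^{w-1}=2$. That gives the constant $2$ rather than $1+\eta^w$.

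The main obstacle is recovering the sharper constant $(1+\eta^w)$. To get it I would choose the reference point more carefully. The natural choice is to center at $\mathbf{y}_t^*(\mathbf{x})$ and use the telescoping more tightly. Alternatively, the paper's convention may be that $g_{t-w}$ enters through a term with coefficient $\eta^w$ (a recursive form $W\widehat{g}_{t,w}=g_t+\eta W\widehat{g}_{t-1,w}-\eta^w g_{t-w}$), with the remainder handled by the fact that $\widehat{g}_{t-1,w}$ has gradient zero at $\mathbf{y}_1$. That recursion gives $W\nabla_\mathbf{y}\widehat{g}_{t,w}(\mathbf{x},\mathbf{y}_1)=\nabla_\mathbf{y} g_t(\mathbf{x},\mathbf{y}_1)-\eta^w\nabla_\mathbf{y} g_{t-w}(\mathbf{x},\mathbf{y}_1)$. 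Each of the two terms is bounded by $L_{g,1}D$ after subtracting its zero gradient at its own optimum, which yields exactly $(1+\eta^w)L_{g,1}D$. I would use this recursive identity as the primary route and check the boundary convention $g_s\equiv 0$ for $s\le 0$ separately, where the bound holds trivially. Dividing by $\mu_g W$ then gives the claim.
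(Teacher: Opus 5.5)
Your primary route is exactly the paper's proof: you evaluate the identity $W\widehat{g}_{t,w}=g_t+\eta W\widehat{g}_{t-1,w}-\eta^w g_{t-w}$ at $\mathbf{y}_1=\mathbf{y}_{t-1,w}^*(\mathbf{x})$, where $\nabla_\mathbf{y}\widehat{g}_{t-1,w}$ vanishes, and then bound $\|\nabla_\mathbf{y} g_s(\mathbf{x},\mathbf{y}_1)\|\le L_{g,1}D$ for $s\in\{t,t-w\}$. The constant-$2$ detour is not needed. The step that fails as you justify it is $\|\mathbf{y}_1-\mathbf{y}_s^*(\mathbf{x})\|\le D$, i.e.\ $\mathbf{y}_1\in\mathcal{Y}$. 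When $d_2\ge 2$, the minimizer of a positive combination of strongly convex functions need not lie in the convex hull of the individual minimizers, so even a convex $\mathcal{Y}$ would not contain it. Concretely, the minimizer of $\frac12(\mathbf{y}-\mathbf{a})^\top A(\mathbf{y}-\mathbf{a})+\frac12(\mathbf{y}-\mathbf{b})^\top B(\mathbf{y}-\mathbf{b})$ is $\mathbf{a}+(A+B)^{-1}B(\mathbf{b}-\mathbf{a})$. Take $A=\mathrm{diag}(1,\epsilon)$ and $B$ the $45^\circ$ rotation of $\mathrm{diag}(\epsilon,1)$; then $\|(A+B)^{-1}B\|\to\sqrt{2}$ as $\epsilon\to 0$. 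So for suitable $\mathbf{b}-\mathbf{a}$ this minimizer lies at distance close to $\sqrt{2}\,\|\mathbf{b}-\mathbf{a}\|$ from $\mathbf{a}$, outside every set of diameter $\|\mathbf{b}-\mathbf{a}\|$ that contains $\mathbf{a}$.

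The paper's proof has the same hole. It applies $\|\nabla_\mathbf{y} g_t(\mathbf{x},\mathbf{y})\|\le L_{g,1}D$ for $\mathbf{y}\in\mathcal{Y}$ at $\mathbf{y}=\mathbf{y}_{t-1,w}^*(\mathbf{x})$ without comment. The constant $(1+\eta^w)L_{g,1}D/(\mu_gW)$ therefore rests on reading Assumption~\ref{asm4} as also covering the window-averaged minimizers. Your fallback is that reading, so state it as an assumption rather than try to derive it. If you use Assumption~\ref{asm4} only as written, the correct substitute is $\mu_g\|\mathbf{y}_1-\mathbf{y}_s^*(\mathbf{x})\|\le\|\nabla_\mathbf{y}\widehat{g}_{t-1,w}(\mathbf{x},\mathbf{y}_s^*(\mathbf{x}))\|\le L_{g,1}D$. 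This gives $\|\nabla_\mathbf{y} g_s(\mathbf{x},\mathbf{y}_1)\|\le\kappa_gL_{g,1}D$, hence the lemma with an extra factor $\kappa_g$, which only changes constants in the downstream $O(T/W)$ and $O(T/W^2)$ bounds.

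Separately, your claim that the boundary case holds trivially is wrong. If $g_s\equiv0$ for $s\le0$, then for $t<w$ the function $\widehat{g}_{t,w}$ is only $\big(\mu_g\sum_{i=0}^{t-1}\eta^i/W\big)$-strongly convex, and the lemma can fail outright. Take $\mu_g=L_{g,1}=1$, $g_1=\frac12\|\mathbf{y}-\mathbf{a}\|^2$, $g_2=\frac12\|\mathbf{y}-\mathbf{b}\|^2$ and $D=\|\mathbf{a}-\mathbf{b}\|$. The drift at $t=2$ is $D/(1+\eta)$, which exceeds $(1+\eta^w)D/W$ once $W>(1+\eta)(1+\eta^w)$. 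You need either a padding convention that keeps every $g_s$ $\mu_g$-strongly convex with minimizer in $\mathcal{Y}$, or a restriction to $t>w$. The paper is silent on this as well.
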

\begin{proof}
It begins (\ref{eq:sc_5}), with $\mathbf{y}_2 = \mathbf{y}_{t,w}^*(\mathbf{x})$, we have
\begin{align*}
    \mu_g\left\|\mathbf{y} - \mathbf{y}_{t,w}^*(\mathbf{x})\right\| \leq \left\|\nabla_\mathbf{y}\widehat{g}_{t,w}(\mathbf{x}, \mathbf{y}) - \nabla_\mathbf{y}\widehat{g}_{t,w}(\mathbf{x}, \mathbf{y}_{t,w}^*(\mathbf{x}))\right\| = \left\|\nabla_\mathbf{y}\widehat{g}_{t,w}(\mathbf{x}, \mathbf{y})\right\|,
\end{align*}
let $\mathbf{y} = \mathbf{y}_{t-1,w}^*(\mathbf{x})$, it implies that
\begin{align*}
    \left\|\mathbf{y}_{t-1,w}^*(\mathbf{x}) - \mathbf{y}_{t,w}^*(\mathbf{x})\right\| \leq& \frac{1}{\mu_g}\left\|\nabla_\mathbf{y}\widehat{g}_{t,w}(\mathbf{x}, \mathbf{y}_{t-1,w}^*(\mathbf{x}))\right\|
    \\
    =& \frac{1}{\mu_g}\left\|\nabla_\mathbf{y}\widehat{g}_{t,w}(\mathbf{x}, \mathbf{y}_{t-1,w}^*(\mathbf{x})) - \eta\nabla_\mathbf{y}\widehat{g}_{t-1,w}(\mathbf{x}, \mathbf{y}_{t-1,w}^*(\mathbf{x}))\right\|,
\end{align*}
then it natually holds that
\begin{align*}
    &\left\|\nabla_\mathbf{y} \widehat{g}_{t,w}(\mathbf{x}, \mathbf{y}_{t-1,w}^*(\mathbf{x})) - \eta\nabla_\mathbf{y}\widehat{g}_{t-1,w}^*(\mathbf{x}, \mathbf{y}_{t-1,w}^*(\mathbf{x}))\right\| \\
    \leq& \frac{1}{W}\left\|\nabla_\mathbf{y} g_t(\mathbf{x}, \mathbf{y}_{t-1,w}^*(\mathbf{x})) - \eta^w\nabla_\mathbf{y} g_{t-w}(\mathbf{x}, \mathbf{y}_{t-1,w}^*(\mathbf{x}))\right\| \\
    & + \frac{1}{W}\left\|\sum_{i=1}^{w-1}\eta^i\left(\nabla_\mathbf{y} g_{t-i}(\mathbf{x}, \mathbf{y}_{t-1,w}^*(\mathbf{x})) - \nabla_\mathbf{y} g_{t-i}(\mathbf{x}, \mathbf{y}_{t-1,w}^*(\mathbf{x}))\right)\right\| \\
    =& \frac{1}{W}\left\|\nabla_\mathbf{y} g_t(\mathbf{x}, \mathbf{y}_{t-1,w}^*(\mathbf{x})) - \eta^w\nabla_\mathbf{y} g_{t-w}(\mathbf{x}, \mathbf{y}_{t-1,w}^*(\mathbf{x}))\right\|
    \leq \frac{(1+\eta^w)L_{g,1}D}{W},
\end{align*}
where in the last inequality, notice that for any $\mathbf{y}\in\mathcal{Y}$ defined in Assumption~\ref{asm4}, with $\mathbf{y}_2 = \mathbf{y}_t^*(\mathbf{x})$ in (\ref{eq:sc_1}),
\begin{align*}
    \left\|\nabla_\mathbf{y} g_t(\mathbf{x}, \mathbf{y})\right\| = \left\|\nabla_\mathbf{y} g_t(\mathbf{x}, \mathbf{y}) - \nabla_\mathbf{y} g_t(\mathbf{x}, \mathbf{y}_t^*(\mathbf{x}))\right\| \leq L_{g,1}\left\|\mathbf{y} - \mathbf{y}_t^*(\mathbf{x})\right\| \leq L_{g,1}D.
\end{align*}
Thus we finish the proof.
\end{proof}

\begin{lemma}\label{lem:v_bound_win_1}
Under Assumptions~\ref{asm1},~\ref{asm2} and~\ref{asm4}, for all $t\in[T]$ and any $\mathbf{x}\in\mathcal{X}$ we have
\begin{align*}
    \left\|\mathbf{v}_{t-1,w}^*(\mathbf{x}, \mathbf{y}) - \mathbf{v}_{t,w}^*(\mathbf{x}, \mathbf{y})\right\| \leq \frac{(1+\eta^w)L_{f,0}L_{g,1}}{\mu_g^2W}.
\end{align*}
\end{lemma}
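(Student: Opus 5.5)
The plan is to copy the argument of Lemma~\ref{lem:y_bound_win_1}, with $\widehat{\Phi}_{t,w}(\mathbf{x},\mathbf{y},\cdot)$ in place of $\widehat{g}_{t,w}(\mathbf{x},\cdot)$. Fix $\mathbf{x}\in\mathcal{X}$ and $\mathbf{y}$, and write $\mathbf{v}_t := \mathbf{v}_{t,w}^*(\mathbf{x},\mathbf{y})$.

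First I would remove the projection onto $\mathcal{V}$. The averaged Hessian $\nabla_{\mathbf{y}\mathbf{y}}^2\widehat{g}_{t,w}(\mathbf{x},\mathbf{y})$ is a convex combination of the $\nabla_{\mathbf{y}\mathbf{y}}^2 g_{t-i}$. Hence it is $\succeq \mu_g \mathbf{I}$ and has norm at most $L_{g,1}$. Likewise $\|\nabla_\mathbf{y}\widehat{f}_{t,w}\|\le L_{f,0}$. The argument of Lemma~\ref{lem:v_cons} then shows that the unconstrained minimizer of $\widehat{\Phi}_{t,w}(\mathbf{x},\mathbf{y},\cdot)$ lies in $\mathcal{V}$. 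Therefore it coincides with $\mathbf{v}_t$, and $\nabla_\mathbf{v}\widehat{\Phi}_{t,w}(\mathbf{x},\mathbf{y},\mathbf{v}_t)=0$ for every $t$.

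Next I would use $\mu_g$-strong convexity of $\widehat{\Phi}_{t,w}$ in $\mathbf{v}$, evaluated at the point $\mathbf{v}_{t-1}$:
\begin{align*}
\mu_g\|\mathbf{v}_{t-1}-\mathbf{v}_t\| \le \big\|\nabla_\mathbf{v}\widehat{\Phi}_{t,w}(\mathbf{x},\mathbf{y},\mathbf{v}_{t-1})\big\| = \big\|\nabla_\mathbf{v}\widehat{\Phi}_{t,w}(\mathbf{x},\mathbf{y},\mathbf{v}_{t-1}) - \eta\nabla_\mathbf{v}\widehat{\Phi}_{t-1,w}(\mathbf{x},\mathbf{y},\mathbf{v}_{t-1})\big\|.
\end{align*}
The subtracted term vanishes by the first step. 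The key identity is the telescoping of the exponential weights, obtained from the expansion $\nabla_\mathbf{v}\widehat{\Phi}_{t,w}=\frac1W\sum_{i=0}^{w-1}\eta^i\nabla_\mathbf{v}\Phi_{t-i}$:
\begin{align*}
\nabla_\mathbf{v}\widehat{\Phi}_{t,w} - \eta\nabla_\mathbf{v}\widehat{\Phi}_{t-1,w} = \frac{1}{W}\Big(\nabla_\mathbf{v}\Phi_t - \eta^w\nabla_\mathbf{v}\Phi_{t-w}\Big).
\end{align*}
All intermediate terms cancel exactly, as in Lemma~\ref{lem:y_bound_win_1}. This already gives the $1/W$ factor, and the triangle inequality gives the factor $(1+\eta^w)$. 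It remains to show that each single-round gradient $\|\nabla_\mathbf{v}\Phi_s(\mathbf{x},\mathbf{y},\mathbf{v}_{t-1})\|$ is at most $L_{f,0}L_{g,1}/\mu_g$.

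The main obstacle is getting exactly this per-round constant. The crude bound $\|\nabla^2_{\mathbf{y}\mathbf{y}}g_s\|\,\|\mathbf{v}_{t-1}\|+\|\nabla_\mathbf{y} f_s\| \le L_{g,1}L_{f,0}/\mu_g + L_{f,0}$ overshoots the target. The bound $\|\nabla^2_{\mathbf{y}\mathbf{y}}g_s(\mathbf{v}_{t-1}-\mathbf{v}_s^*(\mathbf{x},\mathbf{y}))\|\le L_{g,1}\cdot 2L_{f,0}/\mu_g$ also overshoots, by a factor of $2$. I would first try to exploit the structure $\mathbf{v}_{t-1}=\big(\sum_i\eta^i H_{t-1-i}\big)^{-1}\sum_i\eta^i b_{t-1-i}$, which has the same form as $\mathbf{v}_s^*=H_s^{-1}b_s$. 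Here I would use the fact that the residual $H_s\mathbf{v}_{t-1}-b_s$ lies in a ball of radius governed by $L_{g,1}L_{f,0}/\mu_g$. If this sharper estimate fails, I would accept the bound with an extra absolute constant, such as $(1+\eta^w)\cdot 2L_{f,0}L_{g,1}/(\mu_g^2W)$ or $(1+\eta^w)(L_{f,0}L_{g,1}+\mu_gL_{f,0})/(\mu_g^2W)$. The order $O(1/W)$ is all that the subsequent window-averaged analysis uses, so this weaker form would suffice.
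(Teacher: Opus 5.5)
Your route is the paper's route: strong convexity of $\widehat{\Phi}_{t,w}$ at $\mathbf{v}_{t-1,w}^*$, then the telescoping identity $\nabla_\mathbf{v}\widehat{\Phi}_{t,w}-\eta\nabla_\mathbf{v}\widehat{\Phi}_{t-1,w}=\frac1W(\nabla_\mathbf{v}\Phi_t-\eta^w\nabla_\mathbf{v}\Phi_{t-w})$, then a per-round bound. The paper closes the last step by simply asserting $\|\nabla_\mathbf{v}\Phi_s(\mathbf{x},\mathbf{y},\mathbf{v}_{t-1,w}^*)\|\le L_{g,1}L_{f,0}/\mu_g$. This transplants the bound $L_{g,1}D$ from Lemma~\ref{lem:y_bound_win_1}, but that analogy would need the diameter $2L_{f,0}/\mu_g$ of $\mathcal{V}$, not its radius.

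You are right that this step does not follow. However, the sharpening you plan cannot succeed, because the lemma is false with the stated constant. Take $d_2=1$ and any $t\ge w+1$. Set $g_s(\mathbf{x},y)=\frac{h_s}{2}y^2$ and $f_s(\mathbf{x},y)=b_sy$, with
\begin{align*}
h_{t-1}=\cdots=h_{t-w}=\mu_g,\quad b_{t-1}=\cdots=b_{t-w}=L_{f,0},\quad h_t=L_{g,1},\quad b_t=-L_{f,0}.
\end{align*}
Assumptions~\ref{asm1}--\ref{asm4} all hold, with $\mathbf{y}_s^*\equiv 0$. Here $\mathbf{v}_{s,w}^*=\sum_i\eta^ib_{s-i}/\sum_i\eta^ih_{s-i}$, which lies in $\mathcal{V}$, so
\begin{align*}
\mathbf{v}_{t-1,w}^*-\mathbf{v}_{t,w}^*=\frac{L_{f,0}}{\mu_g}-\frac{(W-2)L_{f,0}}{L_{g,1}+(W-1)\mu_g}=\frac{L_{f,0}(L_{g,1}+\mu_g)}{\mu_g\bigl(L_{g,1}+(W-1)\mu_g\bigr)}.
\end{align*}
For $L_{g,1}=\mu_g$ this equals $2L_{f,0}/(\mu_gW)$. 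That is strictly larger than the claimed $(1+\eta^w)L_{f,0}/(\mu_gW)$, since $\eta<1$.

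For arbitrary $\kappa_g$, let $\eta\to1$ with $w(1-\eta)\to\infty$, so that $W\to\infty$ and $\eta^w\to0$. The difference then behaves like $L_{f,0}(L_{g,1}+\mu_g)/(\mu_g^2W)$, which exceeds the claim by the factor $(\kappa_g+1)/\kappa_g$. In the same example the residual is $h_t\mathbf{v}_{t-1,w}^*-b_t=L_{f,0}(\kappa_g+1)$. So it cannot be confined to the ball of radius $L_{g,1}L_{f,0}/\mu_g$, and no structural argument of the kind you sketch can give the stated constant.

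So drop the sharpening and keep your first fallback. Using $\|\nabla^2_{\mathbf{y}\mathbf{y}}g_s\,\mathbf{v}-\nabla_\mathbf{y}f_s\|\le L_{g,1}L_{f,0}/\mu_g+L_{f,0}$ for $\mathbf{v}\in\mathcal{V}$, your argument rigorously proves
\begin{align*}
\bigl\|\mathbf{v}_{t-1,w}^*(\mathbf{x},\mathbf{y})-\mathbf{v}_{t,w}^*(\mathbf{x},\mathbf{y})\bigr\|\le\frac{(1+\eta^w)L_{f,0}(L_{g,1}+\mu_g)}{\mu_g^2W}.
\end{align*}
This is better than the diameter-based $2(1+\eta^w)L_{f,0}L_{g,1}/(\mu_g^2W)$, and the example shows it is asymptotically sharp. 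This is the correct form of the lemma. As you note, its later uses (Lemma~\ref{lem:v_bound_win_3} and (\ref{C.4_11})) need only the $O(1/W)$ order, so only numerical constants downstream change.
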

\begin{proof}
The proof is the same as Lemma~\ref{lem:y_bound_win_1}, and note that $\mathbf{v}_{t,w}^*(\mathbf{x}, \mathbf{y})\in\mathcal{V}$ for any $\mathbf{x}\in\mathcal{X}$, $\mathbf{y}\in\mathbb{R}_{d_2}$, thus
\begin{align*}
    &\left\|\mathbf{v}_{t-1,w}^*(\mathbf{x}, \mathbf{y}) - \mathbf{v}_{t,w}^*(\mathbf{x}, \mathbf{y})\right\| \\
    \leq& \frac{1}{\mu_gW}\left\|\nabla_\mathbf{v}\Phi_t(\mathbf{x}, \mathbf{y}, \mathbf{v}_{t-1,w}^*(\mathbf{x}, \mathbf{y})) - \eta^w\nabla_\mathbf{v}\Phi_{t-w}(\mathbf{x}, \mathbf{y}, \mathbf{v}_{t-1,w}^*(\mathbf{x}, \mathbf{y}))\right\| \\
    \leq& \frac{(1+\eta^w)L_{g,1}L_{f,0}}{\mu_g^2W}.
\end{align*}
\end{proof}

\begin{lemma}\label{lem:F-F_win}
    Under Assumptions~\ref{asm1}-\ref{asm3} and~\ref{asm4}, we can obtain
    \begin{align*}
        \sum_{t=1}^T \left(\widehat{f}_{t,w}(\mathbf{x}_t, \mathbf{y}_{t,w}^*(\mathbf{x}_t)) - \widehat{f}_{t,w}(\mathbf{x}_{t+1}, \mathbf{y}_{t,w}^*(\mathbf{x}_{t+1}))\right) \leq 2Q + \frac{(1+\eta^w)L_{f,0}L_{g,1}DT}{\mu_gW} + \frac{2QT}{W}.
    \end{align*}
\end{lemma}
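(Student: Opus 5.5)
We telescope in the same way as Lemma~\ref{lem:2M+V_T}, with $\widehat{F}_{t,w}(\mathbf{x}) := \widehat{f}_{t,w}(\mathbf{x}, \mathbf{y}_{t,w}^*(\mathbf{x}))$ in place of $F_t$. Writing
\begin{align*}
    \widehat{F}_{t,w}(\mathbf{x}_t) - \widehat{F}_{t,w}(\mathbf{x}_{t+1}) = \left(\widehat{F}_{t,w}(\mathbf{x}_t) - \widehat{F}_{t+1,w}(\mathbf{x}_{t+1})\right) + \left(\widehat{F}_{t+1,w}(\mathbf{x}_{t+1}) - \widehat{F}_{t,w}(\mathbf{x}_{t+1})\right),
\end{align*}
the first group telescopes to $\widehat{F}_{1,w}(\mathbf{x}_1) - \widehat{F}_{T+1,w}(\mathbf{x}_{T+1})$. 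Since $\widehat{f}_{t,w}$ is a convex combination of the $f_{t-i}$ (with zero padding), each evaluated at the averaged inner optimum, we have $|\widehat{F}_{t,w}| \le Q$. This uses Assumption~\ref{asm3}, together with the Lipschitz bound of Assumption~\ref{asm2} if the inner point differs. The first group therefore contributes at most $2Q$. It then remains to show that the second group satisfies $\sup_{\mathbf{x}}|\widehat{F}_{t+1,w}(\mathbf{x}) - \widehat{F}_{t,w}(\mathbf{x})| \le \frac{(1+\eta^w)L_{f,0}L_{g,1}D}{\mu_g W} + \frac{2Q}{W}$ for every $t$. Summing this over $t$ gives the stated bound.

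For a fixed $\mathbf{x}$, we split the per-step difference into an outer-function change and an inner-optimum change:
\begin{align*}
    \widehat{F}_{t+1,w}(\mathbf{x}) - \widehat{F}_{t,w}(\mathbf{x}) = \left[\widehat{f}_{t+1,w}(\mathbf{x}, \mathbf{y}_{t+1,w}^*(\mathbf{x})) - \widehat{f}_{t+1,w}(\mathbf{x}, \mathbf{y}_{t,w}^*(\mathbf{x}))\right] + \left[\widehat{f}_{t+1,w}(\mathbf{x}, \mathbf{y}_{t,w}^*(\mathbf{x})) - \widehat{f}_{t,w}(\mathbf{x}, \mathbf{y}_{t,w}^*(\mathbf{x}))\right].
\end{align*}
The first bracket is handled by the $L_{f,0}$-Lipschitz continuity of $\widehat{f}_{t+1,w}$ combined with Lemma~\ref{lem:y_bound_win_1}, which together give at most $L_{f,0}\frac{(1+\eta^w)L_{g,1}D}{\mu_g W}$. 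This term is exactly where Assumption~\ref{asm4} enters.

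The second bracket is where the averaging does its work, mirroring the algebra in the proof of Lemma~\ref{lem:y_bound_win_1}. We expand
\begin{align*}
    W\left(\widehat{f}_{t+1,w} - \eta\widehat{f}_{t,w}\right) = f_{t+1} - \eta^w f_{t+1-w},
\end{align*}
so that $\widehat{f}_{t+1,w} - \widehat{f}_{t,w} = \frac{1}{W}(f_{t+1} - \eta^w f_{t+1-w}) - (1-\eta)\widehat{f}_{t,w}$. The main obstacle is the leftover $(1-\eta)\widehat{f}_{t,w}$ term, which does not shrink like $1/W$ when $\eta<1$. To absorb it, we use the identity $(1-\eta)W = 1-\eta^w$ and write the difference directly as a weighted sum:
\begin{align*}
    \widehat{f}_{t+1,w} - \widehat{f}_{t,w} = \frac{1}{W}\left(f_{t+1} - \eta^{w-1}f_{t+1-w} - (1-\eta)\sum_{i=1}^{w-1}\eta^{i-1}f_{t+1-i}\right).
\end{align*}
The three groups of coefficients have absolute weights $1$, $\eta^{w-1}$, and $(1-\eta)\sum_{i=1}^{w-1}\eta^{i-1} = 1-\eta^{w-1}$, which sum to $2$. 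With each $|f_s| \le Q$ at the relevant point, the second bracket is at most $2Q/W$.

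The bounds $|f_s| \le Q$ at points $(\mathbf{x}, \mathbf{y}_{t,w}^*(\mathbf{x}))$ that are not the individual optima $\mathbf{y}_s^*(\mathbf{x})$ need justification. We will either read Assumption~\ref{asm3} as a bound on $f_s$ over $\mathcal{X}\times\mathcal{Y}$, noting that $\mathbf{y}_{t,w}^*(\mathbf{x})$ is a minimizer of a convex combination of functions whose minimizers lie in $\mathcal{Y}$, or absorb the gap into the $D$-term via $L_{f,0}D$. Combining the two brackets and summing over $t$ then completes the argument.
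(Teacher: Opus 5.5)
Your proposal is correct and matches the paper's proof essentially step for step. The paper telescopes the same way, though it indexes backward, comparing $\widehat{f}_{t,w}$ with $\widehat{f}_{t-1,w}$ at $\mathbf{x}_t$, which avoids needing any $f_{T+1}$. It then bounds the inner-optimum drift by $L_{f,0}$ times Lemma~\ref{lem:y_bound_win_1}, and bounds the outer-function change by exactly your weighted sum with absolute coefficients $1$, $\eta^{w-1}$, $1-\eta^{w-1}$, giving $2Q/W$.

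On your caveat: the paper simply applies $|f_s(\mathbf{z})|\le Q$ at arbitrary $\mathbf{z}$ (your first reading) without comment. Your justification for that reading does not hold in general, since for $d_2>1$ the point $\mathbf{y}_{t,w}^*(\mathbf{x})$ need not lie in $\mathcal{Y}$, or even in the convex hull of the individual minimizers. Your Lipschitz fallback would only give the same $O(1+T/W)$ order with larger constants, not the stated ones.
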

\begin{proof}
It follows that
\begin{align}
    &\sum_{t=1}^T \left(\widehat{f}_{t,w}(\mathbf{x}_t, \mathbf{y}_{t,w}^*(\mathbf{x}_t)) - \widehat{f}_{t,w}(\mathbf{x}_{t+1}, \mathbf{y}_{t,w}^*(\mathbf{x}_{t+1}))\right) \nonumber\\
    =& f_1(\vx_1, y_1^*(\vx_1)) - \widehat{f}_{T,w}(\vx_{T+1}, \vy_{T,w}^*(\vx_{T+1})) \nonumber\\
    & + \sum_{t=2}^T\left(\widehat{f}_{t,w}(\vx_t, \vy_{t,w}^*(\vx_t)) - \widehat{f}_{t-1,w}(\vx_t, \vy_{t-1,w}^*(\vx_t))\right) \nonumber\\
    \leq& 2Q + \sum_{t=2}^T\left(\widehat{f}_{t,w}(\vx_t, \vy_{t,w}^*(\vx_t)) - \widehat{f}_{t,w}(\vx_t, \vy_{t-1,w}^*(\vx_t)) \right. \nonumber\\
    & \left. + \widehat{f}_{t,w}(\vx_t, \vy_{t-1,w}^*(\vx_t)) - \widehat{f}_{t-1,w}(\vx_t, \vy_{t-1,w}^*(\vx_t))\right) \nonumber\\
    \overset{(i)}{\leq}& 2Q + L_{f,0}\sum_{t=2}^T \left\|\vy_{t,w}^*(\vx_t) - \vy_{t-1,w}^*(\vx_t)\right\| + \frac{2QT}{W} \nonumber\\
    \overset{(ii)}{\leq}& 2Q + \frac{(1+\eta^w)L_{f,0}L_{g,1}DT}{\mu_gW} + \frac{2QT}{W}, \label{C.4_13}
\end{align}
where $(i)$ follows from the fact that, for any $\mathbf{z}=(\mathbf{x},\mathbf{y})\in\mathcal{X}\times\mathbb{R}^{d_2}$,
\begin{align*}
    \widehat{f}_{t,w}(\vz) - \widehat{f}_{t-1,w}(\vz) 
    =& \frac{1}{W}\left( f_t(\vz) + \eta f_{t-1}(\vz) + \eta^2 f_{t-2}(\vz) + \cdots \eta^{w-1}f_{t-w+1}(\vz) \right. \\
    & \left. - f_{t-1}(\vz) - \eta f_{t-2}(\vz) - \cdots - \eta^{w-2} f_{t-w+1}(\vz) - \eta^{w-1}f_{t-w}(\vz) \right)\\
    \leq& \frac{1}{W}\left( |f_t(\vz)| + \eta^{w-1}|f_{t-w}(\vz)| + (1-\eta)\sum_{i=1}^{w-1}\eta^{i-1}|f_{t-i}(\vz)| \right) \\
    \leq& \frac{(1 + \eta^{w-1} + 1 - \eta^{w-1})Q}{W} = \frac{2Q}{W},
\end{align*}
and $(ii)$ comes from Lemma~\ref{lem:y_bound_win_1}. 

\end{proof}

\subsection{Proof of Theorem~\ref{thm:unres_win_sl}}

In this section, we proof the window-averaged regret upper bound of Algorithm~\ref{alg:WOBO} under single-loop structure. For clearity, we repeatly describle our algorithm in Algorithm~\ref{alg:WOBO-SL}.

\begin{algorithm}[tb]
    \caption{Window-averaged Online Bilevel Optimizer under Single-Loop structure (WOBO-SL)}
    \label{alg:WOBO-SL}
    \textbf{Input}: Initializations: $\mathbf{x}_1$, $\mathbf{y}_1$, $\mathbf{v}_1$, $\alpha$, $\beta$, $\gamma$, $w$, $\eta$. \\
    \textbf{Output}: Decision sequences: $\{\mathbf{x}_t\}_{t=1}^T$, $\{\mathbf{y}_t\}_{t=1}^T$. 
    
    \begin{algorithmic}[1] 
        \FOR{\(t = 1\) to \(T\)}
        \STATE Output $\mathbf{x}_t$ and $\mathbf{y}_t$ and receive feedback $f_t$ and $g_t$
        \STATE \(
        \mathbf{y}_{t+1} \leftarrow \mathbf{y}_t - \alpha\nabla_\mathbf{y}\widehat{g}_{t,w}(\mathbf{x}_t, \mathbf{y}_t)
        \)
        \STATE \(
        \mathbf{v}_{t+1} \leftarrow \mathbf{v}_t - \beta\nabla_\mathbf{v}\widehat{\Phi}_{t,w}(\mathbf{x}_t, \mathbf{y}_t, \mathbf{v}_t)
        \)
        \STATE Calculate \(
        \widetilde{\nabla}\widehat{f}_{t,w}(\mathbf{x}_t, \mathbf{y}_t, \mathbf{v}_t) = \nabla_\vx \widehat{f}_{t,w}(\vx_t, \vy_t) - \nabla_{\vx\vy}^2 \widehat{g}_{t,w}(\vx_t, \vy_t)\vv_t
        \)
        \STATE \(
        \mathbf{x}_{t+1} \leftarrow \mathop{\rm argmin}_{\mathbf{u}\in\mathcal{X}}\left\{\left\langle\widetilde{\nabla}\widehat{f}_{t,w}(\mathbf{x}_t, \mathbf{y}_t, \mathbf{v}_t), \mathbf{u}\right\rangle + \frac{1}{2\gamma}\left\|\mathbf{u} - \mathbf{x}_t\right\|^2\right\}
        \)
        \ENDFOR
    \end{algorithmic}
\end{algorithm}

For notation simplicity, here we define $\nabla F_{t,w}(\vx) := \nabla \widehat{f}_{t,w}(\vx, \vy_{t,w}^*(\vx))$, and $\widetilde\nabla F_{t,w}(\vx_t) := \widetilde\nabla \widehat{f}_{t,w}(\vx_t, \vy_t, \vv_t)$ in WOBO-SL as we represented in Algorithm~\ref{alg:WOBO-SL}.

\begin{lemma}\label{lem:begin_win}
    Under Assumptions~\ref{asm1}, \ref{asm2}, for all $t\in[T]$, we can obtain
    \begin{align*}
        \sum_{t=1}^T\left\|\mathcal{G}_\mathcal{X}\left(\vx_t, \nabla F_{t,w}(\vx_t), \gamma\right)\right\|^2 \leq& \frac{8}{\gamma} \sum_{t=1}^T\left(F_{t,w}(\vx_t) - F_{t,w}(\vx_{t+1})\right) \\
        & + 6\sum_{t=1}^T\left\|\widetilde{\nabla} F_{t,w}(\vx_t) - \nabla F_{t,w}(\vx_t)\right\|^2.
\end{align*}
\end{lemma}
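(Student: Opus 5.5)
This is the window-averaged analogue of Lemma~\ref{lem:begin}, and I would prove it by repeating that argument verbatim with $f_t(\cdot,\mathbf{y}_t^*(\cdot))$ replaced by $F_{t,w}(\mathbf{x}):=\widehat{f}_{t,w}(\mathbf{x},\mathbf{y}_{t,w}^*(\mathbf{x}))$, and then fixing constants.

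\textbf{Smoothness of $F_{t,w}$.} $\widehat{f}_{t,w}$ and $\widehat{g}_{t,w}$ are convex combinations (weights $\eta^i/W$ summing to one) of functions satisfying Assumptions~\ref{asm1}--\ref{asm2}. Hence they inherit the same constants $\mu_g, L_{f,0}, L_{f,1}, L_{g,1}, L_{g,2}$, as already noted at the start of Section~\ref{sec:proof_B_lem}. Lemma~\ref{lem:L_F} therefore applies, and $\nabla F_{t,w}$ is $L_F$-Lipschitz on $\mathcal{X}$.

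\textbf{One-step descent.} Write $\mathbf{x}_{t+1}-\mathbf{x}_t=-\gamma\,\mathcal{G}_\mathcal{X}(\mathbf{x}_t,\widetilde\nabla F_{t,w}(\mathbf{x}_t),\gamma)$, which is exactly the update in Algorithm~\ref{alg:WOBO-SL}. The descent lemma for $F_{t,w}$ gives
\[
F_{t,w}(\mathbf{x}_{t+1})-F_{t,w}(\mathbf{x}_t)\le -\gamma\langle\nabla F_{t,w}(\mathbf{x}_t),\widetilde{\mathcal{G}}_t\rangle+\tfrac{\gamma^2L_F}{2}\|\widetilde{\mathcal{G}}_t\|^2,
\]
where $\widetilde{\mathcal{G}}_t$ is the gradient mapping at the estimated gradient. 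I would then follow Lemma~\ref{lem:begin} step by step:
\begin{enumerate}
\item Split $\nabla F_{t,w}=\widetilde\nabla F_{t,w}+(\nabla F_{t,w}-\widetilde\nabla F_{t,w})$.
\item Use the projection property $\langle \mathbf{g},\mathcal{G}_\mathcal{X}(\mathbf{x},\mathbf{g},\gamma)\rangle\ge\|\mathcal{G}_\mathcal{X}(\mathbf{x},\mathbf{g},\gamma)\|^2$.
\item Apply Young's inequality with parameter $\lambda$ to the cross term.
\item Convert $-\|\widetilde{\mathcal{G}}_t\|^2$ into $-\tfrac12\|\mathcal{G}_\mathcal{X}(\mathbf{x}_t,\nabla F_{t,w}(\mathbf{x}_t),\gamma)\|^2$ plus the squared difference of the two mappings.
\item Bound that difference by $\|\widetilde\nabla F_{t,w}-\nabla F_{t,w}\|^2$, using nonexpansiveness of the projection (the mapping is $1$-Lipschitz in $\mathbf{g}$).
\end{enumerate}

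\textbf{Summation and constants.} Summing over $t$ and rearranging yields the bound with coefficients $\frac{2}{\gamma\theta}$ and $2+\frac{1}{\lambda\theta}$, where $\theta=1-\frac{\lambda}{2}-\frac{\gamma L_F}{2}$. Choosing $\lambda=1$ and $\gamma\le\frac{1}{2L_F}$ (both assumed in Theorems~\ref{thm:unres_win_sl} and~\ref{thm:unres_win_dl}) gives $\theta\ge\frac14$. This makes $\frac{2}{\gamma\theta}\le\frac{8}{\gamma}$ and $2+\frac1\theta\le 6$, which are the stated constants.

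The only point needing care is the smoothness constant of $F_{t,w}$: one must check that the convex-combination structure really preserves the strong convexity and Lipschitz constants. The windowed inner argmin $\mathbf{y}_{t,w}^*$ enters only through $F_{t,w}$, so Lemma~\ref{lem:L_F} applies directly once the inherited constants are verified. Everything else is mechanical, so I expect no genuine obstacle.
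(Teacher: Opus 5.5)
\textbf{Assessment.} You use the same ingredients as the paper: $L_F$-smoothness of $F_{t,w}$, the descent lemma along $\mathbf{x}_{t+1}-\mathbf{x}_t=-\gamma\widetilde{\mathcal{G}}_t$, the inequality $\langle\mathbf{g},\mathcal{G}_\mathcal{X}(\mathbf{x},\mathbf{g},\gamma)\rangle\ge\|\mathcal{G}_\mathcal{X}(\mathbf{x},\mathbf{g},\gamma)\|^2$, Young with $\lambda=1$, and nonexpansiveness of the gradient mapping in $\mathbf{g}$. Write $D_t:=F_{t,w}(\mathbf{x}_t)-F_{t,w}(\mathbf{x}_{t+1})$ and $e_t:=\widetilde\nabla F_{t,w}(\mathbf{x}_t)-\nabla F_{t,w}(\mathbf{x}_t)$. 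You and the paper both reach $\gamma\theta\|\widetilde{\mathcal{G}}_t\|^2\le D_t+\frac{\gamma}{2}\|e_t\|^2$ with $\theta=\frac12(1-\gamma L_F)\ge\frac14$. The gap is in your final step.

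\textbf{The gap.} Following Lemma~\ref{lem:begin}, you convert to $\mathcal{G}_t$ and divide by $\gamma\theta/2$, which puts coefficient $\frac{2}{\gamma\theta}$ on $\sum_tD_t$. You then replace this coefficient by $\frac{8}{\gamma}$. That replacement is valid only if $\sum_tD_t\ge0$, and nothing guarantees this. An inexact step can increase $F_{t,w}$; the bound above allows $D_t$ as low as $-\frac{\gamma}{2}\|e_t\|^2$. For $\gamma<\frac{1}{2L_F}$ you have $\theta>\frac14$, so $\frac{2}{\gamma\theta}<\frac{8}{\gamma}$, and with a negative sum the inequality goes the wrong way. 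You could instead lower-bound the coefficient $\frac{\gamma\theta}{2}\ge\frac{\gamma}{8}$ of $\|\mathcal{G}_t\|^2$. That removes the sign issue, but the error constant becomes $8(\theta+\frac12)$, which exceeds $6$ whenever $\gamma<\frac{1}{2L_F}$.

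\textbf{The fix.} The paper avoids this by applying $\theta\ge\frac14$ to the coefficient of $\|\widetilde{\mathcal{G}}_t\|^2$ before converting. This gives $\frac{\gamma}{4}\|\widetilde{\mathcal{G}}_t\|^2\le D_t+\frac{\gamma}{2}\|e_t\|^2$. Then $\|\mathcal{G}_t\|^2\le2\|\widetilde{\mathcal{G}}_t\|^2+2\|e_t\|^2\le\frac{8}{\gamma}D_t+6\|e_t\|^2$ holds for each $t$, whatever the sign of $D_t$. Summing gives the lemma with exactly the stated constants.

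Your version would still be harmless downstream. In Theorem~\ref{thm:unres_win_sl}, $\sum_tD_t$ is immediately replaced by the nonnegative bound of Lemma~\ref{lem:F-F_win}, which is also why the $\frac{2}{\gamma\theta}$ form of Lemma~\ref{lem:begin} causes no trouble in the paper. As written, however, your argument does not establish the lemma as stated.
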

\begin{proof}
Based on Lemma~\ref{lem:L_F}, it still holds that $\widehat{f}_{t,w}(\vx, \vy_{t,w}^*(\vx))$ is $L_F$-smooth w.r.t. $\vx$, thus we begins with
\begin{align}
    &F_{t,w}(\vx_{t+1}) - F_{t,w}(\vx_t) \nonumber\\
    \leq& \left\langle \nabla F_{t,w}(\vx_t), \vx_{t+1} - \vx_t \right\rangle + \frac{L_F}{2}\left\|\vx_{t+1} - \vx_t\right\|^2 \nonumber\\
    \leq& -\gamma\left\langle \nabla F_{t,w}(\vx_t), \mathcal{G}_\mathcal{X}\left(\vx_t, \widetilde{\nabla} F_{t,w}(\vx_t), \gamma\right)  \right\rangle + \frac{\gamma^2L_F}{2} \left\|\mathcal{G}_\mathcal{X}\left(\vx_t, \widetilde{\nabla} F_{t,w}(\vx_t), \gamma\right)\right\|^2 \nonumber\\
    =& -\gamma\left\langle \widetilde{\nabla} F_{t,w}(\vx_t), \mathcal{G}_\mathcal{X}\left(\vx_t, \widetilde{\nabla} F_{t,w}(\vx_t), \gamma\right) \right\rangle + \frac{\gamma^2L_F}{2}\left\|\mathcal{G}_\mathcal{X}\left(\vx_t, \widetilde{\nabla} F_{t,w}(\vx_t), \gamma\right)\right\|^2 \nonumber\\
    & + \gamma\left\langle \widetilde{\nabla} F_{t,w}(\vx_t) - \nabla F_{t,w}(\vx_t), \mathcal{G}_\mathcal{X}\left(\vx_t, \widetilde{\nabla} F_{t,w}(\vx_t), \gamma\right) \right\rangle \nonumber\\
    \leq& - \frac{\gamma}{2} \left(1 - \gamma L_F\right) \left\|\mathcal{G}_\mathcal{X}\left(\vx_t, \widetilde{\nabla} F_{t,w}(\vx_t), \gamma\right)\right\|^2 + \frac{\gamma}{2} \left\|\widetilde{\nabla} F_{t,w}(\vx_t) - \nabla F_{t,w}(\vx_t)\right\|^2. \nonumber
\end{align}
Let $\gamma \leq \frac{1}{2L_F}$ such that $1 - \gamma L_F \geq \frac{1}{2}$, it implies that
\begin{align}
    \left\|\mathcal{G}_\mathcal{X}\left(\vx_t, \widetilde{\nabla} F_{t,w}(\vx_t), \gamma\right)\right\|^2 \leq \frac{4}{\gamma}\left(F_{t,w}(\vx_t) - F_{t,w}(\vx_{t+1})\right) + 2\left\|\widetilde{\nabla} F_{t,w}(\vx_t) - \nabla F_{t,w}(\vx_t)\right\|^2. \label{D.1_1}
\end{align}
Substituting
\begin{align*}
    \frac{1}{2}\left\|\mathcal{G}_\mathcal{X}\left(\vx_t, \nabla F_{t,w}(\vx_t), \gamma\right)\right\|^2 \leq \left\|\mathcal{G}_\mathcal{X}\left(\vx_t, \widetilde{\nabla} F_{t,w}(\vx_t), \gamma\right)\right\|^2 + \left\|\widetilde{\nabla} F_{t,w}(\vx_t) - \nabla F_{t,w}(\vx_t)\right\|^2
\end{align*}
into (\ref{D.1_1}), we finish the proof by
\begin{align*}
    \sum_{t=1}^T\left\|\mathcal{G}_\mathcal{X}\left(\vx_t, \nabla F_{t,w}(\vx_t), \gamma\right)\right\|^2 \leq& \frac{8}{\gamma} \sum_{t=1}^T\left(F_{t,w}(\vx_t) - F_{t,w}(\vx_{t+1})\right) \\
    & + 6\sum_{t=1}^T\left\|\widetilde{\nabla} F_{t,w}(\vx_t) - \nabla F_{t,w}(\vx_t)\right\|^2.
\end{align*}
\end{proof}

\begin{lemma}\label{lem:y_bound_win_3}
    Under Assumptions~\ref{asm1}, \ref{asm2} and \ref{asm4}, consider the inner iteration process of $\widehat{g}_{t,w}(\vx_t, \cdot)$ in Algorithm~\ref{alg:WOBO-SL} and $\alpha > 0$, we can obtain
    \begin{align*}
        \left\|\vy_t - \vy_{t,w}^*(\vx_t)\right\|^2 
        \leq& \rho_\vy\left\|\vy_{t-1} - \vy_{t-1,w}^*(\vx_{t-1})\right\|^2 + \frac{2\kappa_g^2(\mu_g {+} L_{g,1})}{\alpha\mu_gL_{g,1}}\left\|\vx_{t-1} - \vx_t\right\|^2 \\
        & - \frac{2(\mu_g + L_{g,1})}{\alpha\mu_gL_{g,1}}\left(\frac{2\alpha}{\mu_g + L_{g,1}} - \alpha^2\right)\left\|\nabla_\vy \widehat{g}_{t-1,w}(\vx_{t-1}, \vy_{t-1})\right\|^2 \\
        & + \frac{2(\mu_g + L_{g,1})(1+\eta^w)^2L_{g,1}D^2}{\alpha\mu_g^3W^2},
    \end{align*}
    where $\rho_\vy = 1 - \frac{\alpha\mu_gL_{g,1}}{\mu_g + L_{g,1}}$.
\end{lemma}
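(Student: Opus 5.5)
This is the window-averaged analogue of Lemma~\ref{lem:y_bound_2}, and I would prove it by the same one-step contraction followed by a Young split. The only change is that the drift term is now controlled by Lemma~\ref{lem:y_bound_win_1} instead of $H_{2,T}$.

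\textbf{Step 1 (one-step contraction).} In Algorithm~\ref{alg:WOBO-SL}, $\vy_t = \vy_{t-1} - \alpha\nabla_\vy\widehat{g}_{t-1,w}(\vx_{t-1},\vy_{t-1})$. Expand $\|\vy_t - \vy_{t-1,w}^*(\vx_{t-1})\|^2$ as in (\ref{A.7_1}). Because $\widehat{g}_{t-1,w}(\vx_{t-1},\cdot)$ is $\mu_g$-strongly convex and $L_{g,1}$-smooth, the analogue of (\ref{eq:sc_3}) holds for it, with $\nabla_\vy\widehat{g}_{t-1,w}(\vx_{t-1},\vy_{t-1,w}^*(\vx_{t-1}))=0$. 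This gives
\begin{align*}
\|\vy_t - \vy_{t-1,w}^*(\vx_{t-1})\|^2 \leq \Big(1-\tfrac{2\alpha\mu_gL_{g,1}}{\mu_g+L_{g,1}}\Big)\|\vy_{t-1}-\vy_{t-1,w}^*(\vx_{t-1})\|^2 - \Big(\tfrac{2\alpha}{\mu_g+L_{g,1}}-\alpha^2\Big)\|\nabla_\vy\widehat{g}_{t-1,w}(\vx_{t-1},\vy_{t-1})\|^2 .
\end{align*}

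\textbf{Step 2 (shift the reference point).} Write
\begin{align*}
\vy_t - \vy_{t,w}^*(\vx_t) = \big(\vy_t - \vy_{t-1,w}^*(\vx_{t-1})\big) + \big(\vy_{t-1,w}^*(\vx_{t-1}) - \vy_{t,w}^*(\vx_{t-1})\big) + \big(\vy_{t,w}^*(\vx_{t-1}) - \vy_{t,w}^*(\vx_t)\big).
\end{align*}
Apply Young's inequality with parameter $\lambda = \frac{\alpha\mu_gL_{g,1}}{\mu_g+L_{g,1}}$. The first term gets the factor $1+\lambda$, and the two remaining terms together get $1+1/\lambda$, which we further split by $2a^2+2b^2$.

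\textbf{Step 3 (bound the two shift terms).}
\begin{itemize}
\item The $\vx$-shift term is at most $\kappa_g^2\|\vx_{t-1}-\vx_t\|^2$, by Lemma~\ref{lem:L_y} applied to $\widehat{g}_{t,w}$. This lemma holds because averaging preserves Assumptions~\ref{asm1}--\ref{asm2}.
\item The window-drift term is at most $(1+\eta^w)^2L_{g,1}^2D^2/(\mu_g^2W^2)$, by Lemma~\ref{lem:y_bound_win_1}.
\end{itemize}

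\textbf{Step 4 (collect constants).} Two facts are needed:
\begin{itemize}
\item $(1-2\lambda)(1+\lambda)\le 1-\lambda = \rho_\vy$.
\item $2(1+1/\lambda)\le \frac{2(\mu_g+L_{g,1})}{\alpha\mu_gL_{g,1}}\cdot c$ for a harmless absolute constant $c$. The second fact uses $\lambda\le 1$, which holds when $\alpha\le 1/L_{g,1}$.
\end{itemize}
The gradient term is multiplied by $(1+\lambda)$. Its coefficient in the statement, $\frac{2(\mu_g+L_{g,1})}{\alpha\mu_gL_{g,1}}$, is larger than $1+\lambda$. That is legitimate only when $\frac{2\alpha}{\mu_g+L_{g,1}}-\alpha^2\ge 0$, in which case the negative term can be weakened freely. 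Otherwise we would adjust by the same bookkeeping convention the paper uses.

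\textbf{Main obstacle.} The hard part is matching exactly the stated coefficients, including the $L_{g,1}D^2/\mu_g^3$ factor, which suggests a different constant tradeoff. I would first derive a bound with the natural constants and then check that those constants are no larger than the stated ones. If a mismatch remains, I would weaken the bound accordingly, since only $O(1/W^2)$ scaling matters downstream.
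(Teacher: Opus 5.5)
Your Steps 1--3 are the paper's argument: the contraction from (\ref{eq:sc_3}) applied to $\widehat{g}_{t-1,w}$, Young's inequality with $\lambda=\theta=\frac{\alpha\mu_gL_{g,1}}{\mu_g+L_{g,1}}$, Lemma~\ref{lem:L_y} for the $\vx$-shift, and Lemma~\ref{lem:y_bound_win_1} for the window drift. Evaluating the drift at $\vx_{t-1}$ instead of $\vx_t$ makes no difference. With $P:=\frac{2\alpha}{\mu_g+L_{g,1}}-\alpha^2$, these steps correctly give
\begin{align*}
\|\vy_t-\vy_{t,w}^*(\vx_t)\|^2\le{}&\rho_\vy\|\vy_{t-1}-\vy_{t-1,w}^*(\vx_{t-1})\|^2+2\kappa_g^2\Big(1+\frac{1}{\lambda}\Big)\|\vx_{t-1}-\vx_t\|^2\\
&-(1+\lambda)P\,\|\nabla_\vy\widehat{g}_{t-1,w}(\vx_{t-1},\vy_{t-1})\|^2+2\Big(1+\frac{1}{\lambda}\Big)\frac{(1+\eta^w)^2L_{g,1}^2D^2}{\mu_g^2W^2}.
\end{align*}
The gap is Step 4, where your reasoning about the gradient term runs the wrong way. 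When $P\ge 0$, replacing the coefficient $1+\lambda$ in front of $-P\|\nabla_\vy\widehat{g}_{t-1,w}(\vx_{t-1},\vy_{t-1})\|^2$ by the larger $\frac{2}{\lambda}$ makes the right-hand side smaller. That \emph{strengthens} the bound, so the replacement is valid only if $P\le 0$. But $P\ge 0$ is exactly your regime, since $\alpha\le 1/L_{g,1}\le 2/(\mu_g+L_{g,1})$.

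The other two printed coefficients have the same problem. They are $\frac{2}{\lambda}$ times the relevant quantities, which is strictly below your $2(1+\frac{1}{\lambda})$, so no absolute constant $c$ recovers them. In particular, the $L_{g,1}D^2/\mu_g^3$ factor is just $\frac{2}{\lambda}$ times the squared bound of Lemma~\ref{lem:y_bound_win_1}; it reflects a dropped $+1$, not a different tradeoff.

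No bookkeeping can close this gap, because the inequality as printed is false. Take $g_s(\vx,\vy)=\frac12\|\vy\|^2$ and $f_s\equiv 0$ for all $s$, with $\mu_g=L_{g,1}=1$, $\alpha=\frac12$, and $t>w$, so that $\widehat{g}_{t-1,w}=\widehat{g}_{t,w}=\frac12\|\vy\|^2$. Then:
\begin{itemize}
\item $\vy_{t,w}^*\equiv 0$;
\item the hypergradient estimate vanishes, so $\vx_t=\vx_{t-1}$;
\item $\vy_t=\frac12\vy_{t-1}$.
\end{itemize}
The left side is $\frac14\|\vy_{t-1}\|^2$. The right side is $\frac34\|\vy_{t-1}\|^2-2\|\vy_{t-1}\|^2+\frac{8(1+\eta^w)^2D^2}{W^2}$, which is smaller once $\|\vy_{t-1}\|^2>\frac{16(1+\eta^w)^2D^2}{3W^2}$ (for example, with $\|\vy_1\|$ large).

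The paper's proof makes the same unjustified jump when it substitutes (\ref{D.2_2}) into (\ref{D.2_1}). There the factor $1+\theta$ on the gradient term becomes $\frac{2}{\theta}$, and $2(1+\frac{1}{\theta})$ becomes $\frac{2}{\theta}$. So you should state and prove the display above, which your Steps 1--3 already establish, rather than force the printed constants.

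The corrected bound still suffices for Theorem~\ref{thm:unres_win_sl}. The net coefficient of $\|\nabla_\vy\widehat{g}_{t-1,w}(\vx_{t-1},\vy_{t-1})\|^2$ in the recursion for $\Delta_t$ becomes $-\kappa_F\mu_g^2(1+\lambda)P+8\alpha^2c_\beta L_{g,1}^2$. This is nonpositive whenever $\alpha\le\frac{2\kappa_F\mu_g^2}{(\mu_g+L_{g,1})(\kappa_F\mu_g^2+8c_\beta L_{g,1}^2)}$, which is the analogue of the condition in Theorem~\ref{thm:CTHO_upper}. Meanwhile $c_\vx$ and $C_w$ grow by at most a factor $2$, so that proof goes through with the same $O(T/W)$ rate.
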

\begin{proof}
It begins with
\begin{align}
    \left\|\vy_t - \vy_{t,w}^*(\vx_t)\right\|^2 
    \leq& (1+\theta)\left\|\vy_t - \vy_{t-1,w}^*(\vx_{t-1})\right\|^2 + \left(1+\frac{1}{\theta}\right)\left\|\vy_{t-1,w}^*(\vx_{t-1}) - \vy_{t,w}^*(\vx_t)\right\|^2 \nonumber\\
    \leq& (1+\theta)\left\|\vy_t - \vy_{t-1,w}^*(\vx_{t-1})\right\|^2 + 2\kappa_g^2\left(1+\frac{1}{\theta}\right)\left\|\vx_{t-1} - \vx_t\right\|^2 \nonumber\\
    & + 2\left(1+\frac{1}{\theta}\right)\left\|\vy_{t-1,w}^*(\vx_t) - \vy_{t,w}^*(\vx_t)\right\|^2. \label{D.2_1}
\end{align}
For the first term, we have
\begin{align}
    &\left\|\vy_t - \vy_{t-1,w}^*(\vx_{t-1})\right\|^2 \nonumber\\
    =&\left\|\vy_{t-1} - \alpha\nabla_\vy \widehat{g}_{t-1,w}(\vx_{t-1}, \vy_{t-1}) - \vy_{t-1,w}^*(\vx_{t-1})\right\|^2 \nonumber\\
    =& \left\|\vy_{t-1} - \vy_{t-1,w}^*(\vx_{t-1})\right\|^2 + \alpha^2\left\|\nabla_\vy \widehat{g}_{t-1,w}(\vx_{t-1}, \vy_{t-1})\right\|^2 \nonumber\\
    & - 2\alpha\left\langle \nabla_\vy \widehat{g}_{t-1,w}(\vx_{t-1}, \vy_{t-1}), \vy_{t-1} - \vy_{t-1,w}^*(\vx_{t-1}) \right\rangle \nonumber\\
    \leq& \left(1 - \frac{2\alpha\mu_gL_{g,1}}{\mu_g + L_{g,1}}\right)\left\|\vy_{t-1} - \vy_{t-1,w}^*(\vx_{t-1})\right\|^2 - \left(\frac{2\alpha}{\mu_g + L_{g,1}} - \alpha^2\right)\left\|\nabla_\vy \widehat{g}_{t-1,w}(\vx_{t-1}, \vy_{t-1})\right\|^2, \label{D.2_2}
\end{align}
where in the last inequality, we follow the fact that, for any $\vx\in\mathcal{X}$ and $\vy\in\mathbb{R}^{d_2}$,
\begin{align}
    \left\langle \nabla_\mathbf{y} \widehat{g}_{t,w}(\mathbf{x}, \mathbf{y}), \mathbf{y} - \mathbf{y}_{t,w}^*(\mathbf{x}) \right\rangle \geq \frac{\mu_gL_{g,1}}{\mu_g + L_{g,1}}\left\|\mathbf{y} - \mathbf{y}_{t,w}^*(\mathbf{x})\right\|^2 + \frac{1}{\mu_g + L_{g,1}}\left\|\nabla_\mathbf{y} \widehat{g}_{t,w}(\mathbf{x}, \mathbf{y})\right\|^2. \nonumber
\end{align}
Substituting (\ref{D.2_2}) into (\ref{D.2_1}) and with $\theta = \frac{\alpha\mu_gL_{g,1}}{\mu_g + L_{g,1}}$, we finish the proof by
\begin{align*}
    \left\|\vy_t - \vy_{t,w}^*(\vx_t)\right\|^2 
    \leq& \left(1 - \frac{\alpha\mu_gL_{g,1}}{\mu_g {+} L_{g,1}}\right)\left\|\vy_{t-1} - \vy_{t-1,w}^*(\vx_{t-1})\right\|^2 + \frac{2\kappa_g^2(\mu_g {+} L_{g,1})}{\alpha\mu_gL_{g,1}}\left\|\vx_{t-1} - \vx_t\right\|^2 \\
    & - \frac{2(\mu_g + L_{g,1})}{\alpha\mu_gL_{g,1}}\left(\frac{2\alpha}{\mu_g + L_{g,1}} - \alpha^2\right)\left\|\nabla_\vy \widehat{g}_{t-1,w}(\vx_{t-1}, \vy_{t-1})\right\|^2 \\
    & + \frac{2(\mu_g + L_{g,1})(1+\eta^w)^2L_{g,1}D^2}{\alpha\mu_g^3W^2},
\end{align*}
where the last term comes from Lemma~\ref{lem:y_bound_win_3}.
\end{proof}

\begin{lemma}\label{lem:v_bound_win_3}
    Under Assumptions~\ref{asm1}, \ref{asm2} and \ref{asm4}, consider the inner iteration process of $\widehat{\Phi}_{t,w}(\vx_t, \vy_{t+1}, \cdot)$ in Algorithm~\ref{alg:WOBO-SL}, if set $\beta \leq \frac{1}{L_{g,1}}$, we can obtain
    \begin{align*}
        \left\|\vv_t - \vv_{t,w}^*(\vx_t, \vy_t)\right\|^2 \leq& \rho_\vv\left\|\vv_{t-1} - \vv_{t-1,w}^*(\vx_{t-1}, \vy_{t-1})\right\|^2 + c_\beta\left\|\vx_{t-1} - \vx_t\right\|^2 \\
        & + \alpha^2c_\beta\left\|\nabla_\vy \widehat{g}_{t-1,w}(\vx_{t-1}, \vy_{t-1})\right\|^2 +  + \frac{4(1+\eta^w)^2L_{f,0}^2L_{g,1}^2}{\beta\mu_g^5W^2},
    \end{align*}
    where $\rho_\vv = 1 - \frac{\beta\mu_g}{2}$, $c_\beta$ is some positive constant.
\end{lemma}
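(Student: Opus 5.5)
The plan is to mirror the proof of Lemma~\ref{lem:v_bound_2}, with the environmental-variation terms replaced by the window bound of Lemma~\ref{lem:v_bound_win_1}. Write $\vv_{t-1,w}^* := \vv_{t-1,w}^*(\vx_{t-1},\vy_{t-1})$; by the update in Algorithm~\ref{alg:WOBO-SL} this is the minimizer of the function on which $\vv_t$ takes its gradient step. The argument has three steps: one contraction step, one Young split, and a bound on the drift of the minimizer.

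\textbf{Step 1 (contraction).} The update is $\vv_t = \vv_{t-1} - \beta\nabla_\vv\widehat{\Phi}_{t-1,w}(\vx_{t-1},\vy_{t-1},\vv_{t-1})$. The function $\widehat{\Phi}_{t-1,w}(\vx_{t-1},\vy_{t-1},\cdot)$ is $\mu_g$-strongly convex and $L_{g,1}$-smooth, because its Hessian is the averaged $\nabla^2_{\vy\vy}\widehat{g}_{t-1,w}$. Repeating the computation (\ref{A.6_2})--(\ref{A.6_4}) with $\beta\le 1/L_{g,1}$ then gives
\begin{align*}
\left\|\vv_t - \vv_{t-1,w}^*\right\|^2 \le (1-\beta\mu_g)\left\|\vv_{t-1} - \vv_{t-1,w}^*\right\|^2 .
\end{align*}
A projection, if present, only helps, since $\vv^*\in\mathcal{V}$.

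\textbf{Step 2 (Young split).} Apply Young's inequality with $\lambda=\beta\mu_g/2$, exactly as in (\ref{A.8_1}):
\begin{align*}
\left\|\vv_t - \vv_{t,w}^*(\vx_t,\vy_t)\right\|^2 \le \rho_\vv\left\|\vv_{t-1}-\vv_{t-1,w}^*\right\|^2 + (1-\beta\mu_g)\Big(1+\tfrac{2}{\beta\mu_g}\Big)\left\|\vv_{t-1,w}^* - \vv_{t,w}^*(\vx_t,\vy_t)\right\|^2 .
\end{align*}
Here $(1-\beta\mu_g)(1+\beta\mu_g/2)\le 1-\beta\mu_g/2 = \rho_\vv$.

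\textbf{Step 3 (drift of the minimizer).} Bound the drift term by $2(h.1)+2(h.2)$, following (\ref{A.8_2}).

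For $(h.1)$, move from $(\vx_{t-1},\vy_{t-1})$ to $(\vx_t,\vy_t)$ at the fixed window index $t-1$. The computation (\ref{A.8_3}) applies verbatim to the averaged functions, since they satisfy Assumptions~\ref{asm1}--\ref{asm2} with the same constants. This gives $(2L_{f,1}^2/\mu_g^2+2L_{f,0}^2L_{g,2}^2/\mu_g^4)\big(\|\vx_{t-1}-\vx_t\|^2+\|\vy_{t-1}-\vy_t\|^2\big)$. By the $\vy$-update, $\|\vy_t-\vy_{t-1}\|^2=\alpha^2\|\nabla_\vy\widehat{g}_{t-1,w}(\vx_{t-1},\vy_{t-1})\|^2$.

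For $(h.2)$, change the window index from $t-1$ to $t$ at the fixed point $(\vx_t,\vy_t)$. Lemma~\ref{lem:v_bound_win_1} bounds this by $(1+\eta^w)^2L_{f,0}^2L_{g,1}^2/(\mu_g^4W^2)$. This is where the window averaging replaces $E_{2,T}$.

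Collecting terms, set $c_\beta := 4(1-\beta\mu_g)(1+\frac{2}{\beta\mu_g})(L_{f,1}^2/\mu_g^2+L_{f,0}^2L_{g,2}^2/\mu_g^4)$. For the constant term, use $(1-\beta\mu_g)(1+\frac{2}{\beta\mu_g})\cdot 2\le \frac{4}{\beta\mu_g}$. This yields the coefficient $\frac{4(1+\eta^w)^2L_{f,0}^2L_{g,1}^2}{\beta\mu_g^5W^2}$.

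The main obstacle is Step~3: the $(h.2)$ term must be shown to be of order $1/W^2$ without any variation assumption. That rests on Lemma~\ref{lem:v_bound_win_1}, which bounds the gap by $\frac{1}{\mu_g W}$ times the gradient of only the two boundary functions $\Phi_t$ and $\Phi_{t-w}$ at a point of $\mathcal{V}$. One should double-check that it applies at the arbitrary point $(\vx_t,\vy_t)$, which is not necessarily in $\mathcal{Y}$. It does, because Lemma~\ref{lem:v_bound_win_1} uses only $\|\nabla_\vv\Phi\|\le L_{g,1}L_{f,0}/\mu_g$ on $\mathcal{V}$ and not Assumption~\ref{asm4}.
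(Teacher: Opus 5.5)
Your decomposition matches the paper's. You contract one gradient step toward $\vv_{t-1,w}^*$, apply a Young split with parameter of order $\beta\mu_g$, and split the drift into two pieces: a point change at fixed window index, and a window change at a fixed point handled by Lemma~\ref{lem:v_bound_win_1}. You even keep the $W^{-2}$ term, which the paper's last displayed line drops.

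\textbf{The gap is in Step~2: that inequality is false.} You split \emph{first}, $\|\vv_t-\vv_{t,w}^*(\vx_t,\vy_t)\|^2\le(1+\lambda)\|\vv_t-\vv_{t-1,w}^*\|^2+(1+\frac{1}{\lambda})\|\vv_{t-1,w}^*-\vv_{t,w}^*(\vx_t,\vy_t)\|^2$, and only then apply Step~1 to the first term. The drift never passes through a gradient step, so its coefficient is $1+\frac{2}{\beta\mu_g}$, not $(1-\beta\mu_g)(1+\frac{2}{\beta\mu_g})$.

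In (\ref{A.8_1}) the order is reversed. There the step at time $t$ goes toward the \emph{current} minimizer $\mathbf{w}_t^*$, so the contraction can be applied to $\|\vv_t-\mathbf{w}_t^*\|^2$ before splitting. In WOBO-SL, the step that produces $\vv_t$ goes toward the \emph{previous} minimizer. Your inequality does hold for the post-update quantity $\|\vv_{t+1}-\vv_{t,w}^*(\vx_t,\vy_t)\|^2$. But the lemma, and the $\Delta_t$ in Theorem~\ref{thm:unres_win_sl}, use the pre-update one.

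A concrete failure: take $\kappa_g=1$ and $\beta=1/L_{g,1}$. Then $\beta\mu_g=1$ and $\vv_t=\vv_{t-1,w}^*$ exactly, so the left side equals the full drift. Your right side is $\frac12\|\vv_{t-1}-\vv_{t-1,w}^*\|^2$, which vanishes if $\vv_{t-1}=\vv_{t-1,w}^*$.

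\textbf{The damage is only to constants.} With the correct coefficient and your $\lambda=\beta\mu_g/2$, the window term carries $2(1+\frac{2}{\beta\mu_g})\le\frac{6}{\beta\mu_g}$. That is a $6$ instead of the stated $4$, which is harmless downstream.

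To get the stated constant exactly, take $\lambda=\frac{\beta\mu_g}{2(1-\beta\mu_g)}$ when $\beta\mu_g<1$; when $\beta\mu_g=1$ the contracted term is simply zero. Then $(1+\lambda)(1-\beta\mu_g)=\rho_\vv$ and $2(1+\frac{1}{\lambda})=\frac{4}{\beta\mu_g}-2\le\frac{4}{\beta\mu_g}$. This gives both the $\frac{4(1+\eta^w)^2L_{f,0}^2L_{g,1}^2}{\beta\mu_g^5W^2}$ term and the paper's $c_\beta$ in (\ref{eq:c_beta}).

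The paper's own proof has the same looseness: with $\theta=\beta\mu_g/2$ it writes $\frac{4}{\beta\mu_g}$ for what is really $2+\frac{4}{\beta\mu_g}$. So this sharper choice is what actually justifies the displayed constant.

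A minor side note: on $\mathcal{V}$ the available bound is $\|\nabla_\vv\Phi_t\|\le L_{f,0}(1+\kappa_g)$, not $L_{g,1}L_{f,0}/\mu_g$. That looseness sits inside Lemma~\ref{lem:v_bound_win_1} itself and does not affect how you use it. You are right that the lemma needs no Assumption~\ref{asm4}.
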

\begin{proof}
With $\beta \leq \frac{1}{L_{g,1}}$, it follows that
\begin{align*}
    &\left\|\vv_t - \vv_{t,w}^*(\vx_t, \vy_t)\right\|^2 \\
    \leq& (1+\theta)\left\|\vv_t - \vv_{t-1,w}^*(\vx_{t-1}, \vy_{t-1})\right\|^2 + \left(1+\frac{1}{\theta}\right)\left\|\vv_{t-1,w}^*(\vx_{t-1}, \vy_{t-1}) - \vv_{t,w}^*(\vx_t, \vy_t)\right\|^2 \\
    \overset{(i)}{\leq}& \left(1 - \frac{\beta\mu_g}{2}\right)\left\|\vv_{t-1} - \vv_{t-1,w}^*(\vx_{t-1}, \vy_{t-1})\right\|^2 + \frac{4}{\beta\mu_g}\left\|\vv_{t-1,w}^*(\vx_t, \vy_t) - \vv_{t,w}^*(\vx_t, \vy_t)\right\|^2\\
    & + \frac{4}{\beta\mu_g}\left(\frac{2L_{f,1}^2}{\mu_g^2} + \frac{2L_{f,0}^2L_{g,2}^2}{\mu_g^4}\right)\left(\left\|\vx_{t-1} - \vx_t\right\|^2 + \left\|\vy_{t-1} - \vy_t\right\|^2\right) \\
    \overset{(ii)}{\leq}& \rho_\vv\left\|\vv_{t-1} - \vv_{t-1,w}^*(\vx_{t-1}, \vy_{t-1})\right\|^2 + c_\beta\left\|\vx_{t-1} - \vx_t\right\|^2 + \alpha^2c_\beta\left\|\nabla_\vy \widehat{g}_{t-1,w}(\vx_{t-1}, \vy_{t-1})\right\|^2,
\end{align*}
where in $(i)$, we set $\theta = \frac{\beta\mu_g}{2}$, and for any $\vz, \vz'\in(\vx, \vy)\in\mathcal{X}\times\mathbb{R}^{d_2}$,
\begin{align*}
    \left\|\vv_{t,w}^*(\vz) - \vv_{t,w}^*(\vz')\right\|^2 
    =& \left\|\left[\nabla_{\vy\vy}^2 \widehat{g}_{t,w}(\vz)\right]^{-1} \nabla_\vy \widehat{f}_{t,w}(\vz) - \left[\nabla_{\vy\vy}^2 \widehat{g}_{t,w}(\vz')\right]^{-1} \nabla_\vy \widehat{f}_{t,w}(\vz')\right\|^2 \\
    \leq& 2\left\|\left[\nabla_{\vy\vy}^2 \widehat{g}_{t,w}(\vz)\right]^{-1} \nabla_\vy \widehat{f}_{t,w}(\vz) - \left[\nabla_{\vy\vy}^2 \widehat{g}_{t,w}(\vz)\right]^{-1} \nabla_\vy \widehat{f}_{t,w}(\vz')\right\|^2 \\
    & + 2\left\|\left[\nabla_{\vy\vy}^2 \widehat{g}_{t,w}(\vz)\right]^{-1} \nabla_\vy \widehat{f}_{t,w}(\vz') - \left[\nabla_{\vy\vy}^2 \widehat{g}_{t,w}(\vz')\right]^{-1} \nabla_\vy \widehat{f}_{t,w}(\vz')\right\|^2 \\
    \leq& \frac{2L_{f,1}^2}{\mu_g^2}\left\|\vz - \vz'\right\|^2 + \frac{2L_{f,0}^2L_{g,2}^2}{\mu_g^4}\left\|\vz - \vz'\right\|^2.
\end{align*}
In $(ii)$, for simplicity, we define $\rho_\vv := 1 - \frac{\beta\mu_g}{2}$ and
\begin{align}
    c_\beta := \frac{4}{\beta\mu_g}\left(\frac{2L_{f,1}^2}{\mu_g^2} + \frac{2L_{f,0}^2L_{g,2}^2}{\mu_g^4}\right). \label{eq:c_beta}
\end{align}
\end{proof}

\begin{theorem}[Restatement of Theorem~\ref{thm:unres_win_sl}]
    Under Assumptions~\ref{asm1}-\ref{asm3} and~\ref{asm4}, let $\beta\leq \frac{1}{L_{g,1}}$, $\alpha \leq \frac{4\kappa_F\mu_gL_{g,1}}{\kappa_g\mu_g(2\kappa_F\mu_g(\mu_g + L_{g,1}) + 8c_\beta L_{g,1}^3)}$ and $\gamma \leq \min\{\frac{1}{2L_F}, \sqrt{\frac{1-\rho}{48c_\vx}}\}$, Algorithm~\ref{alg:WOBO-SL} can obtain
    \begin{align*}
        \mathrm{Reg}_w(T) \leq& \frac{32Q}{\gamma} + \frac{16(1+\eta^w)L_{f,0}L_{g,1}DT}{\gamma\mu_gW} + \frac{32QT}{\gamma W} + \frac{24\Delta_1'}{1-\rho} + \frac{24C_wT}{(1-\rho)W^2} = O\left(\frac{T}{W}\right),
    \end{align*}
    where $\kappa_F$, $c_\beta$, $c_\vx$, $\Delta_1'$ and $C_w$ are some constants.
\end{theorem}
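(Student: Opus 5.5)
The plan mirrors the proof of Theorem~\ref{thm:CTHO_upper}, but with the window-averaged objects in place of the per-round ones, and with the drift terms $H_{2,T},E_{2,T}$ replaced by the $O(1/W^2)$ per-round drifts from Lemmas~\ref{lem:y_bound_win_1} and~\ref{lem:v_bound_win_1}.

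\textbf{Starting point.} Begin from Lemma~\ref{lem:begin_win} with $\gamma\le \frac{1}{2L_F}$. Its first sum is controlled by Lemma~\ref{lem:F-F_win}, giving $\frac{8}{\gamma}\bigl(2Q+\frac{(1+\eta^w)L_{f,0}L_{g,1}DT}{\mu_gW}+\frac{2QT}{W}\bigr)$. This term is already $O(T/W)$ and is the dominant one, which explains why the single-loop rate is $T/W$ rather than $T/W^2$.

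\textbf{Hypergradient error.} Bound $\|\widetilde\nabla F_{t,w}(\vx_t)-\nabla F_{t,w}(\vx_t)\|^2$ exactly as in (\ref{A.11_2}) and (\ref{A.6_9}), now for $\widehat f_{t,w},\widehat g_{t,w}$, which inherit all the constants. This gives
\[
\Delta_t:=\kappa_F\mu_g^2\|\vy_t-\vy_{t,w}^*(\vx_t)\|^2+8L_{g,1}^2\|\vv_t-\vv_{t,w}^*(\vx_t,\vy_t)\|^2 .
\]
Then add Lemma~\ref{lem:y_bound_win_3} multiplied by $\kappa_F\mu_g^2$ and Lemma~\ref{lem:v_bound_win_3} multiplied by $8L_{g,1}^2$.

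\textbf{Main obstacle: cancelling the $\vy$-increment.} The term $8L_{g,1}^2\alpha^2 c_\beta\|\nabla_\vy\widehat g_{t-1,w}\|^2$ from the $\vv$ lemma must be absorbed by the negative term from the $\vy$ lemma. Its coefficient is $\kappa_F\mu_g^2\frac{2(\mu_g+L_{g,1})}{\alpha\mu_gL_{g,1}}\bigl(\frac{2\alpha}{\mu_g+L_{g,1}}-\alpha^2\bigr)$. The stated upper bound on $\alpha$ is precisely the condition making the net coefficient nonpositive, so I would verify that algebra, and I expect this step to be where care is needed. Once that holds, with $\rho=\max\{\rho_\vy,\rho_\vv\}$ the recursion reads
\[
\Delta_t\le\rho\Delta_{t-1}+c_\vx\|\vx_t-\vx_{t-1}\|^2+\frac{C_w}{W^2}.
\]
Here $c_\vx$ collects the $\vx$-coefficients and $C_w$ collects the two $D^2$ and $L_{f,0}^2$ drift constants.

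\textbf{Summation and absorption.} Unroll the recursion and sum over $t$ as in Theorem~\ref{thm:CTHO_upper}:
\[
\sum_t\Delta_t\le\frac{\Delta_1'}{1-\rho}+\frac{c_\vx}{1-\rho}\sum_t\|\vx_{t+1}-\vx_t\|^2+\frac{C_wT}{(1-\rho)W^2}.
\]
Use $\vx_{t+1}-\vx_t=\gamma\mathcal G_\mathcal X(\vx_t,\widetilde\nabla F_{t,w},\gamma)$ together with $\|\mathcal G(\widetilde\nabla)\|^2\le 2\|\mathcal G(\nabla)\|^2+2\|\widetilde\nabla-\nabla\|^2$, which follows from nonexpansiveness of the prox. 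Substituting back, the regret itself reappears on the right with coefficient roughly $\frac{6\cdot 4\gamma^2c_\vx}{1-\rho}$ (the factor $4$ accounts for the self-referential hypergradient-error piece as well). The choice $\gamma\le\sqrt{\frac{1-\rho}{48c_\vx}}$ makes this coefficient at most $1/2$, so moving it to the left-hand side and multiplying by $2$ produces the stated constants $\frac{32Q}{\gamma}$, $\frac{16(\cdots)T}{\gamma W}$, $\frac{32QT}{\gamma W}$, $\frac{24\Delta_1'}{1-\rho}$ and $\frac{24C_wT}{(1-\rho)W^2}$. Since $W\ge1$, the total is $O(T/W)$.
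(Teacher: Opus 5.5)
Your proposal follows the paper's proof essentially step for step: the same potential $\Delta_t$, the same combination of Lemma~\ref{lem:y_bound_win_3} (weighted by $\kappa_F\mu_g^2$) and Lemma~\ref{lem:v_bound_win_3} (weighted by $8L_{g,1}^2$) with the $\alpha$-condition used to cancel the $\alpha^2c_\beta\|\nabla_\vy\widehat g_{t-1,w}\|^2$ term, the same unrolling with $\rho=\max\{\rho_\vy,\rho_\vv\}$, and the same absorption that gives the coefficient $\frac{24\gamma^2c_\vx}{1-\rho}\le\frac12$ and hence exactly the stated constants.

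When you verify the $\alpha$ algebra you flagged, be aware of two problems the paper shares. First, the negative coefficient $\frac{2(\mu_g+L_{g,1})}{\alpha\mu_gL_{g,1}}$ stated in Lemma~\ref{lem:y_bound_win_3} is larger than what its proof actually gives: substituting (\ref{D.2_2}) into (\ref{D.2_1}) yields only the factor $1+\frac{\alpha\mu_gL_{g,1}}{\mu_g+L_{g,1}}$. Second, the paper's reduction to the stated condition also uses $\alpha^2\le\alpha$. A rigorous version of this step therefore needs a smaller step size, for example $\alpha\le\frac{2\kappa_F\mu_g^2}{(\mu_g+L_{g,1})(\kappa_F\mu_g^2+8c_\beta L_{g,1}^2)}$ as in Theorem~\ref{thm:CTHO_upper}. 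This changes only constants, not the $O(T/W)$ conclusion.
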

\begin{proof}
Remind of (\ref{A.14_1}), it still holds by replacing $f_t$, $g_t$ with $\widehat{f}_{t,w}$, $\widehat{g}_{t,w}$. Thus we have
\begin{align*}
    &\left\|\nabla \widehat{f}_{t,w}(\vx_t, \vy_{t,w}^*(\vx_t)) - \nabla \widehat{f}_{t,w}(\vx_t, \vy_t, \vv_t)\right\|^2 \\
    \leq& \left(2L_{f,1}^2 + \frac{4L_{f,0}^2L_{g,2}^2}{\mu_g^2}\right)\left\|\vy_t - \vy_{t,w}^*(\vx_t)\right\|^2 + 4L_{g,1}^2\left\|\vv_t - \vv_{t,w}^*(\vx_t, \vy_{t,w}^*(\vx_t))\right\|^2 \\
    \leq& \kappa_F\mu_g^2\left\|\vy_t - \vy_{t,w}^*(\vx_t)\right\|^2 + 8L_{g,1}^2\left\|\vv_t - \vv_{t,w}^*(\vx_t, \vy_t)\right\|^2 =: \Delta_t,
\end{align*}
where $\kappa_F$ is the same as in (\ref{eq:kappa_F}). Substituting Lemmas~\ref{lem:y_bound_win_3} and~\ref{lem:v_bound_win_3} into the above inequality, we obtain
\begin{align}
    \Delta_t \leq& \rho\Delta_{t-1} - \left(\frac{2\kappa_F\mu_g^2(\mu_g+L_{g,1})}{\alpha\mu_gL_{g,1}}\left(\frac{2\alpha}{\mu_g+L_{g,1}} - \alpha^2\right) - 8\alpha^2c_\beta L_{g,1}^2\right)\left\|\nabla_\vy \widehat{g}_{t,w}(\vx_t, \vy_t)\right\|^2 \nonumber\\
    & + c_\vx\left\|\vx_{t-1} - \vx_t\right\|^2 + \frac{2\kappa_F\mu_g^2(\mu_g + L_{g,1})(1+\eta^w)^2L_{g,1}D^2}{\alpha\mu_g^3W^2} + \frac{32(1+\eta^w)^2L_{f,0}^2L_{g,1}^4}{\beta\mu_g^5W^2} \\
    \leq& \rho^{t-1}\Delta_1 + c_\vx\sum_{j=0}^{t-2}\rho^j\left\|\vx_{t-1-j} - \vx_{t-j}\right\|^2 + \frac{C_w}{(1-\rho)W^2}, \label{D.1_2}
\end{align}
where $\rho := \max\{\rho_\vy, \rho_\vv\}$. In the last inequality, we assume that
\begin{align*}
    \frac{2\kappa_F\mu_g^2(\mu_g+L_{g,1})}{\alpha\mu_gL_{g,1}}\left(\frac{2\alpha}{\mu_g+L_{g,1}} - \alpha^2\right) - 8\alpha^2c_\beta L_{g,1}^2 &\geq 0 \\
    \frac{4\kappa_F\mu_g}{L_{g,1}} - \frac{2\alpha\kappa_F\mu_g^2(\mu_g+L_{g,1})}{\mu_gL_{g,1}} - 8\alpha^2c_\beta L_{g,1}^2 \geq \frac{4\kappa_F\mu_g}{L_{g,1}} - \frac{2\alpha\kappa_F\mu_g^2(\mu_g+L_{g,1})}{\mu_gL_{g,1}} - 8\alpha c_\beta L_{g,1}^2 &\geq 0.
\end{align*}
Therefore, we choose
\begin{align*}
    \alpha \leq \frac{4\kappa_F\mu_gL_{g,1}}{\kappa_g\mu_g(2\kappa_F\mu_g(\mu_g + L_{g,1}) + 8c_\beta L_{g,1}^3)} 
\end{align*}
to ensure that the above condition holds. For notational simplicity, we define
\begin{align}
    c_\vx &:= \frac{2\kappa_F\kappa_g^2\mu_g^2(\mu_g + L_{g,1})}{\alpha\mu_gL_{g,1}} + 8c_\beta L_{g,1}^2, \label{eq:c_x}\\
    C_w &:= \frac{2\kappa_F\mu_g^2(\mu_g + L_{g,1})(1+\eta^w)^2L_{g,1}D^2}{\alpha\mu_g^3} + \frac{32(1+\eta^w)^2L_{f,0}^2L_{g,1}^4}{\beta\mu_g^5}. \nonumber
\end{align}
Summing (\ref{D.1_2}) from $t=1,\dots,T$ yields 
\begin{align*}
    \sum_{t=1}^T\left\|\widetilde{\nabla} F_{t,w}(\vx_t) - \nabla F_{t,w}(\vx_t)\right\|^2 \leq& \frac{\Delta_1'}{1-\rho} + c_\vx\sum_{t=2}^T\sum_{j=0}^{t-2}\rho^j \left\|\vx_{t-1-j} - \vx_{t-j}\right\|^2 + \frac{C_wT}{(1-\rho)W^2} \\
    \leq& \frac{\Delta_1'}{1-\rho} + \frac{c_\vx}{1-\rho}\sum_{t=2}^T \left\|\vx_{t-1} - \vx_t\right\|^2 + \frac{C_wT}{(1-\rho)W^2} \\
    \leq& \frac{\Delta_1'}{1-\rho} + \frac{2\gamma^2c_\vx}{1-\rho}\sum_{t-1}^T\left\|\widetilde{\nabla} F_{t,w}(\vx_t) - \nabla F_{t,w}(\vx_t)\right\|^2 \\
    & + \frac{2\gamma^2c_\vx}{1-\rho}\mathrm{Reg}_w(T) + \frac{C_wT}{(1-\rho)W^2}.
\end{align*}
Here $\Delta_1' = \Delta_1 + \|\widetilde{\nabla}F_1(\vz_1) - \nabla F_1(\vx_1)\|^2$.
Let $\gamma \leq \sqrt{\frac{1-\rho}{4c_\vx}}$ such that $1 - \frac{2\gamma^2c_\vx}{1-\rho} \geq \frac{1}{2}$, we obtain
\begin{align*}
    \sum_{t=2}^T\left\|\widetilde{\nabla} F_{t,w}(\vx_t) - \nabla F_{t,w}(\vx_t)\right\|^2 \leq& \frac{2\Delta_1'}{1-\rho} + \frac{4\gamma^2c_\vx}{1-\rho}\mathrm{Reg}_w(T) + \frac{2C_wT}{(1-\rho)W^2}.
\end{align*}
Substituting the above inequality into Lemma~\ref{lem:begin_win} and recalling Lemma~\ref{lem:F-F_win}, we have
\begin{align*}
    \mathrm{Reg}_w(T) \leq& \frac{8}{\gamma}\left(2Q {+} \frac{(1{+}\eta^w)L_{f,0}L_{g,1}DT}{\mu_gW} {+} \frac{2QT}{W}\right) + \frac{12\Delta_1'}{1{-}\rho} + \frac{24\gamma^2c_\vx}{1{-}\rho}\mathrm{Reg}_w(T) + \frac{12C_wT}{(1{-}\rho)W^2}.
\end{align*}
Let $\gamma \leq \sqrt{\frac{1-\rho}{48c_\vx}}$ to gaurantee $1 - \frac{24\gamma^2c_\vx}{1-\rho} \geq \frac{1}{2}$, we finally finish the proof by
\begin{align*}
    \mathrm{Reg}_w(T) \leq& \frac{32Q}{\gamma} + \frac{16(1+\eta^w)L_{f,0}L_{g,1}DT}{\gamma\mu_gW} + \frac{32QT}{\gamma W} + \frac{24\Delta_1'}{1-\rho} + \frac{24C_wT}{(1-\rho)W^2} = O\left(\frac{T}{W}\right).
\end{align*}
\end{proof}

\subsection{Proof of Theorem~\ref{thm:unres_win_dl}}\label{sec:proof_win_upperbound}

In this section, we proof the window-averaged regret upper bound of Algorithm~\ref{alg:WOBO}. For clearity, we repeatly describe our algorithm in Algorithm~\ref{alg:WOBO-DL}.

\begin{algorithm}[tb]
    \caption{Window-averaged Online Bilevel Optimizer (WOBO)}
    \label{alg:WOBO-DL}
    \textbf{Input}: $\mathbf{x}_1$, $\mathbf{y}_1$, $\mathbf{v}_1$, $\alpha$, $\beta$ $\gamma$, $w$, $\eta$, $\delta$. \\
    \textbf{Output}: Decision sequences: $\{\mathbf{x}_t\}_{t=1}^T$, $\{\mathbf{y}_t\}_{t=1}^T$. 
    
    \begin{algorithmic}[1] 
        \FOR{\(t = 1\) to \(T\)}
        \STATE Output $\mathbf{x}_t$ and $\mathbf{y}_t$ and receive feedback $f_t$ and $g_t$
        \STATE Set $(\vx_{t+1}, \vy_{t+1}, \vv_{t+1}) \leftarrow (\vx_t, \vy_t, \vv_t)$
        \REPEAT
        \STATE \(
        \mathbf{y}_{t+1} \leftarrow \mathbf{y}_{t+1} - \alpha\nabla_\mathbf{y}\widehat{g}_{t,w}(\mathbf{x}_{t+1}, \mathbf{y}_{t+1})
        \)
        \STATE \(
        \mathbf{v}_{t+1} \leftarrow \mathbf{v}_{t+1} - \beta\nabla_\mathbf{v}\widehat{\Phi}_{t,w}(\mathbf{x}_{t+1}, \mathbf{y}_{t+1}, \mathbf{v}_{t+1})
        \)
        \STATE Calculate \(
        \widetilde{\nabla}\widehat{f}_{t,w}(\mathbf{x}_{t+1}, \mathbf{y}_{t+1}, \mathbf{v}_{t+1}) = \nabla_\vx \widehat{f}_{t,w}(\vx_{t+1}, \vy_{t+1}) - \nabla_{\vx\vy}^2 \widehat{g}_{t,w}(\vx_{t+1}, \vy_{t+1})\vv_{t+1}
        \)
        \STATE \(
        \mathbf{x}_{t+1} \leftarrow \mathop{\rm argmin}_{\mathbf{u}\in\mathcal{X}}\left\{\left\langle\widetilde{\nabla}\widehat{f}_{t,w}(\mathbf{x}_{t+1}, \mathbf{y}_{t+1}, \mathbf{v}_{t+1}), \mathbf{u}\right\rangle + \frac{1}{2\gamma}\left\|\mathbf{u} - \mathbf{x}_{t+1}\right\|^2\right\}
        \)
        \UNTIL the following condition is met \\
        \(
        \left\| \widetilde{\nabla} \widehat{f}_{t,w}(\mathbf{x}_{t+1}, \mathbf{y}_{t+1}, \mathbf{v}_{t+1}) \right\|^2 + \kappa_F \left\| \nabla_\mathbf{y} \widehat{g}_{t,w}(\mathbf{x}_{t+1}, \mathbf{y}_{t+1}) \right\|^2
        \) \\
        \(
        \phantom{\left\| \widetilde{\nabla} \widehat{f}_{t,w}(\mathbf{x}_{t+1}, \mathbf{y}_{t+1}, \mathbf{v}_{t+1}) \right\|^2} + 8\kappa_g^2 \left\| \nabla_\mathbf{v} \widehat{\Phi}_{t,w}(\mathbf{x}_{t+1}, \mathbf{y}_{t+1}, \mathbf{v}_{t+1}) \right\|^2 \leq \frac{\delta^2}{W^2}
        \)
        \ENDFOR
    \end{algorithmic}
\end{algorithm}

\begin{lemma}[Proposition 2.4 in \cite{hazan2017efficient}]\label{lem:2017}
Let $\mathbf{x}$ be any point in $\mathcal{X} \subsetneq \mathbb{R}^{d_1}$, and let $f$, $g$ be differentiable functions $\mathcal{X} \rightarrow \mathbb{R}$. Then for any $\gamma > 0$, we have
\begin{align*}
    \left\|\mathcal{G}_\mathcal{X}(\mathbf{x}, \nabla f(\mathbf{x}) + \nabla g(\mathbf{x}), \gamma)\right\| \leq \left\|\mathcal{G}_\mathcal{X} (\mathbf{x}, \nabla f(\mathbf{x}), \gamma)\right\| + \left\|\nabla g(\mathbf{x})\right\|.
\end{align*}
\end{lemma}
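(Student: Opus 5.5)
The plan is to reduce the claim to the nonexpansiveness of the Euclidean projection onto $\mathcal{X}$. I assume $\mathcal{X}$ is closed and convex, which the gradient mapping in (\ref{eq:x_update}) already requires to be well defined.

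First, rewrite the minimization in (\ref{eq:x_update}). Completing the square shows that
\[
\mathbf{x}^+ = \mathcal{P}_\mathcal{X}\left(\mathbf{x} - \gamma \mathbf{g}\right),
\qquad\text{so}\qquad
\mathcal{G}_\mathcal{X}(\mathbf{x},\mathbf{g},\gamma) = \frac{1}{\gamma}\left(\mathbf{x} - \mathcal{P}_\mathcal{X}(\mathbf{x} - \gamma\mathbf{g})\right).
\]
Set $\mathbf{g}_1 = \nabla f(\mathbf{x}) + \nabla g(\mathbf{x})$ and $\mathbf{g}_2 = \nabla f(\mathbf{x})$. The triangle inequality then gives
\[
\left\|\mathcal{G}_\mathcal{X}(\mathbf{x},\mathbf{g}_1,\gamma)\right\| \le \left\|\mathcal{G}_\mathcal{X}(\mathbf{x},\mathbf{g}_2,\gamma)\right\| + \left\|\mathcal{G}_\mathcal{X}(\mathbf{x},\mathbf{g}_1,\gamma) - \mathcal{G}_\mathcal{X}(\mathbf{x},\mathbf{g}_2,\gamma)\right\|.
\]

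It remains to bound the last term. The two gradient mappings share the $\mathbf{x}/\gamma$ part, so their difference equals
\[
\frac{1}{\gamma}\left(\mathcal{P}_\mathcal{X}(\mathbf{x}-\gamma\mathbf{g}_2) - \mathcal{P}_\mathcal{X}(\mathbf{x}-\gamma\mathbf{g}_1)\right).
\]
Projection onto a closed convex set is $1$-Lipschitz, so the norm of this difference is at most
\[
\frac{1}{\gamma}\left\|\gamma\mathbf{g}_1 - \gamma\mathbf{g}_2\right\| = \left\|\nabla g(\mathbf{x})\right\|,
\]
which is exactly the claimed bound.

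The only step with any substance is the nonexpansiveness of $\mathcal{P}_\mathcal{X}$. For completeness I would derive it from the variational inequality $\langle \mathbf{z} - \mathcal{P}_\mathcal{X}(\mathbf{z}), \mathbf{u} - \mathcal{P}_\mathcal{X}(\mathbf{z})\rangle \le 0$ for all $\mathbf{u}\in\mathcal{X}$. Apply it at two points, each time taking $\mathbf{u}$ to be the other point's projection, and add the two inequalities. Cauchy--Schwarz then gives the Lipschitz bound. I expect no real obstacle.
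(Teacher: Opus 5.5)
Your proof is correct. It rewrites $\mathcal{G}_\mathcal{X}(\mathbf{x},\mathbf{g},\gamma)=\frac{1}{\gamma}\left(\mathbf{x}-\mathcal{P}_\mathcal{X}(\mathbf{x}-\gamma\mathbf{g})\right)$, applies the triangle inequality, and bounds the difference term by nonexpansiveness of the projection, derived by summing the two projection variational inequalities; this is essentially the argument for Proposition 2.4 in \citet{hazan2017efficient}, which the paper cites without giving its own proof. Your explicit assumption that $\mathcal{X}$ is closed and convex is the right one: it is what makes the gradient mapping in (\ref{eq:x_update}) well defined and the projection nonexpansive, and the paper leaves it implicit.
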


\begin{lemma}\label{lem:L_v}
Under Assumptions~\ref{asm1},~\ref{asm2}, for all $t\in[T]$, any $\mathbf{x}\in\mathcal{X}$, $\mathbf{y}\in\mathbb{R}^{d_2}$, we have
\begin{align*}
    \left\|\nabla_\mathbf{y}\mathbf{v}_{t,w}^*(\mathbf{x}, \mathbf{y})\right\| \leq L_\mathbf{v} \quad \text{and} \quad \left\|\nabla_\mathbf{y}\mathbf{v}_{t,w}^*(\mathbf{x}, \mathbf{y})\right\| \leq  L_\mathbf{v},
\end{align*}
where $L_\mathbf{v} := \frac{L_{f,1}}{\mu_g} + \frac{L_{f,0}L_{g,2}}{\mu_g^2}$.
\end{lemma}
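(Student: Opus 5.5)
The plan is to write $\mathbf{v}_{t,w}^*(\mathbf{x},\mathbf{y})$ in closed form and differentiate it. Lemma~\ref{lem:v_cons} extends verbatim to the window-averaged system, since $\widehat{g}_{t,w}$ and $\widehat{f}_{t,w}$ are convex combinations with weights $\eta^i/W$ summing to one. So the unconstrained minimizer of $\widehat{\Phi}_{t,w}(\mathbf{x},\mathbf{y},\cdot)$ lies in $\mathcal{V}$, the projection is inactive, and
\begin{align*}
    \mathbf{v}_{t,w}^*(\mathbf{x},\mathbf{y}) = \big[\nabla_{\mathbf{y}\mathbf{y}}^2\widehat{g}_{t,w}(\mathbf{x},\mathbf{y})\big]^{-1}\nabla_\mathbf{y}\widehat{f}_{t,w}(\mathbf{x},\mathbf{y}).
\end{align*}
By the same averaging argument, $\widehat{g}_{t,w}(\mathbf{x},\cdot)$ is $\mu_g$-strongly convex, $\nabla_\mathbf{y}\widehat{f}_{t,w}$ is $L_{f,1}$-Lipschitz with $\|\nabla_\mathbf{y}\widehat{f}_{t,w}\|\le L_{f,0}$, and $\nabla_{\mathbf{y}\mathbf{y}}^2\widehat{g}_{t,w}$ is $L_{g,2}$-Lipschitz in $(\mathbf{x},\mathbf{y})$.

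Rather than differentiate directly, which would need third derivatives of $g$ that are not assumed to exist, I will prove the equivalent Lipschitz statement: for any $\mathbf{z}=(\mathbf{x},\mathbf{y})$ and $\mathbf{z}'$, $\|\mathbf{v}_{t,w}^*(\mathbf{z})-\mathbf{v}_{t,w}^*(\mathbf{z}')\|\le L_\mathbf{v}\|\mathbf{z}-\mathbf{z}'\|$. Where the Jacobian exists, this bounds its norm by $L_\mathbf{v}$ for both the $\mathbf{x}$- and $\mathbf{y}$-blocks. To prove it, write $A=\nabla_{\mathbf{y}\mathbf{y}}^2\widehat{g}_{t,w}(\mathbf{z})$, $A'$ the same at $\mathbf{z}'$, and $b,b'$ the corresponding $\nabla_\mathbf{y}\widehat{f}_{t,w}$ values. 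Then split
\begin{align*}
    A^{-1}b - A'^{-1}b' = A^{-1}(b-b') + A^{-1}(A'-A)A'^{-1}b'.
\end{align*}
This mirrors the computation in (\ref{A.6_7})--(\ref{A.6_8}) but uses the triangle inequality instead of squaring, so no factor $2$ appears.

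For the first term, $\|A^{-1}\|\le 1/\mu_g$ and $\|b-b'\|\le L_{f,1}\|\mathbf{z}-\mathbf{z}'\|$, giving $\frac{L_{f,1}}{\mu_g}\|\mathbf{z}-\mathbf{z}'\|$. For the second term, I use $\|A^{-1}\|,\|A'^{-1}\|\le 1/\mu_g$, $\|A'-A\|\le L_{g,2}\|\mathbf{z}-\mathbf{z}'\|$ and $\|b'\|\le L_{f,0}$, which gives $\frac{L_{f,0}L_{g,2}}{\mu_g^2}\|\mathbf{z}-\mathbf{z}'\|$. Summing the two yields exactly $L_\mathbf{v}$.

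The only subtle point is matching the stated constant $\frac{L_{f,0}L_{g,2}}{\mu_g^2}$. This requires the sharp bound $\|A'^{-1}b'\|\le L_{f,0}/\mu_g$, i.e.\ bounding $\|A'^{-1}\|\,\|b'\|$ together, rather than any cruder estimate. It also requires applying the second-order Lipschitz constant in Assumption~\ref{asm2}(iii) to the averaged Hessian, which holds because averaging preserves Lipschitz constants.
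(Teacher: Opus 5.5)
Your argument is correct and yields exactly $L_\mathbf{v}$, but it takes a different route from the paper.

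The paper differentiates the optimality condition $\nabla_{\mathbf{y}\mathbf{y}}^2\widehat{g}_{t,w}\,\mathbf{v}_{t,w}^* = \nabla_\mathbf{y}\widehat{f}_{t,w}$ with respect to $\mathbf{x}$ (and similarly $\mathbf{y}$). It solves for $\nabla_\mathbf{x}\mathbf{v}_{t,w}^* = [\nabla_{\mathbf{y}\mathbf{y}}^2\widehat{g}_{t,w}]^{-1}(\nabla_{\mathbf{y}\mathbf{x}}^2\widehat{f}_{t,w} - \nabla_{\mathbf{y}\mathbf{y}\mathbf{x}}^3\widehat{g}_{t,w}\,\mathbf{v}_{t,w}^*)$. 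It then bounds the two pieces by $L_{f,1}/\mu_g$ and $L_{g,2}L_{f,0}/\mu_g^2$, using $\|\mathbf{v}_{t,w}^*\|\le L_{f,0}/\mu_g$.

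Your split $A^{-1}(b-b') + A^{-1}(A'-A)A'^{-1}b'$ is the finite-difference counterpart of those same two terms. It is the unsquared version of the computation the paper does in (\ref{A.6_7})--(\ref{A.6_8}) and in the proof of Lemma~\ref{lem:v_bound_win_3}.

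The paper's route is shorter and states the result directly as a derivative bound. However, it tacitly needs a third derivative of $\widehat{g}_{t,w}$ and differentiability of $\mathbf{v}_{t,w}^*$, and neither is granted by Assumption~\ref{asm2}.

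Your route uses only the stated Lipschitz hypotheses. It also produces the Lipschitz form $\|\mathbf{v}_{t,w}^*(\mathbf{z})-\mathbf{v}_{t,w}^*(\mathbf{z}')\|\le L_\mathbf{v}\|\mathbf{z}-\mathbf{z}'\|$. That is exactly what is invoked downstream in step (i) of Lemma~\ref{lem:v_bound_win_2}. From the paper's Jacobian bounds one would still need a mean-value argument along segments to reach it.

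Finally, your joint bound in $\mathbf{z}$ covers both blocks. That matches the evident intent of the statement, whose second $\nabla_\mathbf{y}$ should read $\nabla_\mathbf{x}$.
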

\begin{proof}
Note that
\begin{align*}
    \nabla_\mathbf{v}\widehat{\Phi}_{t,w}(\mathbf{x}, \mathbf{y}, \mathbf{v}_{t,w}^*(\mathbf{x}, \mathbf{y})) = \nabla_{\mathbf{y}\mathbf{y}}^2\widehat{g}_{t,w}(\mathbf{x}, \mathbf{y})\mathbf{v}_{t,w}^*(\mathbf{x}, \mathbf{y}) - \nabla_\mathbf{y}\widehat{f}_{t,w}(\mathbf{x}, \mathbf{y}) = 0,
\end{align*}
taking the derivative of the above equation with respect to $\mathbf{x}$, we obtain
\begin{align*}
    \nabla_{\mathbf{y}\mathbf{y}\mathbf{x}}^3\widehat{g}_{t,w}(\mathbf{x}, \mathbf{y})\mathbf{v}_{t,w}^*(\mathbf{x}, \mathbf{y}) + \nabla_{\mathbf{y}\mathbf{y}}^2\widehat{g}_{t,w}(\mathbf{x}, \mathbf{y})\nabla_\mathbf{x}\mathbf{v}_{t,w}^*(\mathbf{x}, \mathbf{y}) = \nabla_{\mathbf{y}\mathbf{x}}^2\widehat{f}_{t,w}(\mathbf{x}, \mathbf{y}), 
\end{align*}
thus
\begin{align*}
    \nabla_\mathbf{x} \mathbf{v}_{t,w}^*(\mathbf{x}, \mathbf{y}) =& \left[\nabla_{\mathbf{y}\mathbf{y}}^2\widehat{g}_{t,w}(\mathbf{x}, \mathbf{y})\right]^{-1}\left[\nabla_{\mathbf{y}\mathbf{x}}^2\widehat{f}_{t,w}(\mathbf{x}, \mathbf{y}) - \nabla_{\mathbf{y}\mathbf{y}\mathbf{x}}^3\widehat{g}_{t,w}(\mathbf{x}, \mathbf{y})\mathbf{v}_{t,w}^*(\mathbf{x}, \mathbf{y})\right] \\
    \left\|\nabla_\mathbf{x}\mathbf{v}_{t,w}^*(\mathbf{x}, \mathbf{y})\right\| \leq& \frac{1}{\mu_g}\left(L_{f,1} + \frac{L_{g,2}L_{f,0}}{\mu_g}\right) = \frac{L_{f,1}}{\mu_g} + \frac{L_{f,0}L_{g,2}}{\mu_g^2} =:  L_\mathbf{v},
\end{align*}
where the last inequality comes from Assumption~\ref{asm1} and~\ref{asm2}. Similarily, we have $\left\|\nabla_\mathbf{y}\mathbf{v}_{t,w}^*(\mathbf{x}, \mathbf{y})\right\| \leq  L_\mathbf{v}$.
\end{proof}

\begin{lemma}\label{lem:y_bound_win_2}
Under Assumptions~\ref{asm1},~\ref{asm2}, for all $t\in[T]$, consider the sequences $\{\mathbf{x}_t^k, \mathbf{y}_t^k, \mathbf{v}_t^k\}_{k=0}^{K_t-1}$ generated by Algorithm~\ref{alg:WOBO-DL}, we have
\begin{align*}
    \sum_{k=0}^{K_t-1}\left\|\mathbf{y}_{t,w}^*(\mathbf{x}_t^k) - \mathbf{y}_t^k\right\|^2
    \leq& \frac{2}{\alpha\mu_g}\left\|\mathbf{y}_{t,w}^*(\mathbf{x}_t^0) - \mathbf{y}_t^0\right\|^2 \\
    & + \left(1+\frac{2}{\alpha\mu_g}\right)\frac{2\gamma^2\kappa_g^2}{\alpha\mu_g}\sum_{k=0}^{K_t-1} \left\|\mathcal{G}_\mathcal{X} (\mathbf{x}_t^k, \widetilde{\nabla}\widehat{f}_{t,w}(\mathbf{x}_t^k, \mathbf{y}_t^k, \mathbf{v}_t^k), \gamma))\right\|^2,
\end{align*}
and
\begin{align*}
    &\sum_{k=0}^{K_t-1}\left\|\nabla_\mathbf{y}\widehat{g}_{t,w}(\mathbf{x}_t^k, \mathbf{y}_t^k)\right\|^2 \\
    \leq& \frac{8-4\alpha\mu_g}{\alpha^3\mu_g}\left\|\mathbf{y}_{t,w}^*(\mathbf{x}_t^0) - \mathbf{y}_t^0\right\|^2 \\
    & + \left(1+\frac{2}{\alpha\mu_g}\right)\frac{2\gamma^2\kappa_g^2(4-2\alpha\mu_g)}{\alpha^3\mu_g}\sum_{k=0}^{K_t-1} \left\|\mathcal{G}_\mathcal{X} (\mathbf{x}_t^k, \widetilde{\nabla}\widehat{f}_{t,w}(\mathbf{x}_t^k, \mathbf{y}_t^k, \mathbf{v}_t^k), \gamma))\right\|^2.
\end{align*}
\end{lemma}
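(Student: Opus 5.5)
The plan is a one-step contraction recursion for the inner error $e_k := \|\mathbf{y}_{t,w}^*(\mathbf{x}_t^k) - \mathbf{y}_t^k\|^2$ inside round $t$, followed by summation over $k$. Throughout, the superscript $k$ indexes the repeat-loop iterates of Algorithm~\ref{alg:WOBO-DL}, with $(\mathbf{x}_t^0,\mathbf{y}_t^0,\mathbf{v}_t^0)=(\mathbf{x}_t,\mathbf{y}_t,\mathbf{v}_t)$. One detail of the indexing should be fixed first: in the loop, $\mathbf{y}$ is updated at the current $\mathbf{x}_t^k$, and only afterwards is $\mathbf{x}$ moved. So $\mathbf{y}_t^{k+1} = \mathbf{y}_t^k - \alpha\nabla_\mathbf{y}\widehat g_{t,w}(\mathbf{x}_t^k,\mathbf{y}_t^k)$, and $\mathbf{x}_t^{k+1}-\mathbf{x}_t^k = -\gamma\,\mathcal{G}_\mathcal{X}(\mathbf{x}_t^k,\widetilde\nabla\widehat f_{t,w}(\mathbf{x}_t^k,\mathbf{y}_t^k,\mathbf{v}_t^k),\gamma)$. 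If the code instead evaluates the hypergradient at $\mathbf{y}_t^{k+1}$, only the labelling of the $\mathcal{G}_\mathcal{X}$ terms shifts, and the argument is unchanged.

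\textbf{Step 1 (one-step recursion).} I would expand $\|\mathbf{y}_t^{k+1}-\mathbf{y}_{t,w}^*(\mathbf{x}_t^k)\|^2$ as in (\ref{D.2_2}). Because $\widehat g_{t,w}(\mathbf{x},\cdot)$ is $\mu_g$-strongly convex and $L_{g,1}$-smooth, the coercivity inequality used in Lemma~\ref{lem:y_bound_win_3} applies. With $\alpha\le 1/L_{g,1}$ this gives
\[
\|\mathbf{y}_t^{k+1}-\mathbf{y}_{t,w}^*(\mathbf{x}_t^k)\|^2\le (1-\alpha\mu_g)e_k - c_\alpha\|\nabla_\mathbf{y}\widehat g_{t,w}(\mathbf{x}_t^k,\mathbf{y}_t^k)\|^2,
\]
for some $c_\alpha>0$; one may simply take the standard form $(1-\alpha\mu_g)e_k-\alpha^2(\ldots)$. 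Next I would pass to $e_{k+1}$ with Young's inequality using parameter $\lambda=\alpha\mu_g/2$. The error from moving $\mathbf{x}$ is controlled by Lemma~\ref{lem:L_y}, which still holds for $\widehat g_{t,w}$ because it is an average of functions satisfying the assumptions:
\[
\|\mathbf{y}_{t,w}^*(\mathbf{x}_t^{k+1})-\mathbf{y}_{t,w}^*(\mathbf{x}_t^k)\|^2\le\kappa_g^2\gamma^2\|\mathcal{G}_\mathcal{X}(\mathbf{x}_t^k,\cdot,\gamma)\|^2.
\]
Combining the two gives
\[
e_{k+1}\le (1-\tfrac{\alpha\mu_g}{2})e_k + \Big(1+\tfrac{2}{\alpha\mu_g}\Big)\gamma^2\kappa_g^2\|\mathcal{G}_\mathcal{X}^k\|^2 - (1+\lambda)c_\alpha\|\nabla_\mathbf{y}\widehat g^k\|^2 .
\]
A factor of $2$ may enter here from a two-term splitting, which would account for the stated constant.

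\textbf{Step 2 (first bound).} I would drop the negative gradient term and sum the recursion over $k=0,\dots,K_t-1$. Rearranging gives $\frac{\alpha\mu_g}{2}\sum_k e_k\le e_0+(1+\frac{2}{\alpha\mu_g})\gamma^2\kappa_g^2\sum_k\|\mathcal{G}_\mathcal{X}^k\|^2$, where the terminal term $e_{K_t}\ge0$ is discarded. Dividing by $\alpha\mu_g/2$ yields exactly the first claimed bound.

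\textbf{Step 3 (gradient bound).} This is the step I expect to be the main obstacle, namely getting the constant $(8-4\alpha\mu_g)/(\alpha^3\mu_g)$ right. I would keep the negative term and use the identity
\[
\|\mathbf{y}_t^{k+1}-\mathbf{y}_{t,w}^*(\mathbf{x}_t^k)\|^2 = e_k - 2\alpha\langle\nabla\widehat g^k,\mathbf{y}_t^k-\mathbf{y}^*\rangle+\alpha^2\|\nabla\widehat g^k\|^2 .
\]
An alternative is to write $\alpha^2\|\nabla\widehat g^k\|^2=\|\mathbf{y}_t^{k+1}-\mathbf{y}_t^k\|^2\le 2\|\mathbf{y}_t^{k+1}-\mathbf{y}^*(\mathbf{x}_t^k)\|^2+2e_k\le (4-2\alpha\mu_g)e_k$. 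The latter bound, $\|\nabla\widehat g^k\|^2\le\frac{4-2\alpha\mu_g}{\alpha^2}e_k$, reproduces the factor $(4-2\alpha\mu_g)/\alpha^2$ directly. Summing over $k$ and inserting the first bound then gives $\frac{4-2\alpha\mu_g}{\alpha^2}\cdot\frac{2}{\alpha\mu_g}e_0=\frac{8-4\alpha\mu_g}{\alpha^3\mu_g}e_0$, plus the same $\mathcal{G}_\mathcal{X}$-sum multiplied by $(4-2\alpha\mu_g)/\alpha^2$. This matches the second claimed bound, so Step 3 reduces to a pointwise inequality plus Step 2.
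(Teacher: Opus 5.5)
Your proposal is correct and follows the paper's proof essentially step for step. The steps are the $(1-\alpha\mu_g)$ contraction of the gradient step, then Young's inequality with parameter $\alpha\mu_g/2$ plus the $\kappa_g$-Lipschitzness of $\mathbf{y}_{t,w}^*$ to get rate $1-\alpha\mu_g/2$, then summation over $k$, and finally the pointwise bound $\alpha^2\|\nabla_\mathbf{y}\widehat g_{t,w}(\mathbf{x}_t^k,\mathbf{y}_t^k)\|^2=\|\mathbf{y}_t^{k+1}-\mathbf{y}_t^k\|^2\le(4-2\alpha\mu_g)\|\mathbf{y}_t^k-\mathbf{y}_{t,w}^*(\mathbf{x}_t^k)\|^2$. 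The only cosmetic difference is that you sum the one-step recursion directly, whereas the paper unrolls it and sums the geometric series; both give exactly the factor $2/(\alpha\mu_g)$, so the extra ``factor of 2 from a two-term splitting'' you hedge about in Step 1 is not needed.
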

\begin{proof}
Let $\alpha \leq \frac{1}{L_{g,1}}$, due to the strongly convexity of $\widehat{g}_{t,w}(\mathbf{x}, \cdot)$ we have
\begin{align}
    \left\|\mathbf{y}_t^{k+1} - \mathbf{y}_{t,w}^*(\mathbf{x}_t^k)\right\|^2 \leq& \left(1-\alpha \mu_g\right)\left\|\mathbf{y}_t^k - \mathbf{y}_{t,w}^*(\mathbf{x}_t^k)\right\|^2 \nonumber\\
    \left\|\mathbf{y}_t^{k+1} - \mathbf{y}_t^k\right\|^2 \leq& 2\left\|\mathbf{y}_t^{k+1} - \mathbf{y}_{t,w}^*(\mathbf{x}_t^k)\right\|^2 + 2\left\|\mathbf{y}_{t,w}^*(\mathbf{x}_t^k) - \mathbf{y}_t^k\right\|^2 \nonumber\\
    \leq& \left(4 - 2\alpha \mu_g\right)\left\|\mathbf{y}_t^k - \mathbf{y}_{t,w}^*(\mathbf{x}_t^k)\right\|^2, \label{B.4_1}
\end{align}
additionally, it follows that
\begin{align}
    \left\|\mathbf{y}_{t,w}^*(\mathbf{x}_t^k) - \mathbf{y}_t^k\right\|^2 
    \leq& (1+\theta)\left\|\mathbf{y}_{t,w}^*(\mathbf{x}_t^{k-1}) - \mathbf{y}_t^k\right\|^2 + \left(1+\frac{1}{\theta}\right)\left\|\mathbf{y}_{t,w}^*(\mathbf{x}_t^k) - \mathbf{y}_{t,w}^*(\mathbf{x}_t^{k-1})\right\|^2 \nonumber\\
    \overset{(i)}\leq& (1+\theta)(1-\alpha\mu_g)\left\|\mathbf{y}_{t,w}^*(\mathbf{x}_t^{k-1}) - \mathbf{y}_t^{k-1}\right\|^2 \nonumber\\
    & + \left(1+\frac{1}{\theta}\right)\gamma^2\kappa_g^2\left\|\mathcal{G}_\mathcal{X} (\mathbf{x}_t^{k-1}, \widetilde{\nabla}\widehat{f}_{t,w}(\mathbf{x}_t^{k-1}, \mathbf{y}_t^{k-1}, \mathbf{v}_t^{k-1}), \gamma)\right\|^2 \nonumber\\
    \overset{(ii)}\leq& \left(1-\frac{\alpha\mu_g}{2}\right)\left\|\mathbf{y}_{t,w}^*(\mathbf{x}_t^{k-1}) - \mathbf{y}_t^{k-1}\right\|^2 \nonumber\\
    & + \left(1+\frac{2}{\alpha\mu_g}\right)\gamma^2\kappa_g^2\left\|\mathcal{G}_\mathcal{X} (\mathbf{x}_t^{k-1}, \widetilde{\nabla}\widehat{f}_{t,w}(\mathbf{x}_t^{k-1}, \mathbf{y}_t^{k-1}, \mathbf{v}_t^{k-1}), \gamma)\right\|^2 \nonumber\\
    \leq& \rho_\mathbf{y}^k\left\|\mathbf{y}_{t,w}^*(\mathbf{x}_t^0) - \mathbf{y}_t^0\right\|^2 \nonumber\\
    & + \left(1+\frac{2}{\alpha\mu_g}\right)\gamma^2\kappa_g^2\sum_{j=0}^{k-1}\rho_\mathbf{y}^{k-1-j}\left\|\mathcal{G}_\mathcal{X} (\mathbf{x}_t^j, \widetilde{\nabla}\widehat{f}_{t,w}(\mathbf{x}_t^j, \mathbf{y}_t^j, \mathbf{v}_t^j), \gamma))\right\|^2, \label{B.4_2}
\end{align}
where $(i)$ comes from the fact that $\left\|\nabla\mathbf{y}_{t,w}^*(\mathbf{x})\right\| \leq \kappa_g$ following Lemma~\ref{lem:L_y}, and in $(ii)$ set $\theta = \frac{\alpha\mu_g}{2}$, then $\left(1+\frac{\alpha\mu_g}{2}\right)(1-\alpha\mu_g) \leq 1 - \frac{\alpha\mu_g}{2}$, $\rho_\mathbf{y} := 1 - \frac{\alpha\mu_g}{2}$ and summing (\ref{B.4_2}) over $k$, we have
\begin{align}
    &\sum_{k=0}^{K_t-1} \left\|\mathbf{y}_{t,w}^*(\mathbf{x}_t^k) - \mathbf{y}_t^k\right\|^2 \nonumber\\
    \leq& \sum_{k=0}^{K_t-1} \rho_\mathbf{y}^k\left\|\mathbf{y}_{t,w}^*(\mathbf{x}_t^0) - \mathbf{y}_t^0\right\|^2 \nonumber\\
    & + \left(1{+}\frac{2}{\alpha\mu_g}\right)\gamma^2\kappa_g^2 \sum_{k=0}^{K_t-1} \sum_{j=0}^{k-1}\rho_\mathbf{y}^{k-1-j}\left\|\mathcal{G}_\mathcal{X} (\mathbf{x}_t^j, \widetilde{\nabla}\widehat{f}_{t,w}(\mathbf{x}_t^j, \mathbf{y}_t^j, \mathbf{v}_t^j), \gamma))\right\|^2 \nonumber\\
    \leq& \frac{1}{1-\rho_\mathbf{y}}\left\|\mathbf{y}_{t,w}^*(\mathbf{x}_t^0) - \mathbf{y}_t^0\right\|^2 \nonumber\\
    & + \left(1+\frac{2}{\alpha\mu_g}\right)\frac{\gamma^2\kappa_g^2}{1-\rho_\mathbf{y}}\sum_{k=0}^{K_t-1} \left\|\mathcal{G}_\mathcal{X} (\mathbf{x}_t^k, \widetilde{\nabla}\widehat{f}_{t,w}(\mathbf{x}_t^k, \mathbf{y}_t^k, \mathbf{v}_t^k), \gamma))\right\|^2, \label{B.4_3}
\end{align}
substituting (\ref{B.4_3}) into (\ref{B.4_1}), we have
\begin{align*}
    &\sum_{k=0}^{K_t-1} \left\|\mathbf{y}_t^{k+1} - \mathbf{y}_t^k\right\|^2 \\
    \leq& \frac{8-4\alpha\mu_g}{\alpha\mu_g}\left\|\mathbf{y}_{t,w}^*(\mathbf{x}_t^0) - \mathbf{y}_t^0\right\|^2 \\
    & + \left(1+\frac{2}{\alpha\mu_g}\right)\frac{2\gamma^2\kappa_g^2(4-2\alpha\mu_g)}{\alpha\mu_g}\sum_{k=0}^{K_t-1} \left\|\mathcal{G}_\mathcal{X} (\mathbf{x}_t^k, \widetilde{\nabla}\widehat{f}_{t,w}(\mathbf{x}_t^k, \mathbf{y}_t^k, \mathbf{v}_t^k), \gamma))\right\|^2 \\
    &\sum_{k=0}^{K_t-1}\left\|\nabla_\mathbf{y}\widehat{g}_{t,w}(\mathbf{x}_t^k, \mathbf{y}_t^k)\right\|^2 \\
    \leq& \frac{8-4\alpha\mu_g}{\alpha^3\mu_g}\left\|\mathbf{y}_{t,w}^*(\mathbf{x}_t^0) - \mathbf{y}_t^0\right\|^2 \\
    & + \left(1+\frac{2}{\alpha\mu_g}\right)\frac{2\gamma^2\kappa_g^2(4-2\alpha\mu_g)}{\alpha^3\mu_g}\sum_{k=0}^{K_t-1} \left\|\mathcal{G}_\mathcal{X} (\mathbf{x}_t^k, \widetilde{\nabla}\widehat{f}_{t,w}(\mathbf{x}_t^k, \mathbf{y}_t^k, \mathbf{v}_t^k), \gamma))\right\|^2,
\end{align*}
where the second inequality comes from $\mathbf{y}_t^{k+1} = \mathbf{y}_t^k - \alpha\nabla_\mathbf{y}\widehat{g}_{t,w}(\mathbf{x}_t^k, \mathbf{y}_t^k)$.
\end{proof}

\begin{lemma}\label{lem:v_bound_win_2}
Under Assumptions~\ref{asm1},~\ref{asm2}, for all $t\in[T]$, consider the sequences $\{\mathbf{x}_t^k, \mathbf{y}_t^k, \mathbf{v}_t^k\}_{k=0}^{K_t-1}$ generated by Algorithm~\ref{alg:WOBO-DL}, we have
\begin{align*}
    &\sum_{k=0}^{K_t-1} \left\|\mathbf{v}_{t,w}^*(\mathbf{x}_t^k, \mathbf{y}_t^k) - \mathbf{v}_t^k\right\|^2 \\
    \leq& \frac{2}{\beta\mu_g}\left\|\mathbf{v}_{t,w}^*(\mathbf{x}_t^0, \mathbf{y}_t^0) - \mathbf{v}_t^0\right\|^2 + \left(1+\frac{2}{\beta\mu_g}\right)\frac{2(1+\lambda) L_\mathbf{v}^2}{\beta\mu_g}\frac{8-4\alpha\mu_g}{\alpha\mu_g}\left\|\mathbf{y}_{t,w}^*(\mathbf{x}_t^0) - \mathbf{y}_t^0\right\|^2 \\
    & + \left(1{+}\frac{2}{\beta\mu_g}\right)\frac{2(1{+}\lambda) L_\mathbf{v}^2}{\beta\mu_g}\left(1{+}\frac{2}{\alpha\mu_g}\right)\frac{2\gamma^2\kappa_g^2(4{-}2\alpha\mu_g)}{\alpha\mu_g}\sum_{k=0}^{K_t-1}\left\|\mathcal{G}_\mathcal{X} (\mathbf{x}_t^k, \widetilde{\nabla}\widehat{f}_{t,w}(\mathbf{x}_t^k, \mathbf{y}_t^k, \mathbf{v}_t^k), \gamma)\right\|^2 \\
    & + \left(1+\frac{2}{\beta\mu_g}\right)\left(1+\frac{1}{\lambda}\right) \frac{2\gamma^2L_\mathbf{v}^2}{\beta\mu_g} \sum_{k=0}^{K_t-1}\left\|\mathcal{G}_\mathcal{X} (\mathbf{x}_t^k, \widetilde{\nabla}\widehat{f}_{t,w}(\mathbf{x}_t^k, \mathbf{y}_t^k, \mathbf{v}_t^k), \gamma))\right\|^2,
\end{align*}
and
\begin{align*}
    &\sum_{k=0}^{K_t-1} \left\|\nabla_\mathbf{v} \widehat{\Phi}_{t,w}(\mathbf{x}_t^k, \mathbf{y}_t^k, \mathbf{v}_t^k)\right\|^2 \\
    \leq& \frac{8-4\beta\mu_g}{\beta^3\mu_g}\left\|\mathbf{v}_{t,w}^*(\mathbf{x}_t^0, \mathbf{y}_t^0) - \mathbf{v}_t^0\right\|^2 + \left(\frac{16}{\beta\mu_g} - 4\beta\mu_g\right)\frac{2(8-4\alpha\mu_g)L_\mathbf{v}^2}{\alpha\beta^3\mu_g^2} \left\|\mathbf{y}_{t,w}^*(\mathbf{x}_t^0) - \mathbf{y}_t^0\right\|^2 \\
    & + \left(\frac{16}{\beta\mu_g} - 4\beta\mu_g\right)\left(\frac{16}{\alpha\mu_g} - 4\alpha\mu_g\right)\frac{2\gamma^2\kappa_g^2L_\mathbf{v}^2}{\alpha\beta^3\mu_g^2} \sum_{k=0}^{K_t-1}\left\|\mathcal{G}_\mathcal{X} (\mathbf{x}_t^k, \widetilde{\nabla}\widehat{f}_{t,w}(\mathbf{x}_t^k, \mathbf{y}_t^k, \mathbf{v}_t^k), \gamma)) \right\|^2 \\
    & + \left(\frac{16}{\beta\mu_g} - 4\beta\mu_g\right) \frac{2 \gamma^2L_\mathbf{v}^2}{\beta^3\mu_g}\sum_{k=0}^{K_t-1}\left\|\mathcal{G}_\mathcal{X} (\mathbf{x}_t^k, \widetilde{\nabla}\widehat{f}_{t,w}(\mathbf{x}_t^k, \mathbf{y}_t^k, \mathbf{v}_t^k), \gamma))  \right\|^2.
\end{align*}
\end{lemma}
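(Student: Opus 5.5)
The proof will mirror Lemma~\ref{lem:y_bound_win_2}, now applied to the linear-system variable. Write $\mathbf{z}_t^k := (\mathbf{x}_t^k, \mathbf{y}_t^k)$ and $\mathbf{v}^*_k := \mathbf{v}_{t,w}^*(\mathbf{z}_t^k)$. Since $\widehat{\Phi}_{t,w}(\mathbf{z}, \cdot)$ is $\mu_g$-strongly convex and $L_{g,1}$-smooth, one gradient step with $\beta \leq 1/L_{g,1}$ contracts. Using the same argument as in Lemma~\ref{lem:v_bound_1}, we get
\begin{align*}
\|\mathbf{v}_t^{k+1} - \mathbf{v}^*_k\|^2 \leq (1-\beta\mu_g)\|\mathbf{v}_t^k - \mathbf{v}^*_k\|^2 .
\end{align*}
Next I split with Young's inequality using parameter $\theta=\beta\mu_g/2$:
\begin{align*}
\|\mathbf{v}_t^{k+1}-\mathbf{v}^*_{k+1}\|^2 \leq \Big(1-\frac{\beta\mu_g}{2}\Big)\|\mathbf{v}_t^k-\mathbf{v}^*_k\|^2 + \Big(1+\frac{2}{\beta\mu_g}\Big)\|\mathbf{v}^*_{k+1}-\mathbf{v}^*_k\|^2 .
\end{align*}
The drift term is controlled by Lemma~\ref{lem:L_v}, which makes $\mathbf{v}_{t,w}^*$ $L_\mathbf{v}$-Lipschitz in each argument. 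A second Young split with parameter $\lambda$ gives
\begin{align*}
\|\mathbf{v}^*_{k+1}-\mathbf{v}^*_k\|^2 \leq (1+\lambda)L_\mathbf{v}^2\|\mathbf{y}_t^{k+1}-\mathbf{y}_t^k\|^2 + (1+1/\lambda)L_\mathbf{v}^2\|\mathbf{x}_t^{k+1}-\mathbf{x}_t^k\|^2 .
\end{align*}
By the update rule, $\mathbf{x}_t^{k+1}-\mathbf{x}_t^k = -\gamma\,\mathcal{G}_\mathcal{X}(\mathbf{x}_t^k,\widetilde{\nabla}\widehat{f}_{t,w}(\mathbf{x}_t^k,\mathbf{y}_t^k,\mathbf{v}_t^k),\gamma)$.

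I then unroll the recursion in $k$ and sum over $k=0,\dots,K_t-1$, exchanging the double sum and bounding the geometric series by $1/(1-\rho_\mathbf{v}) = 2/(\beta\mu_g)$. The initial term becomes $\frac{2}{\beta\mu_g}\|\mathbf{v}^*_0-\mathbf{v}_t^0\|^2$. The drift terms pick up the factor $(1+\frac{2}{\beta\mu_g})\frac{2}{\beta\mu_g}$. For the accumulated $\mathbf{y}$-increments $\sum_k\|\mathbf{y}_t^{k+1}-\mathbf{y}_t^k\|^2$, I substitute the intermediate bound from the proof of Lemma~\ref{lem:y_bound_win_2}: the initial-error part contributes the $\frac{8-4\alpha\mu_g}{\alpha\mu_g}\|\mathbf{y}_{t,w}^*(\mathbf{x}_t^0)-\mathbf{y}_t^0\|^2$ term, and the rest is a multiple of $\gamma^2\kappa_g^2\sum_k\|\mathcal{G}_\mathcal{X}\|^2$. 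The $\mathbf{x}$-increments directly give the last term of the first inequality. Matching constants yields the first claimed bound.

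For the second inequality I repeat the step used for $\mathbf{y}$ in Lemma~\ref{lem:y_bound_win_2}. First, $\|\mathbf{v}_t^{k+1}-\mathbf{v}_t^k\|^2 \leq 2\|\mathbf{v}_t^{k+1}-\mathbf{v}^*_k\|^2 + 2\|\mathbf{v}^*_k-\mathbf{v}_t^k\|^2 \leq (4-2\beta\mu_g)\|\mathbf{v}_t^k-\mathbf{v}^*_k\|^2$. Second, the update is an unprojected gradient step in Algorithm~\ref{alg:WOBO-DL}, so $\|\nabla_\mathbf{v}\widehat{\Phi}_{t,w}(\mathbf{z}_t^k,\mathbf{v}_t^k)\|^2 = \beta^{-2}\|\mathbf{v}_t^{k+1}-\mathbf{v}_t^k\|^2$. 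Multiplying the first bound by $(4-2\beta\mu_g)/\beta^2$ then gives the stated coefficients, e.g.\ $(4-2\beta\mu_g)(1+\frac{2}{\beta\mu_g})\frac{2}{\beta}\cdot\frac1{\beta^2}$, which rewrites as $(\frac{16}{\beta\mu_g}-4\beta\mu_g)\cdot\frac{\cdots}{\beta^3}$ up to the constant bookkeeping.

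The main obstacle is this bookkeeping rather than any idea. I need to make sure the $\mathbf{y}$-increment bound substituted from Lemma~\ref{lem:y_bound_win_2} is the $\|\mathbf{y}_t^{k+1}-\mathbf{y}_t^k\|^2$ version and not the gradient version, so that no stray $\alpha^{-2}$ appears. I also need to check that the index shift between $\mathbf{v}^*_{k+1}$ and $\mathbf{v}^*_k$ lets the sums run over $k=0,\dots,K_t-1$ without extra boundary terms; a possible extra boundary term is simply dropped or absorbed, since all terms are nonnegative.
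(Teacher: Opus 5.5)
Your proposal is correct and follows the paper's proof essentially step for step. The shared steps are: one-step contraction with factor $1-\beta\mu_g$; Young's inequality with $\theta=\beta\mu_g/2$; splitting the drift of $\mathbf{v}_{t,w}^*$ into $\mathbf{y}$- and $\mathbf{x}$-parts via Lemma~\ref{lem:L_v} with parameter $\lambda$; unrolling and summing with $1/(1-\rho_\mathbf{v})=2/(\beta\mu_g)$; substituting the $\mathbf{y}$-increment bound from Lemma~\ref{lem:y_bound_win_2}; and multiplying by $(4-2\beta\mu_g)/\beta^2$ for the second inequality. The paper writes that $\mathbf{y}$-increment bound as $\alpha^2$ times the gradient version, which is the same quantity. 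The one bookkeeping point to make explicit is that the $\lambda$-free constants in the second inequality come from choosing $\lambda=1$, which is exactly what the paper does.
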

\begin{proof}
Let $\beta \leq \frac{1}{L_{g,1}}$, due to the strongly convexity of $\widehat{\Phi}_{t,w}(\mathbf{x}, \mathbf{y}, \cdot)$ we have
\begin{align}
    \left\|\mathbf{v}_t^{k+1} - \mathbf{v}_{t,w}^*(\mathbf{x}_t^k, \mathbf{y}_t^k)\right\|^2 \leq& \left(1-\beta \mu_g\right)\left\|\mathbf{v}_t^k - \mathbf{v}_{t,w}^*(\mathbf{x}_t^k, \mathbf{y}_t^k)\right\|^2 \nonumber\\
    \left\|\mathbf{v}_t^{k+1} - \mathbf{v}_t^k\right\|^2 \leq& \left(4 - 2\beta \mu_g\right)\left\|\mathbf{v}_t^k - \mathbf{v}_{t,w}^*(\mathbf{x}_t^k, \mathbf{y}_t^k)\right\|^2, \label{B.5_1}
\end{align}
additionally, it follows that
\begin{align}
    &\left\|\mathbf{v}_{t,w}^*(\mathbf{x}_t^k, \mathbf{y}_t^k) - \mathbf{v}_t^k\right\|^2 \nonumber\\
    \leq& (1+\theta)\left\|\mathbf{v}_{t,w}^*(\mathbf{x}_t^{k-1}, \mathbf{y}_t^{k-1}) - \mathbf{v}_t^k\right\|^2 + \left(1+\frac{1}{\theta}\right)\left\|\mathbf{v}_{t,w}^*(\mathbf{x}_t^k, \mathbf{y}_t^k) - \mathbf{v}_{t,w}^*(\mathbf{x}_t^{k-1}, \mathbf{y}_t^{k-1})\right\|^2 \nonumber\\
    \leq& (1+\theta)(1-\beta\mu_g)\left\|\mathbf{v}_{t,w}^*(\mathbf{x}_t^{k-1}, \mathbf{y}_t^{k-1}) - \mathbf{v}_t^{k-1}\right\|^2 \nonumber\\
    & + \left(1+\frac{1}{\theta}\right)(1+\lambda)\left\|\mathbf{v}_{t,w}^*(\mathbf{x}_t^k, \mathbf{y}_t^k) - \mathbf{v}_{t,w}^*(\mathbf{x}_t^k, \mathbf{y}_t^{k-1})\right\|^2 \nonumber\\
    & + \left(1+\frac{1}{\theta}\right)\left(1+\frac{1}{\lambda}\right)\left\|\mathbf{v}_{t,w}^*(\mathbf{x}_t^k, \mathbf{y}_t^{k-1}) - \mathbf{v}_{t,w}^*(\mathbf{x}_t^{k-1}, \mathbf{y}_t^{k-1})\right\|^2 \nonumber\\
    \overset{(i)}\leq& \left(1 - \frac{\beta\mu_g}{2}\right)\left\|\mathbf{v}_{t,w}^*(\mathbf{x}_t^{k-1}, \mathbf{y}_t^{k-1}) - \mathbf{v}_t^{k-1}\right\|^2 + \left(1+\frac{2}{\beta\mu_g}\right)(1+\lambda) L_\mathbf{v}^2\left\|\mathbf{y}_t^k - \mathbf{y}_t^{k-1}\right\|^2 \nonumber\\
    & + \left(1+\frac{2}{\beta\mu_g}\right)\left(1+\frac{1}{\lambda}\right) L_\mathbf{v}^2\left\|\mathbf{x}_t^k - \mathbf{x}_t^{k-1}\right\|^2 \nonumber
\end{align}
where the last inequality comes from Lemma~\ref{lem:L_v}, then define $\rho_\mathbf{v} := 1-\frac{\beta\mu_g}{2}$ and substitute the update rule of $\mathbf{x}$ and $\mathbf{y}$ shown in (\ref{nowin_update}), after telescoping we obtain
\begin{align}
    &\left\|\mathbf{v}_{t,w}^*(\mathbf{x}_t^k, \mathbf{y}_t^k) - \mathbf{v}_t^k\right\|^2 \nonumber\\
    \leq&  \rho_\mathbf{v}^k\left\|\mathbf{v}_{t,w}^*(\mathbf{x}_t^0, \mathbf{y}_t^0) - \mathbf{v}_t^0\right\|^2 + \left(1+\frac{2}{\beta\mu_g}\right)(1+\lambda) \alpha^2L_\mathbf{v}^2\sum_{j=0}^{k-1}\rho_\mathbf{v}^{k-1-j}\left\|\nabla_\mathbf{y}\widehat{g}_{t,w}(\mathbf{x}_t^j, \mathbf{y}_t^j)\right\|^2 \nonumber\\
    & + \left(1+\frac{2}{\beta\mu_g}\right)\left(1+\frac{1}{\lambda}\right) \gamma^2L_\mathbf{v}^2\sum_{j=0}^{k-1}\rho_\mathbf{v}^{k-1-j}\left\|\mathcal{G}_\mathcal{X} (\mathbf{x}_t^j, \widetilde{\nabla}\widehat{f}_{t,w}(\mathbf{x}_t^j, \mathbf{y}_t^j, \mathbf{v}_t^j), \gamma))\right\|^2, \label{B.5_2}
\end{align}
and summing (\ref{B.5_2}) over $k$, we have
\begin{align}
    &\sum_{k=0}^{K_t-1} \left\|\mathbf{v}_{t,w}^*(\mathbf{x}_t^k, \mathbf{y}_t^k) - \mathbf{v}_t^k\right\|^2 \nonumber\\
    \leq& \frac{1}{1{-}\rho_\mathbf{v}}\left\|\mathbf{v}_{t,w}^*(\mathbf{x}_t^0, \mathbf{y}_t^0) - \mathbf{v}_t^0\right\|^2 + \left(1+\frac{2}{\beta\mu_g}\right)\frac{(1{+}\lambda) \alpha^2L_\mathbf{v}^2}{1-\rho_\mathbf{v}}\sum_{k=0}^{K_t-1} \left\|\nabla_\mathbf{y}\widehat{g}_{t,w}(\mathbf{x}_t^k, \mathbf{y}_t^k)\right\|^2 \nonumber\\
    & + \left(1+\frac{2}{\beta\mu_g}\right)\left(1+\frac{1}{\lambda}\right) \frac{\gamma^2L_\mathbf{v}^2}{1-\rho_\mathbf{v}} \sum_{k=0}^{K_t-1}\left\|\mathcal{G}_\mathcal{X} (\mathbf{x}_t^k, \widetilde{\nabla}\widehat{f}_{t,w}(\mathbf{x}_t^k, \mathbf{y}_t^k, \mathbf{v}_t^k), \gamma))\right\|^2. \label{B.5_3}
\end{align}
Remind of Lemma~\ref{lem:y_bound_win_2}, substitute its first inequality into (\ref{B.5_3}), we then obtain
\begin{align}
    &\sum_{k=0}^{K_t-1} \left\|\mathbf{v}_{t,w}^*(\mathbf{x}_t^k, \mathbf{y}_t^k) - \mathbf{v}_t^k\right\|^2 \nonumber\\
    \leq& \frac{2}{\beta\mu_g}\left\|\mathbf{v}_{t,w}^*(\mathbf{x}_t^0, \mathbf{y}_t^0) - \mathbf{v}_t^0\right\|^2 + \left(1+\frac{2}{\beta\mu_g}\right)\frac{2(1+\lambda) \alpha^2L_\mathbf{v}^2}{\beta\mu_g}\frac{8-4\alpha\mu_g}{\alpha^3\mu_g}\left\|\mathbf{y}_{t,w}^*(\mathbf{x}_t^0) - \mathbf{y}_t^0\right\|^2 \nonumber\\
    & + \left(1{+}\frac{2}{\beta\mu_g}\right)\frac{2(1{+}\lambda) \alpha^2L_\mathbf{v}^2}{\beta\mu_g}\left(1{+}\frac{2}{\alpha\mu_g}\right)\frac{2\gamma^2\kappa_g^2(4{-}2\alpha\mu_g)}{\alpha^3\mu_g}\sum_{k=0}^{K_t-1}\left\|\mathcal{G}_\mathcal{X} (\mathbf{x}_t^k, \widetilde{\nabla}\widehat{f}_{t,w}(\mathbf{x}_t^k, \mathbf{y}_t^k, \mathbf{v}_t^k), \gamma)\right\|^2 \nonumber\\
    & + \left(1+\frac{2}{\beta\mu_g}\right)\left(1+\frac{1}{\lambda}\right) \frac{2\gamma^2L_\mathbf{v}^2}{\beta\mu_g} \sum_{k=0}^{K_t-1}\left\|\mathcal{G}_\mathcal{X} (\mathbf{x}_t^k, \widetilde{\nabla}\widehat{f}_{t,w}(\mathbf{x}_t^k, \mathbf{y}_t^k, \mathbf{v}_t^k), \gamma))\right\|^2, \label{B.5_4}
\end{align}
finally substituting (\ref{B.5_4}) into (\ref{B.5_1}), it implies that
\begin{align}
    &\sum_{k=0}^{K_t-1}\left\|\mathbf{v}_t^{k+1} {-} \mathbf{v}_t^k\right\|^2 \nonumber\\
    \leq& \frac{8{-}4\beta\mu_g}{\beta\mu_g}\left\|\mathbf{v}_{t,w}^*(\mathbf{x}_t^0, \mathbf{y}_t^0) - \mathbf{v}_t^0\right\|^2 + \left(\frac{16}{\beta\mu_g} - 4\beta\mu_g\right)\frac{(1{+}\lambda) L_\mathbf{v}^2}{\beta\mu_g}\frac{8{-}4\alpha\mu_g}{\alpha\mu_g}\left\|\mathbf{y}_{t,w}^*(\mathbf{x}_t^0) - \mathbf{y}_t^0\right\|^2 \nonumber\\
    & {+} \left(\frac{16}{\beta\mu_g} {-} 4\beta\mu_g\right)\frac{(1{+}\lambda) L_\mathbf{v}^2}{\beta\mu_g}\left(\frac{16}{\alpha\mu_g} {-} 4\alpha\mu_g\right)\frac{\gamma^2\kappa_g^2}{\alpha\mu_g}\sum_{k=0}^{K_t-1}\left\|\mathcal{G}_\mathcal{X} (\mathbf{x}_t^k, \widetilde{\nabla}\widehat{f}_{t,w}(\mathbf{x}_t^k, \mathbf{y}_t^k, \mathbf{v}_t^k), \gamma)) \right\|^2 \nonumber\\
    & {+} \left(\frac{16}{\beta\mu_g} - 4\beta\mu_g\right)\left(1+\frac{1}{\lambda}\right)\frac{ \gamma^2L_\mathbf{v}^2}{\beta\mu_g}\sum_{k=0}^{K_t-1}\left\|\mathcal{G}_\mathcal{X} (\mathbf{x}_t^k, \widetilde{\nabla}\widehat{f}_{t,w}(\mathbf{x}_t^k, \mathbf{y}_t^k, \mathbf{v}_t^k), \gamma))  \right\|^2, \label{B.5_5}
\end{align}
let $\lambda = 1$, and substitute $\mathbf{v}_t^{k+1} = \mathbf{v}_t^k - \beta\nabla_\mathbf{v}\widehat{\Phi}_{t,w}(\mathbf{x}_t^k, \mathbf{y}_t^k, \mathbf{v}_t^k)$ into (\ref{B.5_5}), the proof is finished by
\begin{align*}
    &\sum_{k=0}^{K_t-1}\left\|\nabla_\mathbf{v}\widehat{\Phi}_{t,w}(\mathbf{x}_t^k, \mathbf{y}_t^k, \mathbf{v}_t^k)\right\|^2 \\ \leq& \frac{8-4\beta\mu_g}{\beta^3\mu_g}\left\|\mathbf{v}_{t,w}^*(\mathbf{x}_t^0, \mathbf{y}_t^0) - \mathbf{v}_t^0\right\|^2 + \left(\frac{16}{\beta\mu_g} - 4\beta\mu_g\right)\frac{2(8-4\alpha\mu_g)L_\mathbf{v}^2}{\alpha\beta^3\mu_g^2} \left\|\mathbf{y}_{t,w}^*(\mathbf{x}_t^0) - \mathbf{y}_t^0\right\|^2 \\
    & + \left(\frac{16}{\beta\mu_g} - 4\beta\mu_g\right)\left(\frac{16}{\alpha\mu_g} - 4\alpha\mu_g\right)\frac{2\gamma^2\kappa_g^2L_\mathbf{v}^2}{\alpha\beta^3\mu_g^2} \sum_{k=0}^{K_t-1}\left\|\mathcal{G}_\mathcal{X} (\mathbf{x}_t^k, \widetilde{\nabla}\widehat{f}_{t,w}(\mathbf{x}_t^k, \mathbf{y}_t^k, \mathbf{v}_t^k), \gamma)) \right\|^2 \\
    & + \left(\frac{16}{\beta\mu_g} - 4\beta\mu_g\right) \frac{2 \gamma^2L_\mathbf{v}^2}{\beta^3\mu_g}\sum_{k=0}^{K_t-1}\left\|\mathcal{G}_\mathcal{X} (\mathbf{x}_t^k, \widetilde{\nabla}\widehat{f}_{t,w}(\mathbf{x}_t^k, \mathbf{y}_t^k, \mathbf{v}_t^k), \gamma))  \right\|^2.
\end{align*}
\end{proof}

\begin{theorem}[Restatement of Theorem~\ref{thm:unres_win_dl}]
Under Assumptions~\ref{asm1}-\ref{asm3} and~\ref{asm4}, let $\beta = \frac{1}{L_{g,1}}$, $\alpha = \frac{1}{L_{g,1}}$ and $\gamma \leq \min\left\{\frac{1}{4L_F}, \frac{1}{456\kappa_g^4L_{g,1}L_\mathbf{v}\sqrt{\kappa_F}}\right\}$, Algorithm~\ref{alg:WOBO-DL} can obtain
\begin{align*}
    \mathrm{Reg}_w(T) \leq \left(8\delta^2 + 4(1+\eta^w)^2L_{f,1}^2\kappa_g^2D^2 + 2(1+\eta^w)^2L_{f,0}^2\right)\frac{T}{W^2} = O\left(\frac{T}{W^2}\right),
\end{align*}
with $L_F$, $L_\mathbf{v}$ and $\kappa_F$ are some constants, the total number of inner iterations $\mathcal{I}_T$ satisfies
\begin{align*}
    \mathcal{I}_T \leq& \frac{8QW^2}{\gamma\delta^2} + \frac{4(1{+}\eta^w)L_{f,0}L_{g,1}DWT}{\gamma\delta^2\mu_g} + \frac{8QWT}{\gamma\delta^2} + 24\kappa_g^3L_{g,1}^2 \left(\frac{8(1{+}\eta^w)L_{f,0}^2L_{g,1}^2}{\delta^2\mu_g^4} {+} \frac
    {1}{L_{g,1}^2}\right)T\\
    & + (3\kappa_F\kappa_gL_{g,1}^2 + 1024\kappa_g^5L_{g,1}^2L_\mathbf{v}^2)\left(\frac{8(1+\eta^w)^2L_{g,1}^2D^2}{\delta^2\mu_g^2} + \frac{8}{\mu_g^2\kappa_F}\right) T = O\left(WT\right).
\end{align*}
\end{theorem}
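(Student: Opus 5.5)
The argument has two parts: a per-round regret bound that follows from the stopping condition (\ref{eq:condition}), and a bound on the iteration count $\mathcal{I}_T$ obtained by telescoping the inner loop's progress.

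\textbf{Regret bound.} Fix $t$ and let $(\vx_{t+1},\vy_{t+1},\vv_{t+1})$ be the triple at termination. The regret term at round $t+1$ is evaluated at $\vx_{t+1}$ with the window-averaged functions of round $t+1$. I bound it in three steps.

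1. Shift rounds. By Lemma~\ref{lem:2017}, $\|\mathcal{G}_\mathcal{X}(\vx_{t+1},\nabla F_{t+1,w}(\vx_{t+1}),\gamma)\|$ is at most $\|\mathcal{G}_\mathcal{X}(\vx_{t+1},\nabla F_{t,w}(\vx_{t+1}),\gamma)\|+\|\nabla F_{t+1,w}(\vx_{t+1})-\nabla F_{t,w}(\vx_{t+1})\|$.

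2. Control the window drift. The difference $\widehat f_{t+1,w}-\widehat f_{t,w}$, and likewise for $g$, differs from $(1-\eta)\times$(an average) only through the boundary terms $f_{t+1}$ and $\eta^{w}f_{t+1-w}$, divided by $W$. Following the pattern of Lemmas~\ref{lem:y_bound_win_1} and~\ref{lem:v_bound_win_1}, this should give a drift of $O((1+\eta^w)(L_{f,0}+L_{f,1}\kappa_g D)/W)$. The $L_{f,0}$ part comes from the $\nabla_\vx f$ terms; the $L_{f,1}\kappa_g D$ part comes from moving $\vy^*_{t,w}$ to $\vy^*_{t+1,w}$, using Assumption~\ref{asm4}. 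This is the origin of the constants $4(1+\eta^w)^2L_{f,1}^2\kappa_g^2D^2+2(1+\eta^w)^2L_{f,0}^2$.

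3. Use the stopping condition. Write $\nabla F_{t,w}(\vx_{t+1})=\widetilde\nabla\widehat f_{t,w}(\vx_{t+1},\vy_{t+1},\vv_{t+1})+\text{error}$, with the error controlled as in (\ref{A.14_1}) by $\kappa_F\mu_g^2\|\vy_{t+1}-\vy^*_{t,w}\|^2+8L_{g,1}^2\|\vv_{t+1}-\vv^*_{t,w}\|^2$. Strong convexity gives $\mu_g\|\vy-\vy^*\|\le\|\nabla_\vy\widehat g\|$ and $\mu_g\|\vv-\vv^*\|\le\|\nabla_\vv\widehat\Phi\|$, so the error is at most $\kappa_F\|\nabla_\vy\widehat g_{t,w}\|^2+8\kappa_g^2\|\nabla_\vv\widehat\Phi_{t,w}\|^2$. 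Using $\|\mathcal{G}_\mathcal{X}(\vx,\mathbf g,\gamma)\|\le\|\mathbf g\|$, the gradient-mapping term is bounded by the left-hand side of (\ref{eq:condition}), hence by $\delta^2/W^2$ up to a factor of $2$.

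Squaring with $(a+b)^2\le 2a^2+2b^2$ and summing over $t$ gives $\mathrm{Reg}_w(T)=O(T/W^2)$. Round $1$ contributes only a constant.

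\textbf{Iteration count.} Take $K_t$ inner steps at round $t$. For every $k<K_t-1$ the stopping test fails, so the quantity in (\ref{eq:condition}) exceeds $\delta^2/W^2$, and
\[
(K_t-1)\,\frac{\delta^2}{W^2}\le\sum_{k=0}^{K_t-1}\Big(\|\widetilde\nabla\widehat f_{t,w}\|^2+\kappa_F\|\nabla_\vy\widehat g_{t,w}\|^2+8\kappa_g^2\|\nabla_\vv\widehat\Phi_{t,w}\|^2\Big).
\]
I bound the three sums on the right as follows.

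1. The $\vy$- and $\vv$-gradient sums are handled by Lemmas~\ref{lem:y_bound_win_2} and~\ref{lem:v_bound_win_2}. Each gives an initial error plus $O(\gamma^2\kappa_g^{O(1)}L_\vv^2)\sum_k\|\mathcal{G}_\mathcal{X}(\vx_t^k,\widetilde\nabla\widehat f_{t,w},\gamma)\|^2$.

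2. The initial errors are the warm-start errors $\|\vy^*_{t,w}(\vx_t)-\vy_t\|^2$ and $\|\vv^*_{t,w}-\vv_t\|^2$. Since the previous round terminated, these are at most $O(\delta^2/W^2)$ plus the one-step drifts from Lemmas~\ref{lem:y_bound_win_1} and~\ref{lem:v_bound_win_1}, both $O(1/W^2)$. This produces the per-round $O(1)$ terms multiplying $T$.

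3. The $\widetilde\nabla\widehat f$ sum is handled by Lemma~\ref{lem:begin_win} applied inside the round. Projected-gradient descent on $F_{t,w}$ gives
\[
\sum_k\|\mathcal{G}_\mathcal{X}(\vx_t^k,\cdot)\|^2\lesssim\tfrac{8}{\gamma}\big(F_{t,w}(\vx_t)-F_{t,w}(\vx_{t+1})\big)+6\sum_k\big(\text{hypergradient errors}\big).
\]
The hypergradient errors are again controlled by the $\vy$- and $\vv$-sums in item 1.

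The restriction $\gamma\le 1/(456\kappa_g^4L_{g,1}L_\vv\sqrt{\kappa_F})$ is chosen so the resulting coefficient on $\sum_k\|\mathcal{G}\|^2$ is below $1/2$ and can be absorbed. Summing over $t$, the function-value differences telescope via Lemma~\ref{lem:F-F_win} to $2Q+O(T/W)$. Multiplying by $W^2/\delta^2$ gives $\mathcal{I}_T=O(W^2+WT+T)=O(WT)$, since $W\le T$.

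\textbf{Main obstacle.} The hard part is this absorption step. The $\widetilde\nabla\widehat f$-norm and $\|\mathcal{G}_\mathcal{X}\|$ are coupled to the $\vy$- and $\vv$-errors through the cross terms in Lemma~\ref{lem:v_bound_win_2}, so the constants must be tracked carefully.

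One further point needs care: (\ref{eq:condition}) bounds $\|\widetilde\nabla\widehat f\|$, not the gradient mapping. I would pass between them using $\|\mathcal{G}_\mathcal{X}\|\le\|\mathbf g\|$ in one direction. In the other direction, I would bound the non-stopped iterations' $\|\widetilde\nabla\widehat f\|^2$ by $2\|\mathcal{G}_\mathcal{X}\|^2$ plus the error terms. If the set $\mathcal{X}$ prevents that, I would instead use the descent inequality directly on $\|\widetilde\nabla\widehat f\|^2$, relying on $\gamma\le 1/(4L_F)$.
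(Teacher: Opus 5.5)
Your outline is the paper's proof. For the regret, you shift rounds with Lemma~\ref{lem:2017} and the drift lemmas, then apply the stopping rule (\ref{eq:condition}) together with the error bound (\ref{C.4_4}). For $\mathcal{I}_T$, you use a failed-test lower bound, Lemmas~\ref{lem:y_bound_win_2} and~\ref{lem:v_bound_win_2}, warm-start errors as in (\ref{C.4_10})--(\ref{C.4_11}), projected descent inside the round, absorption under the $\gamma$ condition, and telescoping via Lemma~\ref{lem:F-F_win}.

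\textbf{The gap.} The genuine gap is the one you flag at the end, and neither of your fallbacks closes it. A failed test lower-bounds $\|\widetilde\nabla\widehat f_{t,w}\|^2+\kappa_F\|\nabla_\vy\widehat g_{t,w}\|^2+8\kappa_g^2\|\nabla_\vv\widehat\Phi_{t,w}\|^2$. Projected descent, however, only controls $\sum_k\|\mathcal{G}_\mathcal{X}(\vx_t^k,\widetilde\nabla\widehat f_{t,w},\gamma)\|^2$, and $\|\mathcal{G}_\mathcal{X}(\vx,\mathbf g,\gamma)\|\le\|\mathbf g\|$ points the wrong way. For a proper constraint set, $\|\widetilde\nabla\widehat f\|$ cannot be bounded by $\|\mathcal{G}_\mathcal{X}\|$ plus errors: at a boundary stationary point $\mathcal{G}_\mathcal{X}=0$, while $\widetilde\nabla\widehat f$ is a nonzero normal vector. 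Nor can any descent inequality make $\|\widetilde\nabla\widehat f\|$ small.

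In fact the iteration bound can fail. Take $\mathcal{X}=[-1,1]$, $g_t(x,y)=\tfrac{\mu_g}{2}y^2$ and $f_t(x,y)=ax$ with $\delta<|a|\le\min\{L_{f,0},Q\}$. Assumptions~\ref{asm1}--\ref{asm4} hold and $\nabla_{xy}^2g\equiv 0$. So $\widetilde\nabla\widehat f_{t,w}=\nabla_x\widehat f_{t,w}$, which has modulus at least $|a|/W>\delta/W$ at every inner step (taking $f_s\equiv0$ for $s\le 0$). The repeat loop therefore never terminates, even though $x$ reaches $-\mathrm{sign}(a)$, where the gradient mapping vanishes.

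The paper's proof has the same hole: in (\ref{C.4_9}) it writes $\|\mathcal{G}_\mathcal{X}(\cdot)\|^2$ where a failed test only supplies $\|\widetilde\nabla\widehat f_{t,w}\|^2$. Your argument becomes rigorous in either of two ways. You can take $\mathcal{X}=\mathbb{R}^{d_1}$, where $\mathcal{G}_\mathcal{X}(\vx,\mathbf g,\gamma)=\mathbf g$ and everything goes through as written. Alternatively, state (\ref{eq:condition}) with $\|\mathcal{G}_\mathcal{X}(\vx_{t+1},\widetilde\nabla\widehat f_{t,w},\gamma)\|^2$ in place of $\|\widetilde\nabla\widehat f_{t,w}\|^2$. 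Your regret step~3 survives that change because $\mathbf g\mapsto\mathcal{G}_\mathcal{X}(\vx,\mathbf g,\gamma)$ is $1$-Lipschitz.

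\textbf{The regret route.} Here your argument differs slightly from the paper's. The paper writes $\widehat f_{t,w}=\eta\widehat f_{t-1,w}+\frac1W(f_t-\eta^wf_{t-w})$ and applies Lemma~\ref{lem:2017} twice: once for the window shift, then for $\vy_{t,w}^*\to\vy_{t-1,w}^*$. It then drops the factor $\eta$ inside $\mathcal{G}_\mathcal{X}$, which is valid since $\|\mathcal{G}_\mathcal{X}(\vx,\eta\mathbf g,\gamma)\|\le\|\mathcal{G}_\mathcal{X}(\vx,\mathbf g,\gamma)\|$ for $\eta\le 1$. So no reweighting term ever appears.

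Your direct comparison of $\nabla F_{t+1,w}$ with $\nabla F_{t,w}$ also works, but $\widehat f_{t+1,w}-\widehat f_{t,w}$ contains $-(1-\eta)\widehat f_{t,w}$, which is not a boundary term. It contributes $O(1/W)$ only because $(1-\eta)W=1-\eta^w\le 1$ and the hypergradient is uniformly bounded, and you should say so. Your drift must also account for the change in $\nabla_{\vx\vy}^2\widehat g_{t,w}\,\vv_{t,w}^*$. This comes from Lemma~\ref{lem:v_bound_win_1}, plus $L_{g,2}$- and $L_\mathbf{v}$-terms from moving $\vy^*$.

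You therefore get $O(T/W^2)$, but not literally the displayed constant. The paper reaches that constant only by applying the partial-gradient bounds $L_{f,0}$ and $L_{f,1}$ to total derivatives.
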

\begin{proof}

It begins with
\begin{align}
    &\left\|\mathcal{G}_\mathcal{X} (\mathbf{x}_t, \nabla \widehat{f}_{t,w}(\mathbf{x}_t, \mathbf{y}_{t,w}^*(\mathbf{x}_t)), \gamma)\right\|^2 \nonumber\\
    =& \left\|\mathcal{G}_\mathcal{X} (\mathbf{x}_t, \eta\nabla \widehat{f}_{t-1,w}(\mathbf{x}_t, \mathbf{y}_{t,w}^*(\mathbf{x}_t)) {+} \frac{1}{W}\left(\nabla f_t(\mathbf{x}_t, \mathbf{y}_{t,w}^*(\mathbf{x}_t)) {-} \eta^w\nabla f_{t-w}(\mathbf{x}_t, \mathbf{y}_{t,w}^*(\mathbf{x}_t))\right), \gamma)\right\|^2 \nonumber\\
    \overset{(i)}\leq& \left(\left\|\mathcal{G}_\mathcal{X} (\mathbf{x}_t, \eta\nabla \widehat{f}_{t-1,w}(\mathbf{x}_t, \mathbf{y}_{t,w}^*(\mathbf{x}_t)), \gamma)\right\| {+} \frac{1}{W}\left\|\nabla f_t(\mathbf{x}_t, \mathbf{y}_{t,w}^*(\mathbf{x}_t)) {-} \eta^w\nabla f_{t-w}(\mathbf{x}_t, \mathbf{y}_{t,w}^*(\mathbf{x}_t))\right\|\right)^2 \nonumber\\
    \leq& 2\left\|\mathcal{G}_\mathcal{X} (\mathbf{x}_t, \nabla \widehat{f}_{t-1,w}(\mathbf{x}_t, \mathbf{y}_{t,w}^*(\mathbf{x}_t)), \gamma)\right\|^2 {+} \frac{2}{W^2}\left\|\nabla f_t(\mathbf{x}_t, \mathbf{y}_{t,w}^*(\mathbf{x}_t)) {-} \eta^w\nabla f_{t-w}(\mathbf{x}_t, \mathbf{y}_{t,w}^*(\mathbf{x}_t))\right\|^2 \nonumber\\
    \overset{(ii)}\leq& 2\left\|\mathcal{G}_\mathcal{X} (\mathbf{x}_t, \nabla \widehat{f}_{t-1,w}(\mathbf{x}_t, \mathbf{y}_{t,w}^*(\mathbf{x}_t)), \gamma)\right\|^2 + \frac{2(1+\eta^w)^2L_{f,0}^2}{W^2}, \label{C.4_1}
\end{align}
where $(i)$ comes from Lemma~\ref{lem:2017} and $(ii)$ comes from the $L_{f,0}$-Lipschitz Continuity of $f_t(\cdot)$ in Assumption~\ref{asm2}. Then
\begin{align}
    &\left\|\mathcal{G}_\mathcal{X} (\mathbf{x}_t, \nabla \widehat{f}_{t-1,w}(\mathbf{x}_t, \mathbf{y}_{t,w}^*(\mathbf{x}_t)), \gamma)\right\|^2 \nonumber\\
    =& \left\|\mathcal{G}_\mathcal{X} (\mathbf{x}_t, \nabla \widehat{f}_{t-1,w}(\mathbf{x}_t, \mathbf{y}_{t-1,w}^*(\mathbf{x}_t)) {+} \nabla \widehat{f}_{t-1,w}(\mathbf{x}_t, \mathbf{y}_{t,w}^*(\mathbf{x}_t)) {-} \nabla \widehat{f}_{t-1,w}(\mathbf{x}_t, \mathbf{y}_{t-1,w}^*(\mathbf{x}_t)), \gamma)\right\|^2 \nonumber\\
    \overset{(i)}\leq& \left(\left\|\mathcal{G}_\mathcal{X} (\mathbf{x}_t, \nabla \widehat{f}_{t-1,w}(\mathbf{x}_t, \mathbf{y}_{t-1,w}^*(\mathbf{x}_t)), \gamma)\right\| {+} \left\|\nabla \widehat{f}_{t-1,w}(\mathbf{x}_t, \mathbf{y}_{t,w}^*(\mathbf{x}_t)) {-} \nabla \widehat{f}_{t-1,w}(\mathbf{x}_t, \mathbf{y}_{t-1,w}^*(\mathbf{x}_t))\right\|\right)^2 \nonumber\\
    \leq& 2\left\|\mathcal{G}_\mathcal{X} (\mathbf{x}_t, \nabla \widehat{f}_{t-1,w}(\mathbf{x}_t, \mathbf{y}_{t-1,w}^*(\mathbf{x}_t)), \gamma)\right\|^2 {+} 2L_{f,1}^2\left\|\mathbf{y}_{t,w}^*(\mathbf{x}_t) - \mathbf{y}_{t-1,w}^*(\mathbf{x}_t)\right\|^2 \nonumber\\
    \overset{(ii)}\leq& 2\left\|\mathcal{G}_\mathcal{X} (\mathbf{x}_t, \nabla \widehat{f}_{t-1,w}(\mathbf{x}_t, \mathbf{y}_{t-1,w}^*(\mathbf{x}_t)), \gamma)\right\|^2 + \frac{2(1+\eta^w)^2L_{f,1}^2L_{g,1}^2D^2}{\mu_g^2W^2}, \label{C.4_2}
\end{align}
where $(i)$ comes from Lemma~\ref{lem:2017} and $(ii)$ from Lemma~\ref{lem:y_bound_win_1}, substituting (\ref{C.4_2}) into (\ref{C.4_1}) to get
\begin{align}
    &\left\|\mathcal{G}_\mathcal{X} (\mathbf{x}_t, \nabla \widehat{f}_{t,w}(\mathbf{x}_t, \mathbf{y}_{t,w}^*(\mathbf{x}_t)), \gamma)\right\|^2 \nonumber\\
    \leq& 4\left\|\mathcal{G}_\mathcal{X} (\mathbf{x}_t, \nabla \widehat{f}_{t-1,w}(\mathbf{x}_t, \mathbf{y}_{t-1,w}^*(\mathbf{x}_t)), \gamma)\right\|^2 + \frac{4(1+\eta^w)^2L_{f,1}^2L_{g,1}^2D^2}{\mu_g^2W^2} + \frac{2(1+\eta^w)^2L_{f,0}^2}{W^2} \nonumber\\
    \leq& 8\left\|\mathcal{G}_\mathcal{X} (\mathbf{x}_t, \widetilde{\nabla} \widehat{f}_{t-1,w}(\mathbf{x}_t, \mathbf{y}_t, \mathbf{v}_t), \gamma)\right\|^2 + 8\left\|\nabla \widehat{f}_{t-1,w}(\mathbf{x}_t, \mathbf{y}_{t-1,w}^*(\mathbf{x}_t)) - \widetilde{\nabla}\widehat{f}_{t-1,w}(\mathbf{x}_t, \mathbf{y}_t, \mathbf{v}_t)\right\|^2 \nonumber\\
    & + \frac{4(1+\eta^w)^2L_{f,1}^2L_{g,1}^2D^2}{\mu_g^2W^2} + \frac{2(1+\eta^w)^2L_{f,0}^2}{W^2}. \label{C.4_3}
\end{align}
We also have
\begin{align}
    &\left\|\nabla \widehat{f}_{t-1,w}(\mathbf{x}_t, \mathbf{y}_{t-1,w}^*(\mathbf{x}_t)) - \widetilde{\nabla}\widehat{f}_{t-1,w}(\mathbf{x}_t, \mathbf{y}_t, \mathbf{v}_t)\right\|^2 \nonumber\\
    \leq& 2\left\|\nabla_\mathbf{x}\widehat{f}_{t-1,w}(\mathbf{x}_t,\mathbf{y}_{t-1,w}^*(\mathbf{x}_t)) - \nabla_\mathbf{x}\widehat{f}_{t-1,w}(\mathbf{x}_t, \mathbf{y}_t)\right\|^2 \nonumber\\
    & + 2\left\|\nabla_{\mathbf{x}\mathbf{y}}^2\widehat{g}_{t-1,w}(\mathbf{x}_t, \mathbf{y}_{t-1,w}^*(\mathbf{x}_t))\mathbf{v}_{t-1,w}^*(\mathbf{x}_t, \mathbf{y}_{t-1,w}^*(\mathbf{x}_t)) - \nabla_{\mathbf{x}\mathbf{y}}^2\widehat{g}_{t-1,w}(\mathbf{x}_t, \mathbf{y}_t)\mathbf{v}_t\right\|^2 \nonumber\\
    \leq& 2L_{f,1}^2\left\|\mathbf{y}_{t-1,w}^*(\mathbf{x}_t) - \mathbf{y}_t\right\|^2 + 4L_{g,2}^2 \left\|\mathbf{v}_{t-1,w}^*(\mathbf{x}_t, \mathbf{y}_{t-1,w}^*(\mathbf{x}_t))\right\|^2\left\|\mathbf{y}_{t-1,w}^*(\mathbf{x}_t) - \mathbf{y}_t\right\|^2 \nonumber\\
    & + 4L_{g,1}^2\left\|\mathbf{v}_{t-1.w}^*(\mathbf{x}_t, \mathbf{y}_{t-1,w}^*(\mathbf{x}_t)) - \mathbf{v}_t\right\|^2, \nonumber
\end{align}
note that due to Lemma~\ref{lem:v_cons}, we still have $\left\|\mathbf{v}_{t,w}^*(\cdot)\right\| \leq \frac{L_{f,0}}{\mu_g}$ for all $t\in[T]$, and then it follows that
\begin{align}
    &\left\|\nabla \widehat{f}_{t-1,w}(\mathbf{x}_t, \mathbf{y}_{t-1,w}^*(\mathbf{x}_t)) - \widetilde{\nabla}\widehat{f}_{t-1,w}(\mathbf{x}_t, \mathbf{y}_t, \mathbf{v}_t)\right\|^2 \nonumber\\
    \leq& \left(2L_{f,1}^2 {+} \frac{4L_{f,0}^2L_{g,2}^2}{\mu_g^2}\right)\left\|\mathbf{y}_{t-1,w}^*(\mathbf{x}_t) - \mathbf{y}_t\right\|^2 {+} 8L_{g,1}^2\left\|\mathbf{v}_{t-1,w}^*(\mathbf{x}_t, \mathbf{y}_{t-1,w}^*(\mathbf{x}_t)) {-} \mathbf{v}_{t-1,w}^*(\mathbf{x}_t, \mathbf{y}_t)\right\|^2 \nonumber\\
    & + 8L_{g,1}^2\left\|\mathbf{v}_{t-1,w}^*(\mathbf{x}_t, \mathbf{y}_t) - \mathbf{v}_t\right\|^2 \nonumber\\
    \overset{(i)}\leq& \left(2L_{f,1}^2 {+} \frac{4L_{g,1}^2L_{g,2}^2}{\mu_g^2} {+} \frac{16L_{f,1}^2L_{g,1}^2}{\mu_g^2} {+} \frac{16L_{f,0}^2L_{f,1}^2L_{g,2}^2}{\mu_g^4}\right)\left\|\mathbf{y}_{t-1,w}^*(\mathbf{x}_t) - \mathbf{y}_t\right\|^2 \nonumber\\
    & + 8L_{g,1}^2\left\|\mathbf{v}_{t-1,w}^*(\mathbf{x}_t, \mathbf{y}_t) - \mathbf{v}_t\right\|^2, \nonumber\\
    =:& \kappa_F\mu_g^2\left\|\mathbf{y}_{t-1,w}^*(\mathbf{x}_t) - \mathbf{y}_t\right\|^2 + 8L_{g,1}^2\left\|\mathbf{v}_{t-1,w}^*(\mathbf{x}_t, \mathbf{y}_t) - \mathbf{v}_t\right\|^2, \label{C.4_4} 
\end{align}
where $(i)$ comes from
\begin{align*}
    \left\|\mathbf{v}_{t-1,w}^*(\mathbf{x}_t, \mathbf{y}_{t-1,w}^*(\mathbf{x}_t)) - \mathbf{v}_{t-1,w}^*(\mathbf{x}_t, \mathbf{y}_t)\right\|^2 \leq \left(\frac{2L_{f,1}^2}{\mu_g^2} + \frac{2L_{f,0}^2L_{g,2}^2}{\mu_g^4}\right)\left\|\mathbf{y}_t - \mathbf{y}_{t-1,w}^*(\mathbf{x}_t)\right\|^2,
\end{align*}
as the similar result in (\ref{A.6_9}), $\kappa_F$ is defined in (\ref{eq:kappa_F}).

Substituting (\ref{C.4_4}) into (\ref{C.4_2}), we have
\begin{align}
    &\left\|\mathcal{G}_\mathcal{X} (\mathbf{x}_t, \nabla \widehat{f}_{t-1,w}(\mathbf{x}_t, \mathbf{y}_{t-1,w}^*(\mathbf{x}_t)), \gamma)\right\|^2 \nonumber\\
    \leq& 2\left\|\nabla \widehat{f}_{t-1,w}(\mathbf{x}_t, \mathbf{y}_{t-1,w}^*(\mathbf{x}_t)) - \widetilde{\nabla}\widehat{f}_{t-1,w}(\mathbf{x}_t, \mathbf{y}_t, \mathbf{v}_t)\right\|^2 + 2\left\|\mathcal{G}_\mathcal{X} (\mathbf{x}_t, \widetilde{\nabla}\widehat{f}_{t-1,w}(\mathbf{x}_t, \mathbf{y}_t, \mathbf{v}_t), \gamma)\right\|^2 \nonumber\\
    \leq& 2\kappa_F\mu_g^2\left\|\mathbf{y}_{t-1,w}^*(\mathbf{x}_t) - \mathbf{y}_t\right\|^2 + 16L_{g,1}^2\left\|\mathbf{v}_{t-1,w}^*(\mathbf{x}_t, \mathbf{y}_t) - \mathbf{v}_t\right\|^2 \nonumber\\
    & + 2\left\|\mathcal{G}_\mathcal{X} (\mathbf{x}_t, \widetilde{\nabla}\widehat{f}_{t-1,w}(\mathbf{x}_t, \mathbf{y}_t, \mathbf{v}_t), \gamma)\right\|^2 \nonumber\\
    \overset{(i)}\leq& 2\kappa_F\left\|\nabla_\mathbf{y}\widehat{g}_{t-1,w}(\mathbf{x}_t, \mathbf{y}_t)\right\|^2 + 16\kappa_g^2\left\|\nabla\widehat{\Phi}_{t-1,w}(\mathbf{x}_t, \mathbf{y}_t)\right\|^2 \nonumber\\
    & + 2\left\|\mathcal{G}_\mathcal{X} (\mathbf{x}_t, \widetilde{\nabla}\widehat{f}_{t-1,w}(\mathbf{x}_t, \mathbf{y}_t, \mathbf{v}_t), \gamma)\right\|^2 \overset{(ii)}\leq \frac{2\delta^2}{W^2}, \label{C.4_5}
\end{align}
where $(i)$ comes from the strongly convexity of $\widehat{g}_{t-1,w}(\mathbf{x}, \cdot)$ and $\widehat{\Phi}_{t-1,w}(\mathbf{x}, \mathbf{y}, \cdot)$ and $(ii)$ comes from the stop condition defined in (\ref{eq:condition}). Finally, substituting (\ref{C.4_5}) into (\ref{C.4_3}), we obtain
\begin{align*}
    &\left\|\mathcal{G}_\mathcal{X} (\mathbf{x}_t, \nabla \widehat{f}_{t,w}(\mathbf{x}_t, \mathbf{y}_{t,w}^*(\mathbf{x}_t)), \gamma)\right\|^2 \\
    \leq& 4\left\|\mathcal{G}_\mathcal{X} (\mathbf{x}_t, \nabla \widehat{f}_{t-1,w}(\mathbf{x}_t, \mathbf{y}_{t-1,w}^*(\mathbf{x}_t)), \gamma)\right\|^2 + \frac{4(1+\eta^w)^2L_{f,1}^2L_{g,1}^2D^2}{\mu_g^2W^2} + \frac{2(1+\eta^w)^2L_{f,0}^2}{W^2} \\
    \leq& \frac{8\delta^2}{W^2} + \frac{4(1+\eta^w)^2L_{f,1}^2L_{g,1}^2D^2}{\mu_g^2W^2} + \frac{2(1+\eta^w)^2L_{f,0}^2}{W^2},
\end{align*}
and thus
\begin{align*}
    \sum_{t=1}^T \left\|\mathcal{G}_\mathcal{X} (\mathbf{x}_t, \nabla \widehat{f}_{t,w}(\mathbf{x}_t, \mathbf{y}_{t,w}^*(\mathbf{x}_t)), \gamma)\right\|^2 \leq& \left(8\delta^2 + 4(1+\eta^w)^2L_{f,1}^2\kappa_g^2D^2 + 2(1+\eta^w)^2L_{f,0}^2\right)\frac{T}{W^2}.
\end{align*}

Then, we prove the upper bound of the inner iteration count $\mathcal{I}_T$ over the total $T$ rounds. Consider in step $t$, we have
\begin{align}
    \mathbf{y}_t^0 = \mathbf{y}_t,& \quad \mathbf{y}_t^{k+1} \leftarrow \mathbf{y}_t^k - \alpha\nabla_\mathbf{y}\widehat{g}_{t,w}(\mathbf{x}_t^k, \mathbf{y}_t^k) \nonumber\\
    \mathbf{v}_t^0 = \mathbf{v}_t,& \quad \mathbf{v}_t^{k+1} \leftarrow \mathbf{v}_t^k - \beta\nabla_\mathbf{v}\widehat{\Phi}_{t,w}(\mathbf{x}_t^k, \mathbf{y}_t^k, \mathbf{v}_t^k) \nonumber\\
    \mathbf{x}_t^0 = \mathbf{x}_t,& \quad \mathbf{x}_t^{k+1} \leftarrow \mathcal{P}_\mathcal{X}\left(\mathbf{x}_t^k - \gamma\nabla \widehat{f}_{t,w}(\mathbf{x}_t^k, \mathbf{y}_t^k, \mathbf{v}_t^k)\right), \label{nowin_update}
\end{align}
following the similarity proof process in Lemma~\ref{lem:begin}, it holds that
\begin{align*}
    &\widehat{f}_{t,w}(\mathbf{x}^+, \mathbf{y}_{t,w}^*(\mathbf{x}^+)) - \widehat{f}_{t,w}(\mathbf{x}, \mathbf{y}_{t,w}^*(\mathbf{x})) \\
    \leq& \left\langle \nabla \widehat{f}_{t,w}(\mathbf{x}, \mathbf{y}_{t,w}^*(\mathbf{x})), \mathbf{x}^+ - \mathbf{x} \right\rangle + \frac{L_F}{2}\left\|\mathbf{x}^+ - \mathbf{x}\right\|^2 \\
    \leq& -\gamma\left\langle \nabla \widehat{f}_{t,w}(\mathbf{x}, \mathbf{y}_{t,w}^*(\mathbf{x})), \mathcal{G}_\mathcal{X}(\mathbf{x}, \widetilde{\nabla}\widehat{f}_{t,w}(\mathbf{x}, \mathbf{y}, \mathbf{v}), \gamma)  \right\rangle + \frac{\gamma^2L_F}{2}\left\|\mathcal{G}_\mathcal{X}(\mathbf{x}, \widetilde{\nabla}\widehat{f}_{t,w}(\mathbf{x}, \mathbf{y}, \mathbf{v}), \gamma)\right\|^2 \\
    =& -\gamma\left\langle \widetilde{\nabla}\widehat{f}_{t,w}(\mathbf{x}, \mathbf{y}, \mathbf{v}), \mathcal{G}_\mathcal{X}(\mathbf{x}, \widetilde{\nabla}\widehat{f}_{t,w}(\mathbf{x}, \mathbf{y}, \mathbf{v}), \gamma) \right\rangle + \frac{\gamma^2L_F}{2}\left\|\mathcal{G}_\mathcal{X}(\mathbf{x}, \widetilde{\nabla}\widehat{f}_{t,w}(\mathbf{x}, \mathbf{y}, \mathbf{v}), \gamma)\right\|^2 \\
    & - \gamma\left\langle \nabla \widehat{f}_{t,w}(\mathbf{x}, \mathbf{y}_{t,w}^*(\mathbf{x})) -  \widetilde{\nabla}\widehat{f}_{t,w}(\mathbf{x}, \mathbf{y}, \mathbf{v}), \mathcal{G}_\mathcal{X}(\mathbf{x}, \widetilde{\nabla}\widehat{f}_{t,w}(\mathbf{x}, \mathbf{y}, \mathbf{v}), \gamma) \right\rangle \\
    \leq& - \left(\gamma - \frac{\gamma^2L_F}{2} - \frac{\gamma}{2}\right)\left\|\mathcal{G}_\mathcal{X}(\mathbf{x}, \widetilde{\nabla}\widehat{f}_{t,w}(\mathbf{x}, \mathbf{y}, \mathbf{v}), \gamma)\right\|^2 + \frac{\gamma}{2}\left\|\nabla \widehat{f}_{t,w}(\mathbf{x}, \mathbf{y}_{t,w}^*(\mathbf{x})) -  \widetilde{\nabla}\widehat{f}_{t,w}(\mathbf{x}, \mathbf{y}, \mathbf{v})\right\|^2 \\
    \leq& - \left(\frac{\gamma}{2} - \frac{\gamma^2L_F}{2}\right)\left\|\mathcal{G}_\mathcal{X}(\mathbf{x}, \widetilde{\nabla}\widehat{f}_{t,w}(\mathbf{x}, \mathbf{y}, \mathbf{v}), \gamma)\right\|^2 + \frac{\gamma L_F\mu_g^2}{2}\left\|\mathbf{y}_{t,w}^*(\mathbf{x}) - \mathbf{y}\right\|^2 \\
    & + 4\gamma L_{g,1}^2\left\|\mathbf{v}_{t,w}^*(\mathbf{x}, \mathbf{y}) - \mathbf{v}\right\|^2,
\end{align*}
thus after some rearranging and summing over $k=0,\dots,K_t-1$, we have
\begin{align}
    &\left(\frac{\gamma}{2} - \frac{\gamma^2L_F}{2}\right) \sum_{k=0}^{K_t-1} \left\|\mathcal{G}_\mathcal{X} (\mathbf{x}_t^k, \widetilde{\nabla}\widehat{f}_{t,w}(\mathbf{x}_t^k, \mathbf{y}_t^k, \mathbf{v}_t^k), \gamma))\right\|^2 \nonumber\\
    \leq& \widehat{f}_{t,w}(\mathbf{x}_t, \mathbf{y}_{t,w}^*(\mathbf{x}_t)) - \widehat{f}_{t,w}(\mathbf{x}_{t+1}, \mathbf{y}_{t,w}^*(\mathbf{x}_{t+1})) + \frac{\gamma\kappa_F\mu_g^2}{2}\sum_{k=0}^{K_t-1}\left\|\mathbf{y}_{t,w}^*(\mathbf{x}_t^k) - \mathbf{y}_t^k\right\|^2 \nonumber\\
    & + 4\gamma L_{g,1}^2\sum_{k=0}^{K_t-1}\left\|\mathbf{v}_{t,w}^*(\mathbf{x}_t^k, \mathbf{y}_t^k) - \mathbf{v}_t^k\right\|^2. \label{C.4_6}
\end{align}

Now, in Lemma~\ref{lem:y_bound_win_2} and~\ref{lem:v_bound_win_2}, let $\alpha = \frac{1}{L_{g,1}}$, $\beta = \frac{1}{L_{g,1}}$, thus $\alpha\mu_g = \frac{1}{\kappa_g}$, $\beta\mu_g = \frac{1}{\kappa_g}$, it implies that
\begin{align}
    \sum_{k=0}^{K_t-1}\left\|\mathbf{y}_{t,w}^*(\mathbf{x}_t^k) - \mathbf{y}_t^k\right\|^2
    <& 2\kappa_g\left\|\mathbf{y}_{t,w}^*(\mathbf{x}_t^0) - \mathbf{y}_t^0\right\|^2 \nonumber\\
    & + 8\gamma^2\kappa_g^4\sum_{k=0}^{K_t-1} \left\|\mathcal{G}_\mathcal{X} (\mathbf{x}_t^k, \widetilde{\nabla}\widehat{f}_{t,w}(\mathbf{x}_t^k, \mathbf{y}_t^k, \mathbf{v}_t^k), \gamma))\right\|^2 \label{C.4_7} \\
    \sum_{k=0}^{K_t-1}\left\|\nabla_\mathbf{y}\widehat{g}_{t,w}(\mathbf{x}_t^k, \mathbf{y}_t^k)\right\|^2 
    <& 8\kappa_gL_{g,1}^2\left\|\mathbf{y}_{t,w}^*(\mathbf{x}_t^0) - \mathbf{y}_t^0\right\|^2 \nonumber\\
    & + 32\gamma^2\kappa_g^4L_{g,1}^2\sum_{k=0}^{K_t-1} \left\|\mathcal{G}_\mathcal{X} (\mathbf{x}_t^k, \widetilde{\nabla}\widehat{f}_{t,w}(\mathbf{x}_t^k, \mathbf{y}_t^k, \mathbf{v}_t^k), \gamma))\right\|^2 \label{eq:(1)}
\end{align}
and
\begin{align}
    &\sum_{k=0}^{K_t-1}\left\|\mathbf{v}_{t,w}^*(\mathbf{x}_t^k, \mathbf{y}_t^k) - \mathbf{v}_t^k\right\|^2 \nonumber\\
    <& 2\kappa_g\left\|\mathbf{v}_{t,w}^*(\mathbf{x}_t^0, \mathbf{y}_t^0) - \mathbf{v}_t^0\right\|^2 + 128\kappa_g^3L_\mathbf{v}^2\left\|\mathbf{y}_{t,w}^*(\mathbf{x}_t^0) - \mathbf{y}_t^0\right\|^2 \nonumber\\
    & + (512\kappa_g^6L_{g,1}^2L_\mathbf{v}^2 + 16\kappa_g^2L_\mathbf{v}^2)\gamma^2\sum_{k=0}^{K_t-1}\left\|\mathcal{G}_\mathcal{X} (\mathbf{x}_t^k, \widetilde{\nabla}\widehat{f}_{t,w}(\mathbf{x}_t^k, \mathbf{y}_t^k, \mathbf{v}_t^k), \gamma))\right\|^2 \label{C.4_8} \\
    &\sum_{k=0}^{K_t-1} \left\|\nabla_\mathbf{v} \widehat{\Phi}_{t,w}(\mathbf{x}_t^k, \mathbf{y}_t^k, \mathbf{v}_t^k)\right\|^2 \nonumber\\
    <& 8\kappa_gL_{g,1}^2\left\|\mathbf{v}_{t,w}^*(\mathbf{x}_t^0, \mathbf{y}_t^0) - \mathbf{v}_t^0\right\|^2 + 256\kappa_g^3L_{g,1}^2L_\mathbf{v}^2\left\|\mathbf{y}_{t,w}^*(\mathbf{x}_t^0) - \mathbf{y}_t^0\right\|^2 \nonumber\\
    & + (512\kappa_g^6L_{g,1}^2L_\mathbf{v}^2 + 32\kappa_g^2L_{g,1}^2L_\mathbf{v}^2)\gamma^2\sum_{k=0}^{K_t-1}\left\|\mathcal{G}_\mathcal{X} (\mathbf{x}_t^k, \widetilde{\nabla}\widehat{f}_{t,w}(\mathbf{x}_t^k, \mathbf{y}_t^k, \mathbf{v}_t^k), \gamma))  \right\|^2. \label{eq:(2)}
\end{align}

Substituting (\ref{C.4_7}) and (\ref{C.4_8}) into (\ref{C.4_6}), we obtain
\begin{align}
    &\gamma\left(\frac{1}{2} - \frac{\gamma L_F}{2} - (4\kappa_F\kappa_g^4\mu_g^2 + 2048\kappa_g^6L_{g,1}^4L_\mathbf{v}^2 + 64\kappa_g^2L_{g,1}^2L_\mathbf{v}^2)\gamma^2\right) \sum_{k=0}^{K_t-1} \left\|\mathcal{G}_\mathcal{X} (\mathbf{x}_t^k, \widetilde{\nabla}\widehat{f}_{t,w}(\mathbf{x}_t^k, \mathbf{y}_t^k, \mathbf{v}_t^k), \gamma))\right\|^2 \nonumber\\
    \leq& \widehat{f}_{t,w}(\mathbf{x}_t, \mathbf{y}_{t,w}^*(\mathbf{x}_t)) - \widehat{f}_{t,w}(\mathbf{x}_{t+1}, \mathbf{y}_{t,w}^*(\mathbf{x}_{t+1})) \nonumber\\
    & + (\kappa_F\kappa_g\mu_g^2 + 512\kappa_g^3 L_{g,1}^2 L_\mathbf{v}^2)\gamma\left\|\mathbf{y}_{t,w}^*(\mathbf{x}_t^0) - \mathbf{y}_t^0\right\| + 8\gamma\kappa_g L_{g,1}^2 \left\|\mathbf{v}_{t,w}^*(\mathbf{x}_t^0, \mathbf{y}_t^0) - \mathbf{v}_t^0\right\|^2, \label{eq:(3)}
\end{align}
let (\ref{eq:(1)})$\times\gamma\kappa_F\cdot\frac{1}{4}$ and (\ref{eq:(2)})$\times8\gamma\kappa_g^2\cdot\frac{1}{4}$ plus (\ref{eq:(3)}) to get
\begin{align*}
    &\gamma\left(\frac{1}{2} - \frac{\gamma L_F}{2} - (4\kappa_F\kappa_g^4\mu_g^2 + 2048\kappa_g^6L_{g,1}^4L_\mathbf{v}^2 + 64\kappa_g^2L_{g,1}^2L_\mathbf{v}^2)\gamma^2\right) \sum_{k=0}^{K_t-1} \left\|\mathcal{G}_\mathcal{X} (\mathbf{x}_t^k, \widetilde{\nabla}\widehat{f}_{t,w}(\mathbf{x}_t^k, \mathbf{y}_t^k, \mathbf{v}_t^k), \gamma))\right\|^2 \\
    & + \frac{\gamma\kappa_F}{4}\sum_{k=0}^{K_t-1}\left\|\nabla\widehat{g}_{t,w}(\mathbf{x}_t^k, \mathbf{y}_t^k)\right\|^2 + 2\gamma\kappa_g^2\sum_{k=0}^{K_t-1}\left\|\nabla_\mathbf{v}\widehat{\Phi}_{t,w}(\mathbf{x}_t^k, \mathbf{y}_t^k, \mathbf{v}_t^k)\right\|^2 \\
    \leq& \widehat{f}_{t,w}(\mathbf{x}_t, \mathbf{y}_{t,w}^*(\mathbf{x}_t)) - \widehat{f}_{t,w}(\mathbf{x}_{t+1}, \mathbf{y}_{t,w}^*(\mathbf{x}_{t+1})) + \gamma(8\kappa_g L_{g,1}^2 + 16\kappa_g^3L_{g,1}^2) \left\|\mathbf{v}_{t,w}^*(\mathbf{x}_t^0, \mathbf{y}_t^0) - \mathbf{v}_t^0\right\|^2 \\
    & + \gamma(\kappa_F\kappa_g\mu_g^2 + 512\kappa_g^3 L_{g,1}^2 L_\mathbf{v}^2 + 2\kappa_F\kappa_gL_{g,1}^2 + 512\kappa_g^5L_{g,1}^2L_\mathbf{v}^2)\left\|\mathbf{y}_{t,w}^*(\mathbf{x}_t^0) - \mathbf{y}_t^0\right\| \\
    & +  \gamma^3 (1024\kappa_g^8L_{g,1}^2L_\mathbf{v}^2 + 64\kappa_g^4L_{g,1}^2L_\mathbf{v}^2 + 8 \kappa_F\kappa_g^4L_{g,1}^2) \sum_{k=0}^{K_t-1}\left\|\mathcal{G}_\mathcal{X} (\mathbf{x}_t^k, \widetilde{\nabla}\widehat{f}_{t,w}(\mathbf{x}_t^k, \mathbf{y}_t^k, \mathbf{v}_t^k), \gamma))  \right\|^2,
\end{align*}
to ensure
\begin{align*}
    \frac{1}{4} \leq& \frac{1}{2} - \frac{\gamma L_F}{2} - (4\kappa_F\kappa_g^4\mu_g^2 + 2048\kappa_g^6L_{g,1}^4L_\mathbf{v}^2 + 64\kappa_g^2L_{g,1}^2L_\mathbf{v}^2)\gamma^2 - 8\gamma^2 \kappa_F\kappa_g^4L_{g,1}^2 \\
    & - \gamma^2 (1024\kappa_g^8L_{g,1}^2L_\mathbf{v}^2 + 64\kappa_g^4L_{g,1}^2L_\mathbf{v}^2) \\
    0 \leq& \frac{1}{4} - \frac{\gamma L_F}{2} - (12\kappa_F\kappa_g^4L_{g,1}^2 + 2112\kappa_g^6L_{g,1}^2L_\mathbf{v}^2 + 1080\kappa_g^8L_{g,1}^2L_{\mathbf{v}}^2)\gamma^2 \\
    0 \leq& \frac{1}{4} - \frac{\gamma L_F}{2} - 3204\kappa_F\kappa_g^8L_{g,1}^2L_\mathbf{v}^2\gamma^2
\end{align*}
with $\sqrt{3204} < 57$, we can set the following condition to satisfy the above inequality, 
\begin{align*}
    \gamma \leq \min\left\{\frac{1}{4L_F}, \frac{1}{456\kappa_g^4L_{g,1}L_\mathbf{v}\sqrt{\kappa_F}}\right\}.
\end{align*}

Finally, it follows that
\begin{align}
    \frac{\delta^2K_t}{4W^2} \leq& \frac{1}{4} \sum_{k=0}^{K_t-1} \left\|\mathcal{G}_\mathcal{X} (\mathbf{x}_t^k, \widetilde{\nabla}\widehat{f}_{t,w}(\mathbf{x}_t^k, \mathbf{y}_t^k, \mathbf{v}_t^k), \gamma))\right\|^2 + \frac{\kappa_F}{4}\sum_{k=0}^{K_t-1}\left\|\nabla\widehat{g}_{t,w}(\mathbf{x}_t^k, \mathbf{y}_t^k)\right\|^2 \nonumber\\
    & + 2\kappa_g^2\sum_{k=0}^{K_t-1}\left\|\nabla_\mathbf{v}\widehat{\Phi}_{t,w}(\mathbf{x}_t^k, \mathbf{y}_t^k, \mathbf{v}_t^k)\right\|^2 \nonumber\\
    \leq& \frac{\widehat{f}_{t,w}(\mathbf{x}_t, \mathbf{y}_{t,w}^*(\mathbf{x}_t)) {-} \widehat{f}_{t,w}(\mathbf{x}_{t+1}, \mathbf{y}_{t,w}^*(\mathbf{x}_{t+1}))}{\gamma} + (8\kappa_g L_{g,1}^2 {+} 16\kappa_g^3L_{g,1}^2)\left\|\mathbf{v}_{t,w}^*(\mathbf{x}_t^0, \mathbf{y}_t^0) {-} \mathbf{v}_t^0\right\|^2 \nonumber\\
    & + (\kappa_F\kappa_g\mu_g^2 + 512\kappa_g^3 L_{g,1}^2 L_\mathbf{v}^2 + 2\kappa_F\kappa_gL_{g,1}^2 + 512\kappa_g^5L_{g,1}^2L_\mathbf{v}^2)\left\|\mathbf{y}_{t,w}^*(\mathbf{x}_t^0) - \mathbf{y}_t^0\right\|^2. \label{C.4_9}
\end{align}
We also have
\begin{align}
    \left\|\mathbf{y}_{t,w}^*(\mathbf{x}_t^0) - \mathbf{y}_t^0\right\|^2 \leq& 2\left\|\mathbf{y}_{t,w}^*(\mathbf{x}_{t-1}^{K_{t-1}}) - \mathbf{y}_{t-1,w}^*(\mathbf{x}_{t-1}^{K_{t-1}})\right\|^2 + 2\left\|\mathbf{y}_{t-1,w}^*(\mathbf{x}_{t-1}^{K_{t-1}}) - \mathbf{y}_{t-1}^{K_{t-1}}\right\|^2 \nonumber\\
    \leq& \frac{2(1+\eta^w)^2L_{g,1}^2D^2}{\mu_g^2W^2} + \frac{2}{\mu_g^2}\left\|\nabla_\mathbf{y}\widehat{g}_{t,w}(\mathbf{x}_{t-1}^{K_{t-1}}, \mathbf{y}_{t-1}^{K_{t-1}})\right\|^2 \nonumber\\
    \leq& \frac{2(1+\eta^w)^2L_{g,1}^2D^2}{\mu_g^2W^2} + \frac{2}{\mu_g^2}\frac{\delta^2}{W^2\kappa_F}, \label{C.4_10}
\end{align}
where the second inequality comes from Lemma~\ref{lem:y_bound_win_1}, similarily, 
\begin{align}
    \left\|\mathbf{v}_{t,w}^*(\mathbf{x}_t^0, \mathbf{y}_t^0) - \mathbf{v}_t^0\right\|^2 \leq& 2\left\|\mathbf{v}_{t,w}^*(\mathbf{x}_{t-1}^{K_{t-1}}, \mathbf{y}_{t-1}^{K_{t-1}}) {-} \mathbf{v}_{t-1,w}^*(\mathbf{x}_{t-1}^{K_{t-1}}, \mathbf{y}_{t-1}^{K_{t-1}})\right\|^2 \nonumber\\
    & + 2\left\|\mathbf{v}_{t-1,w}^*(\mathbf{x}_{t-1}^{K_{t-1}}, \mathbf{y}_{t-1}^{K_{t-1}}) {-} \mathbf{v}_{t-1}^{K_{t-1}}\right\|^2 \nonumber\\
    \leq& \frac{2(1+\eta^w)L_{f,0}^2L_{g,1}^2}{\mu_g^4W^2} + \frac{2}{\mu_g^2}\frac
    {\delta^2}{8W^2\kappa_g^2}, \label{C.4_11}
\end{align}
where the second inequality comes from Lemma~\ref{lem:v_bound_win_1}.

Substituting (\ref{C.4_10}) and (\ref{C.4_11}) into (\ref{C.4_9}), we obtain 
\begin{align}
    \frac{\delta^2}{4W^2}K_t \leq& \frac{\widehat{f}_{t,w}(\mathbf{x}_t, \mathbf{y}_{t,w}^*(\mathbf{x}_t)) - \widehat{f}_{t,w}(\mathbf{x}_{t+1}, \mathbf{y}_{t,w}^*(\mathbf{x}_{t+1}))}{\gamma} \nonumber\\
    & + 24\kappa_g^3L_{g,1}^2 \left(\frac{2(1+\eta^w)L_{f,0}^2L_{g,1}^2}{\mu_g^4W^2} + \frac{2}{\mu_g^2}\frac
    {\delta^2}{8W^2\kappa_g^2}\right) \nonumber\\
    & + (3\kappa_F\kappa_gL_{g,1}^2 + 1024\kappa_g^5L_{g,1}^2L_\mathbf{v}^2)\left(\frac{2(1+\eta^w)^2L_{g,1}^2D^2}{\mu_g^2W^2} + \frac{2}{\mu_g^2}\frac{\delta^2}{W^2\kappa_F} \right). \label{C.4_12}
\end{align}
Finally, summing (\ref{C.4_12}) over $t=1,\ldots,T$ and using Lemma~\ref{lem:F-F_win}, we complete the proof as follows:
\begin{align*}
    \mathcal{I}_T \leq& \frac{8QW^2}{\gamma\delta^2} + \frac{4(1{+}\eta^w)L_{f,0}L_{g,1}DWT}{\gamma\delta^2\mu_g} + \frac{8QWT}{\gamma\delta^2} + 24\kappa_g^3L_{g,1}^2 \left(\frac{8(1{+}\eta^w)L_{f,0}^2L_{g,1}^2}{\delta^2\mu_g^4} {+} \frac
    {1}{L_{g,1}^2}\right)T\\
    & + (3\kappa_F\kappa_gL_{g,1}^2 + 1024\kappa_g^5L_{g,1}^2L_\mathbf{v}^2)\left(\frac{8(1+\eta^w)^2L_{g,1}^2D^2}{\delta^2\mu_g^2} + \frac{8}{\mu_g^2\kappa_F}\right) T = O\left(WT\right).
\end{align*}

\end{proof}

\subsection{Proof of Theorem~\ref{thm:unres_win_lower}}\label{sec:proof_win_lowerbound}

\begin{theorem}[Restatement of Theorem~\ref{thm:unres_win_lower}]
Consider any online hypergradient-based algorithm, under Assumptions~\ref{asm1}-\ref{asm3} and~\ref{asm4} with $d_1=d_2 \geq \Omega(T)$, given window size $w > 0$ and $\eta \in(0,1)$, there exist $\{f_t, g_t\}_{t=1}^T$ for which $\mathrm{Reg}_w(T) \geq \Omega(T/W^2)$.
\end{theorem}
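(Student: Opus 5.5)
We adapt the zero-chain / orthogonal-coordinate construction from the proof of Theorem~\ref{thm:SOGO_lower}, now using a \emph{fresh coordinate at every round}. The goal is that the window average can never erase the newest function's contribution, which has weight $1/W$. Take $d_1=d_2=d\ge T$ and keep the inner function time-invariant, $g_t(\vx,\vy)=\frac{\mu_g}{2}\|\vy-\vx\|^2$. Then $\widehat g_{t,w}=g$ and $\vy_{t,w}^*(\vx)=\vx$ for every $t$. This choice satisfies Assumption~\ref{asm1} and Assumption~\ref{asm2}, and, after restricting $\mathcal X$ to a bounded box such as $[-1,1]^d$, also Assumption~\ref{asm4} with a finite $D$. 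For the upper level set $f_t(\vx,\vy)=c\,\sigma([\vy]_t)+c\,\sigma([\vx]_t)$ for $t\ge1$ (and $f_t\equiv0$ for $t\le0$), with $c$ chosen as in Step 1 of the proof of Theorem~\ref{thm:SOGO_lower} so that Assumptions~\ref{asm2}--\ref{asm3} hold. Boundedness of $\widehat f_{t,w}$ follows because it is a convex combination.

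First compute the window-averaged hypergradient:
\[
\nabla\widehat f_{t,w}(\vx,\vy_{t,w}^*(\vx))=\frac{2c}{W}\sum_{i=0}^{w-1}\eta^i\sigma'([\vx]_{t-i})\,e_{t-i},
\]
whose $e_t$ component is $\frac{2c}{W}\sigma'([\vx]_t)$. Next, following the span argument in the proof of Theorem~\ref{thm:SOGO_lower}, show by induction on $t$ that $\mathcal H_\vx^t,\mathcal H_\vy^t\subseteq\mathrm{Span}\{e_1,\dots,e_{t-1}\}$ for any algorithm in Definition~\ref{dfn:alg}. The inductive step has three parts. Every $\nabla_\vy g_i$ equals $\mu_g(\vy-\vx)$, which stays in the current span. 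Every $\nabla_\vx f_i$ and $\nabla_\vy f_i$ with $i\le t$ is supported on $e_i$. All Hessians of $g$ are multiples of the identity, so the product terms in (\ref{dfn:span}) also do not leave the span. Since $\vx_t$ is chosen before $f_t$ is revealed, this gives $[\vx_t]_t=0$, and hence $\sigma'([\vx_t]_t)=1/4$.

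It remains to turn this single-coordinate gradient into a lower bound on the gradient mapping. By the separability of the box $\mathcal X$, the projection in (\ref{eq:x_update}) acts coordinatewise. Its $t$-th coordinate starts at $0$, lies in the interior, and has gradient entry $\frac{c}{2W}$; with $\gamma\le 2W/c$ (for instance $\gamma\le 1$), the step stays inside $[-1,1]$. Hence the $e_t$ entry of $\mathcal G_\mathcal X$ is exactly $\frac{c}{2W}$, and
\[
\|\mathcal G_\mathcal X(\vx_t,\nabla\widehat f_{t,w}(\vx_t,\vy^*_{t,w}(\vx_t)),\gamma)\|^2\ge \frac{c^2}{4W^2}.
\]
Summing over $t=1,\dots,T$ then yields $\mathrm{Reg}_w(T)\ge \frac{c^2T}{4W^2}=\Omega(T/W^2)$. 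An alternative is to take $\mathcal X=\mathbb R^d$, in which case $\mathcal G_\mathcal X$ is simply the gradient. I prefer the box because it keeps Assumption~\ref{asm4} and the projection simple.

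The main obstacle I expect is the span induction under the projected $\vx$-update and the inner repeats. I need to check that the projection onto the box, a coordinatewise clip, preserves the support $\{e_1,\dots,e_{t-1}\}$; this holds because the clip fixes zero coordinates. I also need to check that the multiple inner iterations allowed by $K,S$, as in WOBO, do not access $f_t$'s coordinate before $\vx_t$ is fixed. Since Definition~\ref{dfn:alg} only lets $\mathcal H_\vx^{t}$ depend on $f_i$ with $i\le t-1$, this should go through. A secondary check is that Assumption~\ref{asm3} holds on $\mathcal X$ despite the growing dimension; it does, since each $f_t$ depends on a single coordinate and $|f_t|\le 2c$.
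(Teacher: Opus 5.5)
Your skeleton is the paper's: a fresh coordinate per round through $f_t(\mathbf{x},\mathbf{y})=c\,\sigma([\mathbf{y}]_t)+c\,\sigma([\mathbf{x}]_t)$, a span argument giving $[\mathbf{x}_t]_t=0$, and the observation that the $e_t$-entry of the window-averaged hypergradient comes only from $f_t$ and has size $\Theta(1/W)$. You are also more careful than the paper about the projection inside $\mathcal{G}_\mathcal{X}$, which the paper silently replaces by the gradient.

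\textbf{The gap is in how you meet Assumption~\ref{asm4}.} With $\mathbf{y}^*_{t,w}(\mathbf{x})=\mathbf{x}$ and $\mathcal{X}=[-1,1]^d$, the set holding the inner optima has diameter $D=2\sqrt{d}\ge 2\sqrt{T}$. Assumption~\ref{asm4} literally asks only for some finite $D$, so your argument is formally valid. But the lower bound is meant to be over a class with fixed constants. The matching upper bound of Theorem~\ref{thm:unres_win_dl} carries the factor $4(1+\eta^w)^2L_{f,1}^2\kappa_g^2D^2$ in front of $T/W^2$, so on your instance it degrades to $O(dT/W^2)$ with $d\ge T$. Equivalently, your instance leaves any class with fixed $D$ as soon as $T>D^2/4$, so it does not certify tightness.

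Your fallback $\mathcal{X}=\mathbb{R}^d$ is worse, since then $\mathbf{y}^*(\mathbf{x})=\mathbf{x}$ is unbounded. Shrinking the box to diameter $O(1)$ does not help either. Its side becomes $O(1/\sqrt{d})$, and the coordinatewise clip then caps $[\mathcal{G}_\mathcal{X}]_t$ at $O(1/(\gamma\sqrt{d}))$. That is below $c/(2W)$ whenever $W\ll\sqrt{d}$, and the sum collapses to $O(1/\gamma^2)$.

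There are two repairs.

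\textbf{The paper's repair.} Keep $\mathcal{X}=\mathbb{R}^d$, so that $\mathcal{G}_\mathcal{X}$ is just the gradient, but use $g(\mathbf{x},\mathbf{y})=\frac{\mu}{2}\|\mathbf{y}-\vec{\phi}(\mathbf{x})\|^2$ with $[\vec{\phi}(\mathbf{x})]_i=\sigma([\mathbf{x}]_i)/\sqrt{d}$. Then $\|\mathbf{y}^*(\mathbf{x})\|\le 1$ for every $\mathbf{x}$, and $D$ is dimension-free. Both Hessians of this $g$ are diagonal, so your $\mathbf{x}$-span induction survives. However, $\vec{\phi}(\mathbf{0})=\frac{1}{2\sqrt{d}}\mathbf{1}$ makes the $\mathbf{y}$-iterates dense. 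You would therefore have to drop your claim about $\mathcal{H}_\mathbf{y}^t$ and rely only on $\nabla_\mathbf{x}f_i$ and $\nabla_\mathbf{y}f_i$ being supported on $e_i$.

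\textbf{A repair within your construction.} Keep your $g$ and take $\mathcal{X}$ to be the Euclidean unit ball. Projection onto it is a radial rescaling, so supports (and hence the span induction) are preserved, and $D=2$. Write $\mathbf{h}:=\nabla\widehat{f}_{t,w}(\mathbf{x}_t,\mathbf{y}^*_{t,w}(\mathbf{x}_t))$. Since $[\mathbf{x}_t]_t=0$, one gets $[\mathcal{G}_\mathcal{X}(\mathbf{x}_t,\mathbf{h},\gamma)]_t=[\mathbf{h}]_t/\max\{1,\|\mathbf{x}_t-\gamma\mathbf{h}\|\}$. Using $\|\mathbf{h}\|\le c/2$, this is at least $\frac{c}{2W(1+\gamma c/2)}$. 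Summing gives $\mathrm{Reg}_w(T)\ge \frac{c^2T}{4W^2(1+\gamma c/2)^2}$, with all constants independent of $d$.
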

\begin{proof}
Constructing a set of functions with linear variation is simple, for example, consider that we have $F_t(\mathbf{x}) = \widetilde{F}([\mathbf{x}]_t)$, it naturally holds that
\begin{align*}
    V_T = \sum_{t=2}^T\sup_\mathbf{x}|F_t(\mathbf{x}) - F_{t-1}(\mathbf{x})| =& \sum_{t=2}^T\sup_\mathbf{x}\left|\widetilde{F}([\mathbf{x}]_t) - \widetilde{F}([\mathbf{x}]_{t-1})\right| \\
    =& (T-1)\left(\max_{z\in\mathbb{R}}\widetilde{F}(z) - \min_{z\in\mathbb{R}}\widetilde{F}(z)\right)=\Theta(T),
\end{align*}
the function $F_t(\mathbf{x})$ requires the algorithm to achieve the smallest possible value at a new coordinate at each time step $t$, therefore, unless the algorithm knows the function at the next $t$ a prior, $[\mathbf{x}_t]_t = 0$ holds and the regret $\left\|\nabla \widetilde{F}(0)\right\|$ is unavoidable. 

\textbf{Step 1: Constructing linearly changing $f_t$ that satisfying Assumptions~\ref{asm2}, \ref{asm3}.}

Specifically, consider the upper-level objective function as follows:
\begin{align*}
    f_t(\mathbf{x}, \mathbf{y}) =& c\sigma([\mathbf{y}]_t) + c\sigma([\mathbf{x}]_t),
\end{align*}
where $\sigma(\cdot)$ denotes the sigmoid function. 
For any $\mathbf{x}, \mathbf{y}\in\mathbb{R}^{d}$ and $\mathbf{x}', \mathbf{y}'\in\mathbb{R}^{d}$, it holds that
\begin{align*}
    \left\|\nabla f_t(\mathbf{x}, \mathbf{y})\right\| \leq \frac{\sqrt{2}c}{4}, \quad \left\|\nabla f_t(\mathbf{x}, \mathbf{y}) - \nabla f_t(\mathbf{x}', \mathbf{y}')\right\| \leq \frac{\sqrt{3}c}{18}\left\|(\mathbf{x}, \mathbf{y}) - (\mathbf{x}', \mathbf{y}')\right\|,
\end{align*}
thus we set $c = \min\{\frac{Q}{2}, 2\sqrt{2}L_{f,0}, 6\sqrt{3}L_{f,1}\}$ to let $f_t$ satisfies Assumptions~\ref{asm2} and~\ref{asm3} for all $t\in[T]$. 

\textbf{Step 2: Constructing $g_t$ that satisfying Assumptions~\ref{asm1}, \ref{asm2} and \ref{asm4}.}

We define a simple inner-level objective function as follows:
\begin{align*}
    g_t(\mathbf{x}, \mathbf{y}) \equiv g(\mathbf{x}, \mathbf{y}) = \frac{\mu}{2}\left\|\mathbf{y}-\vec{\phi}(\mathbf{x})\right\|^2,
\end{align*}
where $\left[\vec{\phi}(\mathbf{x})\right]_i = \sigma([\mathbf{x}]_i)/\sqrt{d}$.
For any given $w>0$ and $\eta\in(0,1)$, it is true that
\begin{align*}
    \mathbf{y}_{t,w}^*(\mathbf{x}) = \mathbf{y}_t^*(\mathbf{x}) = \mathbf{y}_{t-1}^*(\mathbf{x}) = \cdots = \mathbf{y}_{t-w+1}^*(\mathbf{x}) = \vec{\phi}(\mathbf{x}).
\end{align*}
For any $\mathbf{x}, \mathbf{y}\in\mathbb{R}^d$, $\nabla_\mathbf{y}g_t(\mathbf{x}, \mathbf{y}) = \mu(\mathbf{y} - \vec{\phi}(\mathbf{x}))$, it follows that
\begin{align*}
    \nabla_{\mathbf{y}\mathbf{y}}^2 g_t(\mathbf{x}, \mathbf{y}) = \mu\mathbf{I}_d, \quad \nabla_{\mathbf{x}\mathbf{y}}^2g_t(\mathbf{x}, \mathbf{y}) = \begin{bmatrix}
        -\frac{\mu}{\sqrt{d}}\sigma'([\mathbf{x}]_1) \\
        & \ddots\\
        &&-\frac{\mu}{\sqrt{d}}\sigma'([\mathbf{x}]_d)
    \end{bmatrix},
\end{align*}
$g_t(\cdot)$ is $\mu_g$-strongly convex if $\mu \geq\mu_g$, and $\left\|\nabla_{\mathbf{x}\mathbf{y}}^2g_t(\mathbf{x}, \mathbf{y})\right\| \leq \frac{\mu}{4\sqrt{d}} < \frac{\mu}{4} \leq L_{g,1}$ if set $\mu \leq 4L_{g,1}$, thus $\nabla_\mathbf{y}g_t(\mathbf{x}, \mathbf{y})$ is $L_{g,1}$-Lipschitz continuous.
$\nabla_{\mathbf{y}\mathbf{y}}^2g_t(\cdot)$ satisfy $L_{g,2}$-Lipschitz continuity because it is a constant matrix.
For any $\mathbf{x}', \mathbf{y}'\in\mathbb{R}^d$, we have
\begin{align*}
    \left\|\nabla_{\mathbf{x}\mathbf{y}}^2g_t(\mathbf{x}, \mathbf{y}) - \nabla_{\mathbf{x}\mathbf{y}}^2g_t(\mathbf{x}', \mathbf{y}')\right\| \leq \frac{\sqrt{3}\mu}{18\sqrt{d}}\left\|(\mathbf{x}, \mathbf{y}) - (\mathbf{x}', \mathbf{y}')\right\|,
\end{align*}
and $\nabla_{\mathbf{x}\mathbf{y}}^2g_t(\mathbf{x}, \mathbf{y})$ is $L_{g,2}$-Lipschitz continuous if $\mu \leq 6\sqrt{3d}L_{g,2}$.
We set $\mu = \min\{4L_{g,1}, 6\sqrt{3d}L_{g,2}\}$ to let $g_t$ satisfies Assumptions~\ref{asm1} and~\ref{asm2} for all $t\in[T]$.

Also note that $\left\|\vec{\phi}(\mathbf{x})\right\| \leq \sqrt{\sum_{i=1}^d1/d} = 1$, thus $\mathbf{y}_t^*(\mathbf{x})\in \mathcal{Y} = \{\mathbf{y}\in\mathbb{R}^{d} \,|\, \left\|\mathbf{y}\right\| \leq D\}$ with $D = 1$, Assumption~\ref{asm4} is satisfied.

\textbf{Step 3: Characterizing the Lower Bound of Window-Averaged Regret.}

It holds that
\begin{align*}
    F_t(\mathbf{x}) =& f_t(\mathbf{x}, \mathbf{y}_t^*(\mathbf{x})) = c\sigma(\frac{\sigma([\mathbf{x}]_t)}{\sqrt{d}}) + c\sigma([\mathbf{x}]_t) \\
    [\nabla f_t(\mathbf{x}, \mathbf{y}_t^*(\mathbf{x}))]_i =& \begin{cases}
        c\sigma'(\frac{\sigma([\mathbf{x}]_i)}{\sqrt{d}})\frac{\sigma'([\mathbf{x}]_i)}{\sqrt{d}} + c\sigma'([\mathbf{x}]_i) > c\sigma'([\mathbf{x}]_i) \quad& \text{if } i = t\\
        0 & \text{otherwise}
    \end{cases},
\end{align*}
then with $[\mathbf{x}_t]_t=0$, $\left\|\nabla F_t(\mathbf{x}_t)\right\| > c\sigma'(0) = \frac{c}{4}$, we finally have
\begin{align*}
    \mathrm{Reg}_w(T) = \sum_{t=1}^T\left\|\nabla \widehat{f}_{t,w}(\mathbf{x}_t, \mathbf{y}_{t,w}^*(\mathbf{x}_t))\right\|^2 =& \sum_{t=1}^T\left\|\frac{1}{W}\sum_{i=0}^{w-1}\eta^i\nabla f_{t-i}(\mathbf{x}_t, \vec{\phi}(\mathbf{x}))\right\|^2 \\
    >& \frac{1}{W^2}\sum_{t=1}^T\left\|\frac{c}{4}e_t + c\sum_{i=1}^{w-1}\eta^i\sigma'([\mathbf{x}_t]_{t-i})e_{t-i}\right\|^2 \geq \frac{c^2T}{16W^2}.
\end{align*}
Thus we finish the proof.
\end{proof}


\end{document}